\documentclass[aos]{imsart}

\RequirePackage{amsthm,amsmath,amsfonts,amssymb}
\RequirePackage[numbers,sort&compress]{natbib}
\RequirePackage[colorlinks,citecolor=blue,urlcolor=blue]{hyperref}
\RequirePackage{graphicx}

\usepackage{dsfont} 	                    % For \mathds{1} blackboard bold 
\usepackage{bm}                             % For roman (upright) bold latin
\usepackage{mathtools}                      % Better math
\usepackage{mathrsfs}         
\usepackage{accents}
\usepackage{xr-hyper}
\usepackage[scientific-notation=true]{siunitx}
\usepackage{blkarray}
\usepackage{enumerate}
\usepackage{subfig}
\usepackage{centernot}
\usepackage{paralist}
\usepackage{stackrel}
\usepackage[table]{xcolor}
\usepackage{pagenote}
\usepackage{booktabs}
\usepackage{tikz}
\usepackage[english]{babel}
\usepackage{caption}
\usepackage{stackengine}
\usepackage{scalerel}
\makepagenote

\renewcommand*{\pagenotesubhead}[2]{}
\let\footnote\pagenote

\makeatletter
\let\std@thebibliography\thebibliography
\makeatother
\usepackage{bibunits}
\defaultbibliographystyle{imsart-number}
\defaultbibliography{FaIn}
\usepackage{algorithm}
\usepackage{algpseudocodex}

\usepackage{listings}
\usepackage{color}

\definecolor{lightgrey}{rgb}{0.9,0.9,0.9}
\definecolor{darkgreen}{rgb}{0,0.6,0}
\startlocaldefs
\theoremstyle{plain}

\newtheorem{theorem}{Theorem}
\newtheorem{lemma}{Lemma}
\newtheorem{proposition}{Proposition}
\newtheorem{corollary}{Corollary}
\theoremstyle{remark}
\newtheorem{remark}{Remark}
\newtheorem{definition}{Definition}
\newtheorem{example}{Example}

\theoremstyle{plain}
\newtheorem{IState}{Indeterminacy statement}%[section]

\endlocaldefs
\newcommand*{\QEDE}{\hfill\ensuremath{\blacksquare}}
\def\ci{\perp\!\!\!\perp}

\DeclareMathOperator*{\argmax}{arg\,max}    % maxmizing argument
\DeclareMathOperator*{\argmin}{arg\,min}    % minimizing argument
\DeclareMathOperator{\tr}{tr}               % trace
\DeclareMathOperator{\proj}{proj}           % projection
\DeclareMathOperator\erf{erf}               % Gaussian error function
\newcommand{\LA}{\mathbf{\Lambda}}
\newcommand{\LAt}{\mathbf{\Lambda}^{\top}}
\newcommand{\PS}{\mathbf{\Psi}}
\newcommand{\PSi}{\mathbf{\Psi}^{-1}}
\newcommand{\PH}{\mathbf{\Phi}}
\newcommand{\PHi}{\mathbf{\Phi}^{-1}}
\newcommand{\Com}{\mathbf{\Lambda\Phi\Lambda}^{\top}}
\newcommand{\Gram}{\mathbf{\Lambda}^{\top}\mathbf{\Psi}^{-1}\mathbf{\Lambda}}
\newcommand{\xih}{\hat{\xi}}
\newcommand{\eph}{\hat{\epsilon}}
\newcommand{\bull}{\scaleobj{.4}{\bullet}}
\newcommand{\lambdaDot}{\stackinset{c}{-.1ex}{b}{0.2ex}{$\bull$}{$\LA$}}
\newcommand{\HDot}{\stackinset{c}{0ex}{b}{0.2ex}{$\bull$}{$\mathbf{H}$}}

\newlength{\dhatheight}
\newcommand{\doublehat}[1]{%
	\settoheight{\dhatheight}{\ensuremath{\hat{#1}}}%
	\addtolength{\dhatheight}{-0.35ex}%
	\hat{\vphantom{\rule{1pt}{\dhatheight}}%
		\smash{\hat{#1}}}}

\AtBeginEnvironment{bmatrix}{\setlength{\arraycolsep}{5pt}}
\AtBeginEnvironment{array}{\setlength{\arraycolsep}{5pt}}
\colorlet{BLUE}{blue}
\newcommand{\boo}{\color{blue}{SUPPLEMENT}}
\newcommand{\booz}{\color{blue}{SUPPLEMENTARY MATERIAL}}
\runauthor{C.F.W.\ Peeters}
\begin{document}

%--------------- Front Matter ---------------------------------------
%--------------------------------------------------------------------
\begin{frontmatter}
	\title{Perspectives on Latent Factor Indeterminacy\\and its Implications for Data Representation}
	%\title{A sample article title with some additional note\thanksref{t1}}
	\runtitle{Latent Factor Indeterminacy and its Implications}
	%\thankstext{T1}{A sample additional note to the title.}
	
\begin{aug}
	%%%%%%%%%%%%%%%%%%%%%%%%%%%%%%%%%%%%%%%%%%%%%%%
	%% Only one address is permitted per author. %%
	%% Only division, organization and e-mail is %%
	%% included in the address.                  %%
	%% Additional information can be included in %%
	%% the Acknowledgments section if necessary. %%
	%% ORCID can be inserted by command:         %%
	%% \orcid{0000-0000-0000-0000}               %%
	%%%%%%%%%%%%%%%%%%%%%%%%%%%%%%%%%%%%%%%%%%%%%%%
	\author[A]{\fnms{Carel F.W.}~\snm{Peeters} \ead[label=e1]{carel.peeters@wur.nl}}
	%%%%%%%%%%%%%%%%%%%%%%%%%%%%%%%%%%%%%%%%%%%%%%
	%% Addresses                                %%
	%%%%%%%%%%%%%%%%%%%%%%%%%%%%%%%%%%%%%%%%%%%%%%
	\address[A]{Mathematical \& Statistical Methods group -- Biometris,\\
	Wageningen University \& Research \printead[presep={ ,\ }]{e1}}
\end{aug}

\begin{abstract}
	The common factor analytic model is related to Helmholtz and Boltzmann machines, can be conceived as a linear autoencoder, or can be thought of as a single-hidden-layer generative neural network. 
	We thus consider it a basal generative representation learner that can be used as a minimal model for studying the foundational characteristics of (deep) generative model architectures. 
	We focus on the fundamental problem of indeterminacy in latent factor projections.
	This indeterminacy implies that, even when the intrinsic dimension of the latent vector is known, regularity conditions are met, and rotational indeterminacy is resolved, an inherent indefiniteness in the retrieval of causative latent sources remains: they will be uncertain, distributionally deviant, and non-unique. 
	This can have major implications for data representation but remains an elusive issue, even to practitioners and theorists well-versed in the factor model.
	Moreover, this classic psychometric problem is intricately related to the modern issue of latent variable collapse in the variational autoencoder framework for deep generative modeling.
	Here, we assess this indeterminacy from various perspectives and show how these are mathematically and conceptually related and we discuss subsequent implications for the Psychometrics, Statistics, and Artificial Intelligence communities.
	We show that one has latent factor determinacy across all its facets when the feature-dimension grows to infinity.
	This feeds into an essentially distribution-free estimation approach in the sample case when the number of features grows very large.
	We conclude, as these are emergent properties at scale, that the factor model is suited for representation learning of very-high-dimensional data.
\end{abstract}

\begin{keyword}[class=MSC]
	\kwd[Primary ]{62H25}
	\kwd{62A01}
	\kwd[; secondary ]{62H12}
	\kwd{68T07}
\end{keyword}

\begin{keyword}
	\kwd{Empirical Bayes}
	\kwd{Factor analysis}
	\kwd{Generative representation learning}
	\kwd{High-dimensional data}
	\kwd{Inverse problem}
	\kwd{Latent representation}
	\kwd{Latent variable collapse}
	\kwd{Linear manifold}
\end{keyword}

\end{frontmatter}
%%%%%%%%%%%%%%%%%%%%%%%%%%%%%%%%%%%%%%%%%%%%%%
%% Please use \tableofcontents for articles %%
%% with 50 pages and more                   %%
%%%%%%%%%%%%%%%%%%%%%%%%%%%%%%%%%%%%%%%%%%%%%%
%\tableofcontents
%--------------------------------------------------------------------
%--------------- Front Matter ---------------------------------------

\begin{bibunit}
%--------------------------------------------------------------------
%%%%%%%%%%%%%%%%%%%%%%%%%%%%%%%%%%%%%%%%%%%%%%%%%%%%%%%%%%%%%%%%%%%%%
%%%%%%%%%%%%%%%% MAIN TEXT %%%%%%%%%%%%%%%%%%%%%%%%%%%%%%%%%%%%%%%%%%
%%%%%%%%%%%%%%%%%%%%%%%%%%%%%%%%%%%%%%%%%%%%%%%%%%%%%%%%%%%%%%%%%%%%%

%--------------- Introduction ---------------------------------------
\section{Introduction}\label{SEC:Intro}
\subsection{The common factor model}\label{SSEC:FAmodel}
Consider the common factor analytic model:
\begin{equation}\label{EQ:FAmodel}
    x := \mathbf{\Lambda}\xi + \epsilon,
\end{equation}
where the random $p$-dimensional vector of observable variables $x$ is driven by the random vector of common latent factors $\xi$ whose dimension $m < p$.
In (\ref{EQ:FAmodel}), $\epsilon \in \mathbb{R}^{p}$ denotes the vector of unique factors while $\mathbf{\Lambda} \in \mathbb{R}^{p \times m}$ denotes the matrix of factor loadings in which each element $\lambda_{jk}$ is the loading of the $j$th variable on the $k$th factor, $j = 1, \ldots, p$, $k = 1, \ldots, m$.
The model can thus be viewed as a multivariate regression model with random covariates that are not directly observable.
The model can also be conceived as a linear version of the Helmholtz machine \citep{ref_Helm96}, a directed generative variant of the Boltzmann machine \citep{ref_RBM,ref_BoltzFA}, or a linear autoencoder \citep{ref_DeepLVMs}.
This model and its variants are staples in the realm of representation learning \citep{ref_FArepresent}, especially in the forms of dimension reduction, unsupervised preprocessing, and pre-training.

We will assume, without loss of generality, that $x$ is standardized.
Furthermore, we postulate (i) $\xi \sim \mathcal{N}_m(\boldsymbol{0}, \mathbf{\Phi})$, with $\mathbf{\Phi} \odot \mathbf{I}_m = \mathbf{I}_m$; (ii) $\epsilon \sim \mathcal{N}_p(\boldsymbol{0}, \mathbf{\Psi})$; and (iii) $\xi \ci \epsilon$.
It is well-known that under these postulates the correlation matrix on $x$, say $\mathbf{\Sigma}_{xx}$, is implied to be:
\begin{equation}\label{EQ:FAcov}
     \mathbf{\Lambda}\mathbf{\Phi}\mathbf{\Lambda}^{\top} + \mathbf{\Psi} \equiv \mathbf{\Sigma}(\mathbf{\Theta})_{xx}.
\end{equation}
The model in (\ref{EQ:FAcov}) is quite general in that it covers the orthogonal exploratory factor model (a variant of the restricted Boltzmann machine), its rotational extension to an oblique model, as well as the confirmatory factor model (the latter two being variants of the unrestricted Boltzmann machine), depending on how one wishes to treat restrictions in the quadruple $\mathbf{\Theta} = \{m, \mathbf{\Psi}, \mathbf{\Lambda}, \mathbf{\Phi}\}$.
Let $r_{\Psi}$ denote the number of (linear) restrictions on $\mathbf{\Psi}$ and let $r_{\Lambda\Phi}$ denote the number of (linear) restrictions on $\{\mathbf{\Lambda,\Phi}\}$.
In the remainder we will then also assume that (iv) $m$ is known and minimal as well as the usual regularity assumptions:
(v) $r_{\Psi} \geq pm + m(m - 1)/2 - r_{\Lambda\Phi}$, with $r_{\Lambda\Phi} \geq m^{2} - m$;
(vi) $\mbox{rank}(\mathbf{\Lambda}) = m$;
(vii) $m < p/2$;
(viii) The restrictions numbered by $r_{\Lambda\Phi}$ provide rotational uniqueness;
(ix) $\mathbf{\Psi} \succ 0$; and
(x) $\mathbf{\Phi} \succ 0$.

Assumption (iv) rules out Reiers{\o}l constructions \citep{ref_Reiersol,ref_EBG78}.
Assumption (v) is a simple general order condition, stating that the number of unknowns in $\mathbf{\Sigma}(\mathbf{\Theta})_{xx}$ cannot exceed the number of nonredundant elements in $\mathbf{\Sigma}_{xx}$.
It is a necessity for the identification of $\mathbf{\Psi}$, which is further contingent upon assumption (vi) \citep{ref_GZ1980,PeetersThesis}.
The full rank of $\mathbf{\Lambda}$ is usually guaranteed under assumption (vii) \citep{ref_AndersonRubinClassic}.
Assumptions (v)--(vii) are thus likely to identify $\mathbf{\Psi}$ given $m$.
Assumption (viii) deals with rotational indeterminacy.
There are many ways to achieve this through the imposition of minimally $m^{2} - m$ additional restrictions.
In the exploratory setting one might consider $\mathbf{\Phi} = \mathbf{I}_m$ and that both $\mathbf{\Psi}$ and $\mathbf{\Lambda}^{\top}\mathbf{\Psi}^{-1}\mathbf{\Lambda}$ be diagonal with an order condition on the latter.
Unrestricted confirmatory settings might assume $\mathbf{\Psi}$ diagonal and impose $m^{2} - m$ fixed-value restrictions on $\mathbf{\Lambda}$ \citep{ref_Peeters12}.
Restricted confirmatory settings might further restrict $\mathbf{\Lambda}$ when allowing for a limited set of non-diagonal free elements in $\mathbf{\Psi}$ \citep{ref_BMW94}.
Assumption (viii) thus identifies (up to possibly polarity reversals in the columns of $\mathbf{\Lambda}$) $\{\mathbf{\Lambda,\Phi}\}$ given $\mathbf{\Psi}$ and $m$.
Note that (i) has already given $\xi$ a scale.
Assumption (x) then states that $\mathbf{\Phi}$ is also invertible, as does (ix) for $\mathbf{\Psi}$.
Hence, assumptions (iv)--(x) imply the identification of $\mathbf{\Theta}$ and the existence of $\mathbf{\Sigma}(\mathbf{\Theta})_{xx}^{-1}$.
An inherent indeterminacy, however, remains.

\subsection{Latent factor indeterminacy}\label{SSEC:Indet}
The retrieval of $\xi$ from $x$ and $\mathbf{\Theta}$ is of key interest.
Such projections provide the basis for data representation.
That is, the learning (through the factor model) of low-dimensional representations that enhance interpretability, reveal hidden features, and support downstream modeling or transfer learning. 
A schematic of our basic model and key interest is given in Figure \ref{FIG:Model}.

One could, given the model, find a proxy for $\xi$ through the decoder $p(\xi|x)$.
But proxies can also be obtained through other means \citep[see, e.g., the least-squares methods in][]{ref_LSscores}.
Hence, we provide a more general definition on what counts as a proxy for $\xi$ (and $\epsilon$):

\begin{definition}[Proxies of $(\xi,\epsilon)$]\label{DEF:Proxy}
	Let our proxy for $\xi$ be a random variate, as a function of $\mathbf{\Theta}$, based on the probability space on which $x$ is defined:
	\begin{equation*}
		\tilde{\xi}_{\mathbf{\Theta},x} \equiv f(\mathbf{\Theta},x).
	\end{equation*}
	A proxy for $\epsilon$ can, under our model and given 
	$\tilde{\xi}_{\mathbf{\Theta},x}$, then be found as:
	\begin{equation*}
		x - \LA\tilde{\xi}_{\mathbf{\Theta},x} \equiv \tilde{\epsilon}_{\mathbf{\Theta},x,\tilde{\xi}}.
	\end{equation*}
	We will use $\tilde{\xi}$ and $\tilde{\epsilon}$ as shorthands for our stated proxies.
\end{definition}

Even when $\mathbf{\Theta}$ is fully identified, the retrieval of $\xi$ by $\tilde{\xi}$ will likely suffer from \emph{factor indeterminacy}, in the sense of:
\begin{enumerate}[i]
	\item \emph{Uncertainty}: there might be (considerable) uncertainty regarding $\tilde{\xi}$ as in $\|\xi - \tilde{\xi}\|_2^2 > 0$, for example;
	\item \emph{Distributional deviance}: $\tilde{\xi} \not\sim \xi$, that is, $\tilde{\xi}$ does not display the postulated distributional behavior of $\xi$; and
	\item \emph{Non-uniqueness}: the possibility to construct, on the basis of $\tilde{\xi}$, an infinity of proxies dependent on arbitrary components, that do adhere to the model postulates but imply differing positionings in latent space.
\end{enumerate}

\begin{figure}[h!]
	\centering
	% Styles
	\tikzset{
		latent/.style={circle, thick, minimum size=1.2cm, draw=black!100, fill=white!100},
		observed/.style={circle, thick, minimum size=1.2cm, draw=black!100, fill=gray!30}}
	\tikzstyle{plate} = [draw, rectangle, rounded corners, minimum size = 2.5cm]
	\def\bottom#1#2{\hbox{\vbox to #1{\vfill\hbox{#2}}}}
	\scalebox{1}{
		\begin{tikzpicture}
			% Nodes
			\node (L) at(4,4) [latent]{$\xi$};
			\node (E) at(0,0) [latent]{$\epsilon$};
			\node (X) at(4,0) [observed]{$x$};
			
			% Edges
			\draw [->, line width=1.1pt] (L) -- (X) node[midway,above right]{$\mathbf{\Lambda}$};
			\draw [dashed,->, line width=1.1pt] (X) to [bend right = 45] node[midway, right]{$\tilde{\xi}_{\mathbf{\Theta},x}$} (L);
			\draw [-, line width=1.1pt]  (E) -- (X) node[midway,above]{$\mathbf{I}_p$};
			
			% Plates
			\draw (0,0) node[plate, text height = 2.2 cm] (PE){$\mathcal{N}_p(\boldsymbol{0},\mathbf{\Psi})$};
			\draw (4,4) node[plate, text depth = 1.9 cm] (PL){$\mathcal{N}_m(\boldsymbol{0},\mathbf{\Phi})$};
			\draw (4,0) node[plate, text height = 2.2 cm] (PX){$\mathcal{N}_p[\boldsymbol{0},\mathbf{\Sigma}(\mathbf{\Theta})_{xx}]$};
		\end{tikzpicture}
	}
	\caption{
		Schematic of the common factor model.
		The nodes indicate our random variables, with latents represented by white nodes and observables in grey.
		The plates contain postulated distributions for $\xi$ and $\epsilon$, and the likelihood (under these postulates) of the observable $x$.
		Here, $\xi$ is the latent causal source for the observable $x$, with weight matrix $\LA$.
		Hence, we have a top-down generative model $p(x|\xi)p(\xi)$, where $p(x|\xi) \sim \mathcal{N}_p(\LA\xi,\PS)$ represents the mapping of $\xi$ to $x$.
		The dashed line represents the core problem of retrieving the causal source through $\mathbf{\Theta}$ and the marginal behavior of $x$.
		This retrieval is represented by the proxy $\tilde{\xi}_{\mathbf{\Theta},x}$.}
		%We will treat the Gaussian case initially but will subsequently treat the distribution-free setting.}
	\label{FIG:Model}
\end{figure}

Here, we will show that these reflections of indeterminacy, and the various forms they may take, are related.
In addition, we will explore under what conditions the retrieval is determinate.
Before stating our aims more completely in Section \ref{SSEC:Goal} we will sketch the history of the idea of factor indeterminacy.

\subsection{A short history}\label{SSEC:History}
Spearman \citep{ref_Spearman1904} marked the beginning of the quantitative treatment of latent common variables.
His work to determine a general intelligence factor termed $g$ from collinear scholastic ability measurements culminated in his 1927 book `The Abilities of Man' \citep{ref_SpearmanAbilities}.
In reviewing this book, Wilson \citep{ref_Wilson1928Science} indicated that multiple constructed factors could serve as $g$, thus giving the first  treatment of factor indeterminacy.
This review was followed by a string of papers on indeterminacy, such as \citep{ref_Wilson1928PNAS}, further exploring the nature and geometry of indeterminacy in the $g$-factor model.
In the 1930s Piaggio showed that $g$ could be decomposed into a determinate and an indeterminate part \citep{ref_Piaggo1931}, how one can construct different factors that can serve as $g$ by varying the indeterminate part \citep{ref_Piaggo1933}, and that the indeterminate part vanishes as the test battery probing $g$ grows to infinity \citep{ref_Piaggo1933}.
Kestelman \citep{ref_Kestelman1952} generalized Piaggo's results to the more general orthogonal common factor model with multiple common factors.
Guttman \citep{ref_Guttman1955}, in turn, generalized Kestelman's results to the oblique multiple common factor situation and provided a numerical index of indeterminacy as well as a philosophical embedding.
The 1960s were, as the 1940s, largely silent on the issue.
In the 1970s Sch\"{o}nemann \citep{ref_Schonemann1971} and Sch\"{o}nemann and Wang \citep{ref_SW72} renewed interest in factor indeterminacy, inciting a string of papers.
The most focal work of the 1980s was produced by Bartholomew \citep{ref_Bartholomew1981}.
He argued that factor indeterminacy is simply a reflection of the posterior distribution of $\xi$ given $x$ not being concentrated on a single point and, hence, can be resolved by appropriate uncertainty quantification.
This did not settle the debate as can be inferred from a 1996 special issue in \emph{Multivariate Behavioral Research}.
Its target paper by Maraun \citep{ref_Maraun1996} juxtaposes the posterior distribution and construction stances on factor indeterminacy, siding with the latter.
Subsequent commentaries, marking the last inclusive treatment of factor indeterminacy in the psychometric literature, featured the often sharply conflicting views of proponents of both camps.
For more comprehensive historical overviews of the factor indeterminacy debate in Psychometrics we confine by referring to \citep{ref_SS78,ref_Steiger79,ref_Mulaik05}, and \citep[][Chapter 13]{ref_Mulaik2010}.
A more recent incarnation of the problem can be found in the Artificial Intelligence (AI) community. 
Here, the issue is known as \emph{posterior} or \emph{latent variable collapse} and occurs in the variational autoencoder framework for deep generative modeling when the posterior $p(\xi|x)$ converges to the prior $p(\xi)$ \cite[see, e.g.,][p.\ 576--577]{ref_Bishop&Bishop}.
This work implies that this is merely a reflection of the classic problem of factor indeterminacy.

\subsection{Objectives}\label{SSEC:Goal}
The common factor analytic model may be seen as a canonical model, with many special cases and generalizations, and widespread use in many fields of enquiry \citep{ref_FArepresent}.
The issue of factor indeterminacy, however, remains elusive.
In Psychometrics the issue is known but often not widely understood.
One misunderstanding is that it only affects exploratory factor analysis \citep[compare][]{ref_Vittadini1989,ref_Stevens2002}.
Another that the issue can be ignored if the analysis does not require  factor scores (i.e., realizations of the proxy), e.g., when the (estimated) structure $\mathbf{\Lambda\Phi}$ is the end-point of analysis \citep[compare][]{ref_Guttman1955,ref_MW2020}.
It however also affects confirmatory factor analysis as well as any model (indirectly) build upon common factors, such as structural equation models with latent variables \citep{ref_Bollen1989}.
%Discussion of the issue tends to be convoluted, often fueled by either an ideological taste or distaste of the model, with little reference to technical arguments.
In Statistics the issue tends to be ignored, with most energy spend towards finding estimates for $\mathbf{\Theta}$.
In other fields, however, data representation is the goal, i.e., the projection of $x$ onto $\xi$.
In these fields, such as Machine Learning, the factor model is viewed as a single-hidden-layer (or shallow) generative neural network that may serve as a building block for deep generative architectures \citep{DL16}. 
Moreover, it is often used (in the form of its special case, the probabilistic PCA model) as a minimal model for studying the fundamental characteristics of deep generative models \citep[see, e.g.,][p.\ 827--830]{ref_AImodern}.
Here the issue of factor indeterminacy might seem especially pertinent, but its intricacies and relation to posterior collapse tend to go unnoticed.
Our focus will be a comprehensive technical understanding of factor indeterminacy and its implications.
The sub-objectives are: (i) to assess indeterminacy from various perspectives and show how these are mathematically and conceptually related in order to learn under what conditions we may conclude determinacy; and (ii) to assess the implications of (i) for the Psychometrics, Statistics, and AI communities.
The overall objective is to raise broader awareness.

\subsection{Overview}\label{SSEC:Overview}
Section \ref{SEC:Perspectives} explores the issue of factor indeterminacy from multiple perspectives.
Its starts by showing that the span of the model is inflated in comparison to the span of available information in $x$ (Section \ref{SSEC:Span}).
This lack of information reverberates in incongruities in the postulated model and the model under proxies of $\xi$ and $\epsilon$ (Section \ref{SSEC:ViolatePostulates}).
Our proxy of choice is the best linear estimate $\mathbb{E}(\xi|x)$ which has (of any linear function of $x$) maximum correlation with $\xi$.
Even then we have imperfect correlations between our proxy and $\xi$ (Section \ref{SSEC:CorNotPerfect}).
This observation can be used as a basis for the construction of infinitely many common and unique factors that do adhere to the postulated model (Section \ref{SSEC:Constructions}).
The constructed common factors are then useful in understanding factor indeterminacy from the perspective of correlational geometry (Section \ref{SSEC:Geometry}), which, in turn, is related to the mean-squared error between proxies and idealized common factors (Section \ref{SSEC:MSE}).
Lastly, it is shown how factor indeterminacy may also be understood from the Markov properties of the common factor model (Section \ref{SSEC:Markov}).
Section \ref{SEC:Approaches} then explores approaches towards the indeterminacy problem.
It is argued that the posterior distribution stance does not resolve the issue (Section \ref{SSEC:PosteriorStance}) but essentially provides another view on indeterminacy.
Rather, we reargue that the problem is tackled by considering an infinity of features that adhere to the model (Section \ref{SSEC:InfiniteFeature}) and show how this feeds into a practical, distribution-free estimation and factor scoring approach for realized, high-dimensional data (Section \ref{SSEC:Image}).
Section \ref{SEC:Discussion} provides a discussion on the implications of the foregoing for Psychometrics, Statistics, and AI (including machine learning and deep learning).
Section \ref{SSEC:Reflect} concludes with a reflection.
Mathematical details for Sections \ref{SEC:Perspectives}--\ref{SEC:Discussion} can be found in the Supplementary Material (SM).

\subsection{Main messages}\label{SSEC:Message}
This work has three main messages.
When the feature-dimension grows to infinity, we have latent factor determinacy across all its facets. 
Thus, in very high dimension, \emph{latent projection adheres to concentration of measure} (Sections 2.1--3.2, covering sub-objective (i)).
This feeds into an essentially distribution-free approach in the sample case. 
While we begin our development under Gaussianity (Figure \ref{FIG:Model}) we will see that, in high-dimension, we may drop this assumption.
Even when $p \gg n - 1 > m$ the linear projection of $x$ onto $\xi$ retrieves the true distribution of the latter when the number of features goes to infinity, irrespective of the distribution of $x$. 
Hence, \emph{latent projection is a form of (implicit) regularization}  (Section 3.3, covering the computational implications under sub-objective (ii)).
These findings have many implications for (the usage of) factor projections in Psychometrics, Statistics, and AI.
For example, the model might not be suited for the traditional behavioral survey setting (Psychometrics).
Moreover, a full Bayesian estimation approach will often be untenable in situations of approximate determinacy (Statistics).
Also, posterior collapse is a pervasive characteristic of deep generative models rather than an attribute of training approaches (AI).
Nonetheless, these implications also provide many opportunities for resource-efficient analysis of high-dimensional data. 
As from the perspectives of representation and manifold learning \emph{dimensionality is a blessing rather than a curse for the factor model}, emphasizing its suitability for the analysis of modern high-dimensional data (Sections \ref{SEC:Discussion}--\ref{SSEC:Reflect}, covering the methodological implications under sub-objective (ii)).

\subsection{Notation}\label{SSEC:Notation}
Complete explication of all notation can be found in Section S.1 of the SM.
Here we list some notation that may be considered non-standard or lesser-known.
We let $\mathbf{\Sigma}(\mathbf{\Theta})$ denote a model-implied correlation matrix as a function of (any subcollection of) $\mathbf{\Theta}$.
A conditional model-implied correlation matrix, e.g., the model-implied correlation matrix of $x$ given $\xi$, is denoted as $\mathbf{\Sigma}(\mathbf{\Theta})_{x|\xi}$.
We write $p(\xi|x)$ as a shorthand for the conditional (posterior) distribution $p(\xi|x, \mathbf{\Theta})$.
We use $\delta(z - \iota)$ for the shifted Dirac delta function, in $m-$dimensional Euclidean space, concentrated on $\iota$.
Let $\vartheta$ and $\omega$ be elements of a Hilbert space of $m$-dimensional random vectors. 
Then the $\ell_2$-norm $\|\vartheta\|_2 = [\tr\mathbb{E}(\vartheta\vartheta^{\top})]^{1/2}$ and the inner product $\langle\vartheta,\omega\rangle = \tr\mathbb{E}(\vartheta\omega^{\top})$.
In terms of operations, $\odot$ will denote the Hadamard product and $\circ$ will denote composition. 
For real matrix $\mathbf{A}$ we let $\mathbf{A}^{\odot\frac12} = \big(a_{rc}^{1/2}\big)$ and $\mathbf{A}^{\odot(-1)} = \big(a_{rc}^{-1}\big)$ denote the Hadamard square root and Hadamard inverse, respectively.
These are similarly defined for vectors.
We will consider the elementwise $\ell_{\infty,\infty}$ norm 
$\|\mathbf{A}\|_{\infty,\infty} = \max_{r,c}|a_{rc}|$.
In case $\mathbf{A}$ is symmetric and positive definite, we will also consider the spectral norm $\|\mathbf{A}\|_{2} = e_{\mathrm{max}}(\mathbf{A})$, where $e_{\mathrm{max}}(\mathbf{A})$ is the maximum eigenvalue of $\mathbf{A}$.
We use $\overset{\Pr}{\longrightarrow}$ to denote convergence in probability.
We permit ourselves the convenient abuse of notation $\lim_{p\,\uparrow\,\infty}\xih_p \overset{\Pr}{\longrightarrow} \mathcal{N}_m(\boldsymbol{0}, \PH)$ to represent $\xih_p \overset{\Pr}{\longrightarrow} \xi$ as $p\uparrow\infty$, where $\xi\in\mathcal{N}_m(\boldsymbol{0}, \PH)$.
We will treat the population case, showing that the problem of factor indeterminacy is persistent, irrespective of sample size.
But note that the sample setting is completely analogous.
%--------------- Introduction ---------------------------------------

%--------------- Perspectives ---------------------------------------
%\newpage
\section{Perspectives}\label{SEC:Perspectives}
%%%%%%%%%%%%%%%%%%%%%%%%%%%%
%%%--------------- Span ----
%%%%%%%%%%%%%%%%%%%%%%%%%%%%
\subsection{Model span}\label{SSEC:Span}
The simplest perspective on factor indeterminacy, first voiced by Wilson \citep{ref_Wilson1928PNAS}, comes from assessing the model span.
We may view $\xi$ and $\epsilon$ as basis vectors from which $x$ is linearly determined.
Conversely, we may also view the observable vector $x$ as a basis vector from which $\xi$ and $\epsilon$ may be linearly determined.
The dimension of $x$ is, however, not sufficient to span the joint dimension of $\xi$ and $\epsilon$.
This may also be understood from a simple decomposition on the model-implied correlation matrix:
\begin{equation}\label{EQ:CorMatSpan}\nonumber
    \mathbf{\Sigma}_{xx} = \left[
                        \begin{array}{cc}
                          \mathbf{\Lambda} & \mathbf{I}_p \\
                        \end{array}
                      \right]
                      \left[
                        \begin{array}{cc}
                          \mathbf{\Phi} & \boldsymbol{0} \\
                          \boldsymbol{0} & \mathbf{\Psi} \\
                        \end{array}
                      \right]
                      \left[
                        \begin{array}{c}
                          \mathbf{\Lambda}^{\top} \\
                          \mathbf{I}_p \\
                        \end{array}
                      \right].
\end{equation}
This decomposition conveys that the span of the model-implied correlation matrix, $m + p$, exceeds the span of the correlation matrix on the observables, $p$.
Hence, as will be shown in Section \ref{SSEC:ViolatePostulates}, with finite $p$ observables, there is not enough information to uniquely determine proxies for $m$ common factors in $\xi$ and $p$ unique factors in $\epsilon$ while respecting all postulates of the model (also see Section S.2.1 of the SM).
The current considerations are summarized in the first statement on factor indeterminacy:

\begin{IState}[Inflated span]\label{IDS:span}
Under the postulates of the model, the dimension of the observable random vector $x$ is lower than the joint dimension of the unobservable random vectors $\xi$ and $\epsilon$: $p < m + p$.
Hence, the span of the model-implied correlation matrix exceeds the span of the correlation matrix on the observables:
\begin{equation}
    \mathrm{span}[\mathbf{\Sigma}(\mathbf{\Theta})_{xx}] > \mathrm{span}[\mathbf{\Sigma}_{xx}].
\end{equation}
\end{IState}

%%%%%%%%%%%%%%%%%%%%%%%%%%%%
%%%---Postulate violation --
%%%%%%%%%%%%%%%%%%%%%%%%%%%%
\subsection{Estimators of $(\xi, \epsilon)$}\label{SSEC:ViolatePostulates}
\begin{sloppypar}
Even though the exogeneity of $\xi$ is deemed causal, there is an interest for estimating this unobservable from the observable information contained in $x$ as such projections provide the basis for data representation \citep{ref_FArepresent}.
The implication of the preceding is that we cannot determine $\xi$ and $\epsilon$ from $x$ in a way that befits the postulated model.
To show this, we will consider the joint distribution on $x$, $\xi$, and $\epsilon$ which, through some standard covariance algebra, can be obtained as $[x^{\top}, \xi^{\top}, \epsilon^{\top}]^{\top} \sim \mathcal{N}_{2p + m}\big[\boldsymbol{0}, \mathbf{\Sigma}(\mathbf{\Theta})\big]$, with
\begin{equation}\label{EQ:JoinedCor}
    \mathbf{\Sigma}(\mathbf{\Theta})
    =
    \begin{bmatrix}
    \mathbf{\Sigma}(\mathbf{\Theta})_{xx}    & \mathbf{\Sigma}(\mathbf{\Theta})_{x\xi}   & \mathbf{\Sigma}(\mathbf{\Theta})_{x\epsilon} \\
    \mathbf{\Sigma}(\mathbf{\Theta})_{\xi x} & \mathbf{\Sigma}(\mathbf{\Theta})_{\xi\xi} & \mathbf{\Sigma}(\mathbf{\Theta})_{\xi\epsilon} \\
    \mathbf{\Sigma}(\mathbf{\Theta})_{\epsilon x} & \mathbf{\Sigma}(\mathbf{\Theta})_{\epsilon\xi} & \mathbf{\Sigma}(\mathbf{\Theta})_{\epsilon\epsilon}
    \end{bmatrix}
    =
  \left[\begin{array}{ccc}
    \mathbf{\Lambda}\mathbf{\Phi}\mathbf{\Lambda}^{\top} + \mathbf{\Psi} & \mathbf{\Lambda}\mathbf{\Phi} & \mathbf{\Psi} \\
    \mathbf{\Phi}\mathbf{\Lambda}^{\top}                                 & \cellcolor{lightgray} \mathbf{\Phi}                 & \cellcolor{lightgray} \boldsymbol{0} \\
    \mathbf{\Psi}                                                        & \cellcolor{lightgray} \boldsymbol{0}                & \cellcolor{lightgray}\mathbf{\Psi}
  \end{array}\right].
\end{equation}
In the shaded area of (\ref{EQ:JoinedCor}) we recognize the central model postulates (i)--(iii).
The non-shaded area contains the model implications as derived from these postulates.
This perspective gives a handle on what it means to abide the factor model:

\begin{definition}[Factor solution]\label{DEF:FacSol}
Let $(\tilde{\xi}, \tilde{\epsilon})$ be a proxy of $(\xi, \epsilon)$ in accordance with Definition \ref{DEF:Proxy} and let $\tilde{\mathbf{\Theta}} \equiv \{m, \mathbb{E}(\tilde{\epsilon}\tilde{\epsilon}^{\top}), \mathbf{\Lambda}, \mathbb{E}(\tilde{\xi}\tilde{\xi}^{\top})\}$.
The set $\{\tilde{\mathbf{\Theta}}, (\tilde{\xi}, \tilde{\epsilon})\}$ is then a factor solution for $x$ if and only if $(\tilde{\xi}, \tilde{\epsilon})$ abides the model postulates and, together with $x = \LA\tilde{\xi} + \tilde{\epsilon}$, abides all model implications.
That is, if and only if $\tilde{\mathbf{\Theta}} = \mathbf{\Theta}$ and $[x^{\top}, \tilde{\xi}^{\top}, \tilde{\epsilon}^{\top}]^{\top}$ follows (\ref{EQ:JoinedCor}).
\end{definition}

As the joint distribution is Gaussian, a standard result states that the conditional distribution of $\xi$ given $x$ is Gaussian with expectation $\mathbf{\Sigma}(\mathbf{\Theta})_{\xi x}\mathbf{\Sigma}(\mathbf{\Theta})_{xx}^{-1}x$ and covariance $\mathbf{\Sigma}(\mathbf{\Theta})_{\xi\xi} - \mathbf{\Sigma}(\mathbf{\Theta})_{\xi x}\mathbf{\Sigma}(\mathbf{\Theta})_{xx}^{-1}\mathbf{\Sigma}(\mathbf{\Theta})_{x\xi} \equiv \mathbf{\Sigma}(\mathbf{\Theta})_{\xi|x}$ \citep[see, e.g., Theorem 2.5.1 in][]{ref_AndersonBible}:
\begin{equation}\label{EQ:CondDistXi}
    \xi|x \sim \mathcal{N}_{m}\Big[\mathbf{\Phi}\mathbf{\Lambda}^{\top}\big(\mathbf{\Lambda}\mathbf{\Phi}\mathbf{\Lambda}^{\top} + \mathbf{\Psi}\big)^{-1}x, \mathbf{\Phi} - \mathbf{\Phi}\mathbf{\Lambda}^{\top}\big(\mathbf{\Lambda}\mathbf{\Phi}\mathbf{\Lambda}^{\top} + \mathbf{\Psi}\big)^{-1}\mathbf{\Lambda}\mathbf{\Phi}\Big].
\end{equation}
As the expectation provides the most plausible estimator for $\xi$ through $x$ it is usual to let $\tilde{\xi} = \mathbb{E}(\xi|x)  \equiv \hat{\xi}$.
The estimator $\hat{\xi}$ can be conceived as the regression function of $\xi$ on $x$.
It is the best linear predictor of $\xi$ and is colloquially known as the Thomson score \citep{ref_Thomson1939}.
It may be expressed in the computationally more convenient form $\big(\mathbf{\Phi}^{-1} + \mathbf{\Lambda}^{\top}\mathbf{\Psi}^{-1}\mathbf{\Lambda}\big)^{-1}\mathbf{\Lambda}^{\top}\mathbf{\Psi}^{-1}x$ by Lemma \ref{SMLEM:HUAmat} from Appendix \ref{APP:Identities} (Section S.2.2.1 of the SM).
With $\hat{\xi}$ at hand we may find the corresponding proxy for $\epsilon$:
\begin{equation*}
    \tilde{\epsilon} \equiv \hat{\epsilon} = x - \mathbf{\Lambda}\hat{\xi} = x - \mathbf{\Lambda}\mathbf{\Phi}\mathbf{\Lambda}^{\top}\big(\mathbf{\Lambda}\mathbf{\Phi}\mathbf{\Lambda}^{\top} + \mathbf{\Psi}\big)^{-1}x = \mathbf{\Psi}\big(\mathbf{\Lambda}\mathbf{\Phi}\mathbf{\Lambda}^{\top} + \mathbf{\Psi}\big)^{-1}x,
\end{equation*}
where the last expression implies that the estimator $\hat{\epsilon}$ concurs with $\mathbb{E}(\epsilon|x)$ (Section S.2.2.2 of the SM).
A natural question then arises: Does $x = \mathbf{\Lambda}\hat{\xi} + \hat{\epsilon}$ abide (\ref{EQ:JoinedCor})?
It can be easily verified (Section S.2.2.3 of the SM) that it abides the model implications at the expense of violating the key postulates:
\end{sloppypar}

\begin{IState}[$\hat{\xi}$ and $\hat{\epsilon}$ violate the model postulates]\label{IDS:PostulateViolate}
Let $\hat{\xi}$ and $\hat{\epsilon}$ be the respective regression estimators of $\xi$ and $\epsilon$ on $x$.
The joint distribution of the random variables in $x = \mathbf{\Lambda}\hat{\xi} + \hat{\epsilon}$ adheres to the model-implications in (\ref{EQ:JoinedCor}) by violation of model postulates (i)--(iii):
\begin{align}\label{EQ:PostViolate}
    \mathbb{E}\big(\hat{\xi}\hat{\xi}^{\top}\big) & = \mathbf{\Phi}\mathbf{\Lambda}^{\top}\big(\mathbf{\Lambda}\mathbf{\Phi}\mathbf{\Lambda}^{\top} + \mathbf{\Psi}\big)^{-1}\mathbf{\Lambda}\mathbf{\Phi} \neq \mathbf{\Phi} \\\label{EQ:PostViolate2}
    \mathbb{E}\big(\hat{\epsilon}\hat{\epsilon}^{\top}\big) &= \mathbf{\Psi}\big(\mathbf{\Lambda}\mathbf{\Phi}\mathbf{\Lambda}^{\top} + \mathbf{\Psi}\big)^{-1}\mathbf{\Psi} \neq \mathbf{\Psi} \\\label{EQ:PostViolate3}
    \mathbb{E}\big(\hat{\xi}\hat{\epsilon}^{\top}\big) & = \mathbf{\Sigma}(\mathbf{\Theta})_{\xi|x}\LAt \neq \boldsymbol{0}.
\end{align}
Hence, $\{m, \mathbb{E}(\eph\eph^{\top}), \mathbf{\Lambda}, \mathbb{E}(\xih\xih^{\top}), (\xih,\eph)\}$ is not a factor solution for $x$.
\end{IState}

%%%%%%%%%%%%%%%%%%%%%%%%%%%%
%%%--- SMC -----------------
%%%%%%%%%%%%%%%%%%%%%%%%%%%%
\subsection{Squared multiple correlation}\label{SSEC:CorNotPerfect}
The predictor $\xih$ has the maximum correlation between $\xi$ and the regression function on $x$ \citep[see, e.g., Section 2.5.2 in][]{ref_AndersonBible}.
The diagonal elements of $\mathbb{E}(\xih\xih^{\top})$ then carry the squared multiple correlations between $\xih$ and $\xi$.
This also follows from the fact that $\mathbb{E}(\xih\xih^{\top}) = \mathbb{E}(\xih\xi^{\top})$ (Section S.2.3.1.\ of the SM).
Note that, by the Woodbury matrix identity \citep{ref_Duncan1944,ref_Woodbury}, the following equality holds (Section S.2.3.2 of the SM):
\begin{equation}\label{EQ:SMCwood}
    \mathbf{\Phi}\mathbf{\Lambda}^{\top}\big(\mathbf{\Lambda}\mathbf{\Phi}\mathbf{\Lambda}^{\top} + \mathbf{\Psi}\big)^{-1}\mathbf{\Lambda}\mathbf{\Phi} =
    \mathbf{\Phi} - \big(\mathbf{\Phi}^{-1} + \mathbf{\Lambda}^{\top}\mathbf{\Psi}^{-1}\mathbf{\Lambda}\big)^{-1}.
\end{equation}
Naturally, the diagonal elements of (\ref{EQ:SMCwood}) are nonnegative.
The diagonal elements of the bracketed term on the right-hand side tend to $\mathbf{\Phi}_{kk}$ under a weak factor structure and to $0$ under a strong factor structure.
Hence, in general, $\xi$ will not be perfectly predicted by its regression on $x$:

\begin{IState}[Squared multiple correlations less than unity]\label{IDS:SMC}
The squared multiple correlation coefficient between the best linear estimate $\hat{\xi}_k$ and the true factor $\xi_k$ is less than unity for all $k$:
\begin{equation}\label{EQ:SMCs}
    \Big[\mathbf{\Phi}\mathbf{\Lambda}^{\top}\big(\mathbf{\Lambda}\mathbf{\Phi}\mathbf{\Lambda}^{\top} + \mathbf{\Psi}\big)^{-1}\mathbf{\Lambda}\mathbf{\Phi}\Big]_{kk} <1 \,\,\,\forall k.
\end{equation}
\end{IState}

\begin{remark}
Note that we could make analogous remarks on the squared multiple correlation between $\epsilon$ and its regression on $x$.
It is contained in the diagonal elements of $\PS^{-1/2}\mathbb{E}(\hat{\epsilon}\hat{\epsilon}^{\top})\PS^{-1/2}$ which, in general, will be less than unity (Section S.2.3.3 of the SM).
\end{remark}

%%%%%%%%%%%%%%%%%%%%%%%%%%%%
%%%--- Constructions -------
%%%%%%%%%%%%%%%%%%%%%%%%%%%%
\subsection{Constructions of $(\xi, \epsilon)$}\label{SSEC:Constructions}
The estimators in Section \ref{SSEC:ViolatePostulates} are determinate through their reliance on $x$, but in general do not satisfy the postulates of our model.
It is also possible to \emph{construct} proxies that do satisfy these postulates.
Guttman \citep{ref_Guttman1955} realized that any matrix square root of the difference between $\mathbf{\Phi}$ and $\mathbf{\Phi\Lambda}^{\top}(\mathbf{\Lambda\Phi\Lambda}^{\top} + \mathbf{\Psi})^{-1}\mathbf{\Lambda\Phi}$ is essential to these constructions.
Let $s$ denote an arbitrary random vector such that $s \sim \mathcal{N}_m(\boldsymbol{0}, \mathbf{I}_m) \ci x$.
Moreover, let $\mathbf{H} \in \mathbb{R}^{m \times m}$ be any member of the special orthogonal group $\mathrm{SO}(m)$ \citep{ref_GEOGROU}, i.e., the group (under matrix multiplication) of rotation matrices.
Then one can construct a proxy for $\xi$ according to:
\begin{equation*}
    \xi' \equiv \mathbf{\Phi}\mathbf{\Lambda}^{\top}\big(\mathbf{\Lambda}\mathbf{\Phi}\mathbf{\Lambda}^{\top} + \mathbf{\mathbf{\Psi}}\big)^{-1}x +
    \Big[\mathbf{\Phi} - \mathbf{\Phi}\mathbf{\Lambda}^{\top}\big(\mathbf{\Lambda}\mathbf{\Phi}\mathbf{\Lambda}^{\top} + \mathbf{\mathbf{\Psi}}\big)^{-1}\mathbf{\Lambda}\mathbf{\Phi}\Big]^{1/2}\mathbf{H}s,
\end{equation*}
where the first component on the right-hand side is our best linear predictor $\hat{\xi}$ (the projection of $\xi'$ onto the space spanned by $x$) and where the second component represents the indeterminate orthogonal complement or residual operator.
%This residual operator is indeterminate due to reliance upon an arbitrary rotation of $s$.
%where the first component on the right-hand side is our determinate best linear predictor $\hat{\xi}$ and where the second component is indeterminate due to reliance upon an arbitrary rotation of $s$.
We may recognize $\mathbf{\Phi} - \mathbf{\Phi\Lambda}^{\top}(\mathbf{\Lambda\Phi\Lambda}^{\top} + \mathbf{\Psi})^{-1}\mathbf{\Lambda\Phi}$ as $\mathbf{\Sigma}(\mathbf{\Theta})_{\xi|x}$ from (\ref{EQ:CondDistXi}).
If we then define $\epsilon' \equiv x - \mathbf{\Lambda}\xi'$ we can construct $(\xi', \epsilon')$ according to the system (Sections S.2.4.1 and S.2.4.2 of the SM):
\begin{equation*}
    \begin{bmatrix}
    \xi' \\
    \epsilon'
    \end{bmatrix}
    \equiv
    \begin{bmatrix}
    \hat{\xi}      & \mathbf{\Sigma}(\mathbf{\Theta})_{\xi|x}^{1/2} \\
    \hat{\epsilon} & -\mathbf{\Lambda}\mathbf{\Sigma}(\mathbf{\Theta})_{\xi|x}^{1/2}
    \end{bmatrix}
    \begin{bmatrix}
    1 \\
    \mathbf{H}s
    \end{bmatrix}.
\end{equation*}
These constructions are on the probability space on which $x$ is defined and adhere to all model postulates and implications (Section S.2.4.3 of the SM).
However, there are infinitely many rotations $\mathbf{H}s$ and, hence, infinitely many constructions that abide the model:

\begin{IState}[Infinitely many constructions $(\xi', \epsilon')$ abide the model]\label{IDS:Constructions}
When $\mathbb{E}(\hat{\xi}\hat{\xi}^{\top}) \neq \PH$ we have that, by (\ref{EQ:SMCwood})
\begin{align}\label{EQ:CVARconstruct}\nonumber
    \mathbf{\Sigma}(\mathbf{\Theta})_{\xi|x} &= \mathbf{\Phi} - \Big[\mathbf{\Phi} - \big(\mathbf{\Phi}^{-1} + \mathbf{\Lambda}^{\top}\mathbf{\Psi}^{-1}\mathbf{\Lambda}\big)^{-1}\Big]\\
    &= \big(\mathbf{\Phi}^{-1} + \mathbf{\Lambda}^{\top}\mathbf{\Psi}^{-1}\mathbf{\Lambda}\big)^{-1} \neq \boldsymbol{0},
\end{align}
implying that the set of all random variables $(\xi',\epsilon')$ on the probability space on which $x$ is defined that, together with a given $\mathbf{\Theta}$, satisfies the factor model,
\begin{equation*}
    \Xi \equiv \Big\{\xi'|\exists\epsilon' : \{\mathbf{\Theta},(\xi',\epsilon')\} \,\, \mbox{forms a factor solution for} \,\,x\Big\},
\end{equation*}
is a set of infinite cardinality.
\end{IState}

%%%%%%%%%%%%%%%%%%%%%%%%%%%%
%%%--- Cor Geometry --------
%%%%%%%%%%%%%%%%%%%%%%%%%%%%
\subsection{Correlational geometry}\label{SSEC:Geometry}
We can relate the statements from Sections \ref{SSEC:CorNotPerfect} and \ref{SSEC:Constructions} to correlational geometry.
We have already seen that (\ref{EQ:SMCs}) represents the squared multiple correlation between $\hat{\xi}_k$ and $\xi_k$.
The square root
\begin{equation*}
  \Big[\mathbf{\Phi}\mathbf{\Lambda}^{\top}\big(\mathbf{\Lambda}\mathbf{\Phi}\mathbf{\Lambda}^{\top} + \mathbf{\Psi}\big)^{-1}\mathbf{\Lambda}\mathbf{\Phi}\Big]_{kk}^{1/2} \equiv \rho\big(\xih_{k},\xi_{k}'\big),
\end{equation*}
then represents the multiple correlation between $\hat{\xi}_k$ and any $\xi_k'$ stemming from the construction system (Section S.2.5.1 of the SM).
One can also take interest in the minimal correlation between any two constructions $\xi'$ and $\xi''$ in $\Xi$ \citep[see, e.g.,][]{ref_EBG78}, known as the Guttman criterion \citep{ref_Guttman1955}:
\begin{equation*}
  \gamma_k \equiv \inf\Big\{\rho\big(\xi_{k}',\xi_{k}''\big) | \xi_{k}',\xi_{k}'' \in \Xi\Big\}.
\end{equation*}
Naturally, if $\xi' \equiv \xih + \mathbf{\Sigma}(\mathbf{\Theta})_{\xi|x}^{1/2}\mathbf{H}s$ is in $\Xi$, then the maximally different $\xi'' \equiv \xih - \mathbf{\Sigma}(\mathbf{\Theta})_{\xi|x}^{1/2}\mathbf{H}s$ is too, and the respective factors $\xi_{k}'$ and $\xi_{k}''$ should have minimal correlation.
Independently of $\xi'$ in $\Xi$ we then have that (Section S.2.5.2 of the SM):
\begin{equation*}
  \mathbb{E}\big(\xi'\xi''^{\top}\big) = 2\PH\LAt\big(\Com + \PS\big)^{-1}\LA\PH - \PH = \PH - 2\mathbf{\Sigma}(\mathbf{\Theta})_{\xi|x} \equiv \mathbf{\Gamma},
\end{equation*}
such that $\gamma_k = (\mathbf{\Gamma})_{kk} = 2\rho(\xih_{k},\xi_{k}')^{2} - 1$.

These considerations can be given a geometrical interpretation (see Section S.2.5.3 of the SM for details), which may be viewed as an extension of the considerations in Mulaik \citep{ref_Mulaik1976}.
Define $\mathbf{\Sigma}(\mathbf{\Theta})_{\xi|x}^{1/2}\mathbf{H}s \equiv \tau$ and let $\tau'$ be the rotational representation of $\tau$ corresponding to $\xi'$.
Our vectors of interest, $\xih$, $\tau'$, and $\xi'$ are all elements of a Hilbert space of $m$-dimensional random vectors.
Their elements, $\xih_k$, $\tau'_k$, and $\xi'_k$ can be regarded as elements of a smaller Hilbert space of random variables.
This implies that we can define the $\ell_2$-norm and inner product (for both the $m$-dimensional random vectors and their elements) in terms of covariances.
Let us consider the $k$th element in each of our vectors.
For $\tau'_k$ we, as $\|\xi'_k\|_2 = \|\xih_k + \tau'_k\|_2 = 1$, and $\langle\tau'_k,\xi_k'\rangle = \|\tau'_k\|_2^2$, then have that:
\begin{equation*}
	\frac{\langle\tau'_k,\xi_k'\rangle}{\big\|\tau'_k\big\|_2\big\|\xi'_k\big\|_2} = \big\|\tau'_k\big\|_2 = 
	\Big[\mathbf{\Sigma}(\mathbf{\Theta})_{\xi|x}\Big]_{kk}^{1/2} 
	\equiv \rho(\tau'_k, \xi'_k) = \cos\big[\measuredangle \, \tau'_k,\xi'_k\big].
\end{equation*}
That is, the length of vector $\tau'_k$ also represents the correlation between $\tau'_k$ and  $\xi'_k$, which represents the cosine of their angle.
For $\xih_k$ we similarly have that
\begin{equation*}
	\big\|\xih_k\big\|_2 = 
	\rho(\xih_k, \xi'_k) = 
	\cos\big[\measuredangle \, \xih_k,\xi'_k\big].
\end{equation*}
There are infinitely many rotations of $s$ and thus infinitely many $\tau_k$, all perpendicular to $\xih_k$.
This implies that we have infinitely many alternative constructions at the same angle with $\xih_k$.
Hence, we have a cone with the apex at the origin resting on the equator formed by all rotational representations of $\tau_k$.
All constructions $\xi_k'$ then occupy the locus of the cone around $\xih_k$.
This cone finds itself in the unit sphere as $\|\xi'_k\|_2 = 1$.
The Guttman criterion $\gamma_k$ then represents the cosine of the double angle between $\xih_k$ and $\xi'_k$ and thus the cosine of the angle between any $\xi_{k}'$ and its maximally different counterpart on the opposite side of the cone.
Figure \ref{FIG:Cones2} provides a visualization for a situation of global rotational uniqueness (left-hand side) as well as a situation of local rotational uniqueness (right-hand side). 
In case of local rotational uniqueness polarity reflections are allowed implying a reflection of the cone such that we have a double cone placed apex to apex.
For multiple factors this translates to a (double) hypercone in the hypersphere for the orthogonal case and the (double) oblique hypercone in the hyperspheroid for the oblique case.

\begin{figure}[h!]
	\centering
	\includegraphics[width=\textwidth]{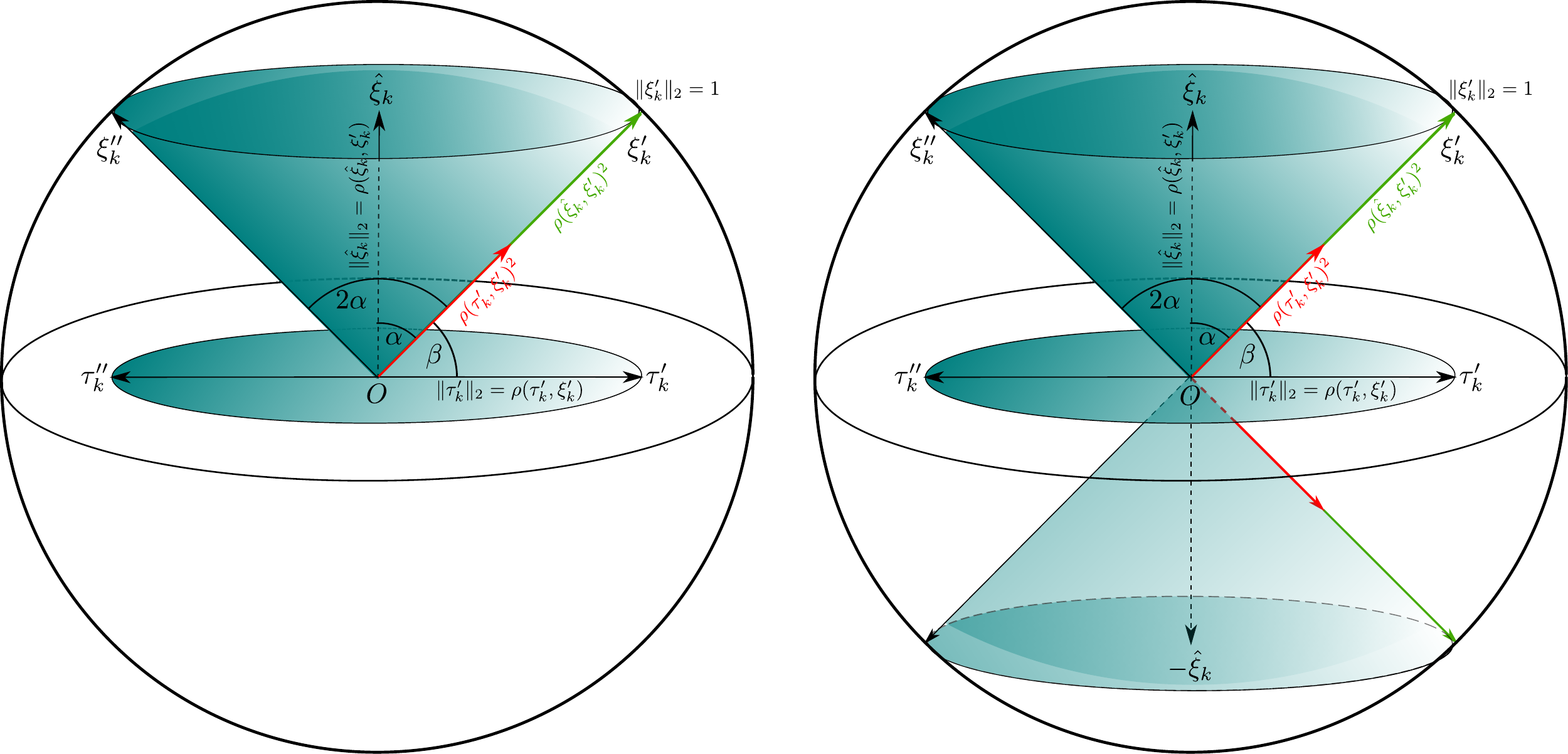}
	\caption{
		A visualization of the correlational geometry of indeterminacy. 
		The angles $\alpha$ and $\beta$ are (in degrees)  $\alpha \equiv \measuredangle \, \xih_k,\xi'_k = \arccos[\rho(\xih_k, \xi'_k)] \cdot 180/\pi$ and $\beta \equiv \measuredangle \, \tau'_k,\xi'_k = \arccos[\rho(\tau'_k, \xi'_k)] \cdot 180/\pi$.
		The double $\alpha$ angle follows from standard trigonometry: 
		$2\alpha = \arccos[2\rho(\xih_k, \xi'_k)^2 - 1] \cdot 180/\pi = \arccos(\gamma_k) \cdot 180/\pi$.
		The unit-length vector $\xi'_k$ consists of the sum of vector projections of $\tau'_k$ (in red) and $\xih_k$ (in green) onto $\xi'_k$.
		These vector projections are of respective lengths $\rho(\tau'_k, \xi'_k)^2$ and $\rho(\xih_k, \xi'_k)^2$, representing squared multiple correlations.
		Hence, the length of $\xi'_k$ is divided into a part predictable by $\xih_k$ and a part dependent on the arbitrary indeterminate component $\tau'_k$.
		For this case we have that $\xi'_k$ is predicted by $\xih_k$ and dependent on $\tau'_k$ in equal parts: $\rho(\xih_k, \xi'_k)^2 = \rho(\tau'_k, \xi'_k)^2 = .5$. 
		Hence, $\rho(\xih_k, \xi'_k) = \sqrt{.5} = .7071068$.
		The angle between $\xih_k$ and any construction $\xi'_k$ is then $\alpha = \arccos(.7071068) \cdot 180/\pi =  45^\circ$ and the angle associated with the Guttman criterion is $2\alpha = 90^\circ$.
		This shows that, under relatively strong correlation between $\xih_k$ and $\xi'_k$, it is already possible to construct alternatives that are uncorrelated.
		%The left-hand figure then represents a situation of global rotational %uniqueness while the right-hand side represents a situation of local %rotational uniqueness.
		The squared multiple correlation $\rho(\tau'_k, \xi'_k)^2$ and the double $\beta$ angle are related to the mean squared error between $\xi'$ and $\xih$ as explained in Section \ref{SSEC:MSE}.
	}
	\label{FIG:Cones2}
\end{figure}

In practice it is usual to inspect the sample counterparts to $\LA$ and $\PH$ to interpret the factors in $\xi$.
However, unless the correlation between factor score and all factor constructions is perfect, these matrices are compatible with differing positions in latent space:

\begin{IState}[$\mathbf{\Theta}$ does not determine the meaning of $\xi$]\label{IDS:CorGeom}
In general we have that
\begin{equation}\label{EQ:GuttGap}
  \gamma_k <1 \,\,\,\forall k.
\end{equation}
Lower values of $\gamma_k$ imply a wider gap between two maximally different $\xi_k'$ and $\xi_k''$.
This, in turn, implies that a given $\mathbf{\Theta}$ can correspond to (radically) different positionings in latent space.
Hence, the (semantic) meaning of $\xi$ cannot be strictly inferred from $\mathbf{\Theta}$.
\end{IState}

\begin{remark}
Analogous statements on the correlational geometry of indeterminacy in the unique factors are possible (Section S.2.5.4 of the SM).
In addition, we can also express the correlations between constructions of  $\xi$ and $\epsilon$ that are maximally different from $0$ (Section S.2.5.5 of the SM).
\end{remark}

%%%%%%%%%%%%%%%%%%%%%%%%%%%%
%%%--- MSE -----------------
%%%%%%%%%%%%%%%%%%%%%%%%%%%%
\subsection{Mean squared error}\label{SSEC:MSE}
We may, related to the foregoing geometrical considerations, assess the mean squared error (MSE) between $\xi'$ and $\xih$.
The MSE is expressible in terms of covariances.
We have, noting $\xi' - \xih = \tau'$, that, for all $\xi' \in \Xi$ (see Section S.2.6.1 of the SM):
\begin{align*}
    \big\|\xi' - \xih\big\|_{2}^{2} = \tr\mathbb{E}\Big[\big(\xi' - \xih\big)\big(\xi' - \xih\big)^{\top}\Big] =
    \tr\mathbb{E}\big(\tau'\tau'^{\top}\big) =
    \big\|\tau'\big\|_{2}^{2} = \tr\Big[\mathbf{\Sigma}(\mathbf{\Theta})_{\xi|x}\Big].
\end{align*}
That is, the MSE equals the trace of the conditional covariance matrix of $\xi$ given $x$.
We may also interpret the MSE as the $k$-sum of squared multiple correlations $\rho(\tau'_k, \xi'_k)^2$.
That is, the sum of those parts of each $\xi'_k$ that are dependent on the arbitrary indeterminate component $\tau'_k$ or, conversely, the sum of the linearly unpredictable parts of each $\xi'_k$.
The MSE between $\xi'$ and $\xih$ can thus be seen as the scalar projection of $\tau'$ onto $\xi'$ (Figure \ref{FIG:Cones2}).

It is insightful to give an equivalent MSE expression in terms of variance and squared bias (Section S.2.6.2 of the SM):
\begin{equation*}
	\big\|\xi' - \xih\big\|_{2}^{2} =
    \tr\Big[\PH\LAt\big(\Com + \PS\big)^{-1}\LA\PH\Big] - \tr\big(\mathbf{\Gamma}\big),
\end{equation*}
implying that the Guttman bound can be seen as the negation of the squared bias between $\xi'$ and $\xih$.
This itself relates to the double $\beta$ angle (Figure \ref{FIG:Cones2}).
We have that (Section S.2.6.3 of the SM) $2\beta = \arccos(-\gamma_k) \cdot 180/\pi$, expressing the degree to which the cone has collapsed onto the equator.
These considerations relate the MSE to conical contraction:

\begin{IState}[Nonzero MSE between $\xi'$ and $\xih$]\label{IDS:MSE}
	In general,
	\begin{equation}\label{EQ:MSEGap}
	\Big[\mathbf{\Sigma}(\mathbf{\Theta})_{\xi|x}\Big]_{kk} > 0\,\,\,\forall k, 
	\end{equation}
	implying the imperfect retraction of the equator formed by the rotational representations of $\tau$ towards the origin and thus a dependency on this arbitrary indeterminate component to abide model postulates. This, in turn, implies an imperfect conical contraction of the $\xi'$ in $\Xi$ around $\xih$.
\end{IState}

\begin{remark}
We can make analogous remarks on the MSE between $\epsilon$ and $\eph$. It amounts to $\tr\big[\mathbf{\Sigma}(\mathbf{\Theta})_{\epsilon|x}\big] = \tr\Big[\PS - \PS\big(\Com + \PS\big)^{-1}\PS\Big]$ (Section S.2.6.4 of the SM).
\end{remark}

%%%%%%%%%%%%%%%%%%%%%%%%%%%%
%%%--- Markov --------------
%%%%%%%%%%%%%%%%%%%%%%%%%%%%
\subsection{Markov properties}\label{SSEC:Markov}
We may also view indeterminacy from the perspective of Markov properties.
Let us consider graphs $\mathcal{G} = (\mathcal{V}, \mathcal{E})$ consisting of a set $\mathcal{V} = \mathcal{V}_{\xi} \cup \mathcal{V}_{x}$ of vertices and set of edges $\mathcal{E} = \mathcal{E}_{\xi \xi} \cup \mathcal{E}_{\xi x} \cup \mathcal{E}_{x x}$.
The vertices correspond to our collection of random variables $\{x,\xi\} \sim \mathcal{N}_{(p + m)}\big[\boldsymbol{0},\mathbf{\Sigma}(\mathbf{\Theta})_{[1:2,1:2]}\big]$, where $\mathbf{\Sigma}(\mathbf{\Theta})_{[1:2,1:2]}$ denotes the top-left $2\times 2$ block of (\ref{EQ:JoinedCor}).
Edges in $\mathcal{E}_{\xi x}$ consist of ordered pairs such that $\xi_{k} \rightarrow x_{j} \in \mathcal{E}_{\xi x}$.
The edges in $\mathcal{E}_{\xi \xi}$ consist of pairs of distinct vertices in $\mathcal{V}_{\xi}$ such that $\xi_{k} \leftrightarrow \xi_{k'} \in \mathcal{E}_{\xi \xi}$.
Edges in $\mathcal{E}_{xx}$ consist of pairs of distinct vertices in $\mathcal{V}_{x}$ such that $x_{j} \leftrightarrow x_{j'} \in \mathcal{E}_{xx}$.
We may connect any particular factor structure to the graph by having edges correspond to nonzero parameters.
The strict uni-directedness of edges in $\mathcal{E}_{\xi x}$ follows from the assumption of exogeneity of $\xi$ (to $x$) and expresses causal influence in the sense that $\xi_{k} \rightarrow x_{j} \in \mathcal{E}_{\xi x}$ if $\lambda_{jk} \neq 0$.
The bidirectional edges $\leftrightarrow$ represent marginal dependence. 
Then $\xi_{k} \leftrightarrow \xi_{k'} \in \mathcal{E}_{\xi \xi}$ if $\phi_{kk'} = \phi_{k'k} \neq 0$ and $x_{j} \leftrightarrow x_{j'} \in \mathcal{E}_{xx}$ if $\psi_{jj'} = \psi_{j'j} \neq 0$.
A graph so obtained is a directed bipartite mixed graph (Section S.2.7.1 of the SM) and can be thought of as a marginalization of the traditional path diagram of the factor model \citep[see, e.g.,][]{ref_Bollen1989} over the error variables (Section S.2.7.2 of the SM).

Inferring the (probabilistic) conditional (in)dependencies (i.e., the Markov properties) associated with such graphs can be achieved generalizing Pearl's notion of $d$-separation \citep{ref_Pearl} to $\mathfrak{m}$-separation \citep[][see Section S.2.7.3 of the SM]{ref_Koster1999,ref_Rich2003}.
We use the calligraphy $\mathfrak{m}$ to avoid confusion with the factor dimension $m$.
Now, consider the model in (\ref{EQ:FAmodel}) satisfying the stated assumptions in Section \ref{SSEC:FAmodel}.
Denote the set of (Gaussian) probability distributions of $\{x,\xi\}$ that satisfy this model by $\mathscr{P}$.
Let $\mathcal{G} = (\mathcal{V}, \mathcal{E})$ be the associated directed bipartite mixed graph.
Let $A$, $B$, and $C$ be (arbitrary) pairwise disjoint subsets of $\mathcal{V}$ with $A$ and $B$ nonempty and $C$ possibly empty. 
Also, let $A\ci_{\mathcal{G}} B \,|\, C$ denote $A$ being $\mathfrak{m}$-separated from $B$ given $C$ in $\mathcal{G}$.
Let us also assume that any off-diagonal zero entry in the model-implied precision matrix $\mathbf{\Sigma}(\mathbf{\Theta})_{[1:2,1:2]}^{-1}$ is never induced by cancellation of constituent parameters (faithfulness).
Then, for any distribution $\mathcal{P}$ compatible with $\mathcal{G}$ \citep{ref_Koster1999}:
\begin{equation*}
	A\ci_{\mathcal{G}} B \,|\, C \Leftrightarrow 
	A\ci B \,|\, C\,[\mathcal{P}], ~~~\forall ~ \mathcal{P} \in \mathscr{P}.
\end{equation*}
That is, graphical $\mathfrak{m}$-separation implies probabilistic separation in the form of conditional independence in all $\mathcal{P} \in \mathscr{P}$ (global $\mathcal{G}$-Markovness) and vice versa (Markov perfectness). 
This allows us to characterize the Markov properties of the model by means of (the support of) the concentration or precision matrix $\mathbf{\Sigma}(\mathbf{\Theta})_{[1:2,1:2]}^{-1}$.

Let $\mathbf{\Pi} \equiv \mathbf{\Sigma}(\mathbf{\Theta})_{x\xi}\mathbf{\Sigma}(\mathbf{\Theta})_{\xi\xi}^{-1}$.
The model-implied precision matrix on the joint observables $x$ and latents $\xi$ can then be found as (Section S.2.7.4 of the SM):
\begin{align*}
%	\begin{bmatrix}
%		\mathbf{\Sigma}(\mathbf{\Theta})_{xx}  &  \mathbf{\Sigma}(\mathbf{\Theta})_{x\xi} \\
%		\mathbf{\Sigma}(\mathbf{\Theta})_{\xi x} & \mathbf{\Sigma}(\mathbf{\Theta})_{\xi\xi}
%	\end{bmatrix}^{-1}
%	=
	\begin{bmatrix}
		\mathbf{\Sigma}(\mathbf{\Theta})_{x|\xi}^{-1}  &  -\mathbf{\Sigma}(\mathbf{\Theta})_{x|\xi}^{-1}\mathbf{\Pi} \\
		-\mathbf{\Pi}^{\top}\mathbf{\Sigma}(\mathbf{\Theta})_{x|\xi}^{-1}  & 
		\mathbf{\Sigma}(\mathbf{\Theta})_{\xi|x}^{-1}
	\end{bmatrix} 
	= 
	\begin{bmatrix}
		\PSi  &  -\PSi\LA \\
		-\LAt\PSi & \PHi + \Gram
	\end{bmatrix}.
\end{align*}
Hence, the observables $x$ are independent given the latents $\xi$, except for those observables that are chained by correlated measurement errors.
From an identification perspective this means that the $x$, predominantly, are conditionally independent.
An intuitive understanding of this condition forms the rationale for metrics such as the Kaiser-Meyer-Olkin index \citep{ref_KMO}.
Of course we cannot condition on an unobserved source.
A necessity for full identification of the model is then that the conditional precision $\mathbf{\Sigma}(\mathbf{\Theta})_{x|\xi}^{-1}$ can be obtained from the precision on the observables alone or, in other words, if the conditioning does not have to rely on the latent factors, that is: $\mathbf{\Sigma}(\mathbf{\Theta})_{xx}^{-1} = \mathbf{\Sigma}(\mathbf{\Theta})_{x|\xi}^{-1}$.

\begin{IState}[Non-assessable conditional independence properties]\label{IDS:Markov}
	In general
	\begin{equation}\label{EQ:CIGap}
		\big(\LA\PH\LAt + \PS\big)^{-1} \neq \mathbf{\Psi}^{-1}.
	\end{equation}
	Hence, the conditional independence properties on the observables as implied by the model cannot be assessed with (the sufficient statistic for) the observable data.
\end{IState}

%The condition under which (\ref{EQ:CIGap}) would be an equality is intricately related to the determinacy of the latents (Section \ref{SSEC:InfiniteFeature}).
%This connects to the following remark:

\begin{remark}
	We can give an alternative view on Indeterminacy Statement \ref{IDS:Markov} in which we would condition on our best linear predictor $\xih$.
	We would then have determinacy when $\mathbf{\Sigma}(\mathbf{\Theta})_{x|\xih} = \mathbf{\Sigma}(\mathbf{\Theta})_{x|\xi}$.
	This concurs with demanding that (\ref{EQ:PostViolate}) from Indeterminacy Statement \ref{IDS:PostulateViolate} expresses an equivalency (Section S.2.7.5 of the SM).
\end{remark}
%--------------- Perspectives ---------------------------------------

%--------------- Approaches -----------------------------------------
\section{Approaches}\label{SEC:Approaches}
%%%%%%%%%%%%%%%%%%%%%%%%%%%%
%%%--- Posterior approach --
%%%%%%%%%%%%%%%%%%%%%%%%%%%%
\subsection{Factor estimators as posterior constructs}\label{SSEC:PosteriorStance}
Bartholomew \citep{ref_Bartholomew1981} argued that many psychometricians had treated $\xi$ as an unknown parameter rather than a random variable.
To him, factor indeterminacy reflected the failure to quantify the uncertainty regarding estimators of $\xi$.
A Bayesian quantification of this uncertainty would then resolve the issue.
He viewed the distributional assumption on $\xi$ as a prior distribution and treated (\ref{EQ:CondDistXi}) as the posterior of $\xi$ (see Section S.3.1.1 of the SM for some considerations).
Using sample estimate $\hat{\mathbf{\Theta}}$ of $\mathbf{\Theta}$ would then result in an empirical Bayes evaluation of the estimator $\xih$ and its associated uncertainty.
Aitkin and Aitkin \citep{ref_AA2005} would reiterate this stance for the full Bayesian data-augmentation approach using counterfactual realizations of $\xi$.

We concur that solely using $\xih$ ignores its uncertainty and that assessment is important.
This assessment can be made by full or empirical Bayesian approaches (or by evaluation of any quantity indicative of indeterminacy as given in Indeterminacy Statements \ref{IDS:span}--\ref{IDS:Markov}).
It does not, however, resolve the core problem: $(\xih,\eph)$, as posterior locations, will not be part of a factor solution for $x$ in the sense of Definition \ref{DEF:FacSol}, implying that the postulates from which the model implications are derived, cannot be retrieved from those same implications.
Moreover, $\xih$ is a random variable from which scorings (realizations in latent space) are obtained. 
The uncertainty expressed by $\mathbf{\Sigma}(\mathbf{\Theta})_{\xi|x}$ implies that there are infinitely many alternative scoring distributions (see Section S.3.1.2 of the SM for additional considerations on the ramifications).
Hence, we are interested in characterizing the posterior plausibility of alternative scoring distributions.
A simple approach would be to define a credibility interval on the (marginal) posterior standard deviations:
\begin{equation*}
	\xih \pm c\,\mathrm{diag}\Big[\mathbf{\Sigma}(\mathbf{\Theta})_{\xi|x}^{\odot\frac12}\Big],
\end{equation*}
where $c$ is a scalar constant.
This represents an $m$-orthotope based on simultaneous shifts upward or downward from $\xih$.
The value of $c$ beyond which the posterior plausibility becomes negligible is $4$ (Section S.3.1.3 of the SM).
We may also, moving beyond marginal variances, consider the Mahalanobis distance between $\xi|x$ and $\xih$ to obtain a credibility region:
\begin{equation*}
	\mathrm{CR}({\xih}) \equiv 
	\bigg\{\xi|x \in \mathbb{R}^{m} \,\Big| \Big(\xi|x - \xih\Big)^{\top}\mathbf{\Sigma}(\mathbf{\Theta})_{\xi|x}^{-1}\Big(\xi|x - \xih\Big) \leq \chi^2_{m,\alpha}\bigg\},
\end{equation*}
representing the interior of an iso-distance $m$-dimensional ellipsoid centered at $\xih$ and distance constant $\chi^2_{m,\alpha}$, where the latter represents the $\chi^2$-value with $m$-degrees of freedom located at $\alpha \in (0,1)$.
Under our distributional assumptions $\mathrm{Pr}(\xi|x \in \mathrm{CR}({\xih})) = 1 - \alpha$ \citep[][p.\ 47, p.\ 79]{ref_AndersonBible}.
The volume of this ellipsoid is a measure of dispersion around $\xih$ and is proportional to $\big|\mathbf{\Sigma}(\mathbf{\Theta})_{\xi|x}^{1/2}\big|$ \citep[][p.\ 301]{ref_Cramer46}.
Clearly, given the duality between $\mathbb{E}(\xih\xih^{\top})$ and $\mathbf{\Sigma}(\mathbf{\Theta})_{\xi|x}$ in the sense of $\mathbb{E}(\xih\xih^{\top}) = \PH - \mathbf{\Sigma}(\mathbf{\Theta})_{\xi|x}$, $\mathbf{\Sigma}(\mathbf{\Theta})_{\xi|x}$ needs to approximate $\boldsymbol{0}$ for concentration on $\xih$ and for $\xih$ to behave as $\xi$.
If not, we have an infinity of alternative scoring distributions and the most plausible one, $\xih$, does not behave as $\xi$.
As $\|\xi - \xih\|_{2}^{2}$ becomes larger, it is increasingly possible to have alternative scoring distributions whose locations have high posterior plausibility, but whom would simultaneously imply radically differing positioning in latent space.
We exemplify these considerations with a single latent dimension in Figure \ref{FIG:BIndet2} before stating our last indeterminacy statement.

\begin{figure}[h!]
	\centering
	\includegraphics[width=\textwidth]{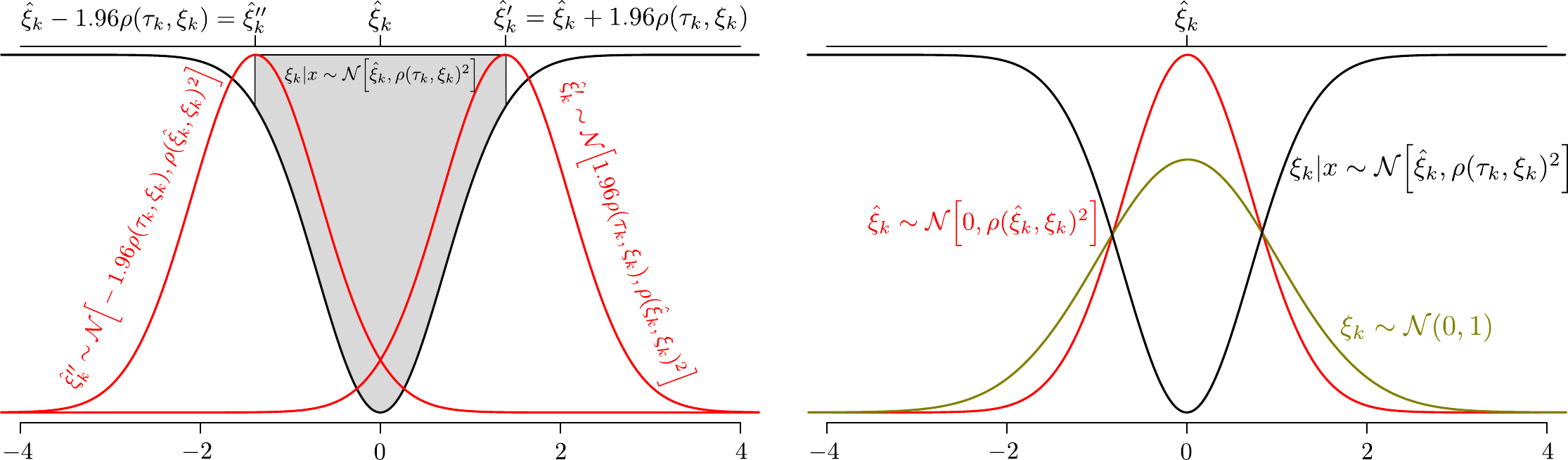}
	\caption{
		Visualization of the Bayesian view on indeterminacy in one dimension involving $\xi_k$ (see Section S.3.1.4 of the SM for an expanded figure).
		The inverted distribution (in black) is the posterior of $\xi_k|x \sim \mathcal{N}\big[\xih_k,\rho(\tau_k,\xi_k)^2\big]$.
		The posterior spread is an expression of the distance between the postulated $\xi_k$ and the best linear predictor $\xih_k$ and represents the uncertainty around the latter.
		We depict a situation in which $\rho(\tau_k,\xi_k)^2 = \rho(\xih_k,\xi_k)^2 = .5$ (as in Figure \ref{FIG:Cones2}).
		On the left-hand side we see two scoring distributions (in red) whose locations are respectively located at the lower and upper bounds of the $95\%$ credible interval for $\xi_k|x$.
		These scoring distributions are distributed as $\xi''_k \sim \mathcal{N}\big[-1.96\rho(\tau_k,\xi_k),\rho(\xih_k,\xi_k)^2\big]$ and $\xi'_k \sim \mathcal{N}\big[1.96\rho(\tau_k,\xi_k),\rho(\xih_k,\xi_k)^2\big]$
		(note that we now use primes to indicate locations alternative to $\xih$ rather than the constructions from Section \ref{SSEC:Constructions}).
		These scoring distributions will produce realizations (factor scores) that imply very different positionings in latent space (mostly negative versus mostly positive).
		On the right-hand side we see the scoring distribution with the highest posterior plausibility, $\xih_k \sim \mathcal{N}\big[0,\rho(\xih_k,\xi_k)^2\big]$.
		It will score on a narrower domain than the idealized postulate $\xi_k \sim \mathcal{N}(0,1)$ (in green).
		As the distribution of $\xi|x$ widens the scoring distributions narrow and the production of very different positions in latent space can already happen for alternative scoring distributions whose locations have high posterior plausibility.
		For determinacy one would like $\xi|x$ to concentrate on $\xih$ (Section S.3.1.5 of the SM).
		The closeness of alternative scoring distributions of interest may be quantified with an overlap coefficient or an appropriate distance such as the Hellinger distance (Section S.3.1.6 of the SM).
		Such metrics can also be used to draw a Bayesian analogy to the Guttman criterion (Section S.3.1.7 of the SM).
	}
	\label{FIG:BIndet2}
\end{figure}

\begin{IState}[Infinite superposition of scoring distributions]\label{IDS:PosteriorPoint2}
	In general $\mathbf{\Sigma}(\mathbf{\Theta})_{\xi|x} \neq \boldsymbol{0}$, such that
	\begin{equation*}
		\xi|x \nsim \lim\limits_{\mathbf{\Sigma}(\mathbf{\Theta})_{\xi|x} \rightarrow \boldsymbol{0}} \mathcal{N}_{m}\Big[\xih, \mathbf{\Sigma}(\mathbf{\Theta})_{\xi|x}\Big] = \delta\Big(\xi|x - \xih\Big).
	\end{equation*}
	That is, the conditional distribution of $\xi|x$ is not a Dirac measure concentrated on the point of highest posterior plausibility $\xih$.
	As a result, $\xih$ does not have the distributional behavior of the postulated $\xi$:
	\begin{equation*}
		\xih \nsim \mathcal{N}_{m}(\boldsymbol{0}, \PH).
	\end{equation*}
	Moreover, we then have an infinite superposition of scoring distributions whose locations have posterior plausibility determined by  $\mathbf{\Sigma}(\mathbf{\Theta})_{\xi|x}$.
\end{IState}

\begin{remark}
The considerations for $\epsilon$ lead to an analogous indeterminacy statement for $\epsilon|x$ and $\eph$ (Section S.3.1.8 of the SM).
\end{remark}

%%%%%%%%%%%%%%%%%%%%%%%%%%%%
%%%--- Infinite features ---
%%%%%%%%%%%%%%%%%%%%%%%%%%%%
\subsection{Infinite feature space}\label{SSEC:InfiniteFeature}
We show that we attain factor determinacy when the feature-space adhering to the model grows to infinity.
We do so by first generalizing a result by Guttman \citep{ref_Guttman1955} to any identified factor structure (all proofs are deferred to Appendix \ref{APP:Proofs}).

\begin{theorem}[High-dimensional limit of the inverse factor analytic kernel]
	\label{PROP:GramLimit}
	Consider model (\ref{EQ:FAmodel}) and the associated postulates and assumptions from Section \ref{SSEC:FAmodel}.
	Let these assumptions hold in a dimensional sequence $p$.
	Let $m$ be finite and fixed.
	Let a feature being an indicator for a factor be expressed by $\lambda_{jk} \neq 0$.
	Assume that, for all $j$ and $k$, $(\PS^{-1/2}\LA)_{jk} = \sum_{j'}(\PS^{-1/2})_{jj'}\lambda_{j'k} = 0\,$ only if $\,(\PS^{-1/2})_{jj'}\lambda_{j'k} = 0\,$ for all $j'$.
	Additionally assume that each factor has a number of indicators that is expressible as a fraction of $p$, $p/d_k \equiv c_k$, where $1 \leq d_k < p$ is a positive, finite divisor for factor $k$ such that $c_k \in \mathbb{N}_{\infty}$.
	Moreover, let there, in the sequence $p$, be a weakest precision-weighted indicator such that $\inf\big\{|(\PS^{-1/2}\LA)_{jk}| ~|~ |(\PS^{-1/2}\LA)_{jk}| \neq 0\big\} = \varphi$, with $\varphi$ strictly positive and arbitrarily small.
	Then
	\begin{equation*}
		\lim\limits_{p\,\uparrow\,\infty} \Big\|\big(\PHi + \Gram\big)^{-1}\Big\|_2  = 0.
	\end{equation*}
	The condition that $p$ grows to infinity is both necessary and sufficient.
\end{theorem}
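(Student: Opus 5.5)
The plan is to reduce the spectral norm statement to a lower bound on the smallest eigenvalue of the Gram matrix $\mathbf{G}_p \equiv \Gram = (\PS^{-1/2}\LA)^{\top}(\PS^{-1/2}\LA)$. Since $\PHi \succ 0$ and $\mathbf{G}_p \succeq 0$, Weyl's inequality gives
\begin{equation*}
\Big\|\big(\PHi + \mathbf{G}_p\big)^{-1}\Big\|_2 = \frac{1}{e_{\mathrm{min}}(\PHi + \mathbf{G}_p)} \leq \frac{1}{e_{\mathrm{min}}(\PHi) + e_{\mathrm{min}}(\mathbf{G}_p)}.
\end{equation*}
Because $\PH$ is fixed with $m$ finite, sufficiency therefore reduces to showing $e_{\mathrm{min}}(\mathbf{G}_p) \uparrow \infty$ as $p \uparrow \infty$.

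For the growth of the diagonal, write $\mathbf{B} \equiv \PS^{-1/2}\LA$. The no-cancellation assumption ensures that every indicator $j$ of factor $k$, that is, every $j$ with $\lambda_{jk} \neq 0$ (entering through the $j'=j$ term of the sum), yields $\mathbf{B}_{jk} \neq 0$. Hence column $k$ of $\mathbf{B}$ has at least $c_k = p/d_k$ nonzero entries, each of modulus at least $\varphi$. This gives $(\mathbf{G}_p)_{kk} = \sum_j \mathbf{B}_{jk}^2 \geq (p/d_k)\varphi^2 \uparrow \infty$, and in fact the diagonal grows linearly in $p$.

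The main obstacle is passing from diagonal growth to growth of the smallest eigenvalue, since collinearity among columns could keep $e_{\mathrm{min}}$ bounded. Two routes are available.
\begin{enumerate}[(a)]
\item Use the full rank of $\LA$ (assumption (vi)), hence of $\mathbf{B}$, holding along the sequence. Normalize by $p$ and consider $\mathbf{G}_p/p$. The diagonal of $\mathbf{G}_p/p$ is bounded below by $\varphi^2/d_k$, so one needs $\liminf_p e_{\mathrm{min}}(\mathbf{G}_p/p) > 0$. If the limit Gram matrix were singular, some unit vector $v$ would satisfy $\|\mathbf{B}v\|_2^2 = o(p)$.
\item Partition the rows into blocks of indicators. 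Take rotational uniqueness (assumption (viii)) in its restricted/confirmatory or diagonal-Gram form, interpreted as guaranteeing, for each factor, a proportion of indicators that is not absorbed by the others. A Gershgorin or Schur-complement bound on the block structure then gives $e_{\mathrm{min}}(\mathbf{G}_p) \geq \kappa p$ for some $\kappa > 0$.
\end{enumerate}
In the exploratory case $\mathbf{G}_p$ is diagonal by assumption (viii), so $e_{\mathrm{min}}(\mathbf{G}_p) = \min_k (\mathbf{G}_p)_{kk} \geq \min_k (p/d_k)\varphi^2$ and the result is immediate. I would present that case first, then treat the general case by arguing that the identifying restrictions together with the full rank of $\mathbf{B}$ persisting in the sequence keep the smallest eigenvalue growing, stating explicitly the mild non-degeneracy this requires if it cannot be fully derived.

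Necessity is straightforward. For finite $p$, assumption (vi) gives $\mathrm{rank}(\LA) = m$, and $\PS^{-1} \succ 0$, so $\mathbf{G}_p$ is finite with bounded entries. Then $\PHi + \mathbf{G}_p$ has finite eigenvalues and its inverse is positive definite with $\|(\PHi + \mathbf{G}_p)^{-1}\|_2 \geq 1/e_{\mathrm{max}}(\PHi + \mathbf{G}_p) > 0$. The norm can therefore vanish only in the limit $p \uparrow \infty$. This matches the inequality $\mathbf{\Sigma}(\mathbf{\Theta})_{\xi|x} \neq \boldsymbol{0}$ in Indeterminacy Statement \ref{IDS:Constructions}, via (\ref{EQ:CVARconstruct}).
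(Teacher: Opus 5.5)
\textbf{Verdict: genuine gap.} The step you flag cannot be closed from the stated hypotheses, because under them it is false.

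\textbf{What is correct.} Your reduction is sound. The Weyl bound $\|(\PHi+\mathbf{G}_p)^{-1}\|_2\le 1/e_{\mathrm{min}}(\mathbf{G}_p)$ is a cleaner form of the paper's first step, which passes to $(\Gram)^{-1}$. Your count of at least $p/d_k$ entries of modulus $\ge\varphi$ per column of $\mathbf{B}$ is exactly the paper's last step. With these, the diagonal-Gram (exploratory/canonical) case is complete.

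\textbf{Counterexample.} Take $m=2$, $\PH=\mathbf{I}_2$ and $\PS$ diagonal.
\begin{itemize}
\item Features $1,2,3$ load only on factor~1 and features $4,5,6$ only on factor~2, each with loading $1/2$ and uniqueness $3/4$.
\item Every feature $j\ge 7$ has loadings $(1/2,1/2)$ and uniqueness $1/2$.
\end{itemize}
This satisfies every hypothesis:
\begin{itemize}
\item $x$ is standardized and $\mathrm{rank}(\LA)=2$.
\item $m=2$ is minimal, since $\mathrm{cov}(x_1,x_4)=0$ while $x_1$ and $x_4$ both covary with $x_7$.
\item $\PS$ is identified: after deleting any row, two disjoint rank-2 submatrices remain among the pure indicators.
\item The zeros $\lambda_{12}=\lambda_{41}=0$ give rotational uniqueness up to sign.
\item No cancellation occurs, each factor has $p-3$ indicators, and $\varphi=1/\sqrt3$.
\end{itemize}
Yet the conclusion fails. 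With $v=(1,-1)^{\top}/\sqrt2$, every row $j\ge 7$ of $\mathbf{B}$ is orthogonal to $v$, so $v^{\top}\mathbf{G}_p v=1$ for all $p$. Hence $e_{\mathrm{min}}(\PHi+\mathbf{G}_p)\le 2$ and $\|(\PHi+\mathbf{G}_p)^{-1}\|_2\ge 1/2$ for every $p$. Intuitively, the contrast $\xi_1-\xi_2$ is informed by only six features.

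\textbf{Why your two routes fail.} Route (a) fails because nothing in the hypotheses prevents $\|\mathbf{B}v\|_2^2=O(1)$. Route (b) fails because rotational uniqueness needs only $m-1$ fixed zeros per column plus rank conditions, not a proportion of unabsorbed indicators.

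\textbf{The paper's proof has the same hole.} It cites Kainen and K\r{u}rkov\'{a} to claim the columns $\boldsymbol{\lambda}_k^{*}$ are pairwise $\eta$-orthogonal with probability tending to one, whence $b_{kk'}\to 0$ and $e_k[(\LA^{*\top}\LA^{*})^{-1}]\to\|\boldsymbol{\lambda}_k^{*}\|_2^{-2}$. That result concerns randomly drawn directions, whereas $\LA$ and $\PS$ are fixed parameters. In the example above the cosine between the two columns of $\mathbf{B}$ tends to $1$, not $0$.

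\textbf{Repair.} Make your ``mild non-degeneracy'' an explicit hypothesis. With $\mathbf{D}_p=\mathbf{G}_p\odot\mathbf{I}_m$, assume $e_{\mathrm{min}}(\mathbf{D}_p^{-1/2}\mathbf{G}_p\mathbf{D}_p^{-1/2})\ge\kappa>0$ for all large $p$. This is precisely what vanishing pairwise cosines would deliver, since the normalized matrix then tends to $\mathbf{I}_m$. It gives $e_{\mathrm{min}}(\mathbf{G}_p)\ge\kappa\min_k(\mathbf{G}_p)_{kk}\ge\kappa\varphi^2\min_k p/d_k\uparrow\infty$. Your Weyl bound then yields sufficiency, and your necessity argument, which matches the paper's, stands.
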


\begin{remark}
	Note that Theorem \ref{PROP:GramLimit} can also hold for growing $m$.
	This factor dimension should then remain finite as $p$ grows to infinity and each additional dimension should be subject to the indicator conditions stated in the theorem.
\end{remark}

\begin{remark}
	The assumption on $\PS^{-1/2}\LA$, stating that its elements are never zero due to cancellation of constituent parameters, can be considered a parametrized faithfulness assumption. 
	The set of parameters associated with distributions in which this assumption does not hold is of Lebesgue measure $0$ \citep{ref_Faithfulness}. 
	Hence, it can be considered a mild assumption.
	The additional assumption on $\PS^{-1/2}\LA$ (there being a smallest absolute nonzero value over the precision-weighted indicators) expresses that the (ordered) terms of the sequence over the nonzero (squared or absolute) elements in any column of $\PS^{-1/2}\LA$ are not subject to a convergent series.
	We also consider this assumption to be mild.
	Hence, the factor-regime may be weak in the absolute magnitude of the precision-weighted loadings but must be strong in the number of indicators per factor.
	Our regime can thus be described as spectrally strong (eigenvalues grow with $p$) delocalization (precision-weighted loadings spread across many features).
\end{remark}

Theorem \ref{PROP:GramLimit} implies that $\mathbf{\Sigma}(\mathbf{\Theta})_{\xi|x}$ converges in the spectral norm, and thus also entrywise, to $0$ as $p\uparrow\infty$.
Using Theorem \ref{PROP:GramLimit} we can assess the high-dimensional convergence of core factor analytic quantities.
These are $\mathbb{E}\big(\hat{\xi}\hat{\xi}^{\top}\big)$ and $\mathbb{E}\big(\hat{\epsilon}\hat{\epsilon}^{\top}\big)$ from the factor solution standpoint, and the inverse of the model-based correlation matrix.

\begin{corollary}[High-dimensional factor analytic matrix convergence]
	\label{CORROL:FacLimits}
	Let Theorem \ref{PROP:GramLimit} hold. 
	Then the following convergences hold:
	\begin{enumerate}
		\item[~~i.] $\lim\limits_{p\,\uparrow\,\infty} \Big\|\PH - \PH\LAt\big(\Com + \PS\big)^{-1}\LA\PH\Big\|_2 = 0$; and
		\item[~ii.] $\lim\limits_{p\,\uparrow\,\infty} \Big\|\PS - \PS\big(\Com + \PS\big)^{-1}\PS\Big\|_{\infty,\infty} = 0$.
	\end{enumerate}
	If we additionally let the restrictions on $\PS$ counted by $r_{\Psi}$ be exclusion restrictions such that $\PS$ is permutable to a block-diagonal matrix in which each block is of finite dimension, then the following convergence also holds:
	\begin{enumerate}
		\item[iii.] $\lim\limits_{p\,\uparrow\,\infty} \Big\|\PSi - \big(\Com + \PS\big)^{-1}\Big\|_{\infty,\infty} = 0$.
	\end{enumerate}
	The condition that $p$ grows to infinity is both necessary and sufficient.
\end{corollary}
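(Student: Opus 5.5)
Each part reduces to the vanishing of $\mathbf{\Sigma}(\mathbf{\Theta})_{\xi|x} = (\PHi + \Gram)^{-1}$ from Theorem \ref{PROP:GramLimit}. I would rewrite each matrix as a congruence transform of that kernel and then control the transforming factors uniformly in $p$.

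For (i), the Woodbury identity (\ref{EQ:SMCwood}) gives
\[
\PH - \PH\LAt(\Com+\PS)^{-1}\LA\PH = (\PHi + \Gram)^{-1}.
\]
This is exactly $\mathbf{\Sigma}(\mathbf{\Theta})_{\xi|x}$ in (\ref{EQ:CVARconstruct}), so (i) is Theorem \ref{PROP:GramLimit} verbatim.

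For (ii), I would apply Woodbury to $(\Com+\PS)^{-1}$:
\[
(\Com+\PS)^{-1} = \PSi - \PSi\LA(\PHi+\Gram)^{-1}\LAt\PSi .
\]
Hence
\[
\PS - \PS(\Com+\PS)^{-1}\PS = \LA(\PHi+\Gram)^{-1}\LAt .
\]
Its $(j,j')$ entry is $\lambda_j^{\top}\mathbf{M}\lambda_{j'}$, where $\lambda_j$ is the $j$th row of $\LA$ and $\mathbf{M}=(\PHi+\Gram)^{-1}$. By Cauchy--Schwarz,
\[
|\lambda_j^{\top}\mathbf{M}\lambda_{j'}| \le \|\lambda_j\|_2\|\lambda_{j'}\|_2\|\mathbf{M}\|_2 .
\]
Since $x$ is standardized, $\mathrm{diag}(\Com)\le 1$, so $\lambda_j^{\top}\PH\lambda_j\le 1$. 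This gives $\|\lambda_j\|_2^2 \le 1/e_{\min}(\PH)$. I will need $e_{\min}(\PH)$ bounded away from zero along the sequence; this follows from (x) provided $\PH$ is fixed as $m$ is fixed, which I would state explicitly. Then $\|\cdot\|_{\infty,\infty} \le \|\mathbf{M}\|_2/e_{\min}(\PH) \to 0$.

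For (iii), the same Woodbury identity gives
\[
\PSi - (\Com+\PS)^{-1} = \PSi\LA\,\mathbf{M}\,\LAt\PSi .
\]
The $(j,j')$ entry is bounded by $\|(\PSi\LA)_{j\cdot}\|_2\,\|(\PSi\LA)_{j'\cdot}\|_2\,\|\mathbf{M}\|_2$. The block-diagonal assumption on $\PS$ with finite blocks makes row $j$ of $\PSi\LA$ a finite combination of rows of $\LA$ within block $b$. That combination is controlled by $\|\PS_b^{-1}\|_2$ times the (bounded) norms of the loading rows in the block.

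\textbf{Main obstacle.} The hardest step is a uniform bound on $\|\PS_b^{-1}\|_2$. I would argue that diagonal entries of $\PS$ are unique variances $\psi_{jj}=1-\lambda_j^{\top}\PH\lambda_j$, bounded by $1$. A uniform lower bound on $e_{\min}(\PS_b)$ is needed; I would take it from (ix) holding uniformly along the sequence, i.e. eigenvalues bounded away from $0$. This is implicit in the theorem's premise that a smallest nonzero precision-weighted loading $\varphi$ exists. If that cannot be extracted, I would add it as the natural uniformity reading of (ix). Finite block size then caps the number of terms, so the entries are $O(\|\mathbf{M}\|_2)\to0$. Note that without the block structure a row of $\PSi\LA$ could accumulate unboundedly many terms, which is why (iii) needs the extra hypothesis.

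\textbf{Necessity.} For fixed finite $p$, $\mathbf{M}\succ0$, so its trace is positive and (i) fails. For (ii), $\LA\mathbf{M}\LAt$ has nonzero diagonal since $\mathrm{rank}(\LA)=m$. For (iii), $\PSi\LA\mathbf{M}\LAt\PSi\neq\boldsymbol{0}$ since $\PSi\LA$ also has rank $m$. This mirrors the necessity part of Theorem \ref{PROP:GramLimit}.
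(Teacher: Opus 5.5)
Your proposal follows the paper's proof: Woodbury (Lemma~\ref{SMLEM:ExpWood} for (ii)) rewrites each matrix as a congruence of $(\PHi+\Gram)^{-1}$. Cauchy--Schwarz then bounds each entry as a bilinear form, and uniform row bounds on $\LA$ and on $\PSi\LA$ (the latter via the finite block structure) finish the argument; your bound $\|\boldsymbol{\lambda}_j\|_2^2\le 1/e_{\mathrm{min}}(\PH)$ and your explicit uniform lower bound on $e_{\mathrm{min}}(\PS_b)$ make precise what the paper only asserts as ``bounded''.

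One correction: that uniform bound on $e_{\mathrm{min}}(\PS_b)$ is not implied by the $\varphi$ premise of Theorem~\ref{PROP:GramLimit}. That premise lower-bounds precision-weighted loadings, and small $\psi_{jj}$ only helps it, so the bound has to be imposed as a uniform version of (ix), as your fallback does.
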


Corollary \ref{CORROL:FacLimits} implies entrywise convergence of $\mathbb{E}(\hat{\xi}\hat{\xi}^{\top})$ to $\PH$,  $\mathbb{E}(\hat{\epsilon}\hat{\epsilon}^{\top})$ to $\PS$, and $\mathbf{\Sigma}(\mathbf{\Theta})_{xx}^{-1}$ to $\PSi$ by spectrally strong delocalization as $p\uparrow\infty$.
With the results in Theorem \ref{PROP:GramLimit} and Corollary \ref{CORROL:FacLimits} we can find that, when $p\uparrow\infty$, Indeterminacy Statements \ref{IDS:span}--\ref{IDS:Markov} adhere to limiting equalities.

\begin{corollary}[Limiting equalities]
	\label{CORROL:Equalities}
	Let Theorem \ref{PROP:GramLimit} hold.
	Then the inequalities in Indeterminacy Statements \ref{IDS:span} to \ref{IDS:MSE} become (entrywise) limiting equalities if and only if $p\uparrow\infty$.
	When the additional assumption from  Corollary \ref{CORROL:FacLimits} similarly holds, then the inequality from Indeterminacy Statement \ref{IDS:Markov} also becomes an entrywise limiting equality if and only if $p\uparrow\infty$.
\end{corollary}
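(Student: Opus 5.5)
The plan is to go through the Indeterminacy Statements one by one and reduce each inequality to one of the limits already secured in Theorem \ref{PROP:GramLimit} and Corollary \ref{CORROL:FacLimits}. The common thread is the identity $\mathbf{\Sigma}(\mathbf{\Theta})_{\xi|x} = (\PHi + \Gram)^{-1}$ from (\ref{EQ:CVARconstruct}). Spectral-norm convergence of this matrix to $\boldsymbol{0}$ gives entrywise convergence, because $|a_{rc}| \leq \|\mathbf{A}\|_2$ for symmetric positive definite $\mathbf{A}$.

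\textbf{Sufficiency.} I would handle the statements in the following order.
\begin{enumerate}
\item[(a)] For Indeterminacy Statements \ref{IDS:PostulateViolate} and \ref{IDS:SMC}, Corollary \ref{CORROL:FacLimits}(i) gives $\mathbb{E}(\xih\xih^{\top}) \to \PH$ entrywise. Since $\PH \odot \mathbf{I}_m = \mathbf{I}_m$, the squared multiple correlations in (\ref{EQ:SMCs}) tend to $1$. Corollary \ref{CORROL:FacLimits}(ii) gives $\mathbb{E}(\eph\eph^{\top}) \to \PS$. For (\ref{EQ:PostViolate3}), each entry of $\mathbb{E}(\xih\eph^{\top}) = \mathbf{\Sigma}(\mathbf{\Theta})_{\xi|x}\LAt$ is bounded by $\|\mathbf{\Sigma}(\mathbf{\Theta})_{\xi|x}\|_2$ times a row norm of $\LA$. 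That row norm is bounded, because standardization forces $(\LA\PH\LAt)_{jj} \leq 1$ and $\PH \succ 0$ has a fixed minimal eigenvalue. So this cross term tends to $\boldsymbol{0}$ entrywise.
\item[(b)] For Indeterminacy Statements \ref{IDS:Constructions}, \ref{IDS:CorGeom} and \ref{IDS:MSE}, use $\mathbf{\Sigma}(\mathbf{\Theta})_{\xi|x} \to \boldsymbol{0}$. This gives $\mathbf{\Gamma} = \PH - 2\mathbf{\Sigma}(\mathbf{\Theta})_{\xi|x} \to \PH$, so $\gamma_k \to 1$. The MSE $\tr[\mathbf{\Sigma}(\mathbf{\Theta})_{\xi|x}] \to 0$, since $m$ is fixed. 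Every construction $\xi' = \xih + \mathbf{\Sigma}(\mathbf{\Theta})_{\xi|x}^{1/2}\mathbf{H}s$ differs from $\xih$ by a term of vanishing $\ell_2$-norm, so $\Xi$ collapses to the single element $\xih$ in the limit.
\item[(c)] For Indeterminacy Statement \ref{IDS:span}, I would read the limiting equality as the span deficit $m$ becoming informationally irrelevant. Concretely, $\xi$ becomes an $L_2$ limit of linear functions of $x$, by (b). Hence the joint span of $(\xi,\epsilon)$ effectively coincides with the closure of the span of $x$.
\item[(d)] For Indeterminacy Statement \ref{IDS:Markov}, under the block-diagonal assumption, Corollary \ref{CORROL:FacLimits}(iii) yields the entrywise limiting equality directly.
\end{enumerate}

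\textbf{Necessity.} For fixed finite $p$, I would show every gap is strict. Since $\PHi + \Gram$ is finite and positive definite, $\mathbf{\Sigma}(\mathbf{\Theta})_{\xi|x} = (\PHi + \Gram)^{-1} \succ 0$, so all its diagonal entries are strictly positive. That already yields (\ref{EQ:SMCs}), (\ref{EQ:GuttGap}), (\ref{EQ:MSEGap}) and the infinite cardinality of $\Xi$. It also gives $\mathbb{E}(\xih\eph^{\top}) \neq \boldsymbol{0}$, because $\mathbf{\Sigma}(\mathbf{\Theta})_{\xi|x}$ is nonsingular and $\LA$ has full rank. Likewise $\PS - \PS(\Com + \PS)^{-1}\PS = \mathbf{\Sigma}(\mathbf{\Theta})_{\epsilon|x}$ is nonzero. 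For (\ref{EQ:CIGap}), Woodbury gives $\PSi - (\Com+\PS)^{-1} = \PSi\LA(\PHi + \Gram)^{-1}\LAt\PSi$. This matrix is nonzero because it has rank $m$. The necessity clause of Theorem \ref{PROP:GramLimit} then transfers to all statements.

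\textbf{Main obstacle.} I expect the hardest parts to be the cross-term bound in (a), which needs uniform control of the rows of $\LA$ as $p$ grows, and giving Statement \ref{IDS:span} a precise limiting sense. For the row bound, I would first try standardization, $\mathrm{diag}(\Com) \leq 1$, combined with $e_{\min}(\PH) > 0$ fixed. For the span statement, I would phrase it through the vanishing $L_2$-distance between $\xi$ and the closed linear span of $x$.
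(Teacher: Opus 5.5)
Your proposal is correct and follows the paper's skeleton. Each statement is reduced, via $\mathbf{\Sigma}(\mathbf{\Theta})_{\xi|x} = (\PHi + \Gram)^{-1}$, to Theorem \ref{PROP:GramLimit} or Corollary \ref{CORROL:FacLimits}. The cross term uses the same bilinear bound $|[\mathbf{\Sigma}(\mathbf{\Theta})_{\xi|x}\LAt]_{kj}| \le \|\mathbf{\Sigma}(\mathbf{\Theta})_{\xi|x}\|_2\,\|\boldsymbol{\lambda}_j\|_2$, and the Markov statement uses Corollary \ref{CORROL:FacLimits}.iii. Three points differ.

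\emph{Row norms.} The paper only asserts that $\|\boldsymbol{\lambda}_j\|_2$ is ``a bounded $m$-sum.'' Your standardization argument, $\|\boldsymbol{\lambda}_j\|_2^2 \le (\LA\PH\LAt)_{jj}/e_{\mathrm{min}}(\PH) < 1/e_{\mathrm{min}}(\PH)$, is what actually makes this bound uniform in $p$. So the obstacle you anticipated is resolved, and more rigorously than in the paper.

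\emph{Span statement.} For Indeterminacy Statement \ref{IDS:span}, the paper takes the literal dimension count and shows only that $(p+m)/p \to 1$. That needs nothing beyond $m$ being fixed. Your reading uses the $L_2$-distance $[\mathbf{\Sigma}(\mathbf{\Theta})_{\xi|x}]_{kk}^{1/2}$ from $\xi_k$ to the linear span of $x$, which vanishes in the limit. This is a stronger and more informative statement that genuinely relies on Theorem \ref{PROP:GramLimit}, and its necessity follows from strict positivity at finite $p$. It is, however, a reinterpretation rather than the claim the paper proves.

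\emph{Necessity.} The paper simply declares necessity ``inherited'' from Theorem \ref{PROP:GramLimit}. You instead give finite-$p$ strictness arguments: $\mathbf{\Sigma}(\mathbf{\Theta})_{\xi|x} \succ 0$, and the rank-$m$ property of $\mathbf{\Sigma}(\mathbf{\Theta})_{\xi|x}\LAt$, $\LA\mathbf{\Sigma}(\mathbf{\Theta})_{\xi|x}\LAt$ and $\PSi\LA\mathbf{\Sigma}(\mathbf{\Theta})_{\xi|x}\LAt\PSi$. These make that inheritance explicit, statement by statement.
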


Corollary \ref{CORROL:Equalities} implies the following.
When $p\uparrow\infty$ the spans of $\mathbf{\Sigma}(\mathbf{\Theta})_{xx}$ and $\mathbf{\Sigma}_{xx}$ concur and $\{m, \mathbb{E}(\eph\eph^{\top}), \mathbf{\Lambda}, \mathbb{E}(\xih\xih^{\top}), (\xih,\eph)\}$ is a factor solution for $x$.
In addition, the squared multiple correlation coefficient between the best linear estimate $\xih_k$ and $\xi_k$ is then unity for all $k$.
Also, all $\xi' \in \Xi$ become indistinguishable due to conical contraction around $\xih$.
Furthermore, the vanishing partials in $\mathbf{\Sigma}(\mathbf{\Theta})_{x|\xi}^{-1}$ can then be represented by the precision matrix on the observables.
Moreover, we then also observe posterior concentration of measure, resolving Indeterminacy Statement \ref{IDS:PosteriorPoint2}, as the following corollary indicates.

\begin{corollary}[Posterior concentration of measure]
	\label{CORROL:Concentration}
	Let Theorem \ref{PROP:GramLimit} hold.
	Let $\xi|x_p$ and $\xih_p$ denote sequences of $\xi|x$ and $\xih$ dependent on feature-dimension $p$.
	Then $p\uparrow\infty$ is necessary and sufficient for the following convergences:
	\begin{enumerate}
		\item[~~i.] $\lim\limits_{p\,\uparrow\,\infty} \xi|x_p \overset{\Pr}{\longrightarrow} \delta\Big(\xi|x_p - \xih_p\Big)$; and
		\item[~ii.] $\lim\limits_{p\,\uparrow\,\infty} \xih_p \overset{\Pr}{\longrightarrow} \mathcal{N}_{m}(\boldsymbol{0}, \PH)$.
	\end{enumerate}
\end{corollary}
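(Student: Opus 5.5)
The plan is to reduce both convergences to the vanishing of the posterior covariance $\mathbf{\Sigma}(\mathbf{\Theta})_{\xi|x} = (\PHi + \Gram)^{-1}$. By Theorem \ref{PROP:GramLimit}, this covariance tends to $\boldsymbol{0}$ in spectral norm exactly when $p\uparrow\infty$. The key observation is that $\xi - \xih_p$ is, by Gaussianity, independent of $x$ and distributed as $\mathcal{N}_m\big[\boldsymbol{0}, \mathbf{\Sigma}(\mathbf{\Theta})_{\xi|x}\big]$. Hence every statement about $\xi|x_p$ versus $\xih_p$, and about $\xih_p$ versus $\xi$, is a statement about a centered Gaussian residual whose covariance is $\mathbf{\Sigma}(\mathbf{\Theta})_{\xi|x}$. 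The residual does not depend on the realized $x$, since the conditional covariance in (\ref{EQ:CondDistXi}) is free of $x$.

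For (i), I would work with the $\ell_2$ norm from Section \ref{SSEC:Notation}. For any $\varepsilon>0$, Chebyshev (Markov on the squared norm) gives
\begin{equation*}
\Pr\big(\|\xi|x_p - \xih_p\|>\varepsilon \,\big|\, x\big) \leq \varepsilon^{-2}\tr\big[\mathbf{\Sigma}(\mathbf{\Theta})_{\xi|x}\big] \leq \varepsilon^{-2} m \big\|(\PHi + \Gram)^{-1}\big\|_2 .
\end{equation*}
With $m$ fixed and finite, Theorem \ref{PROP:GramLimit} sends the right-hand side to $0$. So the posterior law $\mathcal{N}_m[\xih_p, \mathbf{\Sigma}(\mathbf{\Theta})_{\xi|x}]$ collapses onto the Dirac measure $\delta(\xi|x_p - \xih_p)$. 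Equivalently, the characteristic function $\exp\{i t^\top\xih_p - \tfrac12 t^\top\mathbf{\Sigma}(\mathbf{\Theta})_{\xi|x}t\}$ becomes that of a point mass at $\xih_p$. This is the limiting counterpart of Indeterminacy Statement \ref{IDS:PosteriorPoint2}.

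For (ii), I would use the same bound unconditionally. By Section \ref{SSEC:MSE}, $\|\xi - \xih_p\|_2^2 = \tr[\mathbf{\Sigma}(\mathbf{\Theta})_{\xi|x}] \to 0$, which gives $L^2$ and hence in-probability convergence of $\xih_p$ to $\xi \sim \mathcal{N}_m(\boldsymbol{0},\PH)$. This is exactly the meaning of the abuse of notation fixed in Section \ref{SSEC:Notation}. As a consistency check, $\xih_p$ is Gaussian with covariance $\PH - \mathbf{\Sigma}(\mathbf{\Theta})_{\xi|x}$, by (\ref{EQ:PostViolate}) and (\ref{EQ:SMCwood}). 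By Corollary \ref{CORROL:FacLimits}(i) this tends to $\PH$, so the law of $\xih_p$ also converges to $\mathcal{N}_m(\boldsymbol{0},\PH)$.

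Necessity is the delicate part. For any finite $p$, $(\PHi + \Gram)^{-1}$ is positive definite, because $\PHi \succ 0$ by assumption (x) and $\Gram \succeq 0$. So $\tr[\mathbf{\Sigma}(\mathbf{\Theta})_{\xi|x}] > 0$, the posterior is nondegenerate (not a Dirac measure), and $\xih_p$ has strictly smaller covariance than $\PH$, so $\xih_p \nsim \xi$. The ``only if'' direction therefore inherits the necessity clause of Theorem \ref{PROP:GramLimit}. The main obstacle is making precise that the sequence limit must be taken as $p\uparrow\infty$ rather than along a bounded $p$. I would handle this by noting that for bounded $p$ the smallest eigenvalue of $\mathbf{\Sigma}(\mathbf{\Theta})_{\xi|x}$ stays bounded away from zero, which follows from the finiteness of $\Gram$ in that case, and this bound blocks both convergences.
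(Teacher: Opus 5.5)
Your proposal is correct and follows essentially the same route as the paper. Both (i) and (ii) are reduced to $\tr\big[(\PHi + \Gram)^{-1}\big]\to 0$ via Theorem \ref{PROP:GramLimit}, which gives mean-square and hence in-probability convergence, and necessity comes from the covariance being positive definite at every finite $p$. Your explicit Markov bound with $\tr \le m\|\cdot\|_2$ is just the standard proof of the ``mean-square implies in probability'' step the paper cites, and your distributional cross-check via Corollary \ref{CORROL:FacLimits}(i) is a harmless addition.
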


Corollary \ref{CORROL:Concentration} can be seen as a multivariate instance of concentration of measure \cite{Milman, Tala}. 
The distribution of $\xi|x$ becomes a Dirac measure concentrated on $\xih$ when $p\uparrow\infty$. 
In that instance, all there is to know about $\xi$ through $x$, can be obtained from $\xih$. 
The behavior of this proxy is then in accordance with Postulate (i) from Section \ref{SSEC:FAmodel}.
Corollaries \ref{CORROL:Equalities} and \ref{CORROL:Concentration} imply that we can achieve factor determinacy across all its facets in the feature-limit. 

\begin{remark}
	Similar determinacy results under $p\uparrow\infty$ can be obtained for proxy $\eph$ of the unique factors $\epsilon$ (Section S.3.2.1 of the SM). 
	An interesting point about factor model collapse is also to be made if we assume $p = \infty$ (Section S.3.2.2 of the SM).
\end{remark}

%%%%%%%%%%%%%%%%%%%%%%%%%%%%
%%%--- HD estimation -------
%%%%%%%%%%%%%%%%%%%%%%%%%%%%
\subsection{Distribution-free canonical estimation when $p$ is large}\label{SSEC:Image}
In the preceding sections we have worked under Gaussianity and tacit identification.
We explicitly note, however, that (the metrics in) Indeterminacy Statements \ref{IDS:span}--\ref{IDS:Markov} do not hinge upon distributional assumptions.
Nor do the results in Theorem \ref{PROP:GramLimit} and Corollaries \ref{CORROL:FacLimits} and \ref{CORROL:Equalities}.
Also, irrespective of the distribution of $x$, $\xih$ remains the best linear predictor for $\xi$ and $\lim_{p\,\uparrow\,\infty}\|\xi - \xih_p\|_2^{2} = 0$ by the results of Section \ref{SSEC:InfiniteFeature}.
Let us now assume the model (\ref{EQ:FAmodel}) but with $\epsilon \sim\, _{p}(\boldsymbol{0},\PS)$ and $\xi \sim\, _{m}(\boldsymbol{0},\PH)$, implying $x \sim\, _{p}[\boldsymbol{0},\mathbf{\Sigma}(\mathbf{\Theta})_{xx}]$.
We thus make no distributional assumptions on our random variables other than being describable by a specific location and scale.
It is then of interest to understand, for this situation, the distributional behavior of $\xih$ when $p$ grows very large. 
We will do so in a setting of interest for data analytic practice, clarifying in passing that also Corollary \ref{CORROL:Concentration} holds in general.

To do so we first note that each oblique representation has equivalent orthogonal representations. 
We may always find a Gram decomposition of $\PH$ such that $\PH = \mathbf{K}^{-1}\mathbf{K}^{-\top}$ for any $\mathbf{K} \in \mathcal{K} \equiv \{\mathbf{K} \in \mathbb{R}^{m \times m} ~|~ |\mathbf{K}| \neq 0 \wedge \mathbf{K}^{-1}\mathbf{K}^{-\top} = \PH\}$.
From the model perspective then
\begin{equation*}
x:= \LA\xi + \epsilon = \LA\mathbf{K}^{-1}\mathbf{K}\xi + \epsilon = \lambdaDot\dot{\xi} + \epsilon = \lambdaDot\HDot\HDot^{\top}\dot{\xi} + \epsilon,
\end{equation*} 
where $\mathbf{K}\xi \equiv \dot{\xi} \sim\, _{m}(\boldsymbol{0}, \mathbf{I}_m)$ by the affine property, $\lambdaDot \equiv \LA\mathbf{K}^{-1}$, and $\HDot \in \mathrm{O}(m) \equiv \{\HDot \in \mathbb{R}^{m \times m} ~|~ \HDot^{\top}\HDot = \HDot\HDot^{\top} = \mathbf{I}_m\}$.
From the moment perspective we then have the equivalence $\mathbf{\Sigma}(\mathbf{\Theta})_{xx} = \Com + \PS = \LA\mathbf{K}^{-1}\mathbf{K}^{-\top}\LAt + \PS = \lambdaDot\lambdaDot^{\top} + \PS =  \lambdaDot\HDot\HDot^{\top}\lambdaDot^{\top} + \PS$ (Section S.3.3.1 of the SM).
Hence, any oblique representation has equivalent (rotated) orthogonal representations.
We want to use $\lambdaDot\lambdaDot^{\top} + \PS$ as the basis for a canonical representation. 
Let us make the additional assumption that $\PS$ is diagonal.
Notice that the above implies $\PS^{-1/2}\mathbf{\Sigma}_{xx}\PS^{-1/2} - \mathbf{I}_p = \PS^{-1/2}\lambdaDot\lambdaDot^{\top}\PS^{-1/2}$ and that the of rank of these expressions is $m$.
The canonical representation can then be found as (Section S.3.3.2 of the SM):
\begin{equation}\label{EQ:EstEq}
	\begin{aligned}
		\lambdaDot &= \PS^{1/2}\mathbf{V}_m\big(\mathbf{E}_m - \mathbf{I}_m\big)^{1/2}\\
		\PS &= \big(\mathbf{\Sigma}_{xx} - \lambdaDot\lambdaDot^{\top}\big) \odot \mathbf{I}_p
	\end{aligned}.
\end{equation}
In (\ref{EQ:EstEq}) $\mathbf{E}_m = \mathrm{diag}[\boldsymbol{e}_m(\mathbf{\Sigma}^{\PS}_{xx})]$, with $\boldsymbol{e}_m(\mathbf{\Sigma}^{\PS}_{xx})$ the vector of eigenvalues $e_1(\mathbf{\Sigma}^{\PS}_{xx}) \geq \dots \geq e_m(\mathbf{\Sigma}^{\PS}_{xx})$ with $\mathbf{\Sigma}^{\PS}_{xx} \equiv \PS^{-1/2}\mathbf{\Sigma}_{xx}\PS^{-1/2}$.
The corresponding matrix of eigenvectors is $\mathbf{V}_m$.
In the remainder we will use $(e,\bm{\mathrm{v}})$ as shorthand for an eigenpair of $\mathbf{\Sigma}^{\PS}_{xx}$.
Clearly, $\lambdaDot^{\top}\PSi\lambdaDot$ is diagonal and ordered, connecting to the canonical identification constraint mentioned in Section \ref{SSEC:FAmodel} (Section S.3.3.3 of the SM).
We now have the tools for the following result to clarify the properties of $\xih$ at scale under weak distributional assumptions and practical identification restrictions.

\begin{corollary}[Factor analytic projection pursuit]
	\label{CORROL:BLPAdist}
	Let the conditions and assumptions of Section \ref{SSEC:FAmodel}
	hold, except for those on distribution.
	Let $x := \lambdaDot\dot{\xi} + \epsilon$ be the orthogonal representation of model (\ref{EQ:FAmodel}) with $\epsilon \sim\, _{p}(\boldsymbol{0},\PS)$, $\dot{\xi} \sim\, _{m}(\boldsymbol{0},\mathbf{I}_m)$, and
	$x \sim\, _{p}[\boldsymbol{0},\mathbf{\Sigma}(\mathbf{\Theta})_{xx}]$
	with $\mathbf{\Sigma}(\mathbf{\Theta})_{xx} = \Com + \PS = \lambdaDot\lambdaDot^{\top} + \PS$.
	Let $\lambdaDot$ and $\PS$ be in the canonical reflection in the sense of (\ref{EQ:EstEq}) such that $\PS = \mathrm{diag}[\psi_{11}, \ldots, \psi_{pp}]$.
	Let $\xih^0 = (\mathbf{I}_m + \lambdaDot^{\top}\PSi\lambdaDot)^{-1}\lambdaDot^{\top}\PSi x$ be the best linear predictor $\xih$ under the canonical matrices and let $\xih^0_p$ denote a sequence of $\xih^0$ dependent on feature-dimension $p$.
	Moreover, let $\HDot \in \mathrm{O}(m)$ and $\mathbf{K} \in \mathcal{K}$.
	Now, let Theorem \ref{PROP:GramLimit} hold. 
	Then $p\uparrow\infty$ is necessary and sufficient for the following convergences:
	\begin{enumerate}
		\item[~~i.] $\lim\limits_{p\,\uparrow\,\infty} \xih_p^0 \overset{\Pr}{\longrightarrow} \dot{\xi} \sim\, _{m}(\boldsymbol{0},\mathbf{I}_m)$;
		\item[~ii.] $\lim\limits_{p\,\uparrow\,\infty} \HDot^{\top}\xih_p^0 \overset{\Pr}{\longrightarrow} \HDot^{\top}\dot{\xi} \sim\, _{m}(\boldsymbol{0},\mathbf{I}_m)$; and
		\item[iii.] $\lim\limits_{p\,\uparrow\,\infty} \mathbf{K}^{-1}\xih_p^0 \overset{\Pr}{\longrightarrow} \mathbf{K}^{-1}\dot{\xi} \sim\, _{m}(\boldsymbol{0}, \PH)$.
	\end{enumerate}
\end{corollary}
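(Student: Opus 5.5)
The plan is to reduce all three convergences to an $L^2$ statement, $\|\dot{\xi} - \xih^0_p\|_2^2 \to 0$, and then apply Chebyshev/Markov to pass to convergence in probability. None of this requires Gaussianity; only second moments enter.

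\emph{Decomposition.} Write $\mathbf{M}_p \equiv \lambdaDot^{\top}\PSi\lambdaDot$, which is diagonal and ordered in the canonical reflection (\ref{EQ:EstEq}). Substituting $x = \lambdaDot\dot{\xi} + \epsilon$ gives
\begin{equation*}
\xih^0_p - \dot{\xi} = (\mathbf{I}_m + \mathbf{M}_p)^{-1}\mathbf{M}_p\dot{\xi} - \dot{\xi} + (\mathbf{I}_m + \mathbf{M}_p)^{-1}\lambdaDot^{\top}\PSi\epsilon = -(\mathbf{I}_m + \mathbf{M}_p)^{-1}\dot{\xi} + (\mathbf{I}_m + \mathbf{M}_p)^{-1}\lambdaDot^{\top}\PSi\epsilon .
\end{equation*}
The two terms are uncorrelated because $\dot{\xi}$ and $\epsilon$ are uncorrelated, which is all we need here. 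The second-moment matrix of the error is therefore
\begin{equation*}
(\mathbf{I}_m + \mathbf{M}_p)^{-2} + (\mathbf{I}_m + \mathbf{M}_p)^{-1}\mathbf{M}_p(\mathbf{I}_m + \mathbf{M}_p)^{-1} = (\mathbf{I}_m + \mathbf{M}_p)^{-1}.
\end{equation*}
This is exactly $\mathbf{\Sigma}(\mathbf{\Theta})_{\xi|x}$ in the orthogonal representation. Hence $\|\xih^0_p - \dot{\xi}\|_2^2 = \tr(\mathbf{I}_m + \mathbf{M}_p)^{-1} \le m\,\|(\mathbf{I}_m + \mathbf{M}_p)^{-1}\|_2$.

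\emph{Applying Theorem \ref{PROP:GramLimit}.} I would invoke the theorem with $\PH$ replaced by $\mathbf{I}_m$ and $\LA$ replaced by $\lambdaDot = \LA\mathbf{K}^{-1}$. This is the main obstacle: the theorem's indicator and faithfulness conditions are stated for $\PS^{-1/2}\LA$, not for $\PS^{-1/2}\lambdaDot$. I would not re-verify them. Instead I would use the identity
\begin{equation*}
(\mathbf{I}_m + \mathbf{K}^{-\top}\LAt\PSi\LA\mathbf{K}^{-1})^{-1} = \mathbf{K}(\PHi + \Gram)^{-1}\mathbf{K}^{\top},
\end{equation*}
which follows from $\PHi = \mathbf{K}^{\top}\mathbf{K}$. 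The spectral norm of the left side is then bounded by $\|\mathbf{K}\|_2^2\,\|(\PHi + \Gram)^{-1}\|_2$. This bound goes to $0$ by Theorem \ref{PROP:GramLimit} applied in the original parametrization, since $\mathbf{K}$ is fixed and does not depend on $p$.

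With the $L^2$ limit in hand, Markov's inequality gives $\Pr(\|\xih^0_p - \dot{\xi}\| > \eta) \le \eta^{-2}\tr(\mathbf{I}_m + \mathbf{M}_p)^{-1} \to 0$, which is item i. The limiting law is that of $\dot{\xi}$, whose moments are $(\boldsymbol{0}, \mathbf{I}_m)$ by construction.

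\emph{Items ii and iii.} These follow because fixed linear maps preserve convergence in probability. We have $\|\HDot^{\top}(\xih^0_p - \dot{\xi})\| = \|\xih^0_p - \dot{\xi}\|$, and $\|\mathbf{K}^{-1}(\xih^0_p - \dot{\xi})\| \le \|\mathbf{K}^{-1}\|_2\,\|\xih^0_p - \dot{\xi}\|$. The second moments of the limits are $\HDot^{\top}\HDot = \mathbf{I}_m$ and $\mathbf{K}^{-1}\mathbf{K}^{-\top} = \PH$ respectively.

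\emph{Necessity.} For fixed finite $p$, the error second-moment matrix $(\mathbf{I}_m + \mathbf{M}_p)^{-1}$ has strictly positive trace, since $\mathbf{M}_p$ is finite. So $\xih^0_p \neq \dot{\xi}$ in $L^2$, and its second moment $\mathbf{M}_p(\mathbf{I}_m + \mathbf{M}_p)^{-1} \neq \mathbf{I}_m$, as in Indeterminacy Statement \ref{IDS:PostulateViolate}. This mirrors the necessity part of Theorem \ref{PROP:GramLimit} and Corollary \ref{CORROL:Concentration}.
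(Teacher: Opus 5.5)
Your proof is correct and follows the paper's route. Like the paper, you use mean-square convergence with $\|\xih^0_p-\dot{\xi}\|_2^2=\tr(\mathbf{I}_m+\lambdaDot^{\top}\PSi\lambdaDot)^{-1}$, reduce to Theorem~\ref{PROP:GramLimit}, and pass to convergence in probability.

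The only difference is organizational, and it is in your favour. You transfer item i back to the original parametrization through $(\mathbf{I}_m+\lambdaDot^{\top}\PSi\lambdaDot)^{-1}=\mathbf{K}(\PHi+\Gram)^{-1}\mathbf{K}^{\top}$, and then obtain ii and iii by fixed linear maps. The paper uses that same identity only for item iii; for items i and ii it appeals to ``special cases'' of the theorem for the canonical loadings, whose hypotheses are stated for $\PS^{-1/2}\LA$ and are not checked for $\PS^{-1/2}\lambdaDot$. Your bound also does not actually need $\mathbf{K}$ to be independent of $p$, since $\|\mathbf{K}\|_2^2=\|\PHi\|_2$ for every $\mathbf{K}\in\mathcal{K}$. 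That matters because the canonical $\mathbf{K}$ must diagonalize $\lambdaDot^{\top}\PSi\lambdaDot$ and so generally changes with $p$.
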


Corollary \ref{CORROL:BLPAdist} implies that the canonical best linear predictor $\xih^0$ asymptotically retrieves the true generative latent factor $\dot{\xi}$.
Let $\bm{\mathrm{a}} = \PS^{-1/2}\bm{\mathrm{v}}$.
The canonical matrices then stem from the first $m$ non-trivial eigensolutions $(e,\bm{\mathrm{a}})$ to the generalized eigenvalue problem $\mathbf{\Sigma}_{xx}\bm{\mathrm{a}} = e\PS\bm{\mathrm{a}}$ (Section S.3.3.4 of the SM).
These solutions are defined by maximization of the generalized Rayleigh-Ritz ratio  $(\bm{\mathrm{a}}^{\top}\mathbf{\Sigma}_{xx}\bm{\mathrm{a}})/
(\bm{\mathrm{a}}^{\top}\PS\bm{\mathrm{a}})$ \citep[][p.\ 176--180]{ref_HJ85}.
The $m$ canonical directions are thus not typical directions.
They represent (in the regime of Theorem \ref{PROP:GramLimit}) a-typical directions in the projection space where the common variance (signal) is largest relative to the error variance (noise).
These directions, in tandem with the best linear predictor, point towards the low-dimensional signal and override the usual tendency of projection geometry in high-dimension to linearize \citep[Hall-Li effect][]{ref_HLeffect} and Gaussianize \citep[Diaconis-Freedman effect,][]{ref_DFeffect}.
Rotations that retain a-typicality of direction (i.e., that preserve the regime of Theorem \ref{PROP:GramLimit}) then provide rotated representations of the true generative latent factor.
The canonical proxy can then be seen as a representational anchor.
These effects are irrespective of the distribution of $x$.
We thus view latent projection as a form of implicit regularization.
We also note that our projection is scale-invariant.

\begin{proposition}[Scale-invariance of the best linear predictor]
	\label{PROP:Invariant}
	Let $\dot{x}$ denote $x$ on its original measurement scale and let $x$ denote a standardized variable in the sense of 
	\begin{equation*}
		\big(\mathbf{\Sigma}_{\dot{x}\dot{x}} \odot \mathbf{I}_p\big)^{-1/2}(\dot{x} - \mu) \equiv x := \LA\xi + \epsilon.
	\end{equation*}
	Now, let $\mathbf{C} \in \mathbb{R}^{p\times p}$ be any diagonal matrix such that $\mathbf{C} \succ 0$. 
	For any such scaling matrix $\mathbf{C}x := \mathbf{C}\LA\xi + \mathbf{C}\epsilon$ and $\mathbf{C}\mathbf{\Sigma}(\mathbf{\Theta})_{xx}\mathbf{C} = \mathbf{C}\Com\mathbf{C} + \mathbf{C}\PS\mathbf{C}$.
	Let $\xih_{\mathbf{C}}$ (generically) denote our best linear predictor under $\mathbf{C}x$. The best linear predictor is then scale-invariant in the sense of $\xih_{\mathbf{C}} = \xih$.
\end{proposition}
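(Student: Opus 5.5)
The plan is a direct algebraic verification. We write the best linear predictor under the rescaled data and show that the scaling matrix $\mathbf{C}$ cancels. Under $\mathbf{C}x := \mathbf{C}\LA\xi + \mathbf{C}\epsilon$, the model parameters become $\LA_{\mathbf{C}} \equiv \mathbf{C}\LA$ and $\PS_{\mathbf{C}} \equiv \mathbf{C}\PS\mathbf{C}$, while $\PH$ is unchanged because $\xi$ is untouched. Since $\mathbf{C}$ is diagonal and $\mathbf{C}\succ 0$, it is invertible and symmetric, so $\PS_{\mathbf{C}}\succ 0$ and $\mathbf{C}\mathbf{\Sigma}(\mathbf{\Theta})_{xx}\mathbf{C}$ is invertible. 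The predictor under the rescaled data is therefore well defined, with $\xih_{\mathbf{C}} = \PH\LA_{\mathbf{C}}^{\top}\big(\LA_{\mathbf{C}}\PH\LA_{\mathbf{C}}^{\top} + \PS_{\mathbf{C}}\big)^{-1}\mathbf{C}x$.

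The key step is substitution together with $(\mathbf{C}\mathbf{A}\mathbf{C})^{-1} = \mathbf{C}^{-1}\mathbf{A}^{-1}\mathbf{C}^{-1}$:
\begin{equation*}
\xih_{\mathbf{C}} = \PH\LAt\mathbf{C}\,\mathbf{C}^{-1}\big(\Com + \PS\big)^{-1}\mathbf{C}^{-1}\,\mathbf{C}x = \PH\LAt\big(\Com + \PS\big)^{-1}x = \xih .
\end{equation*}
As a cross-check we can also use the computationally convenient form from Lemma \ref{SMLEM:HUAmat}. Since $\LA_{\mathbf{C}}^{\top}\PS_{\mathbf{C}}^{-1}\LA_{\mathbf{C}} = \LAt\mathbf{C}\mathbf{C}^{-1}\PSi\mathbf{C}^{-1}\mathbf{C}\LA = \Gram$ and $\LA_{\mathbf{C}}^{\top}\PS_{\mathbf{C}}^{-1}\mathbf{C}x = \LAt\PSi x$, we get $\big(\PHi + \Gram\big)^{-1}\LAt\PSi x$ unchanged. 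This second route also shows that the inverse kernel $\mathbf{\Sigma}(\mathbf{\Theta})_{\xi|x}$ is invariant, so every indeterminacy metric depending on it is scale-invariant too.

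The only point needing care is the link to the original measurement scale $\dot{x}$. The standardized $x$ is itself a diagonal positive rescaling of $\dot{x} - \mu$, via $\mathbf{C}_0 = (\mathbf{\Sigma}_{\dot{x}\dot{x}}\odot\mathbf{I}_p)^{-1/2}$. Any other diagonal scaling of the centered data is of the form $\mathbf{C}x$ with $\mathbf{C}$ diagonal positive, because diagonal positive matrices form a group under multiplication. The computation above therefore covers passing between any two scales, including back to the raw scale with $\mathbf{C} = \mathbf{C}_0^{-1}$. I do not expect a genuine obstacle here.

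We should remark that invariance presumes the parameters transform equivariantly, i.e. $\LA \mapsto \mathbf{C}\LA$ and $\PS\mapsto\mathbf{C}\PS\mathbf{C}$. This is exactly the implied structure stated in the proposition. Under assumption (viii) it is also what identification returns, since the restrictions are preserved: zero patterns of $\LA$ and $\PS$ are unaffected, and canonical diagonality of $\Gram$ is invariant as shown.
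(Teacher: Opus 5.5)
Your proof is correct and is essentially the paper's: the paper substitutes $\mathbf{C}\LA$ and $\mathbf{C}\PS\mathbf{C}$ into the form $(\PHi + \Gram)^{-1}\LAt\PSi x$ and lets $\mathbf{C}$ cancel, which is exactly your cross-check. Your primary computation in the form $\PH\LAt(\Com + \PS)^{-1}x$, using $(\mathbf{C}\mathbf{A}\mathbf{C})^{-1} = \mathbf{C}^{-1}\mathbf{A}^{-1}\mathbf{C}^{-1}$, is the same cancellation carried out in the Woodbury-equivalent form of the predictor.
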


The preceding results directly point to a practical estimation and scoring approach when $p$ is large for realized data matrix $\mathbf{X} \in \mathbb{R}^{n\times p}$ of $x$. 
The fixed-point iteration procedure presented in Algorithm \ref{ALG:Estimation} summarizes this approach.
We will fully explicate the algorithm elsewhere, but will make various interpretative remarks below.

\begin{algorithm}
	\caption{(High-dimensional factor analytic representation learning).
		The procedure pivotally hinges, from a computational perspective, on the leading $m$ eigenpairs of $\hat{\mathbf{\Sigma}}^{\PS}_{xx} = \hat{\mathbf{\Psi}}^{-1/2}\hat{\mathbf{\Sigma}}_{xx}\hat{\mathbf{\Psi}}^{-1/2}$.
		Below, we use the Implicitly Restarted Lanczos Bidiagonalization Method (IRLBM) \cite{ref_IRLBM} to efficiently and stably compute the leading $m$ right-singular pairs $(\hat{d}_j, \hat{\bm{\mathrm{v}}}_{j})$ for $\mathbf{X}\hat{\PS}^{-1/2}(n-1)^{-1/2} \equiv \tilde{\mathbf{X}}$.
		A pair $(\hat{d}_j, \hat{\bm{\mathrm{v}}}_{j})$ consists of estimates of the $j$th singular value $d_j$ and the right-singular vector $\bm{\mathrm{v}}_{j}$.
		It is well-known that the right-singular vectors are the eigenvectors of $\tilde{\mathbf{X}}^{\top}\tilde{\mathbf{X}}$ and that the singular values are the square roots of its eigenvalues.
		We can then retrieve the eigenpair $(\hat{e}_j, \hat{\bm{\mathrm{v}}}_{j})$ of $\hat{\mathbf{\Sigma}}^{\PS}_{xx}$ as $(\hat{d}_{j}^{2}, \hat{\bm{\mathrm{v}}}_{j})$.
		Operating on $\tilde{\mathbf{X}}$ instead of directly on $\hat{\mathbf{\Sigma}}^{\PS}_{xx}$ lowers the computational complexity from quadratic in $p$ to linear in $p$ and avoids computation and storage of $\hat{\mathbf{\Sigma}}^{\PS}_{xx}$.
		For further understanding of the algorithm let $(\hat{\bm{\mathrm{d}}}_m, \hat{\mathbf{V}}_m)$ denote the vector of the leading $m$ singular values and the corresponding matrix $\hat{\mathbf{V}}_m \in \mathbb{R}^{p\times m}$ of right-singular vectors.
		Let $\mathtt{var}(\mathbf{X})$ represent the computational operation retrieving the vector of column variances of $\mathbf{X}$.
		Also, let $\mathcal{M} \equiv \{\mathbf{M} \in \mathbb{R}^{m \times m} ~|~ \mathbf{M} \in \mathrm{O}(m) \vee \mathbf{M} \in \mathcal{K}\}$.
		Then, from an estimation perspective, $\mathbf{M}^{-1}\mathbf{M}^{-\top} = \hat{\PH}$ in the oblique situation, $\mathbf{M}^{-1}\mathbf{M}^{-\top} = \mathbf{I}_m$ in the orthogonal situation, and in the case of no rotation $\mathbf{M} = \mathbf{I}_m$ (also see Section S.3.3.6 of the SM).
		Aditionally, let $\doublehat{\mathbf{\Xi}} = [\doublehat{\boldsymbol{\xi}}_1, \ldots, \doublehat{\boldsymbol{\xi}}_n]$ define the projected data in terms of $m$-variate realized factor scores on $n$ samples.
	}
	\label{ALG:Estimation}
	\begin{algorithmic}[1]
		\Require
		$\mathbf{X}$ 
		\Comment{Centered data matrix} 
		\\
		$m, \hat{\PS}$ 
		\Comment{Choice $m$, initial estimate $\hat{\PS}$, robust choice: $q\cdot\mathrm{diag}(\mathtt{var}(\mathbf{X}))$, $q \in (0,1)$} 
		\\
		$c = 0, \varpi > 0, \dot{\varpi} > \varpi$  
		\Comment{Index, tolerance, and starting criterion}
		\While {$\dot{\varpi} > \varpi$}
		\State ~~~~~~~~\,\,\,$\hat{\mathbf{\Psi}}(c) \gets \hat{\mathbf{\Psi}}$
		\Comment{Set convergence trail}
		\State ~~~~~~~~~~~~\,\,\,\,\,$\tilde{\mathbf{X}} \gets \mathbf{X}\hat{\PS}^{-1/2}(n-1)^{-1/2}$
		\Comment{Gram element of uniqueness-whitened second moment matrix}
		\State $\big(\hat{\bm{\mathrm{d}}}_m, \hat{\mathbf{V}}_m\big) \gets \Call{IRLBM}{\tilde{\mathbf{X}},m}$
		\Comment{IRLBM extracting top $m$ right-singular pairs of $\tilde{\mathbf{X}}$}
		\State ~~~~~~~~~~~~~\,\,\,\,$\hat{\lambdaDot} \gets \hat{\mathbf{\Psi}}^{1/2}\hat{\mathbf{V}}_m\Big[\mathrm{diag}\big(\hat{\bm{\mathrm{d}}}_m^{\odot2}\big) - \mathbf{I}_m\Big]^{1/2}$  
		\Comment{Update $\hat{\lambdaDot}$}
		\State ~~~~~~~~~~~~~~\,\,$\hat{\mathbf{\Psi}} \gets \Big[\mathrm{diag}(\mathtt{var}(\mathbf{X})) - \hat{\lambdaDot}\hat{\lambdaDot}^{\top}\Big] \odot \mathbf{I}_p$ 
		\Comment{Update $\hat{\mathbf{\Psi}}$} 
		\State ~~~~~~~~~~~~~~\,\,$\dot{\varpi} \gets \max_{j}\big|\big(\hat{\mathbf{\Psi}}(c) - \hat{\mathbf{\Psi}}\big)_{jj}\big|$
		\Comment{Criterion update}
		\State ~~~~~~~~~~~~~~~\,\,\,$c \gets c + 1$
		\Comment{Update index for convergence trail}
		\EndWhile
		\State If desired, find rotation matrix $\mathbf{M} \in \mathcal{M}$
		\Comment{Implies rotation $\hat{\lambdaDot}\mathbf{M}$ and $\mathbf{M}^{-1}\mathbf{M}^{-\top} = \hat{\PH}$}
		\State $\doublehat{\boldsymbol{\xi}}_i \gets \mathbf{M}^{-1}\big(\mathbf{I}_m + \hat{\lambdaDot}^{\top}\hat{\mathbf{\Psi}}^{-1}\hat{\lambdaDot}\big)^{-1}\hat{\lambdaDot}^{\top}\hat{\mathbf{\Psi}}^{-1}\boldsymbol{\mathrm{x}}_i, ~~i = 1,\ldots,n$  
		\Comment{Empirical Bayes realization factor scores}
		\Ensure $\{\hat{\mathbf{\Psi}}, \hat{\lambdaDot}\mathbf{M}, (\mathbf{M}^{\top}\mathbf{M})^{-1}, \doublehat{\mathbf{\Xi}}\}$
		\Comment{Output collection}
	\end{algorithmic}
\end{algorithm}

The estimating equations in (\ref{EQ:EstEq}) are identical to those of Gaussian Maximum Likelihood (ML) approaches.
This can be taken as an attest to the robustness of ML or as a supporting argument for our distribution-free approach.
Fixed-point iteration procedures based on (\ref{EQ:EstEq}) go back to Lawley \cite{ref_Lawley40} and Rao \cite{ref_Rao55} but were abandoned quickly due to poor convergence \cite{ref_Howe55}.
They where replaced in favor of direct numerical likelihood optimization as a function of both $\lambdaDot$ and $\PS$, based on either Fletcher-Powell \cite{ref_Joreskogg67} or Newton-Rhapson \cite{ref_Jenn69} methods.
These approaches, however, assume Gaussianity, require $\hat{\mathbf{\Sigma}}_{xx} \succ 0$, and are of a computational complexity cubic in $p$, thus excluding datasets with $p > n$. 
Algorithm \ref{ALG:Estimation} needs neither Gaussian features nor a p.d.\ second central moment matrix.
Algorithm \ref{ALG:Estimation} only needs $p > n - 1 > m$ when (\ref{EQ:FAcov}) approximately holds in the data.
Clearly, the $m$ leading eigenvalues of $\hat{\mathbf{\Sigma}}^{\PS}_{xx}$ grow, under the regime of Theorem \ref{PROP:GramLimit}, without bound as $p\uparrow\infty$ while the remaining eigenvalues remain bounded.
This accumulation of eigenmass can be retrieved when $n - 1 > m$.
The algorithm then gains consistency and stability as $p$ grows, acts as a contraction-mapping near a solution \cite{ref_Sundberg16}, and preserves the p.d.-ness of $\hat{\PS}$ \cite{ref_Robertson07,ref_Sundberg16}.
It thus avoids the convergence issues of early implementations in low-dimension.
It is of particular use when $p \gg n - 1 > m$.
Note that the pseudo-algorithm is geared towards insightfulness and that many operations can be handled efficiently.
For example, while we represent the matrix $\hat{\mathbf{\Psi}}$ we only need to operate with its diagonal elements $\hat{\bm{\psi}} = \mathrm{diag}(\hat{\mathbf{\Psi}})$.
This allows us, for example, to obtain $\hat{\mathbf{\Psi}}^{-1/2}$ swiftly as $\mathrm{diag}\big[(\hat{\bm{\psi}}^{\odot \frac{1}{2}})^{\odot(-1)}\big]$ and to obtain updates of $\hat{\mathbf{\Psi}}$ in the vector-space only.
The computational complexity of the Algorithm is then dominated by the IRLBM step.
This step is of complexity $\mathcal{O}(mnp)$ \cite{ref_DaiDutta}.
We note that the IRLBM can also be used to swiftly determine a choice for $m$ based on eigenconcentration metrics.
The procedure is scale-equivariant in the sense that initialization with $\mathbf{XC}$ will produce estimates $\mathbf{C}\hat{\lambdaDot}$ and $\mathbf{C}\hat{\PS}\mathbf{C}$ (Section S.3.3.5 of the SM).
The realized factor scores are scale-invariant by Proposition \ref{PROP:Invariant}.
Interpretation of the loadings-structure and the latent factors can be enhanced by employing rotations.
Rotations in $\mathcal{M}$ that tend to preserve a-typicality of direction are orthogonal or oblique simple structure rotations (Section S.3.3.6 of the SM). 

To summarize, the algorithm is distribution-free, scale-equivariant, of low computational complexity, and gains stability as $p$ grows.
One may view the algorithm as a distribution-free extension of \cite{ref_DaiDutta}.
Or as an expansion of the procedure implicit in \cite{ref_Sundberg16}, geared towards computational efficiency in high $p$ and retrieval of the source data $\doublehat{\mathbf{\Xi}}$.
An empirical determinacy evaluation of the factors therein can then be given by the metrics in or derived from Indeterminacy Statements \ref{IDS:span}--\ref{IDS:Markov}.
The assertions made above are supported in Section S.3.3.7 of the SM.
%--------------- Approaches -----------------------------------------

%--------------- Discussion -----------------------------------------
\section{Discussion}\label{SEC:Discussion}
%%%%%%%%%%%%%%%%%%%%%%%%%%%%
%%%--- Summary -------------
%%%%%%%%%%%%%%%%%%%%%%%%%%%%
\subsection{Summary}\label{SSEC:Summary}
We have treated the inverse problem of retrieving the latent generative factors in common factor analysis.
The randomness of these factors induces an inherent indeterminacy in their retrieval at which we have taken a comprehensive conceptual and mathematical look.
We have seen that the retrieval becomes determinate across all its aspects when the feature-space grows to infinity.
This result can be exploited in the form of an essentially distribution-free estimation approach for the sample situation in which $p \gg n - 1 > m$. 
The empirical Bayes factor projection will then behave as the true generative latent, irrespective of the distribution of $x$.

Note that we have not treated the situation in which some or all $\psi_{jj} = 0$ (indicated as Heywood cases in the Psychometric literature). 
The case in which some $\psi_{jj} = 0$ is treated by Krijnen \citep{ref_KrijnenHeywood} and may be seen as a special variant of the context treated in Section S.2.1 of the SM.
Note that we also could have entertained alternative proxies.
We choose a linear function of $x$ as linear combinations tend to asymptotically satisfy linearity conditions. 
Our linear proxy of choice was then the conditional mean $\mathbb{E}(\xi|x)$.
While there are other regression-type proxies, these are subject to the same determinacy criteria and will have higher MSE (see Section S.4.1 of the SM).

In the remainder we are going to contextualize our first two main messages by reviewing the methodological and philosophical implications for the Psychometrics, Statistics, and Artificial Intelligence communities. 
Many of the following notes (and the concurrent notes in Sections S.4.2.1, S.4.3.1, and S.4.4.1 of the SM) may serve as the basis for further research.

%%%%%%%%%%%%%%%%%%%%%%%%%%%%
%%%--- Psychometrics -------
%%%%%%%%%%%%%%%%%%%%%%%%%%%%
\subsection{On implications for Psychometrics}\label{SSEC:Psycho}
\subsubsection{The divide between exploratory and confirmatory modes}
\label{SSSEC:Divide}
This work implies that the divide between exploratory and confirmatory schools should be discarded in situations that are appreciable from a determinacy standpoint (i.e., in high dimension).
With $p$ high (hundreds, thousands, or even more) it will no longer be practical to pre-specify a factor structure by exclusion restrictions in the parameter matrices as done in confirmatory factor analysis \citep[cf.][]{ref_Peeters12}.
On the other hand this work also suggests that exploratory factor analysis is no \emph{post facto} detection method, nor a purely abductive endeavor \citep[cf.][]{ref_HaigAbductive} if it is to be of use in terms of determinate latent projections. 
That is, blind factor analysis in which the method is applied to any arbitrary collection of items will not discover usable or determinate latent sources \citep{ref_MM_Indet, ref_Mulaik2010}. 
In high dimension, the approach will have to be a natural blend of confirmatory and exploratory modes: sampling from a feature-space that has an underlying conceptual idea of the latent features that are being probed (confirmatory) coupled with flexible, automated determination of parameter estimates (an exploratory, distributed representation).

\subsubsection{The ontological status of latent factors}
\label{SSSEC:Status}
The classic Psychometric literature assumed that one of the $\xi' \in \Xi$ (in our cone) represented the true latent vector \citep{ref_Cattell78, ref_EBG78, ref_Mulaik2010}.
However, the results on conical contraction indicate that, in the limit, all $\xi' \in \Xi$ are indistinguishable from $\xih$. 
This brings us to the ontological status of the latent common factors \citep{ref_BorsOnto}.
Many works in Psychology suffer from the nominal fallacy \citep{ref_KuaNominal}.
However, giving a latent factor a natural language description does not amount to explaining it nor to identifying its existence.
This work implies that latent factors in the common factor analytic model are perhaps best viewed through the lens of the manifold hypothesis.
This hypothesis observes that high-dimensional data often concentrates on low-dimensional manifolds embedded in the high-dimensional space \citep{ref_Manifold}.
Indeed, if $\hat{\mathbf{\Sigma}}^{+}_{xx}$ is (near-)diagonal then the factor model forms a possible data generating mechanism and the data possibly lie near an identifiable latent linear manifold.
Whether this manifold can be assigned a natural language description depends somewhat on the field of study. 
For the Behavioral and Social Sciences this might be a nicety, but it is not a necessity for (successful) application in, for example, imaging and remote sensing. 
We then make a case for a pragmatist-realist point of view: when the factor model is an adequate description of the data-generating mechanism then a successful latent projection justifies commitment to the reality of the latent entity, irrespective of the possibility for a natural language description. 
To put it bluntly: if it works, it works.

\subsubsection{Hierarchical factor analysis}
\label{SSSEC:HFA}
Hierarchical factor analysis has been popular in behavioral research.
It consists of a hierarchy of linear latent factor models that may be viewed as a generative deep learner under Gaussian latents and errors with identity activation.
Indeed, while Amari \citep{ref_AmariDeep} and Ivakhnenko and Lapa \citep{ref_IvaDeep} are generally considered to be the first publications in feedforward deep learners, Schmid and Leiman \citep{ref_SchmidtDeep} unwittingly published the first instance of a generative deep learner. 
However, with linear activation they produce a composition that is simply an extended linear map (see Section S.4.2.2 of the SM).
In addition, higher-order latent factors are likely to suffer from (high) indeterminacy as they are represented by relatively few lower-order latent factors.
We will expand upon this topic in Sections \ref{SSSEC:Collapse} and \ref{SSSEC:Layers}.

%%%%%%%%%%%%%%%%%%%%%%%%%%%%
%%%--- Statistics ----------
%%%%%%%%%%%%%%%%%%%%%%%%%%%%
\subsection{On implications for Statistics}\label{SSEC:Stats}
\subsubsection{Bayesian estimation}
\label{SSSEC:Estimate}
The developments indicate that the observable dimensions needed to achieve practical determinacy discard the possibility of a full Bayesian approach towards the factor model.
This model is notorious for slow-mixing.
In addition, with $p$ ranging in the hundreds, thousands, or even higher, posterior simulation (by any form of Markov chain Monte Carlo) is rendered impractical or computationally impossible due to the curse of dimensionality. 
Variational inference approximations to the posterior are more efficient from a computational view.
However, for very large $p$, runtimes for the factor model will still be too considerable to be deemed practicable \citep{ref_Munch_FR}.
We see, as this work implies that the uncertainty in estimates of $\{\LA,\PH,\PS\}$ is inconsequential for the determinacy of factor projection, two alternative possibilities for Bayesian modeling.
First, in situations of (near) determinacy one could obtain, through Algorithm \ref{ALG:Estimation}, factor projections to be used in further full Bayesian modeling. 
Second, if interested in uncertainties regarding $\{\LA,\PH,\PS\}$, one could use concentration in terms of a (near) determinate projection to perform MCMC conditional on a fixation of this projection. 

\subsubsection{Suitability assessment}
\label{SSSEC:Suitable}
We may also use the findings on concentration to assess if the factor model could be adequate for the data at hand.
Let $\hat{\mathbf{\Sigma}}_{xx}$ be a suitable sample-estimate of the second central moment of the observables.
If $\hat{\mathbf{\Sigma}}^{+}_{xx}$ is (near) diagonal we have indication that (a) the model in (\ref{EQ:FAmodel}) is a possible description of the data generating mechanism and, if so, (b) one can obtain a factor projection that is (near) determinate. 
Hence, assessing the generalized inverse of a sample estimate of the covariance matrix provides a probe regarding the factorability of the data.
Note that the (near) diagonality of $\hat{\mathbf{\Sigma}}^{+}_{xx}$ is a necessary but not sufficient condition for factorability (see Section S.4.3.2 of the SM).
Nonetheless, the assessment of $\hat{\mathbf{\Sigma}}^{+}_{xx}$ can provide an alternative to methods that require positive definite estimates of $\mathbf{\Sigma}$, such as the Kaiser-Meyer-Olkin index \citep{ref_KMO}.

\subsubsection{Sparse versus distributed representation}
\label{SSSEC:Distribute}
Recently, $\ell_1$-norm and $\ell_2$-norm penalties have been incorporated in factor analytic estimation procedures to deal with the burden of high-dimensionality \citep[see, e.g.,][]{ref_Yama2015, ref_SparseEFA}. 
Usage of $\ell_1$-norm penalties will result in sparse loadings (weights).
While a sparse weight structure may enhance the interpretation of loadings, the regularization implies some suppression of signal and some loss of fit. 
In addition, a sparse structure will diminish determinacy.
The developments above imply that a componential or distributed representation \citep[involving all states of the latent factors,][]{ref_HG97} leads to better determined latent factor projections.
From this perspective one could prefer the estimation procedure as advocated here followed by some form of simple structure rotation \citep{ref_VARIMAX,ref_PROMAX}.
Such rotations provide an \emph{approximately} sparse representation.
Hence, they support interpretation, but do not diminish determinacy.
Also, distributed representations are known to be more robust against noise and information loss \citep[][p.\ 78]{ref_AImodern}.
Section \ref{SSEC:Image} implies we can achieve this without computationally expensive explicit regularization.

%%%%%%%%%%%%%%%%%%%%%%%%%%%%
%%%--- AI ------------------
%%%%%%%%%%%%%%%%%%%%%%%%%%%%
\subsection{On implications for Artificial Intelligence}\label{SSEC:AI}
\subsubsection{Posterior collapse}
\label{SSSEC:Collapse}
Much recent work in unsupervised representation learning focuses on deep generative latent variable models.
These models are based on the idea of autoencoders, especially the variational autoencoder \citep{ref_KWel} in which intractable posteriors are approximated by variational posteriors drawn from computationally tractable families of distributions.
These rely on approximate methods such as the evidence lower bound (ELBO).
This type of approximate Bayesian learning has deep connections to probabilistic PCA \citep{ref_Nakajma, ref_Lucas}, which is a special case of the model introduced in Section \ref{SSEC:FAmodel}.
Variational autoencoders often encounter posterior collapse \citep{ref_Bowman, ref_KingmaNeu}, which implies $p(\xi|x) = p(\xi)$ or, in words: the posterior of the latent variables equals the postulated prior.
This phenomenon is interpreted in the sense of the generative model being informationally replete and is often attributed to the use of the ELBO \citep{ref_Ravazi}, spurious local maxima \citep{ref_Lucas}, optimization \citep{ref_Alemi}, or the flexible capacity of neural networks \citep{ref_Dai20}.

We have treated the factor model as a minimal model, i.e., as a linear autoencoder with a tractable posterior.
This implies that the phenomenon of posterior collapse is persistent, in the sense that posterior collapse is a reflection of factor indeterminacy.
The developments in Section \ref{SEC:Perspectives} indicate that $p(\xi|x)$ does not directly reflect information on positionings in latent space. 
Rather, it reflects the uncertainty around its location, itself a random variable with which placings (of objects, persons, samples) in latent space are possible. 
The covariance matrix $\mathbf{\Sigma}(\mathbf{\Theta})_{\xi|x}$ of $p(\xi|x)$ carries the MSE between $\xi$ and the most plausible location for latent positioning $\xih$.
It implies a duality between $\mathbf{\Sigma}(\mathbf{\Theta})_{\xi|x}$ and $\mathbb{E}\big(\xih\xih^{\top}\big)$ in the sense of $\mathbf{\Sigma}(\mathbf{\Theta})_{\xi|x} = \PH - \mathbb{E}\big(\xih\xih^{\top}\big)$.
In case $\mathbb{E}\big(\xih\xih^{\top}\big) = \PH\LAt(\Com + \PS)^{-1}\LA\PH \approx \boldsymbol{0}$, then $\xi|x$ is approximately distributed as $\mathcal{N}_m\big(\xih, \PH\big)$.
From a realization perspective its behavior is $\mathcal{N}_m\big[\mathbb{E}(\xih), \PH\big] \overset{d}{=} \mathcal{N}_m(\boldsymbol{0}, \PH) = p(\xi)$.
The actual scoring in latent space could then be any near-constant on the support of $p(\xi|x)$ as any $\xih_k \pm c\rho(\tau_k,\xi_k)$ is approximately a Dirac delta function centered on $\pm c\rho(\tau_k,\xi_k)$ with posterior plausibility determined by $p(\xi_k|x)$.
This implies complete uncertainty regarding placings in latent space.
For this case there is no underlying factor model where $\xi$ exerts a retrievable signal to $x$. 
This can be due to $\epsilon \approx x$ or the dimension of $p$ being too low, in the sense of inadequate probing of the feature-space indicative of $\xi$.
Thus, in the determinate situation $p(\xi|x)$ is a Dirac measure on $\xih$ and $\xih$ behaves exactly as postulated in $p(\xi)$.
Latent scoring collapses when $p(\xi|x)$ approximates $p(\xi)$.
From this perspective it would thus be more appropriate to speak of latent variable collapse rather than posterior collapse.
This has immediate repercussions for deep generative learning models. 

\subsubsection{Layeredness in generative models}
\label{SSSEC:Layers}
Dellaleau and Bengio \citep{ref_DelBeng} have, as do universal approximation theorems in general \citep{ref_Universal}, shown that, for feedforward architectures, a deep distributed learner can give a more efficient representation than a shallow one.
A shallow architecture can learn the same functions, but will need (much) more neurons. 
The same observation is made by Hinton and Salakhutdinov \citep{ref_Hinton2006}.
For generative architectures the situation is different.
If we focus on our minimal generative architecture, the latent factor model as a single-hidden-layer generative model, we de facto have that $x$ (the observable space) provides the neurons.
Our results show that, to achieve determinacy in this architecture, the number of neurons must grow large.
From this perspective, when $p$ is very large, a single generative layer is sufficient to provide a latent representation without uncertainty that depends only linearly on $x$.
From a modern (high-throughput) data collection perspective, obtaining additional observational neurons is relatively cheap (see also Section \ref{SSEC:Reflect}).

Indeed, deep generative architectures will encounter the propagation of indeterminacy to higher-order connective layers, heightening the probability of encountering latent variable collapse with each consecutive layer.
To see this we will do a thought experiment.
Say we have an unsupervised deep generative model expressed as a hierarchy of generalized factor models \citep{ref_Chen} that can recursively be decomposed as:
\begin{equation*}
	\xi^{(l - 1)} := f^{(l)}\Big(\LA^{(l)}\xi^{(l)} + \epsilon^{(l)}\Big),
\end{equation*}
for $l = 1, \ldots, L$. 
In this expression the $\xi^{(l)}$ are the generative hidden features for the $l$th layer, with $\LA^{(l)}$ and $\epsilon^{(l)}$ the corresponding weight matrix and error vector.
It is usual to let the distributional assumptions on $\xi^{(l)}$ and  $\epsilon^{(l)}$ be simple such as $\xi^{(l)} \sim \mathcal{N}_{m^{(l)}}\big(\boldsymbol{0}, \PH^{(l)}\big)$ and  $\epsilon^{(l)} \sim \mathcal{N}_{m^{(l-1)}}\big(\boldsymbol{0}, \PS^{(l)}\big)$.
Proxy $\hat{\xi}^{(l)}$ for $\xi^{(l)}$ can be determined as $\xih^{(l)} \equiv \dot{\mathbb{E}}\big[\xi^{(l)}|\xih^{(l-1)}\big]$, where $\dot{\mathbb{E}}[\cdot|\cdot]$ is a possibly nonlinear expectation.
Naturally, $\xih^{(0)} = x$.
Now, let $f^{(l)}(\cdot)$ denote a (semi-affine) activation function of the $l$th layer.
If we then define
\begin{equation*}
	f^{(l)}\Big(\LA^{(l)}\xi^{(l)} + \epsilon^{(l)}\Big) \equiv \tilde{f}^{(l)},
\end{equation*}
we have the following composition:
\begin{equation*}
	\Big(\tilde{f}^{(L)} \circ \tilde{f}^{(L - 1)} \circ \cdots \circ \tilde{f}^{(1)}\Big).
\end{equation*}
This would define a generative (top-down or feedback) architecture, in sense of (now using the arrows to express computational rather than graphical or causal structure in the sense of Section \ref{SSEC:Markov}):
\begin{equation*}
	\xi^{(L)} \longrightarrow \xi^{(L - 1)} \longrightarrow \cdots \longrightarrow \xi^{(0)} = x.
\end{equation*}
Now, unless $\PS^{(l)} = \boldsymbol{0}$ (weights having perfect communality), the identification results from Section \ref{SSEC:FAmodel} suggest that the dimension $m^{(l)}$ of $\xi^{(l)}$ should be small relative to the dimension $p^{(l-1)} = m^{(l-1)}$ of $\xi^{(l-1)}$.
This implies, for each consecutive layer, a lesser number of projective indicators for the connective generative latent vector.
Except for degenerate cases, one is then bound to encounter a layer for which $p\big(\xi^{(l)} | \xi^{(l-1)}\big) = p\big(\xi^{(l)}\big)$, i.e., latent variable collapse.
This is most easily seen for the case in which the $f^{(l)}(\cdot)$ represent identity functions such that we have linear activation and where $\dot{\mathbb{E}}[\cdot|\cdot]$ signifies the regular expectation operator (bringing us to the hierarchical factor analysis discussed in Section \ref{SSSEC:HFA}), but carries over to the more general model.

This suggests, to avoid upward propagation of indeterminacy and encountering latent variable collapse, to combine a determinate generative layer with an architecture where the higher-order latents are formative (i.e., bottom-up, feedforward, or deterministic) in nature:
\begin{equation*}
	\bar{\xi}^{(L)} \longleftarrow \cdots \longleftarrow \bar{\xi}^{(2)} \longleftarrow \xi^{(1)} \longrightarrow \xi^{(0)} = x,
\end{equation*}
where the $\bar{\xi}^{(l)}$ are formative latents.
That is, a shallow generative part followed by a deeper formative structure (see also Section S.4.4.2 of the SM).
This can be considered a form of (unsupervised) pre-training \citep{ref_PreTrain}.
It is also commensurate with the idea of predictive coding networks \citep{ref_RaoPC99,ref_HanPC18}. 
%--------------- Discussion -----------------------------------------

%--------------- Conclusion -----------------------------------------
\section{Conclusion}\label{SSEC:Reflect}
%\subsection{A checkered beginning}
%The factor model originated in the Psychometric literature in the early 1900s.
%Since its inception, the model was surrounded by a certain mystique, partly fuelled by a misunderstanding of latent factor indeterminacy and the associated problems of identifying and interpreting latent projections.
%Especially in the 1950s and 1960s factor analysis was applied blindly to any collection of variables in the hope that the underlying latent sources would be `discovered' \citep[][Chapter 1]{ref_Mulaik2010}.
%While popular in the Social and Behavioral Sciences, this mystique had led to a certain disdain for the procedure amongst (applied) statisticians, see, e.g., \citep[the preface to][]{ref_Basi} and \citep[][p.\ 263]{ref_Feinstein}.
%One could say that the model was borne before its time \cite{ref_Batholomew_Origins}.

%\subsection{The blessing of dimensionality}
%At current, with renewed interest in (statistical) learning theory in this second wave of Artificial Intelligence, the model has broken free of its Psychometric mold.
%The model can be thought of as a generative statistical representation learner \citep{ref_FArepresent} with many variants (Helmholtz and Boltzmann machines, linear autoencoder) and special cases (probabilistic PCA), or as a base learner for the understanding and building of deep, generative architectures \citep{ref_AImodern}.
%As such, it has found renewed interest \citep[e.g.,][]{ref_Rohe23}.

Latent variable models are often divided into the classic single-hidden-layer and neural-network based deep architectures, with the former thought of as interpretable but offering limited model capacity and vice versa for the latter \citep{ref_CM}.
This work has shown that, from the generative factor perspective, the observables $x$ are the neurons and, as their dimension grows large, one can retrieve the generative latent sources without uncertainty and this retrieval is only linearly dependent on $x$.
This result depends on the dimension $p$ of $x$, not on its sample dimension.
It points again to a concentration of measure phenomenon that, due to its simplified (informational) geometry, drastically simplifies the retrieval of latent sources in high dimension \citep{ref_Kainen1997, ref_Garbon18}.
As such, the factor model is very suited for the analysis of  $p \gg n - 1 > m$ data.
With the adoption of modern data collection technology, the marginal cost of collecting measurements on additional features has shrunk dramatically.
As a result,  $p \gg n$ data are now ubiquitous, such as those stemming from high-throughput technologies and high-resolution imaging.
A universal feature of such high-dimensional data seems to be that they are approximately low-rank \citep{ref_Udell}.
For the factor model concentration of measure, the resulting determinacy of the retrieved factors, the linear dependency of this retrieval on $x$, and the exact localization of scorings in latent space, are all emergent properties at scale.
Hence, for the factor model dimensionality is a blessing rather than a curse \citep{ref_Donoho2000}.
%Factor analysis might have been borne before its time, but the era of modern high-dimensional data collection might see it come of age.
%--------------- Conclusion -----------------------------------------

%--------------------------------------------------------------------
%%%%%%%%%%%%%%%%%%%%%%%%%%%%%%%%%%%%%%%%%%%%%%%%%%%%%%%%%%%%%%%%%%%%%
%%%%%%%%%%%%%%%% Appendix and backmatter %%%%%%%%%%%%%%%%%%%%%%%%%%%%
%%%%%%%%%%%%%%%%%%%%%%%%%%%%%%%%%%%%%%%%%%%%%%%%%%%%%%%%%%%%%%%%%%%%%

%%--------------------------------------------------------------------
%%--------------- Appendix -------------------------------------------
\begin{appendix}
%%%%%%%%%%%%%%%%%%%%%%%%%%%%
%%%--- Identities ----------
%%%%%%%%%%%%%%%%%%%%%%%%%%%%
\section{Useful identities}\label{APP:Identities}
We establish some useful matrix identities that support the developments in the (supplementary) text as well as the proofs in Appendix \ref{APP:Proofs}.
The first identity may be understood as a matrix-variant of the Hua-indentity \citep{ref_HUAident}.
It may also be understood as representing the left- and right-most extremes of the Woodbury matrix identity \citep{ref_Duncan1944,ref_Woodbury}  under consecutive application of the push-through identity \citep{ref_Push}:
\begin{lemma}[Hua-type matrix identity]\label{SMLEM:HUAmat}
	Let $\mathbf{P} \in \mathbb{R}^{a \times a}$, $\mathbf{L} \in \mathbb{R}^{b \times a}$, and $\mathbf{R} \in \mathbb{R}^{b \times b}$.
	Moreover, let $\mathbf{R}$ and $\mathbf{LPL}^{\top} + \mathbf{R}$ be nonsingular.
	The following identity then holds:
	\begin{equation}\label{SMEQ:HUAmat}
		\mathbf{R}^{-1} - \mathbf{R}^{-1}\mathbf{LPL}^{\top}\big(\mathbf{LPL}^{\top} + \mathbf{R}\big)^{-1} =
		\mathbf{R}^{-1} - \big(\mathbf{LPL}^{\top} + \mathbf{R}\big)^{-1}\mathbf{LPL}^{\top}\mathbf{R}^{-1} =
		\big(\mathbf{LPL}^{\top} + \mathbf{R}\big)^{-1}.
	\end{equation}
\end{lemma}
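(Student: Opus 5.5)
The plan is to prove the identity by direct multiplication against $\mathbf{S} \equiv \mathbf{LPL}^{\top} + \mathbf{R}$, which is nonsingular by assumption. No Woodbury expansion is needed, so no invertibility of $\mathbf{P}$ is required. This matters because the statement only assumes $\mathbf{R}$ and $\mathbf{S}$ are nonsingular.

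The key observation is the trivial decomposition $\mathbf{LPL}^{\top} = \mathbf{S} - \mathbf{R}$. For the first expression, substitute this decomposition:
\begin{equation*}
\mathbf{R}^{-1}\mathbf{LPL}^{\top}\mathbf{S}^{-1} = \mathbf{R}^{-1}(\mathbf{S} - \mathbf{R})\mathbf{S}^{-1} = \mathbf{R}^{-1} - \mathbf{S}^{-1}.
\end{equation*}
Subtracting this from $\mathbf{R}^{-1}$ leaves exactly $\mathbf{S}^{-1}$, which is the right-most member of (\ref{SMEQ:HUAmat}).

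The middle expression is handled symmetrically:
\begin{equation*}
\mathbf{S}^{-1}\mathbf{LPL}^{\top}\mathbf{R}^{-1} = \mathbf{S}^{-1}(\mathbf{S} - \mathbf{R})\mathbf{R}^{-1} = \mathbf{R}^{-1} - \mathbf{S}^{-1}.
\end{equation*}
Hence the middle expression also equals $\mathbf{S}^{-1}$. Chaining the two equalities yields all of (\ref{SMEQ:HUAmat}).

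As a remark tying the result to the Woodbury/push-through interpretation mentioned before the lemma, note that $\mathbf{R}^{-1}\mathbf{LPL}^{\top}\mathbf{S}^{-1}$ and $\mathbf{S}^{-1}\mathbf{LPL}^{\top}\mathbf{R}^{-1}$ coincide. This shows they are the two push-through forms of the same matrix, and it needs no extra assumptions.

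There is no real obstacle in this argument. The only point requiring care is to avoid invoking $\mathbf{P}^{-1}$, as a Woodbury-based derivation would. Working through the decomposition $\mathbf{LPL}^{\top} = \mathbf{S} - \mathbf{R}$ sidesteps that issue entirely. It also does not require symmetry or positive definiteness, so the lemma holds at the stated generality of possibly non-square $\mathbf{L}$.
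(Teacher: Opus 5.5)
Your proof is correct and is essentially the paper's direct verification: the paper post-multiplies the first expression and pre-multiplies the middle expression by $\mathbf{LPL}^{\top} + \mathbf{R}$ to obtain $\mathbf{I}_b$, which is the same computation as your substitution $\mathbf{LPL}^{\top} = \mathbf{S} - \mathbf{R}$. The one difference is that the paper also proves the first equality separately, via Sylvester's determinant identity and repeated push-through; your argument shows this detour is unnecessary, since both expressions already equal $\mathbf{S}^{-1}$ and the first equality follows by transitivity.
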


\begin{sloppypar}
\begin{proof}
	We will proceed by direct proof.
	The last equality implies that pre- and post-multiplication by $(\mathbf{LPL}^{\top} + \mathbf{R})$ should give an identity matrix.
	First consider post-multiplying the left-hand side expression in (\ref{SMEQ:HUAmat}) with $(\mathbf{LPL}^{\top} + \mathbf{R})$:
	\begin{align*}
		\Big[\mathbf{R}^{-1} - \mathbf{R}^{-1}\mathbf{LPL}^{\top}\big(\mathbf{LPL}^{\top} + \mathbf{R}\big)^{-1}\Big]\big(\mathbf{LPL}^{\top} + \mathbf{R}\big) =
		\mathbf{R}^{-1}\mathbf{LPL}^{\top} + \mathbf{I}_b - \mathbf{R}^{-1}\mathbf{LPL}^{\top} = \mathbf{I}_b.
	\end{align*}
	Let us now establish the first equality in (\ref{SMEQ:HUAmat}).
	Note that
	\begin{equation}\label{EQ:CyclePre}
		\mathbf{R}^{-1} - \mathbf{R}^{-1}\mathbf{LPL}^{\top}\big(\mathbf{LPL}^{\top} + \mathbf{R}\big)^{-1} =
		\mathbf{R}^{-1} - \mathbf{R}^{-1}\mathbf{LPL}^{\top}\mathbf{R}^{-1}\big(\mathbf{LPL}^{\top}\mathbf{R}^{-1} + \mathbf{I}_b\big)^{-1}.
	\end{equation}
	Sequential application of Sylvester's determinant identity (sometimes called the Weinstein-Aronzajn identity) \citep[][Chapter 4, Section 6]{ref_Kato76} gives that
	\begin{equation}\label{EQ:Cycle}
		\big|\mathbf{LPL}^{\top}\mathbf{R}^{-1} + \mathbf{I}_b\big| = 
		\big|\mathbf{PL}^{\top}\mathbf{R}^{-1}\mathbf{L} + \mathbf{I}_a\big| = 
		\big|\mathbf{L}^{\top}\mathbf{R}^{-1}\mathbf{LP} + \mathbf{I}_a\big| = 
		\big|\mathbf{R}^{-1}\mathbf{LPL}^{\top} + \mathbf{I}_b\big|.
	\end{equation}
	Taking the first determinant expression above we see that
	\begin{equation*}
		\big|\mathbf{LPL}^{\top}\mathbf{R}^{-1} + \mathbf{I}_b\big| = 
		\big|\big(\mathbf{LPL}^{\top} + \mathbf{R}\big)\mathbf{R}^{-1}\big| = \big|\mathbf{LPL}^{\top} + \mathbf{R}\big|\big|\mathbf{R}^{-1}\big| > 0,
	\end{equation*}
	by the nonsingularity assumption on $\mathbf{R}$ and $\mathbf{LPL}^{\top} + \mathbf{R}$.
	As a consequence, all inverses of the (cyclic) matrix sums expressed in (\ref{EQ:Cycle}) exist.
	Hence, the right-hand side of (\ref{EQ:CyclePre}) may then, by sequential application of the push-through identity \citep{ref_Push}, be shown to be equivalent to
	\begin{equation*}
		\mathbf{R}^{-1} - \big(\mathbf{R}^{-1}\mathbf{LPL}^{\top} + \mathbf{I}_b\big)^{-1}\mathbf{R}^{-1}\mathbf{LPL}^{\top}\mathbf{R}^{-1} =
		\mathbf{R}^{-1} - \big(\mathbf{LPL}^{\top} + \mathbf{R}\big)^{-1}\mathbf{LPL}^{\top}\mathbf{R}^{-1},
	\end{equation*}
	establishing the first equality in (\ref{SMEQ:HUAmat}).
		Now, pre-multiplying the middle expression with $(\mathbf{LPL}^{\top} + \mathbf{R})$ gives:
	\begin{align*}
		\big(\mathbf{LPL}^{\top} + \mathbf{R}\big)\Big[\mathbf{R}^{-1} - \big(\mathbf{LPL}^{\top} + \mathbf{R}\big)^{-1}\mathbf{LPL}^{\top}\mathbf{R}^{-1}\Big] =
		\mathbf{LPL}^{\top}\mathbf{R}^{-1} + \mathbf{I}_b - \mathbf{LPL}^{\top}\mathbf{R}^{-1} = \mathbf{I}_b.
	\end{align*}
	This establishes all equalities stated in (\ref{SMEQ:HUAmat}).
\end{proof}
\end{sloppypar}

The second identity can be thought of as an expanded Woodbury matrix identity.
It can be obtained by direct application of stated identity followed by some ready algebra (or, alternatively, by repeated usage of Lemma \ref{SMLEM:HUAmat}).
\begin{lemma}[Expanded Woodbury matrix identity]\label{SMLEM:ExpWood}
	Let $\mathbf{P} \in \mathbb{R}^{a \times a}$, $\mathbf{L} \in \mathbb{R}^{b \times a}$, and $\mathbf{R} \in \mathbb{R}^{b \times b}$.
	Moreover, let $\mathbf{R}$, $\mathbf{LPL}^{\top} + \mathbf{R}$, and $\mathbf{P}$ be nonsingular.
The following identity then holds:
	\begin{equation}\label{SMEQ:ExpWood}
		\mathbf{R} - \mathbf{R}\big(\mathbf{LPL}^{\top} + \mathbf{R}\big)^{-1}\mathbf{R} =
			\mathbf{L}\big(\mathbf{L}^{\top}\mathbf{R}^{-1}\mathbf{L} + \mathbf{P}^{-1}\big)^{-1}\mathbf{L}^{\top}.
	\end{equation}
\end{lemma}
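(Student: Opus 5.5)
The plan is to derive (\ref{SMEQ:ExpWood}) from the ordinary Woodbury identity applied to $(\mathbf{LPL}^{\top} + \mathbf{R})^{-1}$, followed by a short algebraic collapse. Since $\mathbf{R}$, $\mathbf{P}$ and $\mathbf{LPL}^{\top} + \mathbf{R}$ are nonsingular, I first want $\mathbf{M} \equiv \mathbf{L}^{\top}\mathbf{R}^{-1}\mathbf{L} + \mathbf{P}^{-1}$ to be invertible. Note that $\mathbf{M} = \mathbf{P}^{-1}(\mathbf{I}_a + \mathbf{PL}^{\top}\mathbf{R}^{-1}\mathbf{L})$. By the determinant chain (\ref{EQ:Cycle}) in the proof of Lemma \ref{SMLEM:HUAmat}, $|\mathbf{I}_a + \mathbf{PL}^{\top}\mathbf{R}^{-1}\mathbf{L}| = |\mathbf{LPL}^{\top} + \mathbf{R}||\mathbf{R}^{-1}| \neq 0$, so $\mathbf{M}$ is nonsingular. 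Settling this well-posedness is the only step that needs some care.

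Next I invoke Woodbury:
\begin{equation*}
(\mathbf{LPL}^{\top} + \mathbf{R})^{-1} = \mathbf{R}^{-1} - \mathbf{R}^{-1}\mathbf{L}\mathbf{M}^{-1}\mathbf{L}^{\top}\mathbf{R}^{-1}.
\end{equation*}
This can be checked directly by multiplying with $(\mathbf{LPL}^{\top} + \mathbf{R})$ and using $\mathbf{L}^{\top}\mathbf{R}^{-1}\mathbf{L} = \mathbf{M} - \mathbf{P}^{-1}$. Pre- and post-multiplying by $\mathbf{R}$ gives $\mathbf{R}(\mathbf{LPL}^{\top} + \mathbf{R})^{-1}\mathbf{R} = \mathbf{R} - \mathbf{L}\mathbf{M}^{-1}\mathbf{L}^{\top}$. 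Subtracting this from $\mathbf{R}$ then yields $\mathbf{L}\mathbf{M}^{-1}\mathbf{L}^{\top}$, which is the claimed right-hand side.

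As a self-contained alternative that avoids quoting Woodbury, I would use Lemma \ref{SMLEM:HUAmat}. It gives $\mathbf{R}(\mathbf{LPL}^{\top} + \mathbf{R})^{-1}\mathbf{R} = \mathbf{R} - \mathbf{LPL}^{\top}(\mathbf{LPL}^{\top} + \mathbf{R})^{-1}\mathbf{R}$, so the left side of (\ref{SMEQ:ExpWood}) equals $\mathbf{LPL}^{\top}(\mathbf{LPL}^{\top} + \mathbf{R})^{-1}\mathbf{R}$. Writing $(\mathbf{LPL}^{\top} + \mathbf{R})^{-1}\mathbf{R} = (\mathbf{R}^{-1}\mathbf{LPL}^{\top} + \mathbf{I}_b)^{-1}$ and applying the push-through identity moves $\mathbf{L}$ across the inverse. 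This gives $\mathbf{LP}(\mathbf{L}^{\top}\mathbf{R}^{-1}\mathbf{LP} + \mathbf{I}_a)^{-1}\mathbf{L}^{\top}$, and since $\mathbf{P}(\mathbf{L}^{\top}\mathbf{R}^{-1}\mathbf{LP} + \mathbf{I}_a)^{-1} = \mathbf{M}^{-1}$, the result follows.
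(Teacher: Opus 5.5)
Your proposal is correct and follows the paper's own route: the paper obtains the identity by directly applying the Woodbury identity to $(\mathbf{LPL}^{\top} + \mathbf{R})^{-1}$ followed by ready algebra, or alternatively via Lemma~\ref{SMLEM:HUAmat}, which matches your main argument and your fallback respectively. Your explicit check, via the determinant chain (\ref{EQ:Cycle}), that $\mathbf{L}^{\top}\mathbf{R}^{-1}\mathbf{L} + \mathbf{P}^{-1}$ is nonsingular is a detail the paper leaves implicit, and the stated hypotheses do indeed guarantee it.
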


%\begin{proof}
%	We again proceed by direct proof. 
%	By Lemma \ref{SMLEM:HUAmat} we have
%	\begin{align*}
%		\mathbf{R}\big(\mathbf{LPL}^{\top} + \mathbf{R}\big)^{-1}\mathbf{R} &= 
%		\mathbf{R}\Big[\mathbf{R}^{-1} - \mathbf{R}^{-1}\mathbf{LPL}^{\top}\big(\mathbf{LPL}^{\top} + \mathbf{R}  \big)^{-1}\Big]\mathbf{R} \\
%		&= \mathbf{R} - \mathbf{LPL}^{\top}\big(\mathbf{LPL}^{\top} + \mathbf{R} \big)^{-1}\mathbf{R} \\
%		&= \mathbf{R} - \mathbf{LPL}^{\top}\Big[\mathbf{R}^{-1} - \big(\mathbf{LPL}^{\top} + \mathbf{R}  \big)^{-1} \mathbf{LPL}^{\top}\mathbf{R}^{-1}\Big]\mathbf{R}.
%	\end{align*}
%	The last expression may be expanded as
%	\begin{equation*}
%		\mathbf{R} - \mathbf{LPL}^{\top}\Big[\mathbf{I}_b - \big(\mathbf{LPL}^{\top} + \mathbf{R}\big)^{-1} \mathbf{LPL}^{\top}\Big] = \mathbf{R} - \mathbf{LPL}^{\top} - \mathbf{LPL}^{\top}\big(\mathbf{LPL}^{\top} + \mathbf{R}\big)^{-1}\mathbf{LPL}^{\top},
%	\end{equation*}
%	which can be factorized as
%	\begin{equation*}
%		\mathbf{R} - \mathbf{L}\Big[\mathbf{P} - \mathbf{PL}^{\top}\big(\mathbf{LPL}^{\top} + \mathbf{R}\big)^{-1}\mathbf{LP}\Big]\mathbf{L}^{\top}.
%	\end{equation*}
%	The bracketed term in the preceding expression can then be factorized as $\big(\mathbf{L}^{\top}\mathbf{R}^{-1}\mathbf{L} + \mathbf{P}^{-1}\big)^{-1}$ by the Woodbury matrix identity.
%	The stated result (\ref{SMEQ:ExpWood}) follows.
%\end{proof}

\begin{remark}
	Note that the lemma's also hold in the more general case where we replace $\mathbf{L}^{\top}$ by $\mathbf{Q} \in \mathbb{R}^{a \times b}$.
	Usage of $\mathbf{L}^{\top}$, however, makes it easier to see the connection to the factor model.
	In addition, in case of symmetry of $\mathbf{R}$ and $\mathbf{P}$ (as in the factor model where $\mathbf{R} = \PS$ and $\mathbf{P} = \PH$), the proof of Lemma \ref{SMLEM:HUAmat} is even more direct.
	In that case $(\mathbf{LPL}^{\top} + \mathbf{R})^{-1}$ is symmetric and the term $\mathbf{R}^{-1}\mathbf{LPL}^{\top}(\mathbf{LPL}^{\top} + \mathbf{R})^{-1}$ must also be symmetric.
	Taking its transpose then directly establishes the  first equality in (\ref{SMEQ:HUAmat}).
\end{remark}

%%%%%%%%%%%%%%%%%%%%%%%%%%%%
%%%--- Proofs --------------
%%%%%%%%%%%%%%%%%%%%%%%%%%%%
\section{Proofs}\label{APP:Proofs}
We will first establish the proof of Theorem \ref{PROP:GramLimit}.
Our general strategy for the remaining proofs will be to convert our statements of interest in a manner that pivotally hinges upon the result of Theorem \ref{PROP:GramLimit}.

\begin{proof}[Proof of Theorem \ref{PROP:GramLimit}]
Our approach will be to gradually simplify the problem until a manageable series remains.
We first observe that, as $\PH \succ 0$, 
\begin{equation*}
	\big(\Gram\big)_{kk} < \big(\PHi + \Gram\big)_{kk},\,\,\,\,\forall k.
\end{equation*}
Post- and pre-multiplication of the matrix-terms on both sides with $(\PHi + \Gram)^{-1}$ and $(\Gram)^{-1}$ respectively (or vice versa), immediately gives:
\begin{equation*}
	\Big[\big(\PHi + \Gram\big)^{-1}\Big]_{kk} < \Big[\big(\Gram\big)^{-1}\Big]_{kk},\,\,\,\,\forall k.
\end{equation*}
We then note that the matrix terms are residual correlation matrices (of our best linear and best linear unbiased proxies for $\xi$, also see Sections S.2.4.1 and S.4.1 of the SM).
As a consequence, $(\Gram)^{-1} \longrightarrow \boldsymbol{0}$ implies $(\PHi + \Gram)^{-1} \longrightarrow \boldsymbol{0}$.
Hence, our first simplification is to work with $(\Gram)^{-1}$.

Our second simplification stems from the Gramian status of our matrices of interest. 
From our assumptions it is immediate that $\PHi + \Gram \succ 0$.
Hence, $\PHi + \Gram$ has a Gram decomposition.
Focusing on (as justified by our first simplification) the dominant factor $\Gram$ we note, as $\PS \succ 0$, that there exists a unique inverse square root $\PS^{-1/2}$ that is p.d.\ and symmetric as a consequence of \citep[Theorem 1, p.\ 115 in][]{ref_Serre}.
This implies that the Gram decomposition of $\Gram$ can be written as
\begin{equation*}
	\LAt\PS^{-1/2}\PS^{-1/2}\LA = \big(\PS^{-1/2}\LA\big)^{\top}\big(\PS^{-1/2}\LA\big) \equiv \LA^{*\top}\LA^{*}.
\end{equation*}

Our third simplification stems from the behavior of the elements of $(\LA^{*\top}\LA^{*})^{-1}$ when $p$ grows large.
Let $\boldsymbol{\lambda}_k^{*}$ represent the $k$th column of $\LA^{*}$.
Then $\LA^{*\top}\LA^{*}$ is a Gram matrix with basis $\{\boldsymbol{\lambda}_1^{*}, \ldots, \boldsymbol{\lambda}_m^{*}\}$.
Let the orthogonal projection of $\boldsymbol{\lambda}_k^{*}$ onto the subspace spanned by $\{\boldsymbol{\lambda}_{k'}^{*}\}_{k'\neq k}^{m}$ be denoted by $\hat{\boldsymbol{\lambda}}_k^{*}$, and define $\tilde{\boldsymbol{\lambda}}_k^{*} \equiv \boldsymbol{\lambda}_k^{*} - \hat{\boldsymbol{\lambda}}_k^{*}$.
As $\LA^{*}$ is of full rank, the coefficients $b_{kk'}$ in $\hat{\boldsymbol{\lambda}}_k^{*} = \sum_{k'\neq k}b_{kk'}\boldsymbol{\lambda}_{k'}^{*}$ are uniquely determined.
By Theorem 2.1 of \citep{ref_GramPino}, we can then express the elements of $(\LA^{*\top}\LA^{*})^{-1}$  as:
\begin{align*}
	\Big[\big(\LA^{*\top}\LA^{*}\big)^{-1}\Big]_{kk} &= \big\|\tilde{\boldsymbol{\lambda}}_k^{*}\big\|_2^{-2}, \,\,\,k = 1,\dots,m, ~\mbox{and} \\
	\Big[\big(\LA^{*\top}\LA^{*}\big)^{-1}\Big]_{kk'} &= -b_{kk'}\big\|\tilde{\boldsymbol{\lambda}}_k^{*}\big\|_2^{-2}, \,\,\,k\neq k', ~~k,k' = 1,\dots,m.
\end{align*} 
Observing that $\|\hat{\boldsymbol{\lambda}}_k^{*}\|_2 \leq \|\boldsymbol{\lambda}_{k'}^{*}\|_2$ and using the reverse triangle inequality, we have
\begin{equation}\label{EQ:ReverseT}
	\Big(\big\|\boldsymbol{\lambda}_k^{*}\big\|_2 - \big\|\hat{\boldsymbol{\lambda}}_k^{*}\big\|_2\Big)^2
	\leq \big\|\tilde{\boldsymbol{\lambda}}_k^{*}\big\|_2^{2} \leq \big\|\boldsymbol{\lambda}_k^{*}\big\|_2^{2}.
\end{equation}
By the results in \citep{ref_KainenKurk}, all vectors $\boldsymbol{\lambda}_k^{*}, \boldsymbol{\lambda}_{k'}^{*}$ are pairwise $\eta$-orthogonal for $k\neq k'$ with probability tending to $1$ as $p\uparrow\infty$.
That is, for every $\eta > 0$, and for all $k\neq k'$,
\begin{equation*}
	\Pr \Bigg(\bigg|\frac{\langle\boldsymbol{\lambda}^*_k,\boldsymbol{\lambda}^*_{k'}\rangle}{\big\|\boldsymbol{\lambda}^*_k\big\|_2\big\|\boldsymbol{\lambda}^*_{k'}\big\|_2}\bigg| \geq \eta \Bigg) \longrightarrow 0
	\,\,\,\,\,\mbox{as} \,\, p\uparrow\infty.
\end{equation*}
As a direct consequence $b_{kk'} \longrightarrow 0$ for all $k\neq k'$ as $p\uparrow\infty$.
Another direct consequence, by (\ref{EQ:ReverseT}), is that $\|\tilde{\boldsymbol{\lambda}}_k^{*}\|_2^{2}$ will tend to get upper- and lower-bounded by $\|\boldsymbol{\lambda}_k^{*}\|_2^{2}$ as $p\uparrow\infty$.
These consequences imply
\begin{equation*}
	e_{k}\Big[\big(\LA^{*\top}\LA^{*}\big)^{-1}\Big] \longrightarrow \|\boldsymbol{\lambda}_k^{*}\|_2^{-2}
	\,\,\,\,\,\mbox{as} \,\, p\uparrow\infty,
\end{equation*}
for all $k$.
%This means that we only have to assess the limiting behavior of  $\|\boldsymbol{\lambda}_k^{*}\|_2^{-2}$ for all $k$ as $p\uparrow\infty$.
With this third simplification it is now left to show that $\lim_{p\,\uparrow\,z} \big\|\boldsymbol{\lambda}^*_k\big\|^2_2 = \infty$ if and only if $z = \infty$.

Assume $z = \infty$. 
Note that if $c_k$ is the number of indicators for factor $k$, by our assumption, the number of nonzero elements in $\boldsymbol{\lambda}_k^{*}$ is $c'_k \geq c_k$.
Also, by assumption there is a weakest precision-weighted indicator such that $\inf\big\{|(\PS^{-1/2}\LA)_{jk}| ~|~ |(\PS^{-1/2}\LA)_{jk}| \neq 0\big\} = \varphi > 0$.
Let $h_k$ be an index summing over all non-null elements in $\boldsymbol{\lambda}_k^{*}$, $h_k = 1, \ldots, c'_k$.
Now, if $p\uparrow\infty$, $c'_k\uparrow\infty$.
Then $\|\boldsymbol{\lambda}^*_k\|^2_2$ is lower-bounded in the limit by
\begin{equation*}
	\lim\limits_{p\,\uparrow\,\infty} \sum_{h_k = 1}^{c'_k} \varphi^2 = \lim\limits_{p\,\uparrow\,\infty} \frac{p\varphi^2}{d_k'} = \infty, \,\,\,\,\forall k.
\end{equation*}
Now assume that the series diverges but that $z$ is finite, hence, that $p$ grows to a finite positive integer.
Then we would have, for any $k$, a finite sum of finite elements, resulting in a finite evaluation of the series.
This provides a contradiction by which we conclude necessity.
The logical implication is that $e_{\mathrm{max}}\big[(\LA^{*\top}\LA^{*})^{-1}\big] \longrightarrow 0$ if and only if $p\uparrow\infty$.
The theorem follows.
\end{proof}

\begin{remark}
	Note that, in applications, often (orthogonal or oblique) rotations of $\LA$ are employed that are (approximately) sparse. 
	Such rotations emulate factorially pure structures (in which a feature loads on one and only one factor), thus enhancing quasi-orthogonality.
	In a perfectly factorially pure structure the columns of $\LA$ are perfectly pairwise orthogonal at any $p$, but still requires $p\uparrow\infty$ for $(\mathbf{\Phi}^{-1} + \mathbf{\Lambda}^{\top}\mathbf{\Psi}^{-1}\mathbf{\Lambda})^{-1}$ to vanish.
	From a practical standpoint this vanishing is enhanced by high absolute loadings.
\end{remark}

\begin{proof}[Proof of Corollary \ref{CORROL:FacLimits}]
We will visit each of the statements in turn.
For Corollary \ref{CORROL:FacLimits}.i we have, by (\ref{EQ:SMCwood}) and Theorem \ref{PROP:GramLimit}, that
\begin{equation*}
	\lim\limits_{p\,\uparrow\,\infty} \Big\|\PH - \PH\LAt\big(\Com + \PS\big)^{-1}\LA\PH\Big\|_2 = \lim\limits_{p\,\uparrow\,\infty} \Big\|\big(\mathbf{\Phi}^{-1} + \mathbf{\Lambda}^{\top}\mathbf{\Psi}^{-1}\mathbf{\Lambda}\big)^{-1} \Big\|_2 = 0.
\end{equation*}
For Corollary \ref{CORROL:FacLimits}.ii we note that
\begin{equation*}
	\Big\|\PS - \PS\big(\Com + \PS\big)^{-1}\PS\Big\|_{\infty,\infty} =
	\Big\|\LA\big(\PHi + \Gram\big)^{-1}\LAt\Big\|_{\infty,\infty},
\end{equation*}
by Lemma \ref{SMLEM:ExpWood}.
Let $\boldsymbol{\lambda}_j = [\lambda_{j1}, \ldots, \lambda_{jm}] \in \mathbb{R}^{m}$ denote the $j$th row of $\LA$ and define $\Delta_{\PS} \equiv \LA\big(\PHi + \Gram\big)^{-1}\LAt$. We can then find the individual elements of $\Delta_{\PS}$ as
\begin{equation*}
	(\Delta_{\PS})_{jj'} = \boldsymbol{\lambda}_j\big(\PHi + \Gram\big)^{-1} \boldsymbol{\lambda}_{j'}^{\top},
\end{equation*}
in which we recognize a bilinear form. 
By the Cauchy-Schwarz inequality and the induced matrix norm it then follows that \cite{ref_Zhang_bilinear}:
\begin{equation*}
	\big|(\Delta_{\PS})_{jj'}\big| \leq \big\|\boldsymbol{\lambda}_j\big\|_2\, \Big\|\big(\PHi + \Gram\big)^{-1}\Big\|_2\, \big\|\boldsymbol{\lambda}_{j'}\big\|_2.
\end{equation*}
The maximum over $j$ and $j'$ follows naturally as:
\begin{equation*}
	\max_{j,j'}\big|(\Delta_{\PS})_{jj'}\big| \leq \Big(\max_{j} \big\|\boldsymbol{\lambda}_j\big\|_2\Big)^2 \,\Big\|\big(\PHi + \Gram\big)^{-1}\Big\|_2.
\end{equation*}
Clearly, $\|\boldsymbol{\lambda}_j\|_2$ is dependent on a bounded $m$-sum for all $j$.
Hence, 
\begin{align*}
	\lim\limits_{p\,\uparrow\,\infty} \max_{j,j'}\big|(\Delta_{\PS})_{jj'}\big| \leq \Big(\max_{j} \big\|\boldsymbol{\lambda}_j\big\|_2\Big)^2 \lim\limits_{p\,\uparrow\,\infty} \Big\|\big(\PHi + \Gram\big)^{-1}\Big\|_2 = \,0,
\end{align*}
by Theorem \ref{PROP:GramLimit}.
For Corollary \ref{CORROL:FacLimits}.iii we note that, by the Woodbury identity, we have
\begin{equation*}
	\big(\Com + \PS\big)^{-1} = \PSi - \PSi\LA\big(\PHi + \Gram\big)^{-1}\LAt\PSi,
\end{equation*}
implying that
\begin{equation*}
	\Big\|\PSi - \big(\Com + \PS\big)^{-1}\Big\|_{\infty,\infty} =
	\Big\|\PSi\LA\big(\PHi + \Gram\big)^{-1}\LAt\PSi\Big\|_{\infty,\infty}.
\end{equation*}
Define $\PSi\LA\big(\PHi + \Gram\big)^{-1}\LAt\PSi \equiv \Delta_{\PSi}$ and $\PSi\LA \equiv \LA_{\PSi}$ and let $(\LA_{\PSi})_{j,\bullet}$ denote the $j$th row of $\LA_{\PSi}$.
Then, extending the previous argument from \ref{CORROL:FacLimits}.ii,
\begin{equation*}
	\big|(\Delta_{\PSi})_{jj'}\big| \leq \big\|(\LA_{\PSi})_{j,\bullet}\big\|_2\, \Big\|\big(\PHi + \Gram\big)^{-1}\Big\|_2\, \big\|(\LA_{\PSi})_{j',\bullet}\big\|_2.
\end{equation*}
By the similarity invariance of analytic matrix functions the (block) zero structure of $\PS$ is preserved in $\PSi$.
Thus $(\LA_{\PSi})_{jk}$ is bounded for all $j,k$ and as a result $\|(\LA_{\PSi})_{j',\bullet}\|_2$ is bounded for all $j$.
It then follows that 
\begin{equation*}
	\lim\limits_{p\,\uparrow\,\infty} \max_{j,j'}\big|(\Delta_{\PSi})_{jj'}\big| \leq \Big(\max_{j} \big\|(\LA_{\PSi})_{j,\bullet}\big\|_2\Big)^2 \lim\limits_{p\,\uparrow\,\infty} \Big\|\big(\PHi + \Gram\big)^{-1}\Big\|_2 = \,0,
\end{equation*}
as a direct consequence of Theorem \ref{PROP:GramLimit}.
In i., ii., and iii., the necessity and sufficiency of $p$ growing to infinity stems directly from Theorem \ref{PROP:GramLimit}.
The result follows.
\end{proof}

In the remainder we will let expressions such as $\lim_{p\,\uparrow\,\infty} \mathbf{B}_p = \boldsymbol{0}$ denote general entrywise convergence to the null matrix. 
This will allow us to avoid some notational clutter.

\begin{proof}[Proof of Corollary \ref{CORROL:Equalities}]
We will be using direct proofs to show our limiting equalities.
A necessary condition for determinacy is concurrent spans for $    \mathbf{\Sigma}(\mathbf{\Theta})_{xx}$ and $\mathbf{\Sigma}_{xx}$.
For (\ref{EQ:CorMatSpan}) from \emph{Indeterminacy Statement} \ref{IDS:span} it is sufficient to show that the feature-limit of $m + p$ divided by the dominant factor is unity.
For finite $m$ it is then immediate that
\begin{equation*}
\lim\limits_{p\,\uparrow\,\infty} \bigg(\frac{p}{p} + \frac{m}{p}\bigg) = 1 + \lim\limits_{p\,\uparrow\,\infty} \frac{m}{p} = 1.
\end{equation*}
Limiting equalities for (\ref{EQ:PostViolate}) and (\ref{EQ:PostViolate2}) from \emph{Indeterminacy Statement} \ref{IDS:PostulateViolate} follow directly from Corollary \ref{CORROL:FacLimits}.i and \ref{CORROL:FacLimits}.ii, respectively.
For (\ref{EQ:PostViolate3}) we first note the equivalence $\mathbf{\Sigma}(\mathbf{\Theta})_{\xi|x}\LAt = \mathbf{I}_m\mathbf{\Sigma}(\mathbf{\Theta})_{\xi|x}\LAt$, allowing us to express the elements of $\mathbf{\Sigma}(\mathbf{\Theta})_{\xi|x}\LAt$ in bilinear form:
\begin{equation*}
	\Big[\mathbf{\Sigma}(\mathbf{\Theta})_{\xi|x}\LAt\Big]_{kj} = (\mathbf{I}_m)_{k,\bullet}\mathbf{\Sigma}(\mathbf{\Theta})_{\xi|x}\boldsymbol{\lambda}_{j}^{\top},
\end{equation*}
where $(\mathbf{I}_m)_{k,\bullet}$ denotes the $k$th row of $\mathbf{I}_m$.
Now, $\|(\mathbf{I}_m)_{k,\bullet}\|_2 = 1$ for all $k$ and $\|\boldsymbol{\lambda}_j\|_2$ implies a bounded $m$-sum for all $j$.
Hence,
\begin{align*}
	\lim\limits_{p\,\uparrow\,\infty} \Big\|\mathbf{\Sigma}(\mathbf{\Theta})_{\xi|x_p}\LAt\Big\|_{\infty,\infty}
	&= \,\lim\limits_{p\,\uparrow\,\infty} \max_{k,j}\bigg|\Big[\mathbf{\Sigma}(\mathbf{\Theta})_{\xi|x_p}\LAt\Big]_{kj}\bigg| \\
	&\leq \,\lim\limits_{p\,\uparrow\,\infty} \max_{k,j}
	\Big(\big\|(\mathbf{I}_m)_{k,\bullet}\big\|_2\,\big\|\mathbf{\Sigma}(\mathbf{\Theta})_{\xi|x_p}\big\|_2\,\big\|\boldsymbol{\lambda}_{j}\big\|_2\Big)\\
	&= \, \bigg(\lim\limits_{p\,\uparrow\,\infty} \Big\|\big(\PHi + \Gram\big)^{-1}\Big\|_2\bigg) \max_{j}\big\|\boldsymbol{\lambda}_{j}\big\|_2\\
	&= \,0,
\end{align*}
by the developments in the proof of Corollary \ref{CORROL:FacLimits}.ii and Theorem \ref{PROP:GramLimit}.
Regarding (\ref{EQ:SMCs}) from \emph{Indeterminacy Statement} \ref{IDS:SMC} we, by the result in Theorem \ref{PROP:GramLimit}, directly write
\begin{equation*}
	  \bigg[\mathbf{\Phi} - \lim\limits_{p\,\uparrow\,\infty} \big(\mathbf{\Phi}^{-1} + \mathbf{\Lambda}^{\top}\mathbf{\Psi}^{-1}\mathbf{\Lambda}\big)^{-1} \bigg]_{kk} =
	  (\PH)_{kk} = 1, \,\,\,\forall k.
\end{equation*} 
The limiting equality for (\ref{EQ:CVARconstruct}) from \emph{Indeterminacy Statement} \ref{IDS:Constructions} follows directly from Theorem \ref{PROP:GramLimit}.
Regarding (\ref{EQ:GuttGap}) from \emph{Indeterminacy Statement} \ref{IDS:CorGeom} we have, by Theorem \ref{PROP:GramLimit}, that
\begin{equation*}
	\lim\limits_{p\,\uparrow\,\infty} \gamma_k = \bigg[\mathbf{\Phi} - 2\lim\limits_{p\,\uparrow\,\infty} \big(\mathbf{\Phi}^{-1} + \mathbf{\Lambda}^{\top}\mathbf{\Psi}^{-1}\mathbf{\Lambda}\big)^{-1} \bigg]_{kk} =
	(\PH)_{kk} = 1, \,\,\,\forall k.
\end{equation*}
The limiting equality for (\ref{EQ:MSEGap}) from \emph{Indeterminacy Statement} \ref{IDS:MSE} follows directly from Theorem \ref{PROP:GramLimit}.
The limiting entrywise equality for (\ref{EQ:CIGap}) from \emph{Indeterminacy Statement} \ref{IDS:Markov} follows directly from Corollary \ref{CORROL:FacLimits}.iii.
Sufficiency and necessity of $p\uparrow\infty$ for these limiting equalities is directly inherited from Theorem \ref{PROP:GramLimit}.
\end{proof}

\begin{proof}[Proof of Corollary \ref{CORROL:Concentration}]
Our general strategy will be to utilize mean-square convergence.
For Corollary \ref{CORROL:Concentration}.i we have
\begin{align*}
	\lim\limits_{p\,\uparrow\,\infty} \mathbb{E} \Big[\big\|\xi|x_p - \xih_p\big\|_2^2\Big] &= \lim\limits_{p\,\uparrow\,\infty} \mathbb{E} \Big[\big\|\xi|x_p - \mathbb{E}(\xi|x_p)\big\|_2^2\Big]\\
	&= \lim\limits_{p\,\uparrow\,\infty} \mathbb{E} \Big\{\tr\mathbb{E}\Big[\big(\xi|x_p - \mathbb{E}(\xi|x_p)\big)\big(\xi|x_p - \mathbb{E}(\xi|x_p)\big)^{\top}\Big]\Big\}\\
	&= \lim\limits_{p\,\uparrow\,\infty} \mathbb{E} \Big[\tr\big(\PHi + \Gram\big)^{-1}\Big]\\
	&= \tr\bigg[\lim\limits_{p\,\uparrow\,\infty}\big(\PHi + \Gram\big)^{-1}\bigg] = 0,
\end{align*}
by Theorem \ref{PROP:GramLimit}.
Hence, $\xi|x_p$ converges in mean-square to the constant random variable $\xih_p$, i.e., $\lim_{p\,\uparrow\,\infty} \xi|x_p \overset{\ell_2}{\longrightarrow} \delta(\xi|x_p - \xih_p)$.
For Corollary \ref{CORROL:Concentration}.ii we have
\begin{align*}
	\lim\limits_{p\,\uparrow\,\infty} \mathbb{E} \Big[\big\|\xih_p - \xi\big\|_2^2\Big] &=
	\lim\limits_{p\,\uparrow\,\infty} \mathbb{E} \Big\{\tr\mathbb{E}\Big[\big(\xih_p - \xi\big)\big(\xih_p - \xi\big)^{\top}\Big]\Big\}\\
	&= \lim\limits_{p\,\uparrow\,\infty} \mathbb{E} \Big[\tr\big(\PHi + \Gram\big)^{-1}\Big]\\
	&= \tr\bigg[\lim\limits_{p\,\uparrow\,\infty}\big(\PHi + \Gram\big)^{-1}\bigg] = 0,
\end{align*}
by Theorem \ref{PROP:GramLimit}.
Hence, $\lim_{p\,\uparrow\,\infty} \xih_p \overset{\ell_2}{\longrightarrow} \xi$.
As convergence in mean-square implies convergence in probability \cite[Theorem 3.1.,][]{ref_Gut2005Main}, we may conclude $\lim_{p\,\uparrow\,\infty} \xi|x_p \overset{\Pr}{\longrightarrow} \delta(\xi|x_p - \xih_p)$ and $\lim_{p\,\uparrow\,\infty} \xih_p \overset{\Pr}{\longrightarrow} \xi$.
Sufficiency and necessity of $p\uparrow\infty$ is directly inherited from Theorem \ref{PROP:GramLimit}.
The result follows.
\end{proof}

\begin{proof}[Proof of Corollary \ref{CORROL:BLPAdist}]
We will again use mean-square convergence. 
First, let $\mathbf{M} \in \mathcal{M}$, with $\mathcal{M} \equiv 
\{\mathbf{M} \in \mathbb{R}^{m \times m} ~|~ \mathbf{M} \in \mathrm{O}(m) \vee \mathbf{M} \in \mathcal{K}\}$.
This set also includes the identity rotation.
We then establish that
\begin{align}\label{EQ:CCC}
	\big\|\mathbf{M}^{-1}\big(\xih^0 - \dot{\xi}\big)\big\|_2^2 
	%&= 
	%\tr\bigg\{\mathbf{M}^{-1}\mathbb{E}\Big[\big(\xih^0 - \dot{\xi}\big) \big(\xih^0 - \dot{\xi}\big)^{\top}\Big]\mathbf{M}^{-\top}\bigg\} \\\nonumber
	%&= 
	%\tr\Big[ \mathbf{M}^{\top}\big(\mathbf{I}_m + \lambdaDot^{\top}\PSi\lambdaDot\big)^{-1}\mathbf{M}\Big]^{-1} \\
	&= \tr\big(\mathbf{M}^{\top}\mathbf{M} + \mathbf{M}^{\top}\lambdaDot^{\top}\PSi\lambdaDot\mathbf{M}\big)^{-1},
\end{align}
by use of standard covariance algebra and the reverse order law.
When $\mathbf{M}$ is the identity rotation $\mathbf{I}_m$ we have that (\ref{EQ:CCC}) equals $\tr(\mathbf{I}_m + \lambdaDot^{\top}\PSi\lambdaDot)^{-1}$, the canonical MSE between $\xih^0$ and $\dot{\xi}$.
When $\mathbf{M} = \HDot \in \mathrm{O}(m)$ (\ref{EQ:CCC}) equals the orthogonally rotated canonical MSE $\tr(\HDot^{\top}\HDot + \HDot^{\top}\lambdaDot^{\top}\PSi\lambdaDot\HDot)^{-1} = 
\tr(\mathbf{I}_m + \lambdaDot'^{\top}\PSi\lambdaDot')^{-1}$.
And when $\mathbf{M} = \mathbf{K} \in \mathcal{K}$ (\ref{EQ:CCC}) equals  $\tr(\mathbf{K}^{\top}\mathbf{K} + \mathbf{K}^{\top}\lambdaDot^{\top}\PSi\lambdaDot\mathbf{K})^{-1} = 
\tr(\PHi + \Gram)^{-1}$, as $(\mathbf{K}^{-1}\mathbf{K}^{-\top})^{-1} = \PHi = \mathbf{K}^{\top}\mathbf{K}$ and $\lambdaDot\mathbf{K} = \LA\mathbf{K}^{-1}\mathbf{K} = \LA$.
Using developments in the proof of Corollary \ref{CORROL:Concentration} we then have
\begin{align*}
	\lim\limits_{p\,\uparrow\,\infty} \mathbb{E} \Big[\big\|\mathbf{M}^{-1}\big(\xih^0_p - \dot{\xi}\big)\big\|_2^2\Big] &= \tr\bigg[\lim\limits_{p\,\uparrow\,\infty}\big(\mathbf{M}^{\top}\mathbf{M} + \mathbf{M}^{\top}\lambdaDot^{\top}\PSi\lambdaDot\mathbf{M}\big)^{-1}\bigg] = 0,
	\end{align*}
by (special cases of) Theorem \ref{PROP:GramLimit}.
Thus, $\lim_{p\,\uparrow\,\infty} \mathbf{M}^{-1}\xih^0_p \overset{\ell_2}{\longrightarrow} \mathbf{M}^{-1}\dot{\xi}$.
By the affine property $\mathbf{M}^{-1}\dot{\xi} \sim\, _{m}(\boldsymbol{0},\mathbf{M}^{-1}\mathbf{M}^{-\top})$.
Also, we again note that convergence in mean-square implies convergence in probability.
Sufficiency and necessity of $p\uparrow\infty$ is directly inherited from Theorem \ref{PROP:GramLimit}.
Corollaries \ref{CORROL:BLPAdist}.i--\ref{CORROL:BLPAdist}.iii then readily follow.
\end{proof}

\begin{proof}[Proof of Proposition \ref{PROP:Invariant}]
We can directly establish that
\begin{align*}
	\xih_{\mathbf{C}} &= (\PHi + \LAt\mathbf{C}\mathbf{C}^{-1}\PSi\mathbf{C}^{-1}\mathbf{C}\LA)^{-1}
	\LAt\mathbf{C}\mathbf{C}^{-1}\PSi\mathbf{C}^{-1}\mathbf{C}
	\big(\mathbf{\Sigma}_{\dot{x}\dot{x}} \odot \mathbf{I}_p\big)^{-1/2}(\dot{x} - \mu)\\
	&= (\PHi + \Gram)^{-1}\LAt\PSi x \\
	&= \xih,
\end{align*}
from which the statement directly follows.
\end{proof}
\end{appendix}
%%--------------- Appendix -------------------------------------------
%%--------------------------------------------------------------------

%--------------- AFC ------------------------------------------------
%--------------------------------------------------------------------
%\medskip
\begin{acks}[Acknowledgments]
Dedicated to the memories of the two pops, Carel Frederik Wilhelm Peeters sr.\ (21/2/1933 -- 22/8/2019) and Martinus van Embden (14/10/1939 -- 19/2/2022).
De reiziger reist voort.
\end{acks}

\begin{sloppypar}
\begin{acks}[Software]
An implementation of Algorithm \ref{ALG:Estimation} (as well as supporting functions) can be found in the software package \textsf{HAMMER}, available from the Comprehensive \textsf{R} Archive Network (CRAN) at \href{https://CRAN.R-project.org/package=HAMMER}
{DOI:10.32614/CRAN.package.HAMMER}.
\end{acks}
\end{sloppypar}

\begin{supplement}
The supplementary file \textbf{Perspectives on Latent Factor Indeterminacy and its Implications for Data Representation: \textcolor{blue}{Supplementary Material}} contains additional mathematical details for Sections \ref{SEC:Perspectives}--\ref{SEC:Discussion} as well as exemplifications of Algorithm \ref{ALG:Estimation}.
\end{supplement}
%\bigskip
%--------------------------------------------------------------------
%--------------- AFC ------------------------------------------------

%%--------------------------------------------------------------------
%%--------------- References -----------------------------------------
%\bibliographystyle{imsart-number}
%\bibliography{FAIn_refs}
\putbib[FAIn]
\end{bibunit}
%%--------------- References -----------------------------------------
%%--------------------------------------------------------------------

\clearpage

%--------------------------------------------------------------------
%--------------- Supplementary Numbering & Countering ---------------
% Prefix S to figures, tables, etc
\renewcommand{\theequation}{S.\arabic{equation}}
\renewcommand{\thelemma}{S.\arabic{lemma}}
\renewcommand{\thecorollary}{S.\arabic{corollary}}
\renewcommand{\theremark}{S.\arabic{remark}}
\renewcommand{\thedefinition}{S.\arabic{definition}}
\renewcommand{\theexample}{S.\arabic{example}}
\renewcommand{\thefigure}{S.\arabic{figure}}
\renewcommand{\thetable}{S.\arabic{table}}
\renewcommand{\thealgorithm}{S.\arabic{algorithm}}
\renewcommand{\bibnumfmt}[1]{[S.#1]}
\renewcommand{\citenumfont}[1]{S.#1}
\renewcommand{\thesection}{S.\arabic{section}}
\renewcommand{\thepage}{S.\arabic{page}}

% reset counters
\setcounter{section}{0}
\setcounter{subsection}{0}
\setcounter{equation}{0}
\setcounter{remark}{0}
\setcounter{figure}{0}
\setcounter{table}{0}
\setcounter{algorithm}{0}
\setcounter{page}{1}
%--------------- Supplementary Numbering & Countering ---------------
%--------------------------------------------------------------------

%--------------- Front Matter ---------------------------------------
%--------------------------------------------------------------------
\begin{frontmatter}
	\title{Perspectives on Latent Factor Indeterminacy\\and its Implications for Data Representation\\ \booz}
	%\title{A sample article title with some additional note\thanksref{t1}}
	\runtitle{Latent Factor Indeterminacy and its Implications: \boo}
	%\thankstext{T1}{A sample additional note to the title.}
	
	\begin{aug}
		%%%%%%%%%%%%%%%%%%%%%%%%%%%%%%%%%%%%%%%%%%%%%%%
		%% Only one address is permitted per author. %%
		%% Only division, organization and e-mail is %%
		%% included in the address.                  %%
		%% Additional information can be included in %%
		%% the Acknowledgments section if necessary. %%
		%% ORCID can be inserted by command:         %%
		%% \orcid{0000-0000-0000-0000}               %%
		%%%%%%%%%%%%%%%%%%%%%%%%%%%%%%%%%%%%%%%%%%%%%%%
		\author[A]{\fnms{Carel F.W.}~\snm{Peeters} \ead[label=e1]{carel.peeters@wur.nl}}
		%%%%%%%%%%%%%%%%%%%%%%%%%%%%%%%%%%%%%%%%%%%%%%
		%% Addresses                                %%
		%%%%%%%%%%%%%%%%%%%%%%%%%%%%%%%%%%%%%%%%%%%%%%
		\address[A]{Mathematical \& Statistical Methods group (Biometris),\\
			Wageningen University \& Research \printead[presep={ ,\ }]{e1}}
	\end{aug}
	
	\begin{keyword}[class=MSC]
		\kwd[Primary ]{62H25}
		\kwd{62A01}
		\kwd[; secondary ]{62H12}
		\kwd{68T07}
	\end{keyword}
	
	\begin{keyword}
		\kwd{Empirical Bayes}
		\kwd{Factor analysis}
		\kwd{Generative representation learning}
		\kwd{High-dimensional data}
		\kwd{Inverse problem}
		\kwd{Latent representation}
		\kwd{Latent variable collapse}
		\kwd{Linear manifold}
	\end{keyword}
	
\end{frontmatter}
%%%%%%%%%%%%%%%%%%%%%%%%%%%%%%%%%%%%%%%%%%%%%%
%% Please use \tableofcontents for articles %%
%% with 50 pages and more                   %%
%%%%%%%%%%%%%%%%%%%%%%%%%%%%%%%%%%%%%%%%%%%%%%
%\tableofcontents
%--------------------------------------------------------------------
%--------------- Front Matter ---------------------------------------

\begin{bibunit}
%--------------------------------------------------------------------
%%%%%%%%%%%%%%%%%%%%%%%%%%%%%%%%%%%%%%%%%%%%%%%%%%%%%%%%%%%%%%%%%%%%%
%%%%%%%%%%%%%%%% MAIN TEXT %%%%%%%%%%%%%%%%%%%%%%%%%%%%%%%%%%%%%%%%%%
%%%%%%%%%%%%%%%%%%%%%%%%%%%%%%%%%%%%%%%%%%%%%%%%%%%%%%%%%%%%%%%%%%%%%

%------------------------------------------------------------------------
%--------------- INTRO --------------------------------------------------
\section{Introduction}
\subsection{Setup supplement}
This supplement contains details and derivations in support of the Main Text (Peeters, C.F.W., \emph{Perspectives on Latent Factor Indeterminacy and its Implications for Data Representation}).
Sectioning in this Supplementary Material (SM) is concurrent with the Main Text (MT).
Sections \ref{SMSEC:Supp2}--\ref{SMSEC:Supp4} respectively contain supporting materials for Sections 2--4 of the MT.
Subsections are then also concurrent.
For example, details for Section 2.1 of the MT can be found in Section \ref{SMSSEC:Supp2.1} of this SM. 
Table \ref{SMTABLE:ReadGuide2} contains a reading guide, indicating how the main messages (Section 1.6 of the MT) are related to the objectives (Section 1.4 of the MT) and sectioning in both the MT and this SM.

\begin{table}[h!]
	\caption{Reading guide}
	\label{SMTABLE:ReadGuide2}
	\resizebox{\textwidth}{!}{%
		\begin{tabular}{llcc}
			\toprule
			Main message        & Objective                      & Section MT & Section SM     \\ \midrule
			Latent projection adheres to concentration of measure & Objective (i)                  & 2.1 -- 3.2 & S.2.1 -- S.3.2 \\
			Latent projection is a form of implicit regularization & Objective (ii): computational  & 3.3        & S.3.3          \\
			Dimensionality is a blessing for the factor model & Objective (ii): methodological & 4 -- 5     & S.4            \\ \bottomrule
		\end{tabular}%
	}
\end{table}

\subsection{Notation}
Our notation (used in the MT and this SM) is fully explicated here.
We use the assigment operator $(:=)$ in (1) of the MT as the assumption of factor exogeneity is essentially causal (see \citep{SMref_Pearl2009} for a discussion on this issue).
The symbol $\ci$ is used to denote independence or orthogonality.
We use $\Pr(A)$ to denote the probability of event $A$.
We let $\measuredangle(\boldsymbol{a},\boldsymbol{b})$ represent the angle between vectors $\boldsymbol{a}$ and $\boldsymbol{b}$.
At times we write $\proj_{\boldsymbol{a}}\boldsymbol{b}$ for the projection of $\boldsymbol{b}$ onto $\boldsymbol{a}$.
We let $\mathbb{N}_{\infty}$ denote the set of extended natural numbers.
In general we denote sets with set-builder notation $\mathcal{A} \equiv \{a|P(a)\}$, that is, the set of all $a$ for which the logical formula $P(a)$ is true.
The context distinguishes the set from our preferred ordering of brackets $\{[()]\}$ as well as parameter collections denoted $\{\cdot\}$.
Specific notation on random variables, matrices, norms, graphical modeling, and convergence is found below.

\subsubsection{Random variables and distributions}
Random variables are denoted in lower-case italics.
The expected value of, say, $z$ is denoted $\mathbb{E}(z)$.
When $z$ is a centered random variable its covariance matrix can be represented as $\mathbb{E}(zz^{\top})$.
The population correlation matrix is denoted $\mathbf{\Sigma}$.
We let $\mathbf{\Sigma}(\mathbf{\Theta})$ denote a model-implied correlation matrix as a function of (any subcollection of) $\mathbf{\Theta}$.
Correlation matrices may, in the remainder, bear double subscripts referring to block-partitioning, e.g., $\mathbf{\Sigma}_{xx}$ denotes the correlation matrix between the random variables in collection $x$, while $\mathbf{\Sigma}(\mathbf{\Theta})_{x\xi}$ refers to the model-implied correlation matrix between the random variables in collections $x$ and $\xi$.
A conditional model-implied correlation matrix, e.g., the model-implied correlation matrix of $x$ given $\xi$, is denoted as $\mathbf{\Sigma}(\mathbf{\Theta})_{x|\xi}$.
In case $x$ is a Gaussian $p-$dimensional random variable with location $\boldsymbol{m}$ and scale matrix $\mathbf{S}$ we write $x \sim \mathcal{N}_p(\boldsymbol{m}, \mathbf{S})$.
In case $x$ is a $p-$dimensional random variable with \emph{any} distribution describable by location $\boldsymbol{m}$ and scale matrix $\mathbf{S}$ we write $x \sim\, _p(\boldsymbol{m}, \mathbf{S})$.
We will write $p(x|\xi)$ and $p(\xi|x)$ as shorthands for the conditional (posterior) distributions  $p(x|\xi, \mathbf{\Theta})$ and $p(\xi|x, \mathbf{\Theta})$.
These distributions would respectively be referred to as the \emph{decoder} and \emph{encoder} in the learning community.
We use $\delta(z - \iota)$ for the shifted Dirac delta function, in $m-$dimensional Euclidean space, concentrated on $\iota$.
Let $\vartheta$ and $\omega$ be elements of a Hilbert space of $m$-dimensional random vectors. 
Then the $\ell_2$-norm $\|\vartheta\|_2 = [\tr\mathbb{E}(\vartheta\vartheta^{\top})]^{1/2}$ and the inner product $\langle\vartheta,\omega\rangle = \tr\mathbb{E}(\vartheta\omega^{\top})$.

\subsubsection{Matrices, vectors, and their operations}
Matrices are thus denoted by boldface capitals.
Matrix $\mathbf{A}$ bears elements $a_{rc}$ unless clarity of notation commands the use of $(\mathbf{A})_{rc}$.
In terms of operations, $\odot$ will denote the Hadamard product, $\circ$ will denote composition, and $\otimes$ will denote the Kronecker product. 
For a partitioned matrix $\mathbf{A}$ we let $\mathbf{A}_{rc}$ denote the $rc$th block. 
For the $c$th column of the block-partitioned $\mathbf{A}$ we write $\mathbf{A}_{[c]}$.
We let $\mathbf{A} \boxdot \mathbf{B} = (\mathbf{A}_{rc}\mathbf{B}_{rc})_{rc}$ denote the block Hadamard product in case of two partitioned matrices $\mathbf{A}$ and $\mathbf{B}$ that are block commuting in the sense of every block in $\mathbf{A}$ being conformable with respect to matrix multiplication with each corresponding block in $\mathbf{B}$.
Transposition of $\mathbf{A}$ is denoted $\mathbf{A}^{\top}$.
For real matrix $\mathbf{A}$ we let $\mathbf{A}^{\odot2} = \big(a_{rc}^{2}\big)$, $\mathbf{A}^{\odot\frac12} = \big(a_{rc}^{1/2}\big)$, and $\mathbf{A}^{\odot(-1)} = \big(a_{rc}^{-1}\big)$ respectively denote the Hadamard square, Hadamard square root, and Hadamard inverse.
These are similarly defined for vectors.
We use $\tr(\mathbf{A})$ and $|\mathbf{A}|$ to respectively denote the trace and determinant of $\mathbf{A}$ when it is square.
If a square matrix $\mathbf{A}$ is positive definite we will write $\mathbf{A} \succ 0$ and $\mathbf{A}^{-1}$ will denote it's inverse.
We will use $\mathbf{A}^{-\top}$ as a shorthand for $(\mathbf{A}^{-1})^{\top}$.
For a square symmetric matrix $\mathbf{A} \succ 0$ we will let $\mathbf{A}^{1/2}$ denote it's (principal) matrix square root.
The generalized inverse of (a possibly singular or rectangular matrix) $\mathbf{B}$ is denoted $\mathbf{B}^{+}$.
The $(p \times p)$-dimensional identity matrix is denoted $\mathbf{I}_{p}$ and $\boldsymbol{0}$ may indicate either a vector or a matrix as should be clear from its context.
We use $\boldsymbol{1}_p$ to denote the $p$-dimensional vectors of ones.
For two $p$-dimensional vectors $\boldsymbol{\mathrm{a}}$ and $\boldsymbol{\mathrm{b}}$ we define the inner product space $\langle\boldsymbol{\mathrm{a}},\boldsymbol{\mathrm{b}}\rangle = \boldsymbol{\mathrm{a}}^{\top}\boldsymbol{\mathrm{b}} = \sum_{j = 1}^{p}a_jb_j$.
Then, $\mathrm{diag}(\boldsymbol{\mathrm{a}})$ represents $(\boldsymbol{\mathrm{a}}\boldsymbol{1}_p^{\top})\odot\mathbf{I}_{p}$.
Hence, this operator returns a square diagonal matrix with the elements of $\boldsymbol{\mathrm{a}}$ on the main diagonal.
For a square $\mathbf{A}$ we let $\mathrm{diag}(\mathbf{A})$ represent the operator that returns a column vector of the main diagonal elements of $\mathbf{A}$.

\subsubsection{Matrix norms}
We will treat matrix convergence from the perspective of various norms. 
For real $\mathbf{A}$ we will consider the elementwise $\ell_{\infty,\infty}$ norm 
$\|\mathbf{A}\|_{\infty,\infty} = \max_{r,c}|a_{rc}|$ and the Frobenius norm $\|\mathbf{A}\|_F = \sqrt{\sum_r\sum_c |a_{rc}|}$.
In case $\mathbf{A}$ is symmetric and positive definite, we will also consider the spectral norm $\|\mathbf{A}\|_{2} = e_{\mathrm{max}}(\mathbf{A})$, where $e_{\mathrm{max}}(\mathbf{A})$ is the maximum eigenvalue of $\mathbf{A}$. 

\subsubsection{Graphical modeling and algorithms}
We will also be using the language of (mixed) graphical modeling.
We let unidirectional edges $\rightarrow$ on ordered pairs (of random variables) represent functional (or causal) influence, while we let bidirectional edges $\leftrightarrow$ between unordered pairs represent marginal dependence.
In case of usage in algorithmic representation, $\leftarrow$ will denote computational assignment.
We use $\mathcal{O}()$ to express (worst-case) asymptotic time complexity of an operation or algorithm.
In addition, we use long unidirectional edges $\longrightarrow$ to express computational structure or convergence, as should be clear from the context.

\subsubsection{Asymptotics}
We will consider asymptotics in the high-dimensional, fixed sample size regime.
We will use  $\lim_{p\,\uparrow\,\infty}$ to denote the left-hand dimensional limit (with puncture) at infinity.
In addition, we will use $\rightsquigarrow$ to imply convergence in distribution and $\overset{\Pr}{\longrightarrow}$ to imply convergence in probability.
Moreover, we will use $\overset{\ell_2}{\longrightarrow}$ to imply convergence in mean-square.
We permit ourselves the convenient abuse of notation $\lim_{p\,\uparrow\,\infty}\xih_p \overset{\Pr}{\longrightarrow} \mathcal{N}_m(\boldsymbol{0}, \PH)$ to represent $\xih_p \overset{\Pr}{\longrightarrow} \xi$ as $p\uparrow\infty$, where $\xi\in\mathcal{N}_m(\boldsymbol{0}, \PH)$.

%\subsubsection{Proxies}
%--------------- INTRO --------------------------------------------------
%------------------------------------------------------------------------

%------------------------------------------------------------------------
%--------------- Details 2 ----------------------------------------------
\section{Details for Section 2 of the Main Text}\label{SMSEC:Supp2}
%%%%%%%%%%%%%%%%%%%%%%%%%%%%
%%%-Details Span -----------
%%%%%%%%%%%%%%%%%%%%%%%%%%%%
\subsection{Details for Section 2.1 of the Main Text}\label{SMSSEC:Supp2.1}
Note that we explicitly, as opposed to the classic psychometric literature, conceive of both $\xi$ and $\epsilon$ as random variables instead of unknown parameters.
Hence, indeterminacy should not be understood in terms of counting equations and unknowns, but rather as the degree to which the information in $x$ can be used to determine locations for $\xi$ and $\epsilon$ uniquely.
As the joint dimension of $\xi$ and $\epsilon$ exceeds the dimension of $x$ it is clear that this degree is not perfect.
We may understand this by first considering a simpler model\footnote{
	There is some controversy in the literature on using assignment systems as equation systems.
	However, note that our assumption of exogeneity is explicitly causal in the sense that the exogenous latent factors are considered temporally prior in relation to the observable features.
	This causal connotation is then the reason why the assignment symbol ($:=$) is retained in our representation of the factor model: it is a descriptor of responses in $x$ as a result of changes in $\xi$.
}:
\begin{equation*}
	x := \mathbf{\Lambda}\xi,
\end{equation*}
adhering to all postulates and stipulations from Section 1.1 of the MT, including $\xi \sim \mathcal{N}_m(\boldsymbol{0}, \mathbf{\Phi})$, with the exception that we now have no errors of measurement.
The joint covariance matrix on $x$ and $\xi$ can then be found as:
\begin{equation*}
	\begin{bmatrix}
		\mathbf{\Lambda}\mathbf{\Phi}\mathbf{\Lambda}^{\top}   & \mathbf{\Lambda}\mathbf{\Phi} \\
		\mathbf{\Phi}\mathbf{\Lambda}^{\top}  & \PH
	\end{bmatrix}.
\end{equation*}
As the joint distribution is Gaussian the conditional distribution of $\xi$ given $x$ is also Gaussian (see Section 2.2 of the MT):
\begin{equation*}
	\xi|x \sim \mathcal{N}_{m}\Big[\PH\LAt\big(\Com\big)^{+}x, \PH - \PH\LAt\big(\Com\big)^{+}\LA\PH\Big],
\end{equation*}
where $(\cdot)^{+}$ denotes the Moore-Penrose inverse \cite{Moore,Penrose}.
Let
\begin{equation}\label{SMEQ:ConDist}
	\mathbf{\Sigma}(\mathbf{\Theta})_{\xi|x} \equiv \PH - \PH\LAt\big(\Com\big)^{+}\LA\PH.
\end{equation}
Note that, by the results in Hartwig \citep{Hartwig_SM},
\begin{equation*}
	(\Com\big)^{+} = \LA\big(\LAt\LA\big)^{-1}\PHi\big(\LAt\LA\big)^{-1}\LAt,
\end{equation*}
such that
\begin{equation}\label{SMEQ:SmallModVar}
	\PH\LAt\big(\Com\big)^{+}\LA\PH = \PH\LAt\LA\big(\LAt\LA\big)^{-1}\PHi(\LAt\LA\big)^{-1}\LAt\LA\PH = \PH.
\end{equation}
It is then clear that (\ref{SMEQ:ConDist}) evaluates to:
\begin{equation*}
	\mathbf{\Sigma}(\mathbf{\Theta})_{\xi|x} = \PH - \PH\LAt\LA\big(\LAt\LA\big)^{-1}\PHi(\LAt\LA\big)^{-1}\LAt\LA\PH = \PH - \PH = \boldsymbol{0}.
\end{equation*}
Hence, the conditional distribution of $\xi$ given $x$ is a point distribution (or, rather, a Dirac delta function) located at $\PH\LAt\big(\Com\big)^{+}x$.
This implies that all that we can know about the latent $\xi$ from the observable $x$ follows from this location.
We may then examine the distributional behavior of this location.
Let $\xih \equiv \PH\LAt\big(\Com\big)^{+}x$.
Then $\mathbb{E}\big(\xih\big) = \boldsymbol{0}$ and, using (\ref{SMEQ:SmallModVar}):
\begin{align*}
	\mathbb{E}\big(\xih\xih^{\top}\big) &= \PH\LAt\big(\Com\big)^{+}\mathbb{E}\big(xx^{\top}\big)\big(\Com\big)^{+}\LA\PH \\
	&= \PH\LAt\big(\Com\big)^{+}\big(\Com\big)\big(\Com\big)^{+}\LA\PH\\
	&= \PH\LAt\big(\Com\big)^{+}\LA\PH \\
	&= \PH.
\end{align*}
We can conclude that $\xih \sim \mathcal{N}_m(\boldsymbol{0}, \mathbf{\Phi})$ and thus behaves as the postulated $\xi$.
The simpler model above is determined in the sense that we can retrieve (the behavior of) $\xi$ from $x$.
(Similar reasoning can be employed for the retrieval of $\epsilon$ from $x$ in the simplified model in which we do have $\epsilon$ but where $\xi$ is considered non-random).

When the model is expanded with the unique factors in $\epsilon$ we have that its variance matrix $\PS$ enters the inverse such that $\big(\Com\big)^{+}$ is replaced by $\big(\Com + \PS\big)^{-1}$.
Clearly, $\mathbf{\Sigma}(\mathbf{\Theta})_{\xi|x}$ will then no longer evaluate to $\boldsymbol{0}$ in general.
This holds even when $\PS$ is parametrized in terms of low numbers of parameters such as in probabilistic PCA, which postulates $\PS = \sigma\mathbf{I}_p$ \citep{TBprobPCA}.
The conditional distribution of $\xi$ given $x$ is then no longer a point distribution (shifted Dirac delta function) and the distributional behavior of the location does not concur with the initial postulate.
These and other implications are further elaborated in Section 2.2 ff.\ of the MT.

%%%%%%%%%%%%%%%%%%%%%%%%%%%%
%%%-Details Post. violation-
%%%%%%%%%%%%%%%%%%%%%%%%%%%%
\subsection{Details for Section 2.2 of the Main Text}\label{SMSSEC:Supp2.2}
Section \ref{SMSSSEC:Supp2.2.1} explains a useful alternative expression for  $\hat{\xi}$.
Section \ref{SMSSEC:Regresse} contains an insightful identity for $\hat{\epsilon}$.
Lastly, Section \ref{SMSSEC:HatAdhere} details the adherence of $\hat{\xi}$ and $\hat{\epsilon}$ to model postulates and implications.

\subsubsection{Expressing \texorpdfstring{$\hat{\xi}$}{}}\label{SMSSSEC:Supp2.2.1}
Note that, in practice, it is useful to express the matrix of regression coefficients $\mathbf{\Sigma}(\mathbf{\Theta})_{\xi x}\mathbf{\Sigma}(\mathbf{\Theta})_{xx}^{-1}$ in an alternative way:
\begin{align*}
	\mathbf{\Sigma}(\mathbf{\Theta})_{\xi x}\mathbf{\Sigma}(\mathbf{\Theta})_{xx}^{-1}
	&= \mathbf{\Phi}\mathbf{\Lambda}^{\top}\big(\mathbf{\Lambda}\mathbf{\Phi}\mathbf{\Lambda}^{\top} + \mathbf{\Psi}\big)^{-1}\\
	&= \Big[\mathbf{\Phi} - \mathbf{\Phi}\mathbf{\Lambda}^{\top}\mathbf{\Psi}^{-1}\mathbf{\Lambda}\big(\mathbf{\Phi}^{-1} + \mathbf{\Lambda}^{\top}\mathbf{\Psi}^{-1}\mathbf{\Lambda}\big)^{-1}\Big]\mathbf{\Lambda}^{\top}\mathbf{\Psi}^{-1}\\
	&= \big(\mathbf{\Phi}^{-1} + \mathbf{\Lambda}^{\top}\mathbf{\Psi}^{-1}\mathbf{\Lambda}\big)^{-1}\mathbf{\Lambda}^{\top}\mathbf{\Psi}^{-1},
\end{align*}
by consecutive usage of the Woodbury matrix identity \citep{ref_Duncan1944_SM,ref_Woodbury_SM} and Lemma 1 of the MT.
Note that $\mathbf{\Psi}$ in general will be sparse if not diagonal.
Its inverse is thus dependent upon inversion of blocks whose dimensions are much lower than $(p \times p)$ and in general lower than $(m \times m)$.
The last expression is then convenient in practice as its worst-case asymptotic time complexity is dependent upon inversion of an $(m \times m)$-dimensional matrix rather than a $(p \times p)$-dimensional matrix.

\subsubsection{Expressing \texorpdfstring{$\hat{\epsilon}$}{}}\label{SMSSEC:Regresse}
The regression function of $\epsilon$ on $x$ is
\begin{equation*}
	\mathbf{\Sigma}(\mathbf{\Theta})_{\epsilon x}\mathbf{\Sigma}(\mathbf{\Theta})_{xx}^{-1}x = \mathbf{\Psi}\big(\mathbf{\Lambda}\mathbf{\Phi}\mathbf{\Lambda}^{\top} + \mathbf{\Psi}\big)^{-1}x = \mathbb{E}(\epsilon|x).
\end{equation*}
It is insightful to explain the following identity: $\mathbb{E}(\epsilon|x) = x - \mathbf{\Lambda}\hat{\xi}$.
By the Woodbury matrix identity we have that:
\begin{align}\label{SMEQ:We}\nonumber
	\mathbf{\Psi}\big(\mathbf{\Lambda}\mathbf{\Phi}\mathbf{\Lambda}^{\top} + \mathbf{\Psi}\big)^{-1}x &=
	\mathbf{\Psi}\Big[\mathbf{\Psi}^{-1} - \mathbf{\Psi}^{-1}\mathbf{\Lambda}\big(\mathbf{\Phi}^{-1} + \mathbf{\Lambda}^{\top}\mathbf{\Psi}^{-1}\mathbf{\Lambda}\big)^{-1}\mathbf{\Lambda}^{\top}\mathbf{\Psi}^{-1}\Big]x \\\nonumber
	&= \Big[\mathbf{I}_p - \LA\big(\PHi + \Gram\big)^{-1}\LAt\PSi\Big]x \\
	&= x - \LA\big(\PHi + \Gram\big)^{-1}\LAt\PSi x.
\end{align}
The term $\big(\PHi + \Gram\big)^{-1}$ may be expanded by (again) the Woodbury matrix identity so that (\ref{SMEQ:We}) equals:
\begin{equation}\label{SMEQ:We2}
	x - \LA\Big[\PH - \PH\LAt\big(\Com + \PS\big)^{-1}\LA\PH\Big]\LAt\PSi x.
\end{equation}
Note that (\ref{SMEQ:We2}), by bringing the matrix terms inside the brackets and subsequently factorizing out $\Com$, can be written as:
\begin{equation}\label{SMEQ:We3}
	x - \Com\Big[\PSi - \big(\Com + \PS\big)^{-1}\Com\PSi\Big]x.
\end{equation}
By Lemma 1 of the MT, we have that the square-bracketed term in (\ref{SMEQ:We3}) abides the following equality:
\begin{equation*}
	\PSi - \big(\Com + \PS\big)^{-1}\Com\PSi = \big(\Com + \PS\big)^{-1},
\end{equation*}
implying that (\ref{SMEQ:We3}) may be written as
\begin{equation*}
	x - \Com\big(\Com + \PS\big)^{-1}x,
\end{equation*}
showing that $\mathbb{E}(\epsilon|x) = x - \mathbf{\Lambda}\hat{\xi}$.

\subsubsection{Adherence of \texorpdfstring{$\hat{\xi}$}{} and \texorpdfstring{$\hat{\epsilon}$}{} to model postulates and implications}\label{SMSSEC:HatAdhere}
As $\mathbb{E}(x) = 0$, we have that the expectations of $\xih$, $\eph$, and $\LA\xih + \eph$ are also zero.
By normality assumption on $x$ we have that the joint distribution of $x$, $\xih$, and $\eph$ is also Gaussian (although, as we shall later see, Gaussianity is by no means a necessity).
Moreover, 
\begin{equation*}
	\LA\xih + \eph = \big(\Com + \PS\big)\big(\Com + \PS\big)^{-1}x
\end{equation*}
and thus retrieves $x$.
Hence, to evaluate if $(\xih,\eph)$ is part of a factor solution for $x$ we need to assess if the joint correlation matrix on $x = \LA\xih + \eph$, $\xih$, and $\eph$ concurs with (4) of the MT.

\begin{sloppypar}
	Let us first find $\mathbb{E}(\xih\xih^{\top})$, $\mathbb{E}(\eph\eph^{\top})$, and $\mathbb{E}(\xih\eph^{\top})$ using standard covariance algebra.
	For $\mathbb{E}(\xih\xih^{\top})$ we find:
	\begin{align*}
		\mathbb{E}\big(\xih\xih^{\top}\big) &= \mathbb{E}\bigg\{\PH\LAt\big(\Com + \PS\big)^{-1}x\Big[\PH\LAt\big(\Com + \PS\big)^{-1}x\Big]^{\top}\bigg\} \\
		&= \PH\LAt\big(\Com + \PS\big)^{-1}\mathbb{E}\big(xx^{\top}\big)\big(\Com + \PS\big)^{-1}\LA\PH \\
		&= \PH\LAt\big(\Com + \PS\big)^{-1}\LA\PH.
	\end{align*}
	For $\mathbb{E}(\eph\eph^{\top})$ we find:
	\begin{align*}
		\mathbb{E}\big(\eph\eph^{\top}\big) &= \mathbb{E}\bigg\{\PS\big(\Com + \PS\big)^{-1}x\Big[\PS\big(\Com + \PS\big)^{-1}x\Big]^{\top}\bigg\} \\
		&= \PS\big(\Com + \PS\big)^{-1}\mathbb{E}\big(xx^{\top}\big)\big(\Com + \PS\big)^{-1}\PS \\
		&= \PS\big(\Com + \PS\big)^{-1}\PS.
	\end{align*}
	For finding $\mathbb{E}(\xih\eph^{\top})$ it is convenient to use $\eph = x - \Com\big(\Com + \PS\big)^{-1}x$.
	Hence,
	\begin{align*}
		\mathbb{E}\big(\xih\eph^{\top}\big) &= \mathbb{E}\bigg\{\PH\LAt\big(\Com + \PS\big)^{-1}x\Big[x^{\top} - x^{\top}\big(\Com + \PS\big)^{-1}\Com\Big]\bigg\} \\
		&= \PH\LAt\big(\Com + \PS\big)^{-1}\mathbb{E}\big(xx^{\top}\big) \\
		& \qquad - \PH\LAt\big(\Com + \PS\big)^{-1}\mathbb{E}\big(xx^{\top}\big)\big(\Com + \PS\big)^{-1}\Com \\
		&= \PH\LAt - \PH\LAt\big(\Com + \PS\big)^{-1}\Com \\
		&= \Big[\PH - \PH\LAt\big(\Com + \PS\big)^{-1}\LA\PH \Big]\LAt = \mathbf{\Sigma}(\mathbf{\Theta})_{\xi|x}\LAt.
	\end{align*}
	Clearly, these correlation matrices violate model postulates (i)--(iii) from Section 1.1 of the MT.
	With these correlations at hand, we can evaluate the correlation matrix of $x = \LA\xih + \eph$:
	\begin{align*}
		\mathbb{E}\Big[\big(\mathbf{\Lambda}\hat{\xi} + \hat{\epsilon}\big)&\big(\mathbf{\Lambda}\hat{\xi} + \hat{\epsilon}\big)^{\top}\Big] \\
		&=
		\mathbf{\Lambda}\mathbb{E}\big(\hat{\xi}\hat{\xi}^{\top}\big)\mathbf{\Lambda}^{\top} + \mathbf{\Lambda}\mathbb{E}\big(\hat{\xi}\hat{\epsilon}^{\top}\big) + \mathbb{E}\big(\hat{\epsilon}\hat{\xi}^{\top}\big)\mathbf{\Lambda}^{\top} + \mathbb{E}\big(\hat{\epsilon}\hat{\epsilon}^{\top}\big) \\
		&= -\Com\big(\Com + \PS\big)^{-1}\Com + \PS\big(\Com + \PS\big)^{-1}\PS + 2\Com \\
		&= \big(-\mathbf{\Lambda\Phi\Lambda}^{\top} + \mathbf{\Psi}\big)\big(\mathbf{\Lambda\Phi\Lambda}^{\top} + \mathbf{\Psi}\big)^{-1}\big(\mathbf{\Lambda\Phi\Lambda}^{\top} + \mathbf{\Psi}\big) + 2\mathbf{\Lambda\Phi\Lambda}^{\top} \\
		&= -\mathbf{\Lambda\Phi\Lambda}^{\top} + \mathbf{\Psi} + 2\mathbf{\Lambda\Phi\Lambda}^{\top} \\
		&= \mathbf{\Lambda\Phi\Lambda}^{\top} + \mathbf{\Psi},
	\end{align*}
	indicating that $x = \LA\xih + \eph$ abides the fundamental equation of factor analysis.
	Moreover,
	\begin{align*}
		\mathbb{E}\big(x,\xih^{\top}\big) &= \mathbb{E}\Big[xx^{\top}\big(\Com + \PS\big)^{-1}\LA\PH\Big]
		= \mathbb{E}\big(xx^{\top}\big)\big(\Com + \PS\big)^{-1}\LA\PH
		= \LA\PH,
	\end{align*}
	and
	\begin{align*}
		\mathbb{E}\big(x,\eph^{\top}\big) &= \mathbb{E}\Big[xx^{\top}\big(\Com + \PS\big)^{-1}\PS\Big]
		= \mathbb{E}\big(xx^{\top}\big)\big(\Com + \PS\big)^{-1}\PS
		= \PS.
	\end{align*}
	Thus,
	\begin{equation*}
		\begin{bmatrix}
			x\\
			\xih\\
			\eph
		\end{bmatrix}
		\sim \mathcal{N}_{2p + m}\big[\boldsymbol{0}, \dot{\mathbf{\Sigma}}(\mathbf{\Theta})\big],
	\end{equation*}
	with
	\begin{align*}
		%\begin{small}
		\dot{\mathbf{\Sigma}}(\mathbf{\Theta}) &=
		\begin{bmatrix}
			\mathbf{\Lambda\Phi\Lambda}^{\top} + \mathbf{\Psi}  & \LA\PH   & \PS \\
			\PH\LAt                                             & \PH\LAt\big(\Com + \PS\big)^{-1}\LA\PH       & \mathbf{\Sigma}(\mathbf{\Theta})_{\xi|x}\LAt \\
			\PS                                                 & \LA\mathbf{\Sigma}(\mathbf{\Theta})_{\xi|x} & \PS\big(\Com + \PS\big)^{-1}\PS
		\end{bmatrix} \\
		&=
		\begin{bmatrix}
			\mathbf{\Lambda\Phi\Lambda}^{\top} + \mathbf{\Psi}  & \LA\PH   & \PS \\
			\PH\LAt                                             & \PH - \mathbf{\Sigma}(\mathbf{\Theta})_{\xi|x}        & \mathbf{\Sigma}(\mathbf{\Theta})_{\xi|x}\LAt \\
			\PS                                                 & \LA\mathbf{\Sigma}(\mathbf{\Theta})_{\xi|x} & \PS - \mathbf{\Sigma}(\mathbf{\Theta})_{\epsilon|x}
		\end{bmatrix},
		%\end{small}
	\end{align*}
	%\begin{equation}\nonumber
	%\begin{scriptsize}
	%    \begin{bmatrix}
		%    x\\
		%    \xih\\
		%    \eph
		%    \end{bmatrix}
	%    \sim \mathcal{N}_{2p + m}\left\{
	%    \begin{bmatrix}
		%    \boldsymbol{0} \\
		%    \boldsymbol{0} \\
		%    \boldsymbol{0}
		%    \end{bmatrix},
	%    \begin{bmatrix}
		%    \mathbf{\Lambda\Phi\Lambda}^{\top} + \mathbf{\Psi}  & \LA\PH   & \PS \\
		%    \PH\LAt                                             & \PH\LAt\big(\Com + \PS\big)^{-1}\LA\PH       & \PH\LAt - \PH\LAt\big(\Com + \PS\big)^{-1}\Com \\
		%    \PS                                                 & \LA\PH - \Com\big(\Com + \PS\big)^{-1}\LA\PH & \PS\big(\Com + \PS\big)^{-1}\PS
		%    \end{bmatrix}
	%    \right\},
	%\end{scriptsize}
	%\end{equation}
	abides the model implications but violates the model postulates (the second identity uses results from Sections \ref{SMSSEC:EpressXiXiH} and \ref{SMSSSEC:Geom_unique}).
	This implies that $\{m, \mathbb{E}(\eph\eph^{\top}), \mathbf{\Lambda}, \mathbb{E}(\xih\xih^{\top}), (\xih,\eph)\}$ is not a factor solution for $x$.
\end{sloppypar}

\begin{remark}
	At this point one may wonder if, when the model implications are met, it matters that the postulates are violated.
	The problem is that the model implications are no longer based on the postulates.
	This means that the (realizations of) $x$ may be produced by a different mechanism from which the model (and all its implications) are initially derived and generally understood.
	That is, the model then takes on a different meaning.
	Moreover, the violations are an expression of uncertainty that is propagated to data representation (in the sense of latent projection).
	That is, they prohibit a \emph{unique} placement of observations in latent space.
	Or, in other words, they imply uncertainty in latent projection under the factor model as further elaborated in Section 2.3 ff.\ of the MT.
\end{remark}

%%%%%%%%%%%%%%%%%%%%%%%%%%%%
%%%-Details SMC ------------
%%%%%%%%%%%%%%%%%%%%%%%%%%%%
\subsection{Details for Section 2.3 of the Main Text}\label{SMSSEC:Supp2.3}
Section \ref{SMSSEC:EpressXiXiHE} shows the equivalence of $\mathbb{E}(\xih\xih^{\top})$ and $\mathbb{E}(\xih\xi^{\top})$.
Section \ref{SMSSEC:EpressXiXiH} contains a useful alternative expression for $\mathbb{E}(\xih\xih^{\top})$.
Lastly, Section \ref{SMSSEC:SMCeps} details the squared multiple correlation between $\epsilon$ and its regression on $x$.

\subsubsection{The equivalence of \texorpdfstring{$\mathbb{E}(\xih\xih^{\top})$}{} and \texorpdfstring{$\mathbb{E}(\xih\xi^{\top})$}{}}\label{SMSSEC:EpressXiXiHE}
Consider
\begin{align*}
	\mathbb{E}\big(\xih\xi^{\top}\big) = \PH\LAt\big(\Com + \PS\big)^{-1}\mathbb{E}\big(x\xi^{\top}\big) = \PH\LAt\big(\Com + \PS\big)^{-1}\LA\PH,
\end{align*}
showing that $\mathbb{E}(\xih\xih^{\top}) = \mathbb{E}(\xih\xi^{\top})$.

\subsubsection{Expressing \texorpdfstring{$\mathbb{E}(\xih\xih^{\top})$}{}}\label{SMSSEC:EpressXiXiH}
Consider the following identity, following from \citep{ref_Duncan1944_SM,ref_Woodbury_SM}:
\begin{equation}\label{SMEQ:Wood}
	\big(\PHi + \Gram \big)^{-1} = \PH - \PH\LAt\big(\Com + \PS\big)^{-1}\LA\PH.
\end{equation}
It is then immediate that:
\begin{align*}
	\PH - \big(\PHi + \Gram \big)^{-1} &= \PH - \Big[\PH - \PH\LAt\big(\Com + \PS\big)^{-1}\LA\PH\Big] \\
	&= \PH\LAt\big(\Com + \PS\big)^{-1}\LA\PH.
\end{align*}
Hence, $\mathbb{E}(\xih\xih^{\top}) = \PH - \mathbf{\Sigma}(\mathbf{\Theta})_{\xi|x}$.
Similarly, one can show that  $\mathbb{E}(\eph\eph^{\top}) = \PS - \mathbf{\Sigma}(\mathbf{\Theta})_{\epsilon|x}$ (see Section \ref{SMSSSEC:Geom_unique}).

\subsubsection{Squared multiple correlation between $\epsilon$ and its regression on $x$}\label{SMSSEC:SMCeps}
We could also have written the model in (1) of the MT as $x := \LA\xi + \PS^{1/2}\varepsilon$, where $\varepsilon \sim \mathcal{N}_{p}(\boldsymbol{0}, \mathbf{I}_p)$, and where $\PS^{1/2}$ is the coefficients matrix for the unique factors $\varepsilon$.
The covariance matrix of the regression of $\varepsilon$ on $x$ then contains squared multiple correlations in its diagonal.
Now, consider $\hat{\varepsilon} = \PS^{-1/2}(x - \LA\xih)$ which, with a similar development as in Section \ref{SMSSEC:Regresse}, can be shown to be the regression function of $\varepsilon$ on $x$. Its correlation matrix is found as:
\begin{align*}
	\mathbb{E}\big(\hat{\varepsilon}\hat{\varepsilon}^{\top}\big) = \PS^{-1/2}\mathbb{E}\Big[\big(x - \LA\xih\big)\big(x - \LA\xih\big)^{\top}\Big]\PS^{-1/2},
\end{align*}
indicating that the squared multiple correlation of $\epsilon$ and its regression on $x$ is contained in the diagonal elements of:
\begin{align}\label{SMEQ:SMCe}
	\PS^{-1/2}\mathbb{E}\big(\eph\eph^{\top}\big)\PS^{-1/2}.
\end{align}
By the Woodbury identity we may express (\ref{SMEQ:SMCe}) as:
\begin{align*}
	\PS^{-1/2}\mathbb{E}\big(\eph\eph^{\top}\big)\PS^{-1/2} &= \PS^{-1/2}\Big[\PS - \LA\big(\PHi + \Gram\big)^{-1}\LAt\Big]\PS^{-1/2} \\
	&= \mathbf{I}_p - \PS^{-1/2}\LA\big(\PHi + \Gram\big)^{-1}\LAt\PS^{-1/2}.
\end{align*}
This last expression gives a handle on its behavior.
Under a weak structure the diagonal elements of the second term will tend to unity.
Under a strong structure the diagonal elements of the second term will tend to $0$.
Hence, the squared multiple correlation of $\epsilon$ and its regression on $x$ is nonnegative, and generally smaller than unity.

%%%%%%%%%%%%%%%%%%%%%%%%%%%%
%%%-Details Construction ---
%%%%%%%%%%%%%%%%%%%%%%%%%%%%
\subsection{Details for Section 2.4 of the Main Text}\label{SMSEC:Supp2.4}
Section \ref{SMSSEC:resid} gives expression to the residual correlation matrix of $\xi - \xih$.
This matrix is of importance for a construction system, given in Section \ref{SMSSEC:Construct}, that constructs common and unique factors that possess all characteristics of the postulated common and unique factors.
Section \ref{SMSSEC:Abide} then indeed shows that constructions stemming from this system adhere to all postulates and implications of the factor model.

\subsubsection{The residual correlation matrix for $\xi$}\label{SMSSEC:resid}
The retrieval of $\xi$ from $x$ will in general not be perfect.
Hence, we can take interest in the residual correlation matrix of $\xi - \xih$:
\begin{equation}\label{SMEQ:RCM}
	\mathbb{E}\Big[\big(\xi - \xih\big)\big(\xi - \xih\big)^{\top}\Big] = \mathbb{E}\big(\xi\xi^{\top}\big) - \mathbb{E}\big(\xi\xih^{\top}\big) - \mathbb{E}\big(\xih\xi^{\top}\big) + \mathbb{E}\big(\xih\xih^{\top}\big).
\end{equation}
Now, $\mathbb{E}(\xi\xi^{\top}) = \PH$ as postulated and from Sections \ref{SMSSEC:EpressXiXiHE} and \ref{SMSSEC:EpressXiXiH} we have that $\mathbb{E}(\xi\xih^{\top}) = \mathbb{E}(\xih\xi^{\top}) = \mathbb{E}(\xih\xih^{\top}) = \PH\LAt\big(\Com + \PS\big)^{-1}\LA\PH$.
Hence, (\ref{SMEQ:RCM}) can be expressed as:
\begin{equation*}
	\mathbb{E}\big(\xi\xi^{\top}\big) - \mathbb{E}\big(\xih\xih^{\top}\big) = \PH - \PH\LAt\big(\Com + \PS\big)^{-1}\LA\PH.
\end{equation*}
This last expression now points to the following equalities:
\begin{equation}\label{SMEQ:resCorIdent}
	\mathbb{E}\Big[\big(\xi - \xih\big)\big(\xi - \xih\big)^{\top}\Big] = \mathbb{E}\big(\xi\xi^{\top}\big) - \mathbb{E}\big(\xih\xih^{\top}\big) = \mathbf{\Sigma}(\mathbf{\Theta})_{\xi|x}.
\end{equation}
That is, the residual correlation matrix of $\xi$ with the linearly-predictable parts partialled out equals the matrix with the dispersion of prediction errors and equals the conditional covariance matrix of $\xi$ given $x$.
This matrix is of importance for the construction system that constructs common and unique factors that adhere to all model postulates.

\subsubsection{A construction system}\label{SMSSEC:Construct}
We are interested in constructing common and unique factors that possess all characteristics of the postulated common and unique factors.
Begin by defining
\begin{equation*}
	\xi'\equiv \xih + \tau,
\end{equation*}
with $\tau = \xi - \xih$ such that the construction $\xi'$ takes on the properties of the true factors in $\xi$.
The term $\tau$ contains that part of $\xi$ that is not linearly predictable from $x$.
The behavior of $\tau$ then indicates what this non-predictable part needs to adhere to in order to construct a $\xi'$ that possesses all desired qualities.
Clearly, $\mathbb{E}(\tau) = \boldsymbol{0}$ and, by (\ref{SMEQ:resCorIdent}), $\mathbb{E}(\tau\tau^{\top}) = \mathbf{\Sigma}(\mathbf{\Theta})_{\xi|x}$.
Moreover,
\begin{align*}
	\mathbb{E}\big(x\tau^{\top}\big) &= \mathbb{E}\bigg\{x\Big[\xi - \PH\LAt\big(\Com + \PS\big)^{-1}x\Big]^{\top}\bigg\} \\
	&= \mathbb{E}\big(x\xi^{\top}\big) - \mathbb{E}\big(xx^{\top}\big)\big(\Com + \PS\big)^{-1}\LA\PH \\
	&= \LA\PH - \LA\PH = \boldsymbol{0}.
\end{align*}
Hence,
\begin{equation*}
	\tau \sim \mathcal{N}_m\big[\boldsymbol{0}, \mathbf{\Sigma}(\mathbf{\Theta})_{\xi|x}\big] \ci x.
\end{equation*}
As $\xi$ cannot be obtained, we need to devise a random variable that behaves exactly as $\tau$.
To do this, first note that, by our model assumptions, $\PHi + \Gram$ is p.d.
Then
\begin{equation*}
	\PH - \PH\LAt\big(\Com + \PS\big)^{-1}\LA\PH = \big(\PHi + \Gram\big)^{-1}
\end{equation*}
is also p.d.
This implies that the matrix square root $\mathbf{\Sigma}(\mathbf{\Theta})_{\xi|x}^{1/2}$ exists.
Now, let $s$ be an arbitrary $m$-dimensional random vector distributed as $\mathcal{N}_m(\boldsymbol{0}, \mathbf{I}_m)$ with the further property that $s \ci x$.
Moreover, let $\mathbf{H} \in \mathbb{R}^{m \times m}$ be any member of the special orthogonal group $\mathrm{SO}(m)$ \citep{ref_GEOGROU_SM}, i.e., the group (under matrix multiplication) of rotation matrices.
By the assumptions on $s$, the characteristics of rotation matrices, and the affine property of Gaussians we then indeed have that:
\begin{equation*}
	\mathbf{\Sigma}(\mathbf{\Theta})_{\xi|x}^{1/2}\mathbf{H}s \sim \mathcal{N}_m\big[\boldsymbol{0}, \mathbf{\Sigma}(\mathbf{\Theta})_{\xi|x}\big] \ci x.
\end{equation*}

The above implies that we can construct proxies for $\xi$ according to:
\begin{align*}
	\xi' &\equiv \PH\LAt\big(\Com + \PS\big)^{-1}x + \Big[\PH - \PH\LAt\big(\Com + \PS\big)^{-1}\LA\PH\Big]^{1/2}\mathbf{H}s \\
	&= \xih + \mathbf{\Sigma}(\mathbf{\Theta})_{\xi|x}^{1/2}\mathbf{H}s.
\end{align*}
We can then find the corresponding $\epsilon'$ as:
\begin{align*}
	\epsilon' &\equiv x - \LA\xi'\\
	&= x - \Com\big(\Com + \PS\big)^{-1}x - \LA\mathbf{\Sigma}(\mathbf{\Theta})_{\xi|x}^{1/2}\mathbf{H}s \\
	&= \eph - \LA\mathbf{\Sigma}(\mathbf{\Theta})_{\xi|x}^{1/2}\mathbf{H}s.
\end{align*}
The system for constructing $(\xi', \epsilon')$ naturally follows:
\begin{equation}\label{SMEQ:ConstSyst}
	\begin{bmatrix}
		\xi' \\
		\epsilon'
	\end{bmatrix}
	\equiv
	\begin{bmatrix}
		\hat{\xi}      & \mathbf{\Sigma}(\mathbf{\Theta})_{\xi|x}^{1/2} \\
		\hat{\epsilon} & -\mathbf{\Lambda}\mathbf{\Sigma}(\mathbf{\Theta})_{\xi|x}^{1/2}
	\end{bmatrix}
	\begin{bmatrix}
		1 \\
		\mathbf{H}s
	\end{bmatrix}.
\end{equation}
The group $\mathrm{SO}(m)$ has infinitely many elements.
There are thus infinitely many rotations $\mathbf{H}s$ of $s$ and, hence, infinitely many constructions $(\xi', \epsilon')$.\footnote{
	It is possible to formulate an alternative construction system with the same implications.
	Consider writing the model of interest as
	\begin{equation*}
		x := \LA\PH^{1/2}\dot{\xi} + \PS^{1/2}\varepsilon,
	\end{equation*}
	where $\dot{\xi} \sim \mathcal{N}_m(\boldsymbol{0}, \mathbf{I}_m)$, $\varepsilon \sim \mathcal{N}_p(\boldsymbol{0}, \mathbf{I}_p)$, and all other assumptions of Section 1.1 of the MT in place.
	Then we may write
	\begin{equation*}
		x := 
		\begin{bmatrix}
			\LA\PH^{1/2}  & \PS^{1/2}
		\end{bmatrix}
		\begin{bmatrix}
			\dot{\xi}  \\
			\varepsilon
		\end{bmatrix}
		\equiv \mathbf{C}\varrho,
	\end{equation*}
	with $\mathbf{C} \in \mathbb{R}^{p \times (m + p)}$ and $\varrho \in \mathbb{R}^{m + p}$.
	One solution (or proxy) for $\varrho$ in this system is through the generalized inverse: $\hat{\varrho} = \mathbf{C}^{+}x$.
	As $\mathbf{C}\mathbf{C}^{\top} = \Com + \PS$ is invertible, our generalized inverse is surjective, such that
	\begin{equation*}
		\mathbf{C}^{+} = \mathbf{C}^{\top}\big(\mathbf{C}\mathbf{C}^{\top}\big)^{-1}.
	\end{equation*}
	This leads to 
	\begin{equation*}
		\hat{\varrho} = \mathbf{C}^{+}x = 
		\begin{bmatrix}
			\PH^{1/2}\LAt\big(\Com + \PS\big)^{-1}x  \\
			\PS^{1/2}(\Com + \PS\big)^{-1}x
		\end{bmatrix} = 
		\begin{bmatrix}
			\PH^{-1/2}\xih  \\
			\PS^{-1/2}\eph
		\end{bmatrix},
	\end{equation*}
	in which we indeed recognize scaled versions of our best linear estimates $\xih$ and $\eph$ found previously (affine scaling by $\PH^{-1/2}$ and $\PS^{-1/2}$ leads to identity covariance matrices for $\xi$ and $\epsilon$, such that $\dot{\xi} = \PH^{-1/2}\xi$ and $\varepsilon = \PS^{-1/2}\epsilon$).
	There is, however, an infinite number of constructions
	\begin{equation}\label{SMEQ:ALTconstr}
		\varrho' = \mathbf{C}^{+}x + \Big(\mathbf{I}_{(m + p)} - \mathbf{C}^{+}\mathbf{C}\Big)\dot{\mathbf{H}}z,
	\end{equation}
	where $z \sim \mathcal{N}_{(m + p)}\big(\boldsymbol{0}, \mathbf{I}_{(m + p)}\big) \ci x$ and $\dot{\mathbf{H}} \in \mathbb{R}^{(m + p) \times (m + p)}$ represents any member of the special orthogonal group $\mathrm{SO}(m + p)$,
	that also provide a solution as
	\begin{align*}
		x &= \mathbf{C}\varrho' = \mathbf{C}\bigg[\hat{\varrho} + \Big(\mathbf{I}_{(m + p)} - \mathbf{C}^{+}\mathbf{C}\Big)\dot{\mathbf{H}}z\bigg] = 
		\mathbf{C}\hat{\varrho} + \big(\mathbf{C} - \mathbf{C}\mathbf{C}^{+}\mathbf{C}\big)\dot{\mathbf{H}}z \\
		&= \mathbf{C}\hat{\varrho},
	\end{align*}
	due to the fundamental property $\mathbf{C}\mathbf{C}^{+}\mathbf{C} = \mathbf{C}$.
	The alternative construction system (\ref{SMEQ:ALTconstr}) can be seen as (a random variable take on) the general (least-squares-type) solution to $x := \mathbf{C}\varrho$ with $\mathbf{C}^{+}\mathbf{C}$ the orthogonal projector that projects $\varrho$ onto $x$ and where $\mathbf{I}_{(m + p)} - \mathbf{C}^{+}\mathbf{C}$ represents its orthogonal complement (or projection onto the null space) \citep[see, e.g.][Chapter 2]{SMref_MTricks}.
	All implications of the construction system of Section \ref{SMSSEC:Construct} also follow from the construction system (\ref{SMEQ:ALTconstr}).
	We prefer the construction system of Section \ref{SMSSEC:Construct} as it is more explicit in expressing the dependency of $\eph$ ($\epsilon'$) on $\xih$ ($\xi'$) and as it is deemed easier to comprehend.
}
From a realization perspective one could say that $s$ represents random numbers from the standard normal distribution that may receive an arbitrary rotation.

\subsubsection{Adherence of $\xi'$ and $\epsilon'$ to model postulates and implications}\label{SMSSEC:Abide}
\begin{sloppypar}
	Constructions $(\xi', \epsilon')$ abide all model postulates and model implications.
	As $\mathbb{E}(x) = 0$ and $\mathbb{E}(s) = 0$, we have that the expectations of $\xi'$, $\epsilon'$, and $\LA\xi' + \epsilon'$ are also zero.
	By normality assumptions on $x$ and $s$ we have that the joint distribution of $x$, $\xi'$ and $\epsilon'$ is also Gaussian (although, as we shall later see, Gaussianity is by no means a necessity).
	Moreover,
	\begin{equation*}
		\LA\xi' + \epsilon' = \LA\Big[\xih + \mathbf{\Sigma}(\mathbf{\Theta})_{\xi|x}^{1/2}\mathbf{H}s\Big] + \eph - \LA\mathbf{\Sigma}(\mathbf{\Theta})_{\xi|x}^{1/2}\mathbf{H}s = \LA\xih + \eph,
	\end{equation*}
	and thus retrieves $x$ (Section \ref{SMSSEC:HatAdhere}).
	Hence, to evaluate if $(\xi',\epsilon')$ is part of a factor solution for $x$ we need to assess if the joint correlation matrix for the random variables in $x = \LA\xi' + \epsilon'$ concurs with (4) of the MT.
	While this should be immediate from the setup of the constructions in Section \ref{SMSSEC:Construct}, it is insightful to show it algebraically.
	To avoid notational clutter, we give the algebraic development for the identity rotation first.
	For this assessment it is useful to keep in mind the results from Section \ref{SMSSEC:HatAdhere} as well as the characteristics of $s$.
	We begin by assessing $\mathbb{E}(\xi'\xi'^{\top})$:
	\begin{align*}
		\mathbb{E}\big(\xi'\xi'^{\top}\big) &= \mathbb{E}\bigg\{\Big[\xih + \mathbf{\Sigma}(\mathbf{\Theta})_{\xi|x}^{1/2}s\Big]\Big[\xih + \mathbf{\Sigma}(\mathbf{\Theta})_{\xi|x}^{1/2}s\Big]^{\top}\bigg\} \\
		&= \mathbb{E}\big(\xih\xih^{\top}\big) + \PH\LAt\big(\Com + \PS\big)^{-1}\mathbb{E}\big(xs^{\top}\big)\mathbf{\Sigma}(\mathbf{\Theta})_{\xi|x}^{1/2} \\
		&\qquad\qquad ~~+ \mathbf{\Sigma}(\mathbf{\Theta})_{\xi|x}^{1/2}\mathbb{E}\big(sx^{\top}\big)\big(\Com + \PS\big)^{-1}\LA\PH +
		\mathbf{\Sigma}(\mathbf{\Theta})_{\xi|x}^{1/2}\mathbb{E}\big(ss^{\top}\big)\mathbf{\Sigma}(\mathbf{\Theta})_{\xi|x}^{1/2} \\
		&= \PH\LAt\big(\Com + \PS\big)^{-1}\LA\PH + \mathbf{\Sigma}(\mathbf{\Theta})_{\xi|x} \\
		&= \PH\LAt\big(\Com + \PS\big)^{-1}\LA\PH + \Big[\PH - \PH\LAt\big(\Com + \PS\big)^{-1}\LA\PH\Big] \\
		&= \PH.
	\end{align*}
	We proceed with $\mathbb{E}(\epsilon'\epsilon'^{\top})$:
	\begin{align}\label{SMEQ:Eee}\nonumber
		\mathbb{E}\big(\epsilon'\epsilon'^{\top}\big) &= \mathbb{E}\bigg\{\Big[\eph - \LA\mathbf{\Sigma}(\mathbf{\Theta})_{\xi|x}^{1/2}s\Big]\Big[\eph - \LA\mathbf{\Sigma}(\mathbf{\Theta})_{\xi|x}^{1/2}s\Big]^{\top}\bigg\}\\\nonumber
		&= \mathbb{E}\big(\eph\eph^{\top}\big) - \PS\big(\Com + \PS\big)^{-1}\mathbb{E}\big(xs^{\top}\big)\mathbf{\Sigma}(\mathbf{\Theta})_{\xi|x}^{1/2}\LAt \\\nonumber
		&\qquad\qquad ~~- \LA\mathbf{\Sigma}(\mathbf{\Theta})_{\xi|x}^{1/2}\mathbb{E}\big(sx^{\top}\big)\big(\Com + \PS\big)^{-1}\PS \\\nonumber
		&\qquad\qquad ~~+
		\LA\mathbf{\Sigma}(\mathbf{\Theta})_{\xi|x}^{1/2}\mathbb{E}\big(ss^{\top}\big)\mathbf{\Sigma}(\mathbf{\Theta})_{\xi|x}^{1/2}\LAt \\
		&= \PS\big(\Com + \PS\big)^{-1}\PS + \LA\mathbf{\Sigma}(\mathbf{\Theta})_{\xi|x}\LAt.
	\end{align}
	Let us look at the second term in (\ref{SMEQ:Eee}):
	\begin{align*}
		\LA\mathbf{\Sigma}(\mathbf{\Theta})_{\xi|x}\LAt &= \Com - \Com\big(\Com + \PS\big)^{-1}\Com \\
		& = \Com\Big[\mathbf{I}_p - \big(\Com + \PS\big)^{-1}\Com\Big] \\
		& = \Com\Big[\mathbf{I}_p - \big(\Com + \PS\big)^{-1}\Com\Big]\PSi\PS \\
		& = \Com\Big[\PSi - \big(\Com + \PS\big)^{-1}\Com\PSi\Big]\PS \\
		& = \Com\big(\Com + \PS\big)^{-1}\PS,
	\end{align*}
	where the last step is justified by Lemma 1 of the MT.
	Hence, (\ref{SMEQ:Eee}) can be expressed as:
	\begin{align*}
		\PS\big(\Com + \PS\big)^{-1}\PS + \Com\big(\Com + \PS\big)^{-1}\PS &= \big(\Com + \PS\big)\big(\Com + \PS\big)^{-1}\PS \\
		&= \PS.
	\end{align*}
	\begin{remark}
		By Lemma 2 from the MT we can give another equivalence for the second term in (\ref{SMEQ:Eee}),
		\begin{equation*}
			\LA\mathbf{\Sigma}(\mathbf{\Theta})_{\xi|x}\LAt = 
			\PS - \PS\big(\Com + \PS\big)^{-1}\PS,
		\end{equation*}
		immediately implying the following identity:
		\begin{equation*}
			\PS - \PS\big(\Com + \PS\big)^{-1}\PS = 
			\Com\big(\Com + \PS\big)^{-1}\PS.
		\end{equation*}
		This identity can be confirmed by using $\PS = \mathbf{\Sigma}(\mathbf{\Theta})_{xx} - \Com$ and $\Com = \mathbf{\Sigma}(\mathbf{\Theta})_{xx} - \PS$:
		\begin{align*}
			\PS - \PS\big(\Com + \PS\big)^{-1}\PS &= \mathbf{\Sigma}(\mathbf{\Theta})_{xx} - \Com - \Big[\mathbf{\Sigma}(\mathbf{\Theta})_{xx} - \Com\Big]\mathbf{\Sigma}(\mathbf{\Theta})_{xx}^{-1}\PS \\
			&= \mathbf{\Sigma}(\mathbf{\Theta})_{xx} - \PS - \Com + \Com\mathbf{\Sigma}(\mathbf{\Theta})_{xx}^{-1}\PS \\
			&= \Com\big(\Com + \PS\big)^{-1}\PS.
		\end{align*}
	\end{remark}
	We continue with $\mathbb{E}(\xi'\epsilon'^{\top})$:
	\begin{align*}
		\mathbb{E}\big(\xi'\epsilon'^{\top}\big) &= \mathbb{E}\bigg\{\Big[\xih + \mathbf{\Sigma}(\mathbf{\Theta})_{\xi|x}^{1/2}s\Big]\Big[\eph - \LA\mathbf{\Sigma}(\mathbf{\Theta})_{\xi|x}^{1/2}s\Big]^{\top}\bigg\} \\
		&= \mathbb{E}\big(\xih\eph^{\top}\big) - \PH\LAt\big(\Com + \PS\big)^{-1}\mathbb{E}\big(xs^{\top}\big)\mathbf{\Sigma}(\mathbf{\Theta})_{\xi|x}^{1/2}\LAt \\
		&\qquad\qquad ~~+ \mathbf{\Sigma}(\mathbf{\Theta})_{\xi|x}^{1/2}\mathbb{E}\big(sx^{\top}\big)\big(\Com + \PS\big)^{-1}\PS - \mathbf{\Sigma}(\mathbf{\Theta})_{\xi|x}^{1/2}\mathbb{E}\big(ss^{\top}\big)\mathbf{\Sigma}(\mathbf{\Theta})_{\xi|x}^{1/2}\LAt \\
		&= \PH\LAt - \PH\LAt\big(\Com + \PS\big)^{-1}\LA\PH\LAt - \mathbf{\Sigma}(\mathbf{\Theta})_{\xi|x}\LAt \\
		&= \Big[\PH - \PH\LAt\big(\Com + \PS\big)^{-1}\LA\PH\Big]\LAt - \Big[\PH - \PH\LAt\big(\Com + \PS\big)^{-1}\LA\PH\Big]\LAt \\
		&= \boldsymbol{0}.
	\end{align*}
	Hence, these correlation matrices abide model postulates (i)--(iii) from Section 1.1 of the MT.
	Using these results we also indeed have that:
	\begin{align*}
		\mathbb{E}\Big[\big(\mathbf{\Lambda}\xi' + \epsilon'\big)\big(\mathbf{\Lambda}\xi' + \epsilon'\big)^{\top}\Big] &= \LA\mathbb{E}\big(\xi'\xi'^{\top}\big)\LAt + \LA\mathbb{E}\big(\xi'\epsilon'^{\top}\big) + \mathbb{E}\big(\epsilon'\xi'^{\top}\big)\LAt + \mathbb{E}\big(\epsilon'\epsilon'^{\top}\big) \\
		&= \Com + \PS,
	\end{align*}
	showing that $\mathbf{\Lambda}\xi' + \epsilon'$ abides the fundamental equation of factor analysis.
	Moreover,
	\begin{align*}
		\mathbb{E}\big(\xi'x^{\top}\big) = \mathbb{E}\bigg\{\Big[\xih + \mathbf{\Sigma}(\mathbf{\Theta})_{\xi|x}^{1/2}s\Big]x^{\top}\bigg\}
		&= \PH\LAt\big(\Com + \PS\big)^{-1}\mathbb{E}\big(xx^{\top}\big) + \mathbf{\Sigma}(\mathbf{\Theta})_{\xi|x}^{1/2}\mathbb{E}\big(sx^{\top}\big) \\
		&= \PH\LAt,
	\end{align*}
	and
	\begin{align*}
		\mathbb{E}\big(\epsilon'x^{\top}\big) = \mathbb{E}\bigg\{\Big[\eph - \LA\mathbf{\Sigma}(\mathbf{\Theta})_{\xi|x}^{1/2}s\Big]x^{\top}\bigg\}
		&= \PS\big(\Com + \PS\big)^{-1}\mathbb{E}\big(xx^{\top}\big) - \LA\mathbf{\Sigma}(\mathbf{\Theta})_{\xi|x}^{1/2}\mathbb{E}\big(sx^{\top}\big) \\
		&= \PS,
	\end{align*}
	showing that $\mathbf{\Lambda}\xi' + \epsilon'$ abides all implications derived from the initial postulates.
	Now, as $\mathbf{H}\mathbb{E}(ss^{\top})\mathbf{H}^{\top} = \mathbf{H}\mathbf{H}^{\top} = \mathbf{I}_m$ and $\mathbf{H}\mathbb{E}(sx^{\top}) = \boldsymbol{0}$ it is immediate that the results above hold for any rotation $\mathbf{H}s$ of $s$.
	Hence, $\{m, \mathbb{E}(\epsilon'\epsilon'^{\top}), \mathbf{\Lambda}, \mathbb{E}(\xi'\xi'^{\top}), (\xi',\epsilon')\} = \{m, \PS, \mathbf{\Lambda}, \PH, (\xi',\epsilon')\}$ is a factor solution for $x$ for all $(\xi',\epsilon')$ constructed according to (\ref{SMEQ:ConstSyst}).
\end{sloppypar}

%%%%%%%%%%%%%%%%%%%%%%%%%%%%
%%%-Details Cor. Geometry --
%%%%%%%%%%%%%%%%%%%%%%%%%%%%
\subsection{Details for Section 2.5 of the Main Text}\label{SMSEC:Supp2.5}
Section \ref{SMSSS:Corxixi} expresses the multiple correlation between $\hat{\xi}_k$ and any $\xi_k'$ stemming from the construction system (Section \ref{SMSSEC:Construct}).
Section \ref{SMSSEC:Gut} shows the derivation behind $\gamma_k$, the minimal correlation between the $k$th factors from any two constructions $\xi'$ and $\xi''$ in $\Xi$. 
Section \ref{SMSSS:FigCone} provides all details behind Figure 2 from the MT.
Section \ref{SMSSSEC:Geom_unique} specifies the correlational geometry of indeterminacy in the unique factors.
Lastly, Section \ref{SMSSSEC:MinCorXiEp} expresses the minimal correlation between constructions of $\xi$ and $\epsilon$.

\subsubsection{The correlation between $\xi'$ and \texorpdfstring{$\xih$}{}}
\label{SMSSS:Corxixi}
As any constructed $\xi'$ behaves as $\xi$, it is immediate that
\begin{equation}\label{SMEQ:SMCsr}
	\Big[\PH\LAt\big(\Com + \PS\big)^{-1}\LA\PH\Big]_{kk}
\end{equation}
also represents the squared multiple correlation between $\xih_k$ and any construction $\xi_k'$.
We may also assess this algebraically.
For any $\xi'$ we have:
\begin{align*}
	\mathbb{E}\big(\xi'\xih^{\top}\big) &= \mathbb{E}\bigg\{\Big[\xih + \mathbf{\Sigma}(\mathbf{\Theta})_{\xi|x}^{1/2}\mathbf{H}s\Big]\xih^{\top}\bigg\} \\
	&= \mathbb{E}\big(\xih\xih^{\top}\big) + \mathbf{\Sigma}(\mathbf{\Theta})_{\xi|x}^{1/2}\mathbf{H}\mathbb{E}\big(sx^{\top}\big)\big(\Com + \PS\big)^{-1}\LA\PH \\
	&= \mathbb{E}\big(\xih\xih^{\top}\big) = \PH\LAt\big(\Com + \PS\big)^{-1}\LA\PH.
\end{align*}
The square root of (\ref{SMEQ:SMCsr}) then represents the multiple correlation between $\hat{\xi}_k$ and any $\xi_k'$ stemming from the construction system.

\subsubsection{Expressing the Guttman criterion}\label{SMSSEC:Gut}
We are interested in expressing \citep{ref_Guttman1955_SM}:
\begin{equation*}
	\gamma_k \equiv \inf\Big\{\rho\big(\xi_{k}',\xi_{k}''\big) | \xi_{k}',\xi_{k}'' \in \Xi\Big\}.
\end{equation*}
Any two maximally different $\xi'$ and $\xi''$ are expressible as:
\begin{align*}
	\xi' &\equiv \xih + \mathbf{\Sigma}(\mathbf{\Theta})_{\xi|x}^{1/2}\mathbf{H}s \\
	\xi'' &\equiv \xih - \mathbf{\Sigma}(\mathbf{\Theta})_{\xi|x}^{1/2}\mathbf{H}s.
\end{align*}
For any $\xi'$ in $\Xi$ we then have:
\begin{align}\label{SMEQ:GB}\nonumber
	\mathbb{E}\big(\xi'\xi''^{\top}\big) &= \mathbb{E}\bigg\{\Big[\xih + \mathbf{\Sigma}(\mathbf{\Theta})_{\xi|x}^{1/2}\mathbf{H}s\Big]\Big[\xih - \mathbf{\Sigma}(\mathbf{\Theta})_{\xi|x}^{1/2}\mathbf{H}s\Big]^{\top}\bigg\} \\\nonumber
	&= \mathbb{E}\big(\xih\xih^{\top}\big) - \PH\LAt\big(\Com + \PS\big)^{-1}\mathbb{E}\big(xs^{\top}\big)\mathbf{H}^{\top}\mathbf{\Sigma}(\mathbf{\Theta})_{\xi|x}^{1/2} \\\nonumber
	&\qquad\qquad ~~+ \mathbf{\Sigma}(\mathbf{\Theta})_{\xi|x}^{1/2}\mathbf{H}\mathbb{E}\big(sx^{\top}\big)\big(\Com + \PS\big)^{-1}\LA\PH \\\nonumber
	&\qquad\qquad ~~-
	\mathbf{\Sigma}(\mathbf{\Theta})_{\xi|x}^{1/2}\mathbf{H}\mathbb{E}\big(ss^{\top}\big)\mathbf{H}^{\top}\mathbf{\Sigma}(\mathbf{\Theta})_{\xi|x}^{1/2} \\\nonumber
	&= \PH\LAt\big(\Com + \PS\big)^{-1}\LA\PH - \Big[\PH - \PH\LAt\big(\Com + \PS\big)^{-1}\LA\PH\Big] \\
	&= 2\PH\LAt\big(\Com + \PS\big)^{-1}\LA\PH - \PH.
\end{align}
The minimal correlations we are looking for are thus contained in the diagonal elements of (\ref{SMEQ:GB}).
An alternative expression to (\ref{SMEQ:GB}) can also be given.
From (\ref{SMEQ:Wood}) it is immediate that:
\begin{equation*}
	2\PH\LAt\big(\Com + \PS\big)^{-1}\LA\PH - \PH = \PH - 2\big(\PHi + \Gram \big)^{-1} \equiv \mathbf{\Gamma}.
\end{equation*}
The latter expression may be of interest from a computational perspective.
Again, $\PS$ will in general be sparse if not diagonal such that its inverse is dependent upon blocks whose dimensions are much lower than $(p \times p)$ and also lower than $(m \times m)$.
The worst-case asymptotic time complexity of the latter expression is then $\mathcal{O}(m^{3})$ rather than $\mathcal{O}(p^{3})$.

Relatedly, we can then also express the variance of the difference between two maximally different constructions \cite{Gut_SM},
\begin{align}\label{SMEQ:MaxVar}\nonumber
	\mathbb{E}\Big[\big(\xi' - \xi''\big)\big(\xi' - \xi''\big)^{\top}\Big] &= \mathbb{E}\bigg\{2\mathbf{\Sigma}(\mathbf{\Theta})_{\xi|x}^{1/2}\mathbf{H}s\Big[2\mathbf{\Sigma}(\mathbf{\Theta})_{\xi|x}^{1/2}\mathbf{H}s\Big]^{\top}\bigg\} \\\nonumber
	&= 4\mathbf{\Sigma}(\mathbf{\Theta})_{\xi|x}^{1/2}\mathbf{H}\mathbb{E}\big(ss^{\top}\big)\mathbf{H}^{\top}\mathbf{\Sigma}(\mathbf{\Theta})_{\xi|x}^{1/2} \\
	&= 4\big(\PHi + \Gram \big)^{-1},
\end{align}
which can be taken as (yet) another measure for the degree of (in)determinacy.
For determinacy one would require this maximal variance to be $\boldsymbol{0}$.
From Section 2.6 of the MT it will be clear that the trace of (\ref{SMEQ:MaxVar}) equals four times the mean squared error between $\xi'$ and $\xih$.

\subsubsection{Understanding Figure 2}
\label{SMSSS:FigCone}
We may understand random variables and their associated probabilities from the perspective of measure theory, especially the notion of the Hilbert space \citep{ref_Hilbert_SM}.
Our vectors of interest, $\xih$ and $\xi'$, are expressible as linear combinations of random vectors and thus are elements of a Hilbert space (of $m$-dimensional random vectors).
The $\ell_2$-norm of an element $\xih$ in the space of $m$-dimensional random vectors is then:
\begin{equation}\label{SMEQ:norm}
	\|\xih\|_2 = \Big[\tr\mathbb{E}\big(\xih\xih^{\top}\big)\Big]^{1/2}.
\end{equation}
As both $\xih$ and $\xi'$ are elements of our Hilbert space we can define their inner product as \citep[][p. 81]{ref_Luenberg_SM}:
\begin{equation*}
	\langle\xih,\xi'\rangle = \tr\mathbb{E}\big(\xih\xi'^{\top}\big).
\end{equation*}
The quantity in (\ref{SMEQ:norm}) fulfills, as a consequence of the Minkowsky inequality \citep{Minko_SM}, the requirements of a norm and can be used to define distance between random variables \citep[][p. 131]{Gut_SM}.
This allows us to express the multidimensional ($m$-ary) correlation between $\xih$ and $\xi'$ as:
\begin{equation*}
	\frac{\langle\xih,\xi'\rangle}{\big\|\xih\big\|_2\big\|\xi'\big\|_2} = \frac{\tr\mathbb{E}\big(\xih\xi'^{\top}\big)}{\Big[\tr\mathbb{E}\big(\xih\xih^{\top}\big)\Big]^{1/2} m^{1/2}}.
\end{equation*}

The multidimensional correlation is relatively hard to interpret.
However, the elements of $\xih$ and $\xi'$ can be regarded as elements of a smaller Hilbert space of random variables.
This allows us to define the norm and inner-product also for individual elements $\xih_k$ and $\xi'_k$:
\begin{equation*}
	\big\|\xih_k\big\|_2 = \Big[\mathbb{E}\big(\xih\xih^{\top}\big)\Big]_{kk}^{1/2},
\end{equation*}
and
\begin{equation*}
	\langle\xih_k,\xi_k'\rangle = \Big[\mathbb{E}\big(\xih\xi'^{\top}\big)\Big]_{kk}.
\end{equation*}
The correlation between $\xih_k$ and $\xi'_k$ can then be expressed as:
\begin{align*}
	\frac{\langle\xih_k,\xi_k'\rangle}{\big\|\xih_k\big\|_2\big\|\xi'_k\big\|_2} &= \frac{\langle\xih_k,\xi_k'\rangle}{\big\|\xih_k\big\|_2} \\
	&= \frac{\Big[\PH\LAt\big(\Com + \PS\big)^{-1}\LA\PH\Big]_{kk}}
	{\Big[\PH\LAt\big(\Com + \PS\big)^{-1}\LA\PH\Big]_{kk}^{1/2}} \\
	&= \Big[\PH\LAt\big(\Com + \PS\big)^{-1}\LA\PH\Big]_{kk}^{1/2} \\
	&\equiv \rho(\xih_k, \xi'_k) = \cos\big[\measuredangle \, \xih_k,\xi'_k\big],
\end{align*}
where we have used that $\|\xi_k'\|_2 = 1$, the covariance results obtained earlier, and basic trigonometry. 
From the above we note the implication that $\|\xih_k\|_2 = \rho(\xih_k, \xi'_k)$.
This means that, next to the cosine of their angle, the length of the vector $\xih_k$ also represents the correlation between $\xih_k$ and $\xi'_k$.

We can similarly express the correlation between $\mathbf{\Sigma}(\mathbf{\Theta})_{\xi|x}^{1/2}\mathbf{H}s \equiv \tau$ (see Section \ref{SMSSEC:Construct}) and $\xi'$. 
Let $\tau'$ be the rotational representation of $\tau$ corresponding to $\xi'$.
Now, $\tau'$ is an element of a Hilbert space, as are its individual elements $\tau'_k$.
Let us find the $\ell_2$-norm of $\tau'_k$ and the inner product of $\tau'_k$ and $\xi'_k$.
For all $\tau'_k$ as elements of the set defined by the rotations $\mathbf{H}s$ of $s$ we have (see Section \ref{SMSSEC:Construct}):
\begin{equation*}
	\big\|\tau'_k\big\|_2 = \Big[\mathbb{E}\big(\tau'\tau'^{\top}\big)\Big]_{kk}^{1/2} = 
	\Big[\mathbf{\Sigma}(\mathbf{\Theta})_{\xi|x}\Big]_{kk}^{1/2}.
\end{equation*}
For the inner product of interest we find:
\begin{align*}
	\langle\tau'_k,\xi_k'\rangle &= \bigg[\mathbb{E}\bigg\{\mathbf{\Sigma}(\mathbf{\Theta})_{\xi|x}^{1/2}\mathbf{H}s
	\Big[\xih + \mathbf{\Sigma}(\mathbf{\Theta})_{\xi|x}^{1/2}\mathbf{H}s\Big]^{\top}\bigg\}
	\bigg]_{kk} \\
	&= \bigg[\mathbf{\Sigma}(\mathbf{\Theta})_{\xi|x}^{1/2}\mathbf{H}\mathbb{E}(s
	s^{\top})\mathbf{H}^{\top}\mathbf{\Sigma}(\mathbf{\Theta})_{\xi|x}^{1/2}\bigg]_{kk} \\
	&= \Big[\mathbf{\Sigma}(\mathbf{\Theta})_{\xi|x}\Big]_{kk},
\end{align*}
where we have used that $\mathbb{E}(x\tau') = 0$, implying $\xih \ci \tau'$.
This allows us to express the correlation between $\tau'_k$ and $\xi'_k$:
\begin{align*}
	\frac{\langle\tau'_k,\xi_k'\rangle}{\big\|\tau'_k\big\|_2\big\|\xi'_k\big\|_2} = \frac{\langle\tau'_k,\xi_k'\rangle}{\big\|\tau'_k\big\|_2} 
	= \frac{\Big[\mathbf{\Sigma}(\mathbf{\Theta})_{\xi|x}\Big]_{kk}}
	{\Big[\mathbf{\Sigma}(\mathbf{\Theta})_{\xi|x}\Big]_{kk}^{1/2}} 
	= \Big[\mathbf{\Sigma}(\mathbf{\Theta})_{\xi|x}\Big]_{kk}^{1/2} 
	\equiv \rho(\tau'_k, \xi'_k) = \cos\big[\measuredangle \, \tau'_k,\xi'_k\big].
\end{align*}
Here we note the implication that $\|\tau'_k\|_2 = \rho(\tau'_k, \xi'_k)$.
Hence, the length of the vector $\tau'_k$ represents the correlation between $\tau'_k$ and $\xi'_k$.

We note that there are infinitely many rotations of $s$ and thus infinitely many $\tau'_k$ with corresponding $\xi'_k$.
These rotational representations form an equator.
On it rests the cone, around $\xih_k$, with all alternative constructions $\xi'_k$.
This cone finds itself in the unit sphere as, using $\xih \ci \tau'$ and the results from Section \ref{SMSSEC:Abide}:
\begin{align}\label{SMEQ:Uni}\nonumber
	\big\|\xi'_k\big\|_2 &= \big\|\xih_k + \tau'_k\big\|_2 \\\nonumber
	&= \bigg\{\mathbb{E}\Big[\big(\xih + \tau'\big)\big(\xih + \tau'\big)^{\top}\Big]\bigg\}_{kk}^{1/2} \\\nonumber
	&= \Big[\mathbb{E}\big(\xih\xih^{\top}\big) + \mathbb{E}\big(\tau'\tau'^{\top}\big)\Big]_{kk}^{1/2} \\\nonumber
	&= \Big[\PH\LAt\big(\Com + \PS\big)^{-1}\LA\PH + \PH - \PH\LAt\big(\Com + \PS\big)^{-1}\LA\PH\Big]_{kk}^{1/2} \\
	&= \big[\PH\big]_{kk}^{1/2} = \sqrt{1} = 1.
\end{align}

We can relate the foregoing also to the geometry of squared multiple correlations.
Either from (\ref{SMEQ:Uni}) or from the orthogonality of $\xih_k$ and $\tau'_k$ in conjunction with the Pythagorean theorem on a unit-length hypothenuse, it immediately follows that 
\begin{equation}\label{SMEQ:LeXi}
	\big\|\xi'_k\big\|_2 = \Big[\PH\LAt\big(\Com + \PS\big)^{-1}\LA\PH\Big]_{kk} + \Big[\mathbf{\Sigma}(\mathbf{\Theta})_{\xi|x}\Big]_{kk} = \rho(\xih_k, \xi'_k)^2 + \rho(\tau'_k, \xi'_k)^2 = 1.
\end{equation}
Hence, the length of $\xi'_k$ is divided into the part predictable by $\xih_k$ and the part dependent on the arbitrary indeterminate component $\tau'_k$.
This may also be obtained from the sum of vector projections of $\tau_k'$ and $\xih_k$ onto $\xi_k'$:
\begin{align*}
	\proj_{\xi_k'}\tau_k' + \proj_{\xi_k'}\xih_k &= 
	\Bigg[\frac{\langle\tau'_k,\xi_k'\rangle}{\big\|\xi'_k\big\|_2} + \frac{\langle\xih_k,\xi_k'\rangle}{\big\|\xi'_k\big\|_2}\Bigg]
	\frac{\xi'_k}{\big\|\xi'_k\big\|_2} \\
	&= \Big[\langle\tau'_k,\xi_k'\rangle + \langle\xih_k,\xi_k'\rangle\Big]\xi'_k \\
	&= \Big[\rho(\tau'_k, \xi'_k)^2 + \rho(\xih_k, \xi'_k)^2\Big]\xi'_k = \xi'_k.
\end{align*}
The sum of vector projections naturally retrieves the vector $\xi'_k$.
The bracketed term contains the respective scalar resolutes of $\tau_k'$ and $\xih_k$ in the direction of $\xi_k'$. 
Hence, they represent the lengths of the respective vector projections. 

To complete our geometric understanding, we will translate the correlations of interest to angles.
For the angle (in degrees) between $\xih_k$ and $\xi'_k$ we have:
\begin{equation*}
	\measuredangle \, \xih_k,\xi'_k \equiv \alpha = \arccos\big[\rho(\xih_k, \xi_k')\big] \cdot \frac{180}{\pi}.
\end{equation*}
Analogously, the angle between $\tau_k'$ and $\xi'_k$ can be retrieved as:
\begin{equation*}
	\measuredangle \, \tau_k', \xi'_k \equiv \beta = \arccos\big[\rho(\tau_k', \xi_k')\big] \cdot \frac{180}{\pi}.
\end{equation*}
For the cosine of the double $\alpha$ angle we remember the standard trigonometric identity $\cos 2\alpha = 2\cos^2 \alpha - 1$. 
From this identity it immediately follows that
\begin{equation*}
	2\alpha = \arccos\big[2\rho(\xih_k, \xi_k')^2 - 1\big] \cdot \frac{180}{\pi} = \arccos(\gamma_k) \cdot \frac{180}{\pi},
\end{equation*}
with the last identity stemming from the Guttman criterion in Section \ref{SMSSEC:Gut} and the correlations defined above (and in Section 2.5 of the MT), as
\begin{equation*}
	\Big[2\PH\LAt\big(\Com + \PS\big)^{-1}\LA\PH - \PH\Big]_{kk} = (\mathbf{\Gamma})_{kk} = \gamma_k =  2\rho(\xih_k, \xi_k')^2 - 1.
\end{equation*}
The Guttman criterion then represents the cosine of the double angle between $\xih_k$ and $\xi_k'$ and thus the cosine of the angle between any $\xi_k'$ and its maximally different counterpart on the opposite side of our cone.

In case of local rotational uniqueness $2^m$ combinations of polarity reversals in the columns of $\LA$ (and thus in the sign of $\xi$) are allowed \citep{Peeters2012_SM}. In this case, $x := \LA\mathbf{P}\mathbf{P}\xi + \epsilon$, where $\mathbf{P} \equiv \mathrm{diag}[\pm 1,\ldots,\pm 1]$ denotes the signature matrix for which $\mathbf{P}^{-1} = \mathbf{P}^{\top} = \mathbf{P}$.
Geometrically, this implies that each $\xih_k$ and all $\xi'_k$ may have a reflection corresponding to the negated $\xi_k$. 
Hence, we then have a double cone, one around $\xih_k$ (whose locus is occupied by all constructions $\xi'_k$) and one around $-\xih_k$ (whose locus is occupied by all negated constructions $\xi'_k$).
This double cone is placed apex to apex, with the apexes resting on the equator formed by all rotational representations of $\tau_k$.
For multiple factors this translates to a (double) hypercone in the hypersphere for the orthogonal case in which the factors are uncorrelated, and the (double) oblique hypercone in the hyperspheroid for the oblique case in which the factors are allowed to correlate.

All developments in this subsection can be used to support (understanding of) Section 2.5 and Figure 2 of the MT.
Let us now fuel our geometrical intuition with some examples.

\begin{example}\label{SMEX:Gman}
	Say that $\xi'_k$ is predicted by $\xih_k$ and $\tau'_k$ in equal parts. 
	That is, $\rho(\xih_k, \xi'_k)^2 = \rho(\tau'_k, \xi'_k)^2 = .5$.
	In this case the correlation between $\xi'_k$ and $\xih_k$ amounts to
	$\rho(\xih_k, \xi'_k) = \sqrt{.5} = .7071068$.
	The associated angles are $\alpha = \arccos(.7071068) \cdot 180/\pi =  45^\circ$ and $2\alpha = 90^\circ$.
	Clearly, when $\rho(\xih_k, \xi'_k)^2 < \rho(\tau'_k, \xi'_k)^2$ the angle $2\alpha$ widens beyond orthogonality.
	Thus when the correlation between $\xih_k$ and any construction $\xi'_k$ drops below $\sqrt{.5}$ the correlation between maximally different constructions $\xi'_k$ and $\xi''_k$ will become negative.
	Note that a correlation of $\sqrt{.5}$ is regarded as (moderately) high in standard accounts of correlation \citep[][pp.\ 77--81]{Cohen_SM}.
	The implication is that a relatively high correlation between the latent factor and its best linear predictor can still imply the possibility of factor constructions that are uncorrelated or negatively correlated.
	This is a situation emphasized by Guttman \citep{ref_Guttman1955_SM} as well as the situation visualized in Figure 2 of the MT.
	For reference, this case is again visualized (in an enlarged fashion) in Figure \ref{SMFIG:Cones} below. 
	\QEDE
\end{example}

\begin{example}
	Say we desire the correlation between two maximally different factor constructions to equal or exceed $.8$.
	We then have the following condition:
	\begin{equation*}
		2\rho(\xih_k, \xi_k')^2 - 1 \geq .8,
	\end{equation*}
	implying that 
	\begin{equation*}
		\rho(\xih_k, \xi_k') \geq \sqrt\frac{.8 + 1}{2} \approx .95.
	\end{equation*}
	Hence, the correlation between the postulated latent factor and the best linear proxy needs to be close to unity in order to achieve a strong correlation between maximally different factor constructions.
	\QEDE
\end{example}

\begin{figure}[h!]
	\centering
	\includegraphics[width=.62\textwidth]{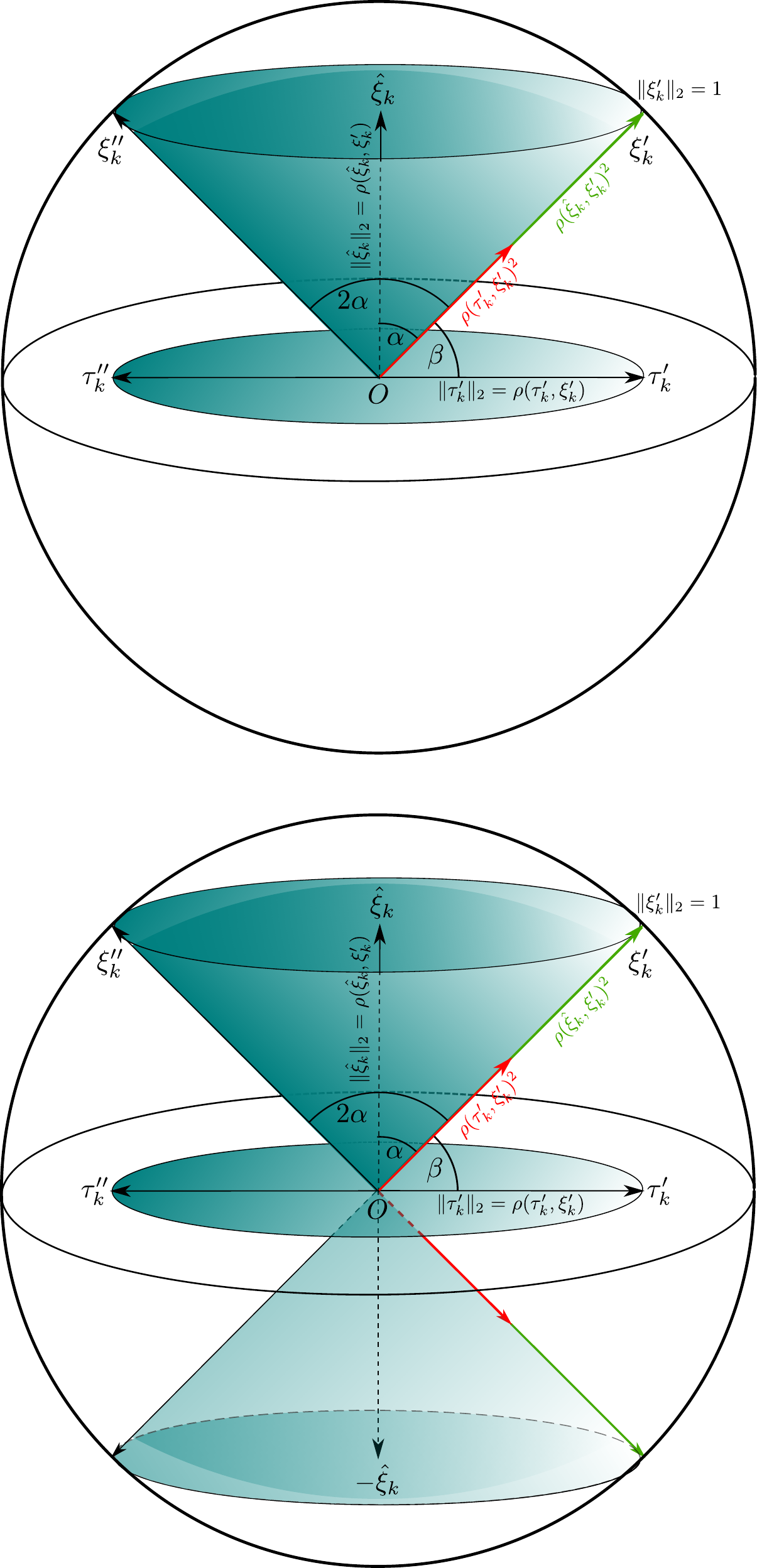}
	\caption{
		A visualization of the correlational geometry of indeterminacy when $\rho(\xih_k, \xi'_k)^2 = \rho(\tau'_k, \xi'_k)^2 = .5$ in case of global (upper figure) and local (lower figure) rotational uniqueness of $\xi$. 
	}
	\label{SMFIG:Cones}
\end{figure}

\begin{example}
	We note that many trigonometric identities can be obtained from the angles $\alpha$ and $\beta$.
	For example, $\alpha + \beta$ must amount to $90^\circ$.
	In order to assess, we need to evaluate the following identity \citep[][p.\ 80]{Handbook_SM}:
	\begin{equation*}
		\Big[\arccos(\rho_1) + \arccos(\rho_2)\Big] \cdot \frac{180}{\pi} = 
		\arccos\bigg[\rho_1\rho_2 - \sqrt{(1 - \rho_1^2)(1 - \rho_2^2)}\bigg] \cdot \frac{180}{\pi},
	\end{equation*}
	with $\rho_1 = \rho(\xih_k, \xi_k')$ and $\rho_2 = \rho(\tau'_k, \xi'_k)$.
	From (\ref{SMEQ:LeXi}) we then note that $1 - \rho_1^2 = \rho_2^2$ and $1 - \rho_2^2 = \rho_1^2$. 
	The evaluation $\arccos(0) \cdot 180/\pi = 90^\circ$ then readily follows.
	We leave the evaluation of other trigonometric identities to the reader.
	\QEDE
\end{example}

\subsubsection{The correlational geometry of indeterminacy in the unique factors}\label{SMSSSEC:Geom_unique}
Let us first look at the correlation between $\epsilon'$ and $\eph$.
As any constructed $\epsilon'$ behaves as $\epsilon$ (Section \ref{SMSSEC:Abide}) it is immediate (Section \ref{SMSSEC:SMCeps}) that 
\begin{equation}\label{SMEQ:SMCep}
	\Big[\PS^{-1/2}\mathbb{E}\big(\eph\eph^{\top}\big)\PS^{-1/2}\Big]_{jj},
\end{equation}
represents the squared multiple correlation between $\eph_j$ and any construction $\epsilon'_{j}$.
Its square root then represents the multiple correlation between $\eph_j$ and any $\epsilon'_{j}$ stemming from the construction system, for which we will write $\rho(\eph_j, \epsilon'_{j})$.
We may also assess this algebraically.
It is immediate that
\begin{equation*}
	\mathbb{E}\big(\epsilon'\eph^{\top}\big) = \mathbb{E}\bigg\{\Big[\eph - \LA\mathbf{\Sigma}(\mathbf{\Theta})_{\xi|x}^{1/2}\mathbf{H}s\Big]\eph^{\top}\bigg\}
	= \mathbb{E}\big(\eph\eph^{\top}\big),
\end{equation*}
when we realize that $\eph$ can be written as $x - \Com\big(\Com + \PS\big)^{-1}x$ (Section \ref{SMSSEC:Regresse}) and that $x \ci s$ (Section \ref{SMSSEC:Construct}).
This implies, from the development in Section \ref{SMSSEC:SMCeps}, that (\ref{SMEQ:SMCep}) carries the squared multiple correlation between between $\eph_j$ and any construction $\epsilon'_{j}$ and that its square root represents $\rho(\eph_j, \epsilon'_{j})$.

Now we turn to the Guttman bound for alternative constructions $\epsilon'_j$ and $\epsilon''_{j}$ for which we will first establish the conditional distribution $\epsilon | x$.
The expectation $\mathbb{E}(\epsilon | x)$ was established in Section \ref{SMSSEC:Regresse}.
The conditional covariance can be found as:
\begin{equation*}
	\mathbf{\Sigma}(\mathbf{\Theta})_{\epsilon|x} = \mathbf{\Sigma}(\mathbf{\Theta})_{\epsilon\epsilon} - \mathbf{\Sigma}(\mathbf{\Theta})_{\epsilon x}\mathbf{\Sigma}(\mathbf{\Theta})_{xx}^{-1}\mathbf{\Sigma}(\mathbf{\Theta})_{x\epsilon} = \PS - \PS\big(\Com + \PS\big)^{-1}\PS.
\end{equation*}
Hence, 
\begin{equation*}
	\epsilon | x \sim \mathcal{N}_p\Big[\PS\big(\Com + \PS\big)^{-1}x, \PS - \PS\big(\Com + \PS\big)^{-1}\PS\Big].
\end{equation*}
We then take interest in
\begin{equation*}
	\upsilon_j \equiv \inf\Big\{\rho\big(\epsilon_{j}',\epsilon_{j}''\big) | \epsilon_{j}',\epsilon_{j}'' \in \Xi\Big\},
\end{equation*}
the minimal correlation between the $j$th elements in any two constructions $\epsilon'$ and $\epsilon''$ in $\Xi$.
When we recall $\xi' \equiv \xih + \mathbf{\Sigma}(\mathbf{\Theta})_{\xi|x}^{1/2}\mathbf{H}s$ and the maximally different $\xi'' \equiv \xih - \mathbf{\Sigma}(\mathbf{\Theta})_{\xi|x}^{1/2}\mathbf{H}s$  we can define (Section \ref{SMSSEC:Construct}):
\begin{equation*}
	\epsilon' \equiv x - \LA\xi' = \eph - \LA\mathbf{\Sigma}(\mathbf{\Theta})_{\xi|x}^{1/2}\mathbf{H}s,
\end{equation*}
and the maximally different
\begin{align*}
	\epsilon'' &\equiv x - \LA\xi'' \\
	&= x - \LA\Big[\xih - \mathbf{\Sigma}(\mathbf{\Theta})_{\xi|x}^{1/2}\mathbf{H}s\Big] \\
	&= x - \Com\big(\Com + \PS\big)^{-1}x + \LA\mathbf{\Sigma}(\mathbf{\Theta})_{\xi|x}^{1/2}\mathbf{H}s \\
	&= \eph + \LA\mathbf{\Sigma}(\mathbf{\Theta})_{\xi|x}^{1/2}\mathbf{H}s.
\end{align*}
We then have that, independently of $\epsilon' \in \Xi$,
\begin{align}\label{SMEQ:infep}\nonumber
	\mathbb{E}\big(\epsilon'\epsilon''^{\top}\big) &= 
	\mathbb{E}\bigg\{\Big[\eph - \LA\mathbf{\Sigma}(\mathbf{\Theta})_{\xi|x}^{1/2}\mathbf{H}s\Big]
	\Big[\eph + \LA\mathbf{\Sigma}(\mathbf{\Theta})_{\xi|x}^{1/2}\mathbf{H}s\Big]^{\top}\bigg\} \\
	&= \PS\big(\Com + \PS\big)^{-1}\PS - \Com\big(\Com + \PS\big)^{-1}\PS,
\end{align}
as a direct consequence of the developments in Section \ref{SMSSEC:HatAdhere}.
We will write (\ref{SMEQ:infep}) into a more convenient form.
Let us focus on the second term.
Note that
\begin{align*}
	- \Com\big(\Com + \PS\big)^{-1}\PS &= \PS\Big[\PSi - \PSi\Com\big(\Com + \PS\big)^{-1}\Big]\PS - \PS \\
	&= \PS(\Com + \PS\big)^{-1}\PS - \PS,
\end{align*}
by Lemma 1 of the MT.
We can then rewrite (\ref{SMEQ:infep}) as
\begin{equation*}
	\PS\big(\Com + \PS\big)^{-1}\PS + \PS(\Com + \PS\big)^{-1}\PS - \PS = 
	2\PS(\Com + \PS\big)^{-1}\PS - \PS.
\end{equation*}
Analogous to the Guttman bound on constructions of $\xi$ we have the following equivalence:
\begin{equation*}
	2\PS(\Com + \PS\big)^{-1}\PS - \PS = \PS - 2\mathbf{\Sigma}(\mathbf{\Theta})_{\epsilon|x} \equiv \mathbf{\Upsilon}.
\end{equation*}
From the developments in Section \ref{SMSSEC:SMCeps} we then understand that the minimal correlations we seek are contained in 
\begin{equation*}
	\upsilon_j = \big(\PS^{-1/2}\mathbf{\Upsilon}\PS^{-1/2}\big)_{jj} = 
	\Big[2\PS^{1/2}(\Com + \PS\big)^{-1}\PS^{1/2} - \mathbf{I}_p\Big]_{jj} = 2\rho(\eph_j, \epsilon'_{j})^2 - 1.
\end{equation*}
We can also express the related variance of the difference between two maximally different constructions $\epsilon'$ and $\epsilon''$:
\begin{align*}
	\mathbb{E}\Big[\big(\epsilon' - \epsilon''\big)\big(\epsilon' - \epsilon''\big)^{\top}\Big] &= \mathbb{E}\bigg\{-2\LA\mathbf{\Sigma}(\mathbf{\Theta})_{\xi|x}^{1/2}\mathbf{H}s\Big[-2\LA\mathbf{\Sigma}(\mathbf{\Theta})_{\xi|x}^{1/2}\mathbf{H}s\Big]^{\top}\bigg\} \\
	&= 4\LA\mathbf{\Sigma}(\mathbf{\Theta})_{\xi|x}^{1/2}\mathbf{H}\mathbb{E}\big(ss^{\top}\big)\mathbf{H}^{\top}\mathbf{\Sigma}(\mathbf{\Theta})_{\xi|x}^{1/2}\LAt \\
	&= 4\LA\mathbf{\Sigma}(\mathbf{\Theta})_{\xi|x}\LAt \\
	&= 4\mathbf{\Sigma}(\mathbf{\Theta})_{\epsilon|x},
\end{align*}
where the last step follows directly from Lemma 2 of the MT.
Note that these considerations can be given a geometrical interpretation analogous to the correlational geometry of indeterminacy in the common factors as described in the preceding subsection.

\subsubsection{Correlations between constructions of $\xi$ and $\epsilon$ maximally different from $0$}\label{SMSSSEC:MinCorXiEp}
Above we have worked out Guttman bounds for constructions of $\xi$ as well as constructions of $\epsilon$. 
We may also ask ourselves to find the correlation between the $k$th element in a construction of $\xi$ and the $j$th element in a construction of $\epsilon$, say $\rho(\xi_{k}',\epsilon_{j}'')$, maximally different from the postulated ideal of $0$.
We are then, as these correlations may be both upwards or downwards from $0$, after establishing:
\begin{equation*}
	\kappa_{kj} \equiv \sup\Big\{|\rho\big(\xi_{k}',\epsilon_{j}''\big)| \,\big|\, \xi_{k}',\epsilon_{j}'' \in \Xi\Big\},
\end{equation*}
where $|\rho\big(\xi_{k}',\epsilon_{j}''\big)|$ denotes the absolute value of $\rho\big(\xi_{k}',\epsilon_{j}''\big)$.
Two maximally different $\xi'$ and $\epsilon''$ would be where $\epsilon''$ belongs to $\xi''$ maximally different from $\xi'$.
Hence, two maximally different $\xi'$ and $\epsilon''$ (in standardized form) can be expressed as:
\begin{align*}
	\xi' &\equiv \xih + \mathbf{\Sigma}(\mathbf{\Theta})_{\xi|x}^{1/2}\mathbf{H}s \\
	\PS^{-1/2}\epsilon'' &\equiv \PS^{-1/2}\eph + \PS^{-1/2}\LA \mathbf{\Sigma}(\mathbf{\Theta})_{\xi|x}^{1/2}\mathbf{H}s.
\end{align*}
We then have that, for any $\xi'$ and its maximally different $\epsilon''$ in $\Xi$,
\begin{align*}
	\mathbb{E}\Big[\xi'\big(\PS^{-1/2}\epsilon''\big)^{\top}\Big] &= 
	\mathbb{E}\bigg\{\Big[\xih + \mathbf{\Sigma}(\mathbf{\Theta})_{\xi|x}^{1/2}\mathbf{H}s\Big]
	\Big[\eph + \LA\mathbf{\Sigma}(\mathbf{\Theta})_{\xi|x}^{1/2}\mathbf{H}s\Big]^{\top}\bigg\}\PS^{-1/2} \\
	&= \Big[\mathbb{E}\big(\xih\eph^{\top}\big) + \mathbf{\Sigma}(\mathbf{\Theta})_{\xi|x}^{1/2}\mathbf{H}\mathbb{E}\big(ss^{\top}\big)\mathbf{H}^{\top}\mathbf{\Sigma}(\mathbf{\Theta})_{\xi|x}^{1/2}\LAt\Big]\PS^{-1/2} \\ 
	&= \Big[\mathbf{\Sigma}(\mathbf{\Theta})_{\xi|x}\LAt + \mathbf{\Sigma}(\mathbf{\Theta})_{\xi|x}\LAt\Big]\PS^{-1/2} \\ 
	&= 2\mathbf{\Sigma}(\mathbf{\Theta})_{\xi|x}\LAt\PS^{-1/2},
\end{align*}
by consecutively using that that $x \ci s$ (Section \ref{SMSSEC:Construct}), $\mathbf{H}\mathbb{E}(ss^{\top})\mathbf{H}^{\top} = \mathbf{H}\mathbf{H}^{\top} = \mathbf{I}_m$ (Section \ref{SMSSEC:Abide}), and earlier results on the expression of $\mathbb{E}\big(\xih\eph^{\top}\big)$ (Section \ref{SMSSEC:HatAdhere}).
Hence, 
\begin{equation*}
	\kappa_{kj} = \Big[2\mathbf{\Sigma}(\mathbf{\Theta})_{\xi|x}\LAt\PS^{-1/2}\Big]_{kj}.
\end{equation*}

%%%%%%%%%%%%%%%%%%%%%%%%%%%%
%%%-Details MSE ------------
%%%%%%%%%%%%%%%%%%%%%%%%%%%%
\subsection{Details for Section 2.6 of the Main Text}\label{SMSEC:Supp2.6}
Section \ref{SMSSSEC:MSExi} expresses the mean squared error (MSE) between $\xi'$ and $\xih$ in terms of covariances.
Section \ref{SMSSEC:compose} then explicates this MSE as a sum of variance and squared bias. 
Section \ref{SMSSEC:DoubleBeta} subsequently relates the squared bias term to the negation of the Guttman bound and the double $\beta$ angle from Figure 2 of the MT.
Lastly, Section \ref{SMSSEC:MSEep} explicates the mean squared error between $\epsilon'$ and $\eph$.

\subsubsection{The mean squared error between $\xi'$ and \texorpdfstring{$\xih$}{}}
\label{SMSSSEC:MSExi}
We have previously seen that we may use the $\ell_2$-norm to define distance between random variables.
This implies that we can also express the MSE in terms of covariances.
Irrespective of $\xi' \in \Xi$ we have that:
\begin{align*}
	\big\|\xi' - \xih\big\|_{2}^{2} &= \tr\mathbb{E}\Big[\big(\xi' - \xih\big)\big(\xi' - \xih\big)^{\top}\Big] \\
	&= \tr\Big[\mathbb{E}\big(\xi'\xi'^{\top}\big) - \mathbb{E}\big(\xih\xih^{\top}\big)\Big] \\
	&= \tr\Big[\PH - \PH\LAt\big(\Com + \PS\big)^{-1}\LA\PH\Big] \\
	&= \tr\Big[\mathbf{\Sigma}(\mathbf{\Theta})_{\xi|x}\Big],
\end{align*}
where the second equivalence follows from Section \ref{SMSSEC:resid}.
As $\xi' - \xih \equiv \tau'$, we have that the MSE between $\xi'$ and $\xih$ can also be expressed as:
\begin{equation*}
	\big\|\xi' - \xih\big\|_{2}^{2} = \big\|\tau'\big\|_{2}^{2} = \tr\mathbb{E}\big(\tau'\tau'^{\top}\big).
\end{equation*}
Hence, it expresses the $k$-sum of squared multiple correlations $\rho(\tau_k', \xi_k')^2$.

\subsubsection{The mean squared error as a composition of variance and bias}
\label{SMSSEC:compose}
It can be insightful to write our MSE of interest as a sum of variance and squared bias. 
Using $\mathbb{E}(\xi\xih^{\top}) = \mathbb{E}(\xih\xi^{\top}) = \mathbb{E}(\xih\xih^{\top})$ (Sections \ref{SMSSEC:EpressXiXiHE} and \ref{SMSSEC:EpressXiXiH}) we have:
\begin{equation}\label{SMEQ:varbias}
	\mathbb{E}\Big[\big(\xi' - \xih\big)\big(\xi' - \xih\big)^{\top}\Big] = 
	\underbrace{\mathbb{E}\big(\xih\xih^{\top}\big)}_{\mathrm{variance}} + 
	\underbrace{\mathbb{E}\big(\xi'\xi'^{\top}\big) - 2\mathbb{E}\big(\xih\xih^{\top}\big)}_{\mathrm{squared \,bias}}.
\end{equation}
From our previous developments expression (\ref{SMEQ:varbias}) amounts to:
\begin{equation*}\label{SMEQ:varbiasexplicate}
	\PH\LAt\big(\Com + \PS\big)^{-1}\LA\PH + \PH -2\PH\LAt\big(\Com + \PS\big)^{-1}\LA\PH.
\end{equation*}
This last expression, using information from Section \ref{SMSSEC:Gut}, can be rewritten as:
\begin{align*}
	\PH\LAt\big(\Com + \PS\big)^{-1}\LA\PH + 2\mathbf{\Sigma}(\mathbf{\Theta})_{\xi|x} - \PH = 
	\PH\LAt\big(\Com + \PS\big)^{-1}\LA\PH - \mathbf{\Gamma},
\end{align*}
directly implying that we can express our MSE of interest also as:
\begin{equation*}
	\tr\Big[\PH\LAt\big(\Com + \PS\big)^{-1}\LA\PH\Big] - \tr\big(\mathbf{\Gamma}\big).
\end{equation*}
The Guttman bound may thus be seen as the negation of the squared bias between $\xi'$ and $\xih$.

\subsubsection{The double $\beta$ angle}
\label{SMSSEC:DoubleBeta}
Consider the $\beta$ angle from Figure 2 of the MT.
From Section \ref{SMSSS:FigCone} we have that the cosine of this angle represents the correlation $\rho(\tau_k', \xi_k')$ between the arbitrary (rotated) component $\tau_k'$ and construction $\xi_k'$.
By the well-known double angle identities we then have that
\begin{equation}\label{SMEQ:doubbeta}
	2\beta = \arccos\big[2\rho(\tau_k', \xi_k')^2 - 1\big] \cdot \frac{180}{\pi}
\end{equation}
expresses the double angle (in degrees).
From the developments in Section \ref{SMSSS:FigCone} and Figure 2 we have the following identity:
\begin{equation*}
	\sqrt{1 - \rho(\xih_k, \xi_k')^2} = \rho(\tau_k', \xi_k').
\end{equation*}
Using this identity, we can establish further identities:
\begin{align*}
	2\rho(\tau_k', \xi_k')^2 - 1 &= 2\bigg[\sqrt{1 - \rho(\xih_k, \xi_k')^2}\bigg]^2 - 1 \\
	&= 2\Big[1 - \rho(\xih_k, \xi_k')^2\Big] -1 \\
	%&= 2 - 2\rho(\xih_k, \xi_k')^2 - 1 \\
	&= 1 - 2\rho(\xih_k, \xi_k')^2 \\
	&= -\gamma_k,
\end{align*}
implying that (\ref{SMEQ:doubbeta}) can also be expressed as:
\begin{equation}\label{SMEQ:doubbeta2}
	\arccos(-\gamma_k) \cdot \frac{180}{\pi}.
\end{equation}

\begin{remark}
	We could have obtained expression (\ref{SMEQ:doubbeta2}) also from our previously found matrix expressions by noticing that:
	\begin{equation*}
		2\rho(\tau_k', \xi_k')^2 - 1 = \Big[2\mathbf{\Sigma}(\mathbf{\Theta})_{\xi|x} - \PH\Big]_{kk} = -\Big[\PH - 2\mathbf{\Sigma}(\mathbf{\Theta})_{\xi|x} \Big]_{kk} = -(\mathbf{\Gamma})_{kk} = -\gamma_k.
	\end{equation*}
\end{remark}

\begin{remark}
	The correlation $\rho(\xih_k, \xi_k')$ can be evaluated in empirical settings. The quantities $[1 - \rho(\xih_k, \xi_k')^2]^{1/2}$ and $-\gamma_k$ may then serve as (alternative) metrics for indeterminacy in the sense that they can be used to quantify, through $\rho(\xih_k, \xi_k')$, the dependency on arbitrary indeterminate components $\tau'_k$. 
\end{remark}

Let us look at some intuitive examples of $2\beta$ in the context of Figure 2 of the MT.

\begin{example}
	At one extreme we find the full protrusion of the equator formed by the rotational representations of $\tau$ and thus a full collapse of the cone onto our equator. 
	In this case $\rho(\xih_k, \xi_k')^2 = 0$ such that $\gamma_k = -1$. This implies that $2\beta = \arccos(1) \cdot 180/\pi = 0^\circ$. This expresses full dependency on the arbitrary indeterminate component $\tau'_k$ or complete bias.
	\QEDE
\end{example}

\begin{example}
	At the other extreme we find a full contraction of the cone around $\xih_k$.
	In that case $\rho(\xih_k, \xi_k')^2 = 1$ such that $\gamma_k = 1$.
	This implies that $2\beta = \arccos(-1) \cdot 180/\pi = 180^\circ$, expressing absence of dependency on the arbitrary indeterminate component $\tau'_k$ and thus absence of bias.
	\QEDE
\end{example}

\subsubsection{The mean squared error between $\epsilon'$ and \texorpdfstring{$\eph$}{}}
\label{SMSSEC:MSEep}
We can also express the MSE between $\epsilon' \in \Xi$ and $\eph$.
First we note that, as $s \ci x$:
\begin{equation*}
	\mathbb{E}\big(\epsilon'\eph^{\top}\big) = \mathbb{E}\bigg\{\Big[\eph + \mathbf{\Sigma}(\mathbf{\Theta})_{\xi|x}^{1/2}\mathbf{H}s\Big]\eph^{\top}\bigg\} = \mathbb{E}\big(\eph\eph^{\top}\big).
\end{equation*}
Using this implication we have that:
\begin{align*}
	\big\|\epsilon' - \eph\big\|_{2}^{2} &= \tr\mathbb{E}\Big[\big(\epsilon' - \eph\big)\big(\epsilon' - \eph\big)^{\top}\Big] \\
	&= \tr\Big[\mathbb{E}\big(\epsilon'\epsilon'^{\top}\big) - \mathbb{E}\big(\eph\eph^{\top}\big)\Big] \\
	&= \tr\Big[\PS - \PS\big(\Com + \PS\big)^{-1}\PS\Big] \\
	&= \tr\Big[\mathbf{\Sigma}(\mathbf{\Theta})_{\epsilon|x}\Big].
\end{align*}
These findings can be given analogous geometrical considerations as with $\xi'$ and $\xih$.

%%%%%%%%%%%%%%%%%%%%%%%%%%%%
%%%-Details Markov ---------
%%%%%%%%%%%%%%%%%%%%%%%%%%%%
\subsection{Details for Section 2.7 of the Main Text}\label{SMSEC:Supp2.7}
Section \ref{SMSSEC:Bipartite} gives a formal definition of the directed bipartite mixed graph with which the factor model may be viewed as a graphical object.
This graphical object may be seen as the marginalization of the traditional path diagram of the factor model over the error variables.
Section \ref{SMSSEC:Marginalize} explains why we may perform this marginalization.
Section \ref{SMSSEC:Moralize} explains the separation criterion with which we can understand the Markov properties of our graphical object.
Section \ref{SMSSEC:PrecMat} derives the precision matrix whose support has a one-on-one relation with graphical separation in our object of interest.
Lastly, Section \ref{SMSSEC:AddPers} provides an additional view on (in)determinacy from the perspective of Markov properties.

\subsubsection{Directed bipartite mixed graphs}
\label{SMSSEC:Bipartite}
The path diagram of a factor model marginalized over the unique (error) factors is a special instance of a directed acyclic mixed graph.
It can be seen as a directed bipartite mixed graph:
\begin{definition}[Directed bipartite mixed graphs]\label{DEF:DPG}
	Consider the random variables $\xi \in \mathbb{R}^m$ and $x \in \mathbb{R}^p$.
	A graph $\mathcal{G} = (\mathcal{V}, \mathcal{E})$ is a \emph{directed bipartite graph} if $\mathcal{V} = \mathcal{V}_{\xi} \cup \mathcal{V}_{x}$ with $\mathcal{V}_{\xi}$ and $\mathcal{V}_{x}$ disjoint, and $\mathcal{E} = \mathcal{E}_{\xi x}$ such that ordered pairs $\xi_{k} \rightarrow x_{j} \in \mathcal{E}_{\xi x}$.
	That is, each edge in $\mathcal{E}_{\xi x}$ originates on one vertex in $\mathcal{V}_{\xi}$ and is directed towards one vertex in $\mathcal{V}_{x}$.
	We have a \emph{directed bipartite mixed graph} if, furthermore, $\mathcal{E} = \mathcal{E}_{\xi \xi} \cup \mathcal{E}_{\xi x} \cup \mathcal{E}_{x x}$ with $\mathcal{E}_{\xi \xi}$, $\mathcal{E}_{\xi x}$, and $\mathcal{E}_{x x}$ mutually disjoint, with edges in $\mathcal{E}_{\xi \xi}$ consisting of pairs of distinct vertices in $\mathcal{V}_{\xi}$ such that $\xi_{k} \leftrightarrow \xi_{k'} \in \mathcal{E}_{\xi \xi}$ and with edges in $\mathcal{E}_{x}$ consisting of pairs of distinct vertices in $\mathcal{V}_{xx}$ such that $x_{j} \leftrightarrow x_{j'} \in \mathcal{E}_{xx}$.
	The directed bipartite (mixed) graph is termed \emph{directionally complete} if $\xi_{k} \rightarrow x_{j} \in \mathcal{E}_{\xi x} \,\forall (k,j), k = 1, \ldots, m$ and $j = 1, \ldots, p$.
\end{definition}
We have cast this definition in our factor model notation.
Hence, the traditional (orthogonally rotated) exploratory factor model can be viewed as a complete directed bipartite graph.
The traditional confirmatory model -- with exclusion restrictions in $\LA$, $\PH \neq \mathbf{I}_m$, and a possibly non-diagonal $\PS$ -- may be seen as a directed bipartite mixed graph.
An obliquely rotated exploratory model can then be conceived as a directionally complete bipartite mixed graph.

\subsubsection{Marginalizing the factor model path diagram over $\epsilon$}
\label{SMSSEC:Marginalize}
Traditionally, factor model structures have been visualized by way of path diagrams \citep[see, e.g.,][]{SMref_Bollen1989}.
These correspond closely to the type of graph described in Section 2.7 of the MT and defined in Definition \ref{DEF:DPG}.
They differ in the sense that there is a directed edge from each unique factor $\epsilon_j$ to the corresponding observable $x_j$ and there may be bidirected edges $\epsilon_{j} \leftrightarrow \epsilon_{j'}$ if $\psi_{jj'} = \psi_{j'j} \neq 0$.
Hence, the type of graph described in Section 2.7 and implied by Definition \ref{DEF:DPG} may be seen as a marginalization of the traditional path diagram over the unique factors in $\epsilon$.
This of course renders the question of why we may perform this marginalization.
We will explain the various reasons below.

First, usage of $\epsilon_j \rightarrow x_j$ would violate the meaning of $\rightarrow$ in the causal graph.
That is, the $\epsilon_j$ do not causally influence the $x_j$.
They are rather seen as disturbance or residual terms.
(Note that this is also the reason for an undirected rather than directed connection between $\epsilon$ and $x$ in the model diagram in Figure 1 of the MT).

Second, the vector $[x^{\top}, \xi^{\top}, \epsilon^{\top}]^{\top}$ has a degenerate (normal) distribution.
From (4) of the MT it is immediate that
\begin{equation*}
	\mathbf{\Sigma}(\mathbf{\Theta})_{[1]} - \mathbf{\Sigma}(\mathbf{\Theta})_{[2]} \boxdot \big(\boldsymbol{1}_3 \otimes \LAt\big) = \mathbf{\Sigma}(\mathbf{\Theta})_{[3]}.
\end{equation*}

Third, Koster \citep{ref_Koster1996_SM,ref_Koster1999_SM} has shown that deleting the error variables (from the path diagram) and connecting $x_{j}$ and $x_{j'}$ by a double-headed arrow $x_{j} \leftrightarrow x_{j'}$ whenever $\psi_{jj'} = \psi_{j'j} \neq 0$, retains a one-on-one correspondence between the resulting graph (as per Definition \ref{DEF:DPG}) and the central model equation (Equation (1) from the MT).
This may also be (indirectly) seen from the symmetric indefinite factorization of the Choleski decomposition of the joint distribution of $[x^{\top}, \xi^{\top}]^{\top}$,
\begin{align*}
	\begin{bmatrix}
		\LA\phantom{^{\top}}  &  \mathbf{I}_p \\
		\mathbf{I}_m & \boldsymbol{0}
	\end{bmatrix}
	\begin{bmatrix}
		~~\PH\phantom{^{\top}}  &  \boldsymbol{0} \\
		\boldsymbol{0} & ~~\PS\phantom{^{\top}} 
	\end{bmatrix}
	\begin{bmatrix}
		\LAt  &  \mathbf{I}_m \\
		\mathbf{I}_p & \boldsymbol{0}
	\end{bmatrix}
	=
	\begin{bmatrix}
		\Com + \PS  & \LA\PH \\
		\PH\LAt  & \PH
	\end{bmatrix},
\end{align*}
which depends pivotally on the covariance matrix of the joint distribution $[\xi^{\top}, \epsilon^{\top}]^{\top}$.

\subsubsection{$\mathfrak{m}$-separation}
\label{SMSSEC:Moralize}
The problem of inferring the (global) Markov properties (i.e., the
probabilistic conditional (in)dependencies) from probabilistic graphs has been an active programme of research \citep{ref_Koster1996_SM,SMref_SpirtesSMR98,ref_Koster1999_SM}.
Directly reading off the Markov properties from graphs implied by Definition \ref{DEF:DPG} can be achieved by generalizing Pearl's notion of \emph{d}-separation \citep{SMref_Pearl2009} to $\mathfrak{m}$-separation \citep{SMref_SpirtesSMR98,ref_Koster1999_SM,SMref_Rich2003}. 
Before defining this graphical separation criterion some further notation
and terminology needs to be introduced. 
Let $e$ denote a generic edge in $\mathcal{E}$ and let $v$ denote a generic vertex in $\mathcal{V}$. 
Additionally, let $v_a$ and $v_b$ denote two distinct vertices. 
A \emph{path} $\rho_{ab}$ from $v_a$ to $v_b$ is then defined as an ordered $r$-tuple of edges $\rho_{ab} = (e_{1},\ldots, e_{r})$, $r \geq 1$, for which there exists $r + 1$ distinct vertices $\{ v_{0}, \ldots, v_{r} \}$, s.t. $v_{0} = v_{a}$ and $v_{r} = v_{b}$ and $e_{l} \in \{ v_{l-1}\rightarrow
v_{l},v_{l-1}\leftarrow v_{l}, v_{l-1}\leftrightarrow v_{l} \}$ for $l = 1, \ldots, r$. 
If $r > 1$, vertices in $\{ v_{l}| 0 < l < r\}$ are termed \emph{intermediate}. 
A path $\rho_{ab}$ is a \emph{directed path} $\rho_{ab}^{\rightarrow}$ if $e_{l} = v_{l-1} \rightarrow v_{l}$ $\forall 1 \leq l \leq r$. 
Now, define the set of \emph{descendants} for vertex $v_{a}$ as $\mbox{de}(v_{a}) \equiv \{v \in \mathcal{V}| \mbox{there exists in}~ \mathcal{G} ~\mbox{a directed path from}~ v_{a} ~\mbox{to}~ v \}$. 
This definition carries over to sets of vertices, e.g., let $A$ denote a
set of vertices, then $\mbox{de}(A) = \cup_{v \in A} \mbox{de}(v)$.
Fusing the work of Spirtes \emph{et al.}\ \citep{SMref_SpirtesSMR98}, Koster \citep{ref_Koster1999_SM}, and Richardson \citep{SMref_Rich2003}, we define:
\begin{definition}[$\mathfrak{m}$-separation]\label{mSep}
	Let $v_{l}$ be an intermediate vertex on path $\rho_{ab} = (e_{1},
	\ldots, e_{r})$ from $v_a$ to $v_b$ (from $a$ to $b$ for short).
	Then $v_{l}$ is a \emph{collider} on $\rho_{ab}$ if $e_{l} \in \{
	v_{l-1}\rightarrow v_{l},v_{l-1}\leftrightarrow v_{l} \}$ $\wedge$
	$e_{l+1} \in \{ v_{l}\leftarrow v_{l+1},v_{l}\leftrightarrow v_{l+1}
	\}$. If the edges preceding and succeeding $v_{l}$ on $\rho_{ab}$ do
	not have meeting arrowheads at $v_{l}$, then the intermediate vertex
	$v_{l}$ is a \emph{non-collider} on $\rho_{ab}$, i.e., $e_{l} =
	v_{l-1}\leftarrow v_{l} \,\wedge\, e_{l+1} \in \{ v_{l}\leftarrow
	v_{l+1},v_{l}\rightarrow v_{l+1},v_{l}\leftrightarrow v_{l+1} \}$ or
	$e_{l+1} = v_{l}\rightarrow v_{l+1} \wedge e_{l} \in \{
	v_{l-1}\leftarrow v_{l},v_{l-1}\rightarrow
	v_{l},v_{l-1}\leftrightarrow v_{l} \}$. A path $\rho_{ab}$ from $a$
	to $b$ in $\mathcal{G}$ is then \emph{pathwise $\mathfrak{m}$-separated}
	by a set of vertices $C \subseteq \mathcal{V}\setminus\{a,b\}$ if and only if
	\begin{enumerate}
		\item[i.] $\{ v_{l}| v_{l} ~\mbox{is a non-collider on}~ \rho_{ab} \} \cap C \neq \emptyset$; or
		%\item[2.] $\{ \mathcal{V}_{l}| \mathcal{V}_{l} ~\mbox{is a collider on}~ \rho \} \nsubseteq \mbox{\emph{an}}(c)$.
		\item[ii.] $\exists \{ v_{l}| v_{l} ~\mbox{is a collider on}~ \rho_{ab} \} \equiv S$ s.t. $S \cap C = \emptyset ~\wedge~
		\mbox{\emph{de}}(S) \cap C = \emptyset$.
	\end{enumerate}
	If $C$ pathwise $\mathfrak{m}$-separates every path from $a$ to $b$, then
	$a$ to $b$ are said to be \emph{$\mathfrak{m}$-separated} given $C$. If
	$C$ does not $\mathfrak{m}$-separate $a$ from $b$, then $a$ and $b$ are
	said to be \emph{$\mathfrak{m}$-connected} given $C$.
\end{definition}

The $\mathfrak{m}$-separation criterion defines an conditional (in)dependency model over $\mathcal{V}$ that extends to (arbitrary) pairwise disjoint subsets $A, B$, and $C$ (C may be empty): If $A$ is $\mathfrak{m}$-separated from $B$ by $C$ in $\mathcal{G}$, then, in any distribution $\mathcal{P}$ compatible with $\mathcal{G}$, $A \ci B~|~C ~[\mathcal{P}]$. 
If we assume that probabilistic independencies are due to separation in the graph (faithfulness assumption) then $\mathcal{G}$, $A \ci B~|~C ~[\mathcal{P}]$ implies that $A$ is $\mathfrak{m}$-separated from $B$ by $C$ in $\mathcal{G}$.
Koster \citep{ref_Koster1999_SM} gives notable results on $\mathfrak{m}$-separation. 
For (arbitrary) pairwise disjoint subsets $A, B$, and $C$ (C may be empty): whenever $v_a \in A$ and $v_b \in B$, $A$ and $B$ are $\mathfrak{m}$-separated by $C$ \emph{iff} there are no paths from $a$ to $b$ in which every intermediate vertex is a collider. 
These results allow for an extension of the moralization criterion \citep{ref_Moral_SM} on the basis of which a transformed graph can be constructed that immediately conveys the conditional independencies by undirected graphical adjacency.
Moreover, it allows us to characterize the Markov properties of the model by means of (the support of) the concentration or precision matrix (on $x$ and $\xi$).

\subsubsection{Precision matrix derivation}
\label{SMSSEC:PrecMat}
We are interested in finding the inverse
\begin{align*}
	\begin{bmatrix}
		\mathbf{\Sigma}(\mathbf{\Theta})_{xx}  &  \mathbf{\Sigma}(\mathbf{\Theta})_{x\xi} \\
		\mathbf{\Sigma}(\mathbf{\Theta})_{\xi x} & \mathbf{\Sigma}(\mathbf{\Theta})_{\xi\xi}
	\end{bmatrix}^{-1}.
\end{align*}
By our model postulates and assumptions $\mathbf{\Sigma}(\mathbf{\Theta})_{\xi\xi}$ has nonsingular Schur complement $\mathbf{E} = \mathbf{\Sigma}(\mathbf{\Theta})_{xx} - \mathbf{\Sigma}(\mathbf{\Theta})_{x\xi}\mathbf{\Sigma}(\mathbf{\Theta})_{\xi\xi}^{-1}\mathbf{\Sigma}(\mathbf{\Theta})_{\xi x}$.
We may then obtain the desired inverse through block matrix inversion:
\begin{align}\label{SMEQ:PrecMat}
	\begin{bmatrix}
		\mathbf{E}^{-1}  &  -\mathbf{E}^{-1}\mathbf{\Sigma}(\mathbf{\Theta})_{x\xi}\mathbf{\Sigma}(\mathbf{\Theta})_{\xi\xi}^{-1} \\
		-\mathbf{\Sigma}(\mathbf{\Theta})_{\xi\xi}^{-1}\mathbf{\Sigma}(\mathbf{\Theta})_{\xi x}\mathbf{E}^{-1} & \mathbf{\Sigma}(\mathbf{\Theta})_{\xi\xi}^{-1} + \mathbf{\Sigma}(\mathbf{\Theta})_{\xi\xi}^{-1}\mathbf{\Sigma}(\mathbf{\Theta})_{\xi x}\mathbf{E}^{-1}\mathbf{\Sigma}(\mathbf{\Theta})_{x\xi}\mathbf{\Sigma}(\mathbf{\Theta})_{\xi\xi}^{-1}
	\end{bmatrix}.
\end{align}
We then recall Theorem 2.5.1 from Anderson \citep{SMref_AndersonBible}, showing that $\mathbf{\Pi} \equiv \mathbf{\Sigma}(\mathbf{\Theta})_{x\xi}\mathbf{\Sigma}(\mathbf{\Theta})_{\xi\xi}^{-1}$ is the matrix of regression coefficients of $x$ on $\xi$ and $\mathbf{E} = \mathbf{\Sigma}(\mathbf{\Theta})_{x|\xi}$.
Hence, we can express the inverse (\ref{SMEQ:PrecMat}) as:
\begin{align*}
	\begin{bmatrix}
		\mathbf{\Sigma}(\mathbf{\Theta})_{x|\xi}^{-1}  &  	-\mathbf{\Sigma}(\mathbf{\Theta})_{x|\xi}^{-1}\mathbf{\Pi} \\
		-\mathbf{\Pi}^{\top}\mathbf{\Sigma}(\mathbf{\Theta})_{x|\xi}^{-1}  & 
		\mathbf{\Sigma}(\mathbf{\Theta})_{\xi\xi}^{-1} + \mathbf{\Pi}^{\top}\mathbf{\Sigma}(\mathbf{\Theta})_{x|\xi}^{-1}\mathbf{\Pi}
	\end{bmatrix}.
\end{align*}
At this point we note that the desired inverse may also be defined in terms of the Schur complement on $\mathbf{\Sigma}(\mathbf{\Theta})_{xx}$, which is $\mathbf{\Sigma}(\mathbf{\Theta})_{\xi\xi} - \mathbf{\Sigma}(\mathbf{\Theta})_{\xi x}\mathbf{\Sigma}(\mathbf{\Theta})_{xx}^{-1}\mathbf{\Sigma}(\mathbf{\Theta})_{x\xi} = \mathbf{\Sigma}(\mathbf{\Theta})_{\xi|x}$, implying that $\mathbf{\Sigma}(\mathbf{\Theta})_{\xi\xi}^{-1} + \mathbf{\Pi}^{\top}\mathbf{\Sigma}(\mathbf{\Theta})_{x|\xi}^{-1}\mathbf{\Pi} = \mathbf{\Sigma}(\mathbf{\Theta})_{\xi|x}^{-1}$.
Some ready algebra then gives $\mathbf{\Pi} = \LA\PH\PHi = \LA$ and $	\mathbf{\Sigma}(\mathbf{\Theta})_{x|\xi}^{-1} = \big(\Com + \PS - \LA\PH\PHi\PH\LAt\big)^{-1} = \PSi$, resulting in
\begin{align*}
	\begin{bmatrix}
		\mathbf{\Sigma}(\mathbf{\Theta})_{xx}  &  \mathbf{\Sigma}(\mathbf{\Theta})_{x\xi} \\
		\mathbf{\Sigma}(\mathbf{\Theta})_{\xi x} & \mathbf{\Sigma}(\mathbf{\Theta})_{\xi\xi}
	\end{bmatrix}^{-1}
	&=
	\begin{bmatrix}
		\mathbf{\Sigma}(\mathbf{\Theta})_{x|\xi}^{-1}  &  -\mathbf{\Sigma}(\mathbf{\Theta})_{x|\xi}^{-1}\mathbf{\Pi} \\
		-\mathbf{\Pi}^{\top}\mathbf{\Sigma}(\mathbf{\Theta})_{x|\xi}^{-1}  & 
		\mathbf{\Sigma}(\mathbf{\Theta})_{\xi|x}^{-1}
	\end{bmatrix} \\
	&= 
	\begin{bmatrix}
		\PSi  &  -\PSi\LA \\
		-\LAt\PSi & \PHi + \Gram
	\end{bmatrix}.
\end{align*}

\subsubsection{An alternate view on (in)determinacy}
\label{SMSSEC:AddPers}
One way to view determinacy from the Markov perspective is to desire $\mathbf{\Sigma}(\mathbf{\Theta})_{xx}^{-1} = \mathbf{\Sigma}(\mathbf{\Theta})_{x|\xi}^{-1}$.
This view is taken in the MT.
An alternate view on determinacy from the Markov perspective is that the following equivalency should hold: $\mathbf{\Sigma}(\mathbf{\Theta})_{x|\xih} = \mathbf{\Sigma}(\mathbf{\Theta})_{x|\xi}$.
Naturally, this implies that $[x^{\top}, \hat{\xi}^{\top}]^{\top}$ should be distributed as $[x^{\top}, \xi^{\top}]^{\top}$.
Assessing the covariance of $[x^{\top}, \hat{\xi}^{\top}]^{\top}$,
\begin{align*}
	\begin{bmatrix}
		\Com + \PS  & \LA\PH \\
		\PH\LAt  & \PH\LAt\big(\Com + \PS\big)^{-1}\LA\PH
	\end{bmatrix},
\end{align*}
it is immediate that this requires the earlier established condition that $\PH\LAt\big(\Com + \PS\big)^{-1}\LA\PH$ should equal $\PH$.
%--------------- Details 2 ----------------------------------------------
%------------------------------------------------------------------------

%------------------------------------------------------------------------
%--------------- Details 3 ----------------------------------------------
\section{Details for Section 3 of the Main Text}\label{SMSEC:Supp3}
%%%%%%%%%%%%%%%%%%%%%%%%%%%%
%%%-Details Bayesian View --
%%%%%%%%%%%%%%%%%%%%%%%%%%%%
\subsection{Details for Section 3.1 of the Main Text}
\label{SMSSEC:SuppBayes}
Section \ref{SMSSSEC:Consider} contains additional considerations on usage of the distributional assumption on $\xi$ as a prior in Bayesian machinery.
Section \ref{SMSSSEC:Uncertain} discusses some ramifications of having, from the Bayesian viewpoint on indeterminacy, alternative scoring distributions.
Section \ref{SMSSSEC:BayesRVs} characterizes the posterior plausibility (of locations of) alternative scoring distributions in case one defines a credibility interval on the marginal posterior standard deviations. 
Section \ref{SMSSSEC:BayesExpandFig} gives, for additional clarity, an expanded visualization of Figure 3 of the MT.
This visualization pertains to indeterminacy in terms of alternative scoring distributions for a single latent $\xi_k$.
Section \ref{SMSSSEC:Concentrate} then visualizes that determinacy implies  concentration of the posterior $\xi|x$ on its location $\xih$.
Quantification of the closeness of (or distance between) alternative scoring distributions is exemplified in Section \ref{SMSSSEC:Overlap}.
The metrics stemming from this quantification can be used to draw analogies between the Bayesian and construction stances on indeterminacy as exemplified in Section \ref{SMSSSEC:BayesASP}.
Lastly, Section \ref{SMSSSEC:ConcentrateEp} considers the Bayesian indeterminacy statement for $\epsilon|x$ and $\eph$.

\subsubsection{Additional Bayesian considerations}
\label{SMSSSEC:Consider}
At this point we may pose the question: May we treat the distributional assumption on $\xi$ as a prior?
We argue that one cannot view $p(\xi)$ as a prior in the usual Bayesian sense.
As $\xi$ is unobservable it can only be known through projection.
As such, $p(\xi)$ is itself decisionally indeterminate as there are infinitely many choices for $p(\xi)$ with associated $p(\xi|x)$ and there exists no empirical information to discriminate these.
Hence, $p(\xi)$ reflects the distribution on which one wishes to project the observables and it is indeed different distributional choices for $p(\xi)$ that underlie the distributional variants of common factor analysis (we will see in Section 3.3 of the MT that, in high-dimension, a distribution-free approach is possible that retrieves the true distribution of the latent source).
As such we call statement (i) from Section 1.1 a postulate rather than a prior.
Nonetheless, it can indeed be used as a prior in Bayesian machinery.

\subsubsection{The ramifications of having alternative scoring distributions}
\label{SMSSSEC:Uncertain}
There are several reasons why, also from a Bayesian view, it matters to have a handle on indeterminacy. 
From the Bayesian perspective this indeterminacy implies the existence of alternative scoring distributions. 

The first reason is that any further analysis that uses scoring according to a choice of scoring distribution ($\xih$ or otherwise) will deal with (unquantified) propagation of uncertainty. 
If there is uncertainty regarding the location $\xih$ there is propagation of uncertainty in any operation that uses $\xih$.
There are several examples.
One would be (generative) deep learning in which we have propagation of uncertainty (indeterminacy) upward in the latent hierarchy (see Sections 4.2.3, 4.4.1, and 4.4.2 of the MT).
Another example would be using $\xih$ in a classification or prediction problem.
Say we consider the following generalized linear model:
\begin{equation*}
	g\{\mathbb{E}(y)\} = \beta_0 + \beta^{\top}\xih,
\end{equation*}
where $g$ represents an appropriate link function.
Any affine transformation of $\xih$ would not alter the fit of this model.
However, if there is uncertainty regarding $\xih$ there is uncertainty regarding the actual fit in case $\xih$ would have been determinate.
Usage of constructions $\xi'$ (as in Section 2.4 of the MT) does not alleviate this problem as
\begin{equation*}
	g\{\mathbb{E}(y)\} = \beta_0 + \beta^{\top}\xi' \neq \beta_0 + \beta^{\top}\xi'',
\end{equation*}
and would introduce dependency on an arbitrary component.

A second reason stems from considering the factor space as a normed space.
A realization near $0$ would then imply average, negative below average, and positive above average performance on the latent trait, respectively. 
When there is uncertainty regarding $\xih$ there is an infinity of alternative scoring distributions and, hence, uncertainty in terms of ordination (the placement of objects, persons, or samples in latent space).
As $\|\xi - \xih\|_2^2$ becomes larger, it is increasingly possible to have alternative scoring distributions whose locations have high posterior plausibility, but whom would simultaneously imply radically differing positioning in latent space (going from positive to negative and vice versa).
See also Figure 3 of the MT.

A third, related reason is that any procedure hinging upon data representation is in need of a single choice of realization. 
That choice can have implications for representational success.
For example, let $\dot{\mathbf{X}}$ be a matrix representing a gray-scale image and assume that we want to compress this image by the factor model.
Clearly, the reconstruction 
\begin{equation*}
	\hat{\dot{\mathbf{X}}} = \bm{1}_n\bar{\boldsymbol{\mathrm{x}}}^{\top} + \doublehat{\mathbf{\Xi}}\hat{\LA}^{\top} \neq \,\,\bm{1}_n\bar{\boldsymbol{\mathrm{x}}}^{\top} + \doublehat{\mathbf{\Xi}}'\hat{\LA}^{\top},
\end{equation*}
where $\doublehat{\mathbf{\Xi}}'$ is the realized factor score matrix based on the posteriorly plausible $\xih'$ (rather than $\xih$).
Which one to choose?

\subsubsection{Characterizing posterior plausibility of alternative scoring distributions}
\label{SMSSSEC:BayesRVs}
Here we characterize the posterior plausibility (of locations of) alternative scoring distributions in case one defines a credibility interval on the marginal posterior standard deviations.
This implies a hyperrectangular region that can be characterized as a product of marginal intervals.
Hence, for each $\xi_k|x$ we are then interested in
\begin{align*}
	\Pr\Big(\xih_k - c\rho(\tau_k,\xi_{k})\, \leq \,\, &\xi_k|x \, \leq \, \xih_k + c\rho(\tau_k,\xi_{k}) \Big) = \\
	&\int_{\xih - c\rho(\tau_k,\xi_{k})}^{\xih + c\rho(\tau_k,\xi_{k})} \frac{1}{\sqrt{2\pi}\rho(\tau_k,\xi_{k})} \exp\Bigg\{-\frac{1}{2}\bigg[\frac{\xi_k|x - \xih_k}{\rho(\tau_k,\xi_{k})}\bigg]^2\Bigg\} \,d\xi_k|x,
\end{align*}
for some $c$.
When we make the change of variable $u = \big(\xi_k|x - \xih_k\big)/\rho(\tau_k,\xi_{k})$ we can express and evaluate the integral as:
\begin{equation*}
	\frac{1}{\sqrt{2\pi}}\int_{-c}^{c}\exp\bigg\{-\frac{1}{2}u^2\bigg\} \,d u = \frac{1}{2} \erf{\bigg(\frac{u}{\sqrt{2}}\bigg)}\Bigg|_{-c}^{c} = 	\erf{\bigg(\frac{c}{\sqrt{2}}\bigg)},
\end{equation*}
where $\erf()$ denotes the Gauss error function.
The last step follows from the property of this function being an odd function.
It is well-known that this function cannot be expressed as elementary functions \citep{ref_Liouville_SM}.
Hence, we need to evaluate $\erf(c/\sqrt{2})$ by numerical approximation to find our desired probability for any finite $c$.
Several such approximations are available.
We use a rational approximation.
If we let $z(y)$ be the approximation to $\erf(y)$ then we can define the relative error
\begin{equation*}
	|\erf(y) - z(y)| \equiv |\zeta(y)|.
\end{equation*}
Then we can approximate \citep[][p.\ 299]{ref_Gautschi_SM}
\begin{equation*}
	\erf(y) \approx 1 - \Big(a_1 t + a_2 t^2 + a_3 t^3 + a_4 t^4 + a_5 t^5\Big)e^{-y^2},
\end{equation*}
where 
\begin{equation*}
	t = \frac{1}{1 + wy},
\end{equation*}
and $w = .3275911$, $a_1 = .254829592$, $a_2 = -.284496736$, $a_3 = 1.421413741$, $a_4 = -1.453152027$, $a_5 = 1.061405429$,
with relative error 
\begin{equation*}
	|\zeta(y)| \leq 1.5 \times 10^{-7}.
\end{equation*}
When $c = 4$ we let $y = 4/\sqrt{2}$ to find, using this approximation, that
\begin{equation*}
	\Pr\Big(\xih_k - 4\rho(\tau_k,\xi_{k})\, \leq \,\, \xi_k|x \, \leq \, \xih_k + 4\rho(\tau_k,\xi_{k}) \Big) \approx .99994.
\end{equation*}
Hence, the posterior plausibility becomes negligible more than 4 standard deviations upward or downward from $\xi_k$.
Using this same approximation one can evaluate that $c = 1.96$ implies that $95\%$ of the posterior probability mass is contained within approximately $1.96$ standard deviations from $\xih_k$ (as used in, for example, Figure 3 of the MT and Figure \ref{SMFIG:BayesIndet2} below).

\subsubsection{Expanded visualization}\label{SMSSSEC:BayesExpandFig}
Figure \ref{SMFIG:BayesIndet2} expands Figure 3 of the MT.
It depicts the Bayesian view on indeterminacy in a single dimension involving $\xi_k$. 
The top panels are also represented in the MT.
The bottom panels represent the same visualization but has, for additional clarity, a shifted rather than overlaying representation of the posterior distribution $\xi_k|x$.
The inverted distribution (in black) is the posterior of $\xi_k|x \sim \mathcal{N}\big[\xih_k,\rho(\tau_k,\xi_k)^2\big]$.
The posterior spread is an expression of the distance between the postulated $\xi_k$ and the best linear predictor $\xih_k$ and represents the uncertainty around the latter.
We depict a situation in which $\rho(\tau_k,\xi_k)^2 = \rho(\xih_k,\xi_k)^2 = .5$.
In the left-hand panels we see two scoring distributions (in red) whose locations are respectively located at the lower and upper bounds of the $95\%$ credible interval for $\xi|x$.
These scoring distributions are distributed as $\xi''_k \sim \mathcal{N}\big[-1.96\rho(\tau_k,\xi_k),\rho(\xih_k,\xi_k)^2\big]$ and $\xi'_k \sim \mathcal{N}\big[1.96\rho(\tau_k,\xi_k),\rho(\xih_k,\xi_k)^2\big]$
(note that we now use primes to indicate locations alternative to $\xih$ rather than the constructions from Section \ref{SMSSEC:Construct}).
These scoring distributions will produce realizations (factor scores) that imply very different positionings in latent space (mostly negative versus mostly positive).
In the right-hand panels we see the scoring distribution with the highest posterior plausibility, $\xih_k \sim \mathcal{N}\big[0,\rho(\xih_k,\xi_k)^2\big]$.
It will score on a narrower domain than the idealized postulate $\xi_k \sim \mathcal{N}(0,1)$ (in green).
As the distribution of $\xi|x$ widens the scoring distributions narrow and the production of very different positions in latent space can already happen for alternative scoring distributions whose locations have high posterior plausibility.
For determinacy one would like $\xi|x$ to concentrate on $\xih$.

\begin{figure}[h!]
	\centering
	\includegraphics[width=.93\textwidth]{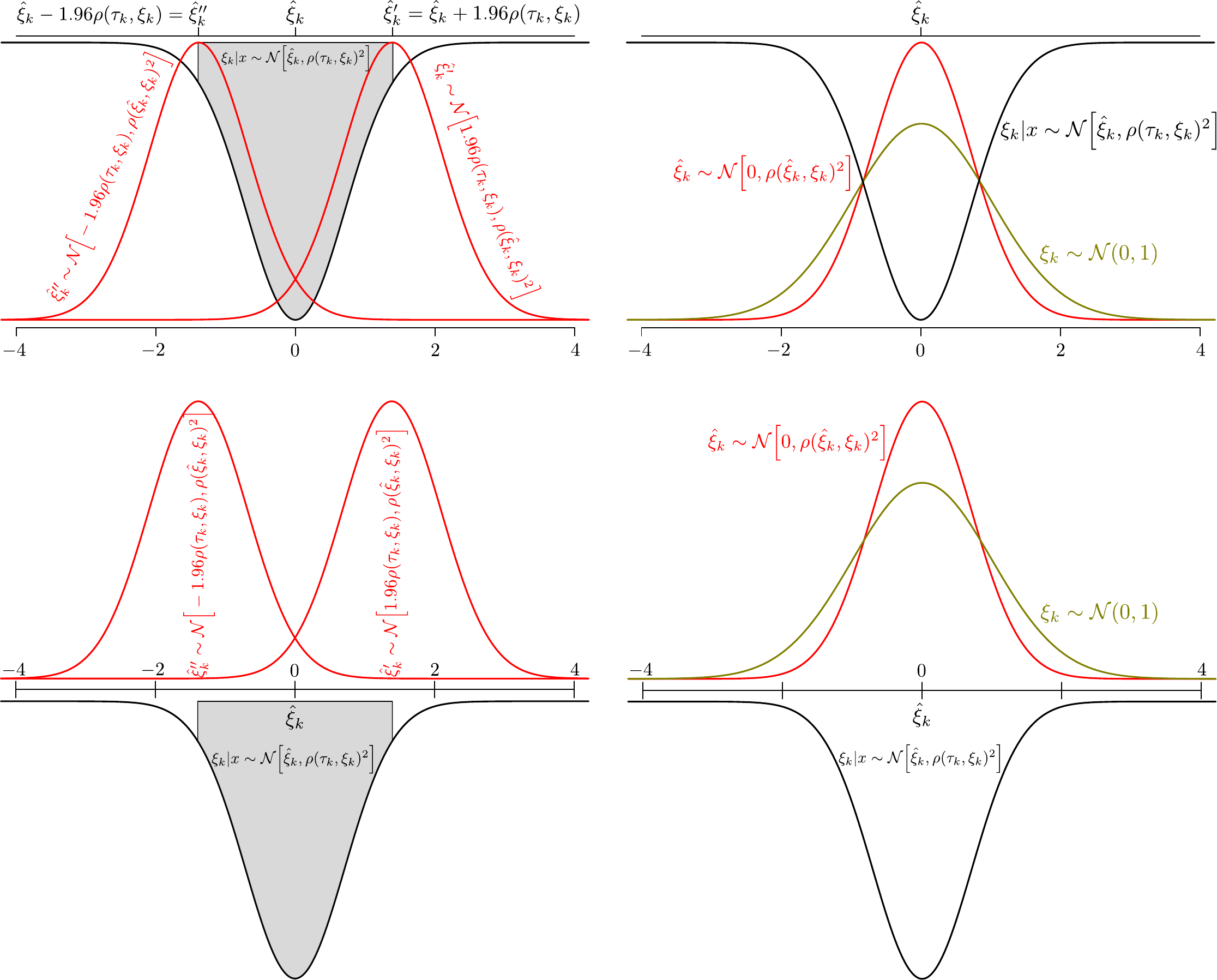}
	\caption{
		Visualization of the Bayesian view on indeterminacy in one dimension involving $\xi_k$.
		This is an expanded version of Figure 3 of the MT.
		For additional clarity, the posterior distribution of $\xi_k|x$ is not only represented as overlaying the scoring distributions (as in the top panels) but also represented in a shifted fashion below the scoring distributions (as in the bottom panels).
	}
	\label{SMFIG:BayesIndet2}
\end{figure}

\subsubsection{Posterior concentration}
\label{SMSSSEC:Concentrate}
There is a duality between $\mathbb{E}\big(\xih\xih^{\top}\big)$ and $\mathbf{\Sigma}(\mathbf{\Theta})_{\xi|x}$ in the sense of $\mathbb{E}\big(\xih\xih^{\top}\big) = \PH - \mathbf{\Sigma}(\mathbf{\Theta})_{\xi|x}$.
Clearly, $\mathbf{\Sigma}(\mathbf{\Theta})_{\xi|x}$ then needs to approximate $\boldsymbol{0}$ for posterior concentration on $\xih$ and for $\xih$ to behave as the postulated $\xi$.
We visualize this posterior concentration in a single dimension involving $\xi_k$ in Figure \ref{SMFIG:BayesIndetOverview}.
In the left-hand panels we see the posterior $p(\xi_k|x)$ (black, inverted) and two scoring distributions (red) with their locations located at the $95\%$ credibility bounds of $p(\xi_k|x)$.
The right-hand panels show again $p(\xi_k|x)$ as well as the scoring distribution at the point of highest posterior plausibility (red) and the behavior of the postulated latent (green).
When $\rho(\tau_k,\xi_{k})^2 = .9$ (upper panels) we have that $p(\xi_k|x) \approx p(\xi_k)$ (we will argue in Sections 4.4.1 and 4.4.2 of the MT that this is directly related to the posterior collapse phenomenon in the variational autoencoder framework for deep generative learning).
We then have narrow distributions with which we can do scoring and we can have alternative scoring distributions at plausible posterior locations that produce wildly different placings in latent space. 
The situation becomes more determinate as $\|\xi_k - \xih_k\|_2^2 = \rho(\tau_k,\xi_{k})^2$ becomes smaller (visualized with, consecutively down from the upper panels, $\rho(\tau_k,\xi_{k})^2 = (.75, .5, .25, .1)$).
We see, as the situation becomes more determinate, that alternative scoring distributions start to overlap and that the scoring distribution at the most plausible posterior location will start to behave as the postulated $\xi_{k}$.
In a determinate situation the posterior would fully concentrate on $\xih_k$.

\begin{figure}[h!]
	\centering
	\includegraphics[width=.975\textwidth]{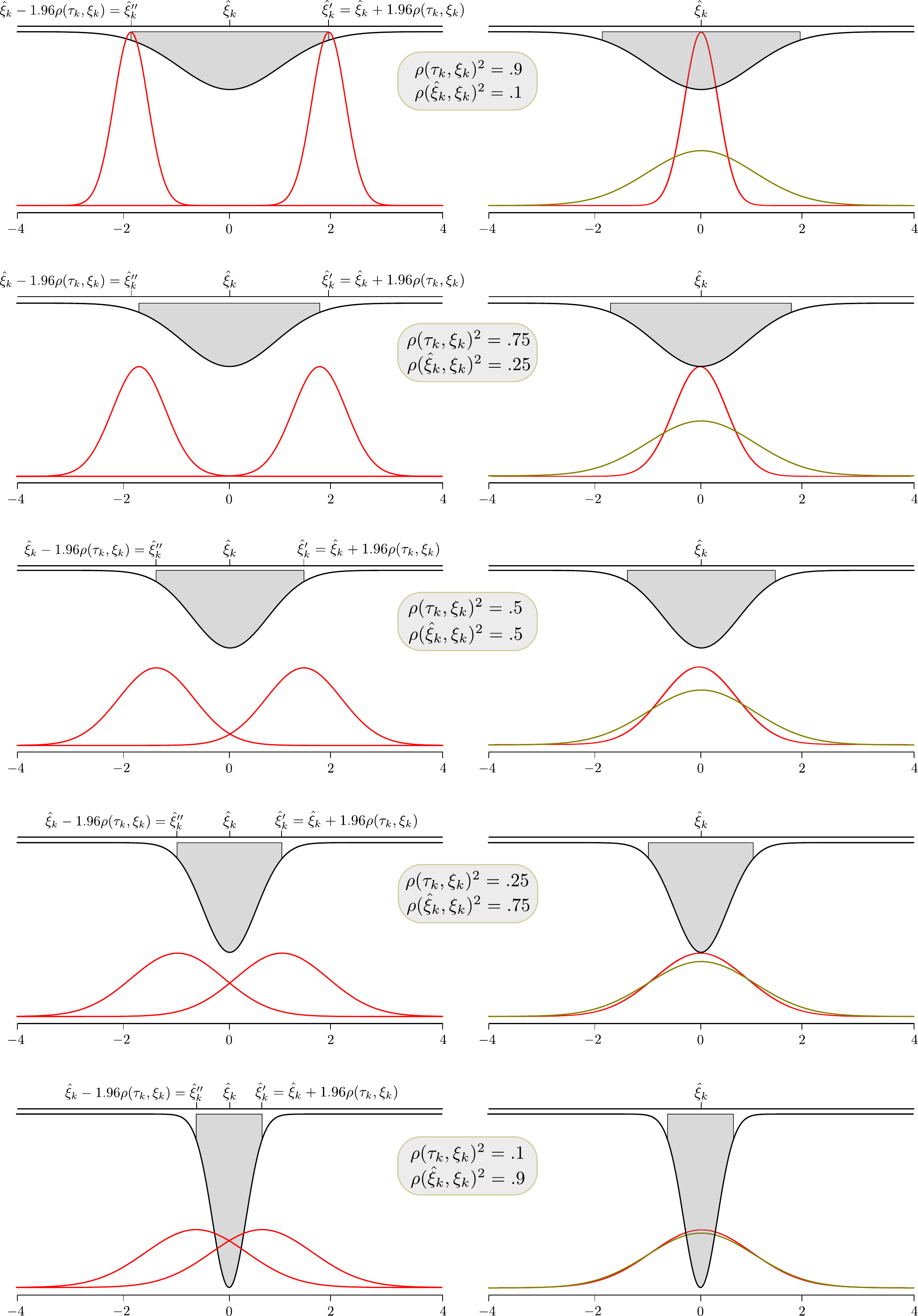}
	\caption{
		Visualization of determinacy through posterior concentration in one dimension involving $\xi_{k}$.
		The setup of each row of panels is analogous to the upper-panels of  Figure \ref{SMFIG:BayesIndet2}. 
		The depth of indeterminacy decreases as $\|\xi_k - \xih_k\|_2^2 = \rho(\tau_k,\xi_{k})^2$ becomes smaller.
		This is visualized with, from to top bottom, $\rho(\tau_k,\xi_{k})^2 = (.9, .75, .5, .25, .1)$.
	}
	\label{SMFIG:BayesIndetOverview}
\end{figure}

\subsubsection{Quantifying closeness of alternative scoring distributions}
\label{SMSSSEC:Overlap}
Here we explore quantifying the closeness of alternative scoring distributions.
One simple closeness measure is the amount of overlap between two probability distributions as probed through an overlapping coefficient \citep{SMref_OLC}.
This coefficient is a function of the Mahalanobis distance\citep{SMref_Maha30, SMref_Maha36}.
We could also use distance measures such as the Hellinger distance \citep{SMref_Hellinger1909} or the Bhattacharyya distance \citep{SMref_Bhatta}.
Both can be used as measures of divergence quantifying the dissimilarity between two probability distributions. 
These distances can, in case we take interest in two alternative scoring distributions that are equidistant from $\xih$, be given simplified expressions as a direct function of the Mahalanobis distance. 

Let us take a look at the univariate situation first.
Say we take interest in the scoring distributions associated with the random variables $\xih_k \pm c\rho(\tau_k,\xi_k)$:
\begin{align*}
	p\Big(\xih_k - c\rho(\tau_k,\xi_k)\Big) &= \mathcal{N}\Big[-c\rho(\tau_k,\xi_k),\rho(\xih_k,\xi_k)^2\Big] \\
	p\Big(\xih_k + c\rho(\tau_k,\xi_k)\Big) &= \mathcal{N}\Big[c\rho(\tau_k,\xi_k),\rho(\xih_k,\xi_k)^2\Big].
\end{align*}
We are then interested in distance quantifications between two alternative scoring distributions whose locations find themselves on equidistant credibility bounds around the point of highest posterior plausibility $\xih_k$. 
Our distances of interest have a closed-form expression in case of two Gaussians \citep{SMref_Kailath}.
For the associated Bhattacharyya distance we then find:
\begin{align*}
	\mathrm{B}_{D}\bigg[p\Big(\xih_k - c\rho(\tau_k,\xi_k)\Big)&,p\Big(\xih_k + c\rho(\tau_k,\xi_k)\Big)\bigg] \\
	&= \frac{1}{4} \frac{\big[-c\rho(\tau_k,\xi_k) - c\rho(\tau_k,\xi_k)\big]^2}{2\rho(\xih_k,\xi_k)^2} + \frac{1}{2}\ln\Bigg[\frac{2\rho(\xih_k,\xi_k)^2}{2\rho(\xih_k,\xi_k)^2}\Bigg] \\
	&= \frac{1}{4}\frac{\big[-2c\rho(\tau_k,\xi_k) \big]^2}{2\rho(\xih_k,\xi_k)^2} + \frac{1}{2}\ln(1) \\
	&= \frac{c^2\rho(\tau_k,\xi_k)^2}{2\rho(\xih_k,\xi_k)^2},
\end{align*}
where we recognize (up to a multiplicative factor) the Mahalanobis distance.
Clearly (using $\mathrm{B}_{D}$ as a shorthand for our Bhattacharyya distance of interest) $0 \leq \mathrm{B}_{D} \leq \infty$.
This distance is not a proper metric as it does not satisfy the triangle inequality \citep{SMref_Kailath}.
The related Hellinger distance
\begin{align*}
	\mathrm{H}_{D}\bigg[p\Big(\xih_k - c\rho(\tau_k,\xi_k)\Big),& \,p\Big(\xih_k + c\rho(\tau_k,\xi_k)\Big)\bigg] = \\
	&\sqrt{1 - \exp\Bigg\{-\mathrm{B}_{D}\bigg[p\Big(\xih_k - c\rho(\tau_k,\xi_k)\Big),p\Big(\xih_k + c\rho(\tau_k,\xi_k)\Big)\bigg]\Bigg\}}
\end{align*}
does obey the triangle inequality.
Clearly, $0 \leq \mathrm{H}_{D} \leq 1$.

We may also find the multivariate analogues for our distances of interest.
Say we are interested in two alternative multivariate scoring distributions simultaneously shifted to be equidistant from $\xih$ according to 
\begin{equation*}
	\xih \pm c\,\mathrm{diag}\Big[\mathbf{\Sigma}(\mathbf{\Theta})_{\xi|x}^{\odot\frac12}\Big].
\end{equation*}
We are then interested in the following two multivariate scoring distributions:
\begin{align*}
	p\Big(\xih -c\,\mathrm{diag}\Big[\mathbf{\Sigma}(\mathbf{\Theta})_{\xi|x}^{\odot\frac12}\Big]\Big) &= \mathcal{N}_m\Bigg[-c\,\mathrm{diag}\Big[\mathbf{\Sigma}(\mathbf{\Theta})_{\xi|x}^{\odot\frac12}\Big],\Big(\PH - \mathbf{\Sigma}(\mathbf{\Theta})_{\xi|x}\Big)^{-1}\Bigg], \\
	p\Big(\xih + c\,\mathrm{diag}\Big[\mathbf{\Sigma}(\mathbf{\Theta})_{\xi|x}^{\odot\frac12}\Big]\Big) &= \mathcal{N}_m\Bigg[c\,\mathrm{diag}\Big[\mathbf{\Sigma}(\mathbf{\Theta})_{\xi|x}^{\odot\frac12}\Big],\Big(\PH - \mathbf{\Sigma}(\mathbf{\Theta})_{\xi|x}\Big)^{-1}\Bigg].
\end{align*}
Using the closed-form expression in case of two multivariate Gaussians \citep{SMref_Kailath} and some ready algebra we find
\begin{align*}
	\mathrm{B}_{D}\bigg[p\Big(\xih -c\,\mathrm{diag}\Big[\mathbf{\Sigma}(\mathbf{\Theta})_{\xi|x}^{\odot\frac12}\Big]\Big),& \,p\Big(\xih + c\,\mathrm{diag}\Big[\mathbf{\Sigma}(\mathbf{\Theta})_{\xi|x}^{\odot\frac12}\Big]\Big)\bigg] = \\
	&
	\frac{c^2}{2}\mathrm{diag}\Big[\mathbf{\Sigma}(\mathbf{\Theta})_{\xi|x}^{\odot\frac12}\Big]^{\top} \Big(\PH - \mathbf{\Sigma}(\mathbf{\Theta})_{\xi|x}\Big)^{-1}\mathrm{diag}\Big[\mathbf{\Sigma}(\mathbf{\Theta})_{\xi|x}^{\odot\frac12}\Big],
\end{align*}
for the multivariate Bhattacharyya distance and
\begin{align*}
	\mathrm{H}_{D}\bigg[p\Big(\xih \,-\, &c\,\mathrm{diag}\Big[\mathbf{\Sigma}(\mathbf{\Theta})_{\xi|x}^{\odot\frac12}\Big]\Big), \,p\Big(\xih + c\,\mathrm{diag}\Big[\mathbf{\Sigma}(\mathbf{\Theta})_{\xi|x}^{\odot\frac12}\Big]\Big)\bigg] = \\
	&\sqrt{1 - \exp\Bigg\{-\mathrm{B}_{D}\bigg[p\Big(\xih -c\,\mathrm{diag}\Big[\mathbf{\Sigma}(\mathbf{\Theta})_{\xi|x}^{\odot\frac12}\Big]\Big), p\Big(\xih + c\,\mathrm{diag}\Big[\mathbf{\Sigma}(\mathbf{\Theta})_{\xi|x}^{\odot\frac12}\Big]\Big)\bigg]\Bigg\}}
\end{align*}
for the multivariate Hellinger distance.

We note that these quantities can be used as alternative metrics for the quantification of indeterminacy.
We now exemplify these metrics with an example.

\begin{example}
	We consider the situation depicted in the left-hand panels of Figure \ref{SMFIG:BayesIndet2} where we have a single latent factor $\xi_{k}$ and $\rho(\tau_k,\xi_k)^2 = \rho(\xih_k,\xi_k)^2 = .5$.
	We take interest in two scoring distributions whose locations are respectively located at the lower and upper bounds of the $95\%$ credible interval for $\xi|x$.
	These scoring distributions are then distributed as $\xi''_k \sim \mathcal{N}\big[-1.96\rho(\tau_k,\xi_k),\rho(\xih_k,\xi_k)^2\big]$ and $\xi'_k \sim \mathcal{N}\big[1.96\rho(\tau_k,\xi_k),\rho(\xih_k,\xi_k)^2\big]$.
	These scoring distributions will produce realizations (factor scores) with little overlap and, hence, imply very different positionings in latent space (mostly negative versus mostly positive).
	This will also be reflected in various overlap of closeness metrics.
	The upper grey area in Figure \ref{SMFIG:BayesOC} represents the overlap.
	The corresponding overlapping coefficient amounts to $.05$ approximately.
	For our distances of interest we find (using shorthand notation): $\mathrm{B}_D \approx 1.92$ and $\mathrm{H}_D \approx .924$.
	As we would look at wider intervals or when $\rho(\tau_k,\xi_k)^2$ increases towards $1$ the overlapping coefficient will go to $0$, the Hellinger distance will go to $1$, while the Bhattacharyya distance will increase (without bound).
	\begin{figure}[h!]
		\centering
		\includegraphics[width=.6\textwidth]{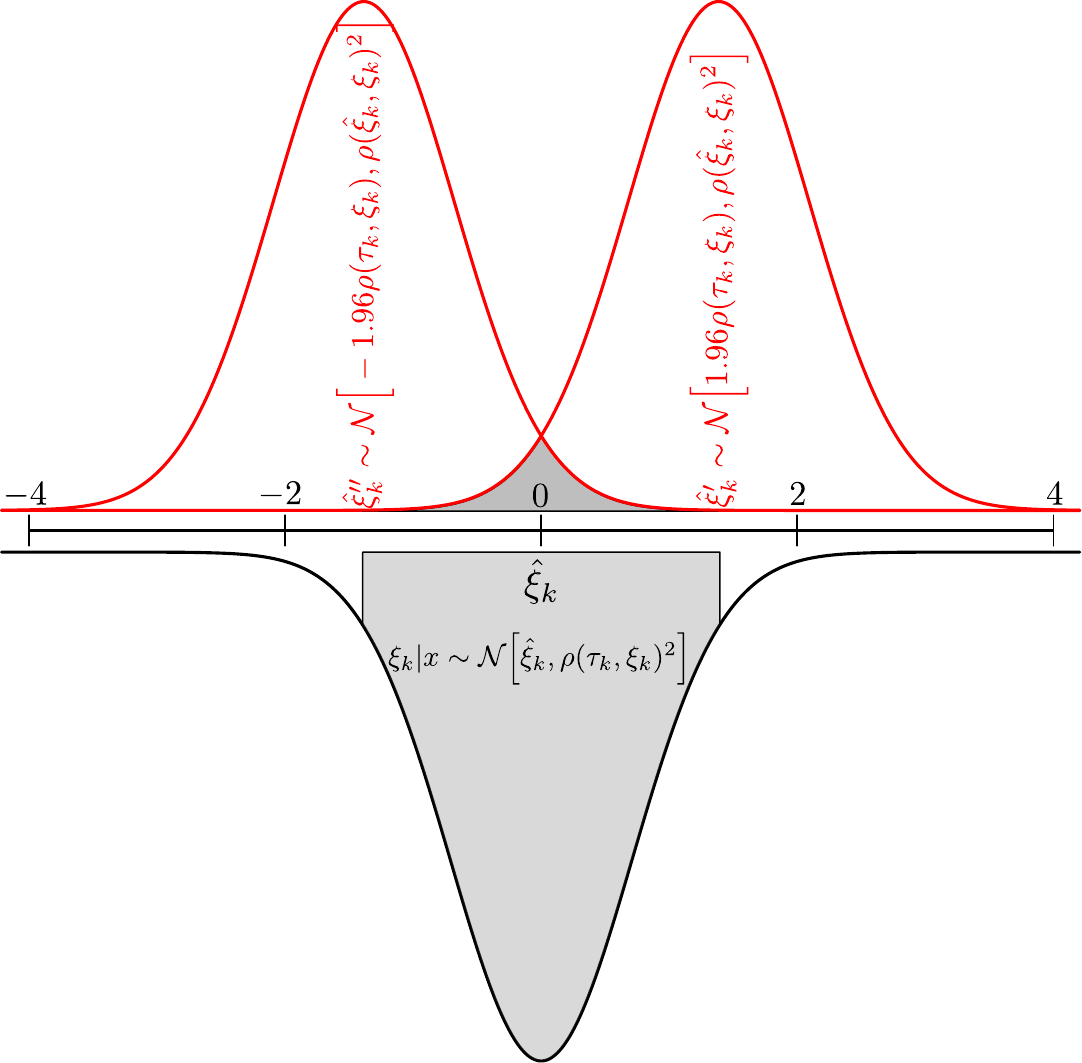}
		\caption{
			Visualizing closeness quantification for two alternative scoring distributions in the Bayesian view on indeterminacy.
			The grey area in the inverted distribution represents the $95\%$ credible interval for the posterior distribution of $\xi_k|x$.
			The two scoring distributions (in red) have their locations respectively located at the lower and upper bounds of this credible interval.
			The grey area in between the scoring distributions represents their overlap.
		}
		\label{SMFIG:BayesOC}
	\end{figure}
	\QEDE
\end{example}

\subsubsection{Analogues between the Bayesian posterior and construction positions}\label{SMSSSEC:BayesASP}
The construction (Section 2.4 of the MT) and Bayesian (Section 3.1 of the MT) stances on indeterminacy are often juxtaposed in the Psychometric literature \citep[see, e.g.,][and the associated discussion papers]{SMref_Maraun1996}.
Note, however, that we give a random variable treatment of the construction stance.
And while the Bayesian view does not involve rotation such that the Guttman criterion is not an appropriate measure of indeterminacy from the Bayesian stance, both the construction and Bayesian stances hinge upon $\mathbf{\Sigma}(\mathbf{\Theta})_{\xi|x}$.
Hence, they should not be seen as fundamentally different.
Indeed, we can draw analogues between the Guttman criterion and Bayesian quantification of alternative scoring distributions as the following example will clarify.

\begin{example}
	Consider again the situation in a single latent dimension for which $\rho(\tau_k,\xi_k)^2 = \rho(\xih_k,\xi_k)^2 = .5$ (as in Example \ref{SMEX:Gman}).
	In this case $\gamma_k = 2\rho(\xih_k,\xi_k)^2 - 1 = 0$, representing an angle of $\arccos(0) \cdot 180/\pi =  90^\circ$.
	This indicates maximally different constructions (in the sense of Section 2.4 of the MT) that are uncorrelated.
	The Bayesian analogy would then be to assess (the distance between or overlap in) two maximally different scoring distributions, who have their locations (equidistantly from $\xih_k$) at the boundaries of the posterior plausibility of $\xi_{k}|x$.
	That is, the scoring distributions associated with $\xih_k \pm 4\rho(\tau_k,\xi_k)$.
	Figure \ref{SMFIG:BayesASP} depicts this situation.
	The overlapping coefficient then indeed goes to $0$ ($6.334 \times 10^{-5}$ with absolute error $< 2.1 \times 10^{-7}$), $\mathrm{H}_D$ goes to $1 \,(.9998)$, and $\mathrm{B}_D = 8$.
	As the situation would become more indeterminate, in the sense of $\rho(\tau_k,\xi_k)^2/\rho(\xih_k,\xi_k)^2 > 1$, the overlapping coefficient and  $\mathrm{H}_D$ would remain (approximately) $0$ and $1$, respectively, while $\mathrm{B}_D$ would increase (without bound).
	From this view one could say that the Guttman criterion has low support from a Bayesian plausibility perspective, crystallizing what McDonald \citep{SMref_McDon} alluded to, but did not articulate fully \citep[cf.][]{SMref_Mulaik1976}.
	\begin{figure}[h!]
		\centering
		\includegraphics[width=.7\textwidth]{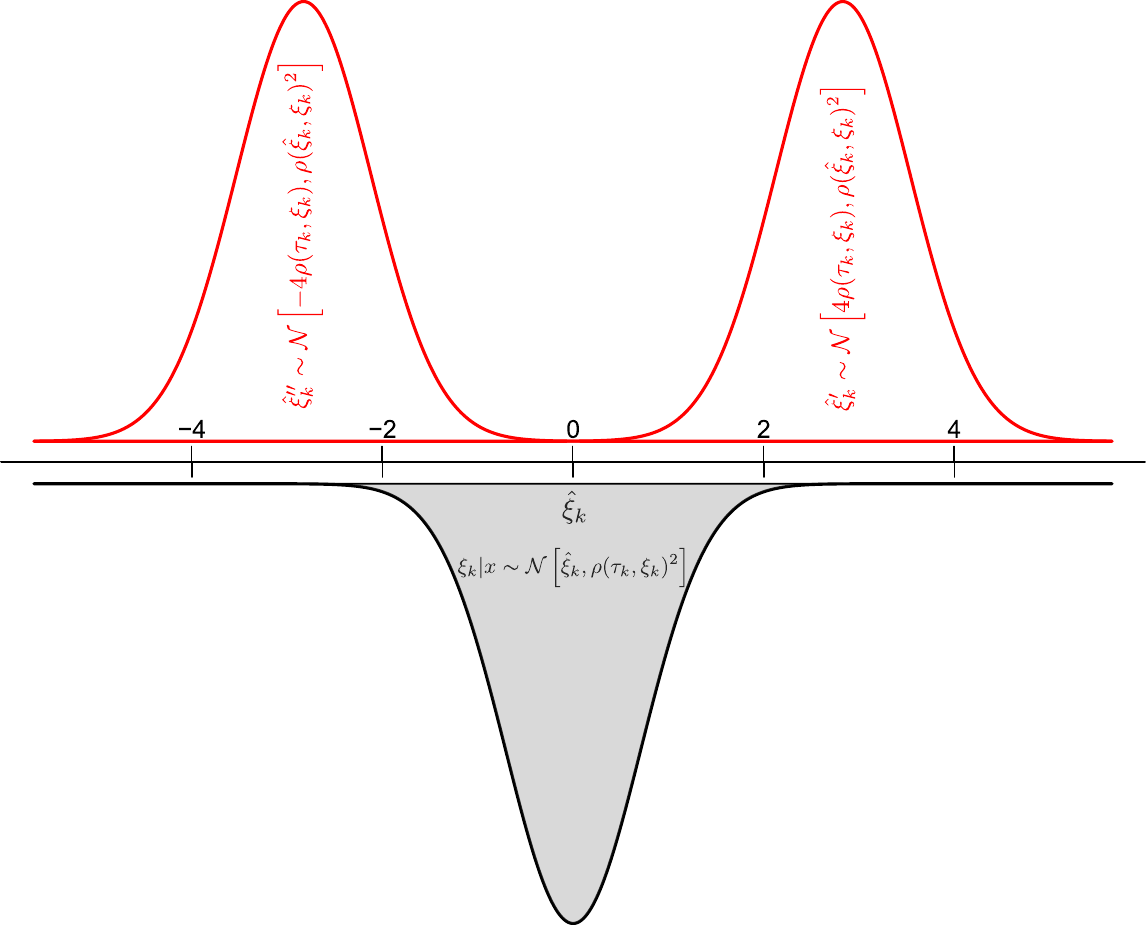}
		\caption{
			Visualization of the Bayesian analogue to the Guttman criterion.
			Scoring distributions are then assessed that are located at the boundaries of the posterior plausibility of $\xi|x$.
		}
		\label{SMFIG:BayesASP}
	\end{figure}
	\QEDE
\end{example}

\subsubsection{Posterior concentration of $\epsilon|x$}
\label{SMSSSEC:ConcentrateEp}
We will also consider the Bayesian indeterminacy statement for $\epsilon|x$ and $\eph$.
This is completely analogous to Indeterminacy Statement 8 from the MT.
By Lemma 2 from the MT we have that $\mathbf{\Sigma}(\mathbf{\Theta})_{\epsilon|x} = \LA\mathbf{\Sigma}(\mathbf{\Theta})_{\xi|x}\LAt$.
Hence, when $\mathbf{\Sigma}(\mathbf{\Theta})_{\xi|x} \neq \boldsymbol{0}$, 
\begin{equation*}
	\epsilon|x \nsim \lim\limits_{\mathbf{\Sigma}(\mathbf{\Theta})_{\epsilon|x} \rightarrow \boldsymbol{0}} \mathcal{N}_{p}\Big[\eph, \mathbf{\Sigma}(\mathbf{\Theta})_{\epsilon|x}\Big] = \delta\Big(\epsilon|x - \eph\Big),
\end{equation*}
and 
\begin{equation*}
	\eph \nsim \mathcal{N}_{p}(\boldsymbol{0}, \PS).
\end{equation*}
That is, the conditional distribution $\epsilon|x$ is not a Dirac measure concentrated on the point of highest posterior plausibility $\eph$.
As a result, $\eph$ does not have the distributional behavior of the postulated $\epsilon$.

%%%%%%%%%%%%%%%%%%%%%%%%%%%%
%%%-Details Equalities Eps -
%%%%%%%%%%%%%%%%%%%%%%%%%%%%
\subsection{Details for Section 3.2 of the Main Text}
\label{SMSSEC:SuppEqualitiesEpsilon}
The determinacy results for $\xih$ have their counterpart in $\eph$.
These are explicated in Section \ref{SMSSSEC:DetEp}.
An interesting point about factor model collapse is to be made if we assume $p = \infty$.
This is explicated in Section \ref{SMSSSEC:collapse}.

\subsubsection{Determinacy results for the unique factors}
\label{SMSSSEC:DetEp}
Here we show the determinacy results under $p\uparrow\infty$ for proxy $\eph$
of the unique factors $\epsilon$.
These concern Remarks 1--3 and 5 of the MT.
They are collected in the following corollary.
Below, we let $\lim_{p\,\uparrow\,\infty} \mathbf{B}_p = \boldsymbol{0}$ be shorthand notation for general entrywise convergence to the null matrix, 
allowing us to avoid some notational clutter.

\begin{corollary}[Determinacy in the limit for the unique factors]
	\label{CORROL:EpsilonLimits}
	Let the setting and assumptions stated in Theorem 1 and Corollary 1 of the MT hold. 
	Let $\epsilon|x_p$, $\eph_p$, and $\mathbf{\Sigma}(\mathbf{\Theta})_{\epsilon|x_p}$ denote the feature-dimension dependent sequence of $\epsilon|x$, $\eph$, and $\mathbf{\Sigma}(\mathbf{\Theta})_{\epsilon|x}$.
	Then the following convergences hold:
	\begin{enumerate}
		\item[~~i.] $\lim\limits_{p\,\uparrow\,\infty} \big\|\mathbf{I}_p - \PS^{-1/2}\mathbb{E}\big(\eph_p\eph_p^{\top})\PS^{-1/2}\big\|_{\infty,\infty} = 0$;
		\item[~ii.] $\lim\limits_{p\,\uparrow\,\infty} \upsilon_j = 1, \,\,\,\,\forall j$;
		\item[iii.] $\lim\limits_{p\,\uparrow\,\infty} \tr\big[\mathbf{\Sigma}(\mathbf{\Theta})_{\epsilon|x_p}\big] = 0$;
		\item[\,iv.] $\lim\limits_{p\,\uparrow\,\infty} \epsilon|x_p \overset{\Pr}{\longrightarrow} \delta\Big(\epsilon|x_p - \eph_p\Big)$; and
		\item[~\,v.] $\lim\limits_{p\,\uparrow\,\infty} \eph_p \overset{\Pr}{\longrightarrow} \epsilon$.
	\end{enumerate}
	The condition that $p\uparrow\infty$ is both necessary and sufficient.
\end{corollary}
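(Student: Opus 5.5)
The plan is to reduce every item to Theorem \ref{PROP:GramLimit} through Lemma \ref{SMLEM:ExpWood}. That lemma gives
\begin{equation*}
\mathbf{\Sigma}(\mathbf{\Theta})_{\epsilon|x} = \PS - \PS\big(\Com+\PS\big)^{-1}\PS = \LA\big(\PHi+\Gram\big)^{-1}\LAt = \LA\mathbf{\Sigma}(\mathbf{\Theta})_{\xi|x}\LAt,
\end{equation*}
which is the same identity used in the proof of Corollary \ref{CORROL:FacLimits}.ii.

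\textbf{Items i and ii.} By Section \ref{SMSSEC:SMCeps}, $\mathbf{I}_p - \PS^{-1/2}\mathbb{E}(\eph\eph^{\top})\PS^{-1/2} = \PS^{-1/2}\LA(\PHi+\Gram)^{-1}\LAt\PS^{-1/2}$. Its $(j,j')$ entry is a bilinear form in the rows of $\LA^{*} = \PS^{-1/2}\LA$. I will bound it exactly as in Corollary \ref{CORROL:FacLimits}.ii:
\begin{equation*}
\max_{j,j'}\big|\cdot\big| \leq \Big(\max_j\|(\LA^{*})_{j,\bullet}\|_2\Big)^2\,\big\|(\PHi+\Gram)^{-1}\big\|_2 .
\end{equation*}
The first factor is a bounded $m$-sum, and the second factor vanishes by Theorem \ref{PROP:GramLimit}. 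Item ii then follows from item i via $\upsilon_j = 2\rho(\eph_j,\epsilon'_j)^2-1$ (Section \ref{SMSSSEC:Geom_unique}).

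\textbf{Items iv and v.} These come from mean-square convergence, copying the proof of Corollary \ref{CORROL:Concentration}. One has $\mathbb{E}\|\epsilon|x_p-\eph_p\|_2^2 = \mathbb{E}\|\epsilon-\eph_p\|_2^2 = \tr[\mathbf{\Sigma}(\mathbf{\Theta})_{\epsilon|x_p}]$, so both items hinge on item iii. Mean-square convergence then gives convergence in probability. Necessity of $p\uparrow\infty$ is inherited from Theorem \ref{PROP:GramLimit}, because for finite $p$ the matrix $(\PHi+\Gram)^{-1}$ is positive definite and $\LA$ has full rank, so $\tr[\mathbf{\Sigma}(\mathbf{\Theta})_{\epsilon|x}]>0$.

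\textbf{Item iii.} This is the crux. I will write
\begin{equation*}
\tr[\mathbf{\Sigma}(\mathbf{\Theta})_{\epsilon|x}] = \tr\big[(\PHi+\Gram)^{-1}\LAt\LA\big] \leq \big\|(\PHi+\Gram)^{-1}\big\|_2\,\tr(\LAt\LA).
\end{equation*}
The aim is to show that the decay of the first factor beats the growth of the second. The first thing I will try is to make Theorem \ref{PROP:GramLimit} quantitative. Its proof gives $e_k[(\LA^{*\top}\LA^{*})^{-1}] \approx \|\boldsymbol{\lambda}^*_k\|_2^{-2}$, so that $\|(\PHi+\Gram)^{-1}\|_2 \lesssim \min_k\|\boldsymbol{\lambda}_k^*\|_2^{-2}$. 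I would then compare $\tr(\LAt\LA) = \sum_k\|\boldsymbol{\lambda}_k\|_2^2$ against $\min_k\|\boldsymbol{\lambda}_k^*\|_2^2$.

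\textbf{Main obstacle.} Both $\sum_k\|\boldsymbol{\lambda}_k\|_2^2$ and $\min_k\|\boldsymbol{\lambda}_k^*\|_2^2$ grow linearly in $p$ under the indicator-count assumption, so the crude bound only yields
\begin{equation*}
\tr[\mathbf{\Sigma}(\mathbf{\Theta})_{\epsilon|x}] \leq \|\PS\|_2\,\tr\big[(\PHi+\Gram)^{-1}\Gram\big] \leq \|\PS\|_2\,m .
\end{equation*}
Thus, to obtain the unnormalised limit as the statement words it, I will need a sharper handle on the diagonal of $\LA^{*}(\PHi+\Gram)^{-1}\LA^{*\top}$. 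These are leverage-type quantities that sum to $\tr[(\PHi+\Gram)^{-1}\Gram] \to m$, weighted by the $\psi_{jj}$. I expect this to be exactly where the argument breaks unless the $\psi_{jj}$ themselves vanish. If it does break, item iii (and hence items iv and v) can only be secured entrywise or per coordinate, and I will flag that explicitly rather than claim the stated trace limit.
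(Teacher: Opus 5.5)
Your items i and ii are the paper's argument: Lemma 2, the bilinear-form bound using bounded rows of $\PS^{-1/2}\LA$, and Theorem 1.

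On item iii your suspicion is correct, and you should settle it rather than merely flag it. Under the stated assumptions iii is false, and the paper's proof breaks exactly where you worried. The paper writes $\lim_{p}\tr[\LA(\PHi+\Gram)^{-1}\LAt]=\tr[\lim_{p}\LA(\PHi+\Gram)^{-1}\LAt]=0$ and justifies the interchange by the entrywise convergence of Corollary 1.ii. But the matrix is $p\times p$, each of its $p$ diagonal entries is of order $1/p$, and entrywise convergence says nothing about their sum. Your upper bound needs a matching lower bound. With $\mathbf{M}\equiv\PS^{-1/2}\LA(\PHi+\Gram)^{-1}\LAt\PS^{-1/2}\succeq 0$,
\begin{equation*}
e_{\mathrm{min}}(\PS)\,\tr\mathbf{M}\leq\tr\big[\mathbf{\Sigma}(\mathbf{\Theta})_{\epsilon|x}\big]=\tr\big(\mathbf{M}\PS\big)\leq\|\PS\|_2\,\tr\mathbf{M},
\qquad
\tr\mathbf{M}=m-\tr\big[(\PHi+\Gram)^{-1}\PHi\big]\longrightarrow m,
\end{equation*}
where the limit follows from Theorem 1. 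So the trace stays of order one whenever the uniquenesses are bounded away from zero.

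For diagonal $\PS$ and standardized $x$, that lower bound is forced by the very assumption you use for item i. If $\|\boldsymbol{\lambda}_j\|_2^2\leq B^2\psi_{jj}$, then $1-\psi_{jj}=\boldsymbol{\lambda}_j\PH\boldsymbol{\lambda}_j^{\top}\leq\|\PH\|_2B^2\psi_{jj}$. An explicit counterexample: take $m=1$, $\PH=1$, $\LA=\lambda\boldsymbol{1}_p$, and $\PS=\psi\mathbf{I}_p$ with $\psi=1-\lambda^2\in(0,1)$. This meets every assumption of Theorem 1 and Corollary 1, and gives $\mathbf{\Sigma}(\mathbf{\Theta})_{\epsilon|x}=\psi\lambda^2(\psi+p\lambda^2)^{-1}\boldsymbol{1}_p\boldsymbol{1}_p^{\top}$. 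Every entry vanishes, yet the trace tends to $\psi>0$.

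Items iv and v, as the paper (and you) obtain them from $\mathbb{E}\|\eph_p-\epsilon\|_2^2=\tr[\mathbf{\Sigma}(\mathbf{\Theta})_{\epsilon|x_p}]$, fail in that Euclidean form. In the example, $\|\eph_p-\epsilon\|_2^2$ converges in law to $\psi Z^2$ with $Z$ standard normal. You can still salvage more than a per-coordinate statement. Since $\epsilon=x-\LA\xi$, one has $\epsilon-\eph_p=-\LA(\xi-\xih_p)$, and under the posterior $\epsilon|x_p-\eph_p=-\LA(\xi|x_p-\xih_p)$. The error therefore lives in the $m$-dimensional column space of $\LA$, and Cauchy--Schwarz applied row by row gives
\begin{equation*}
\mathbb{E}\Big[\max_{j}\big|(\epsilon-\eph_p)_j\big|^2\Big]\leq\Big(\max_j\|\boldsymbol{\lambda}_j\|_2\Big)^2\,\mathbb{E}\Big[(\xi-\xih_p)^{\top}(\xi-\xih_p)\Big]=\Big(\max_j\|\boldsymbol{\lambda}_j\|_2\Big)^2\tr\big[\mathbf{\Sigma}(\mathbf{\Theta})_{\xi|x_p}\big]\longrightarrow 0.
\end{equation*}
The same bound holds for the posterior residual. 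Thus iv and v hold in sup-norm mean square, hence in probability, coordinatewise and uniformly in $j$. The Euclidean versions fail because $\xi-\xih_p$ shrinks at rate $1/p$ in variance while $\LA$ spreads it over $p$ coordinates. The correct write-up is: i and ii as you have them, iii false with the counterexample, and iv and v in this sup-norm form.
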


\begin{proof}[Proof of Corollary \ref{CORROL:EpsilonLimits}]
	We will visit of each of the statements in turn.
	For Corollary \ref{CORROL:EpsilonLimits}.i we first note that
	\begin{align}\label{SMEQ:Eplim}\nonumber
		\PS^{-1/2}\mathbb{E}\big(\eph_p\eph_p^{\top})\PS^{-1/2} &= \PS^{-1/2}\Big[\PS\big(\Com + \PS\big)^{-1}\PS\Big]\PS^{-1/2} \\\nonumber
		&= \PS^{-1/2}\Big[\PS - \mathbf{\Sigma}(\mathbf{\Theta})_{\epsilon|x_p}\Big]\PS^{-1/2} \\\nonumber
		&= \PS^{-1/2}\Big[\PS - \LA\mathbf{\Sigma}(\mathbf{\Theta})_{\xi|x_p}\LAt\Big]\PS^{-1/2} \\\nonumber
		&= \mathbf{I}_p - \PS^{-1/2}\LA\mathbf{\Sigma}(\mathbf{\Theta})_{\xi|x_p}\LAt\PS^{-1/2},
	\end{align}
	by direct application of Lemma 2 of the MT and results from Section \ref{SMSSSEC:Geom_unique}.
	Hence, we have the following equivalence:
	\begin{equation*}
		\big\|\mathbf{I}_p - \PS^{-1/2}\mathbb{E}\big(\eph_p\eph_p^{\top})\PS^{-1/2}\big\|_{\infty,\infty} \,=\,
		\big\|\PS^{-1/2}\LA\mathbf{\Sigma}(\mathbf{\Theta})_{\xi|x_p}\LAt\PS^{-1/2}\big\|_{\infty,\infty}.
	\end{equation*}
	Define $\PS^{-1/2}\LA\mathbf{\Sigma}(\mathbf{\Theta})_{\xi|x_p}\LAt\PS^{-1/2} \equiv \Delta_{\mathbf{I}}$ and $\PS^{-1/2}\LA \equiv \LA_{\PS^{-1/2}}$ and let $(\LA_{\PS^{-1/2}})_{j,\bullet}$ denote the $j$th row of $\LA_{\PS^{-1/2}}$.
	By the similarity invariance of analytic matrix functions the (block) zero structure of $\PS$ is preserved in $\PS^{-1/2}$.
	Thus $(\LA_{\PS^{-1/2}})_{jk}$ is bounded for all $j,k$ and as a result $\|(\LA_{\PS^{-1/2}})_{j,\bullet}\|_2$ is bounded for all $j$.
	Extending the development in the proof of Corollary 1.iii of the MT then gives that 
	\begin{equation*}
		\lim\limits_{p\,\uparrow\,\infty} \max_{j,j'}\big|(\Delta_{\mathbf{I}})_{jj'}\big| \leq \Big(\max_{j} \big\|(\LA_{\PS^{-1/2}})_{j,\bullet}\big\|_2\Big)^2 \lim\limits_{p\,\uparrow\,\infty} \Big\|\big(\PHi + \Gram\big)^{-1}\Big\|_2 = \,0,
	\end{equation*}
	as a direct consequence of Theorem 1 of the MT.
	
	For Corollary \ref{CORROL:EpsilonLimits}.ii we recall from Section \ref{SMSSSEC:Geom_unique} that, for each $j$,
	\begin{equation*}
		\upsilon_j = \bigg\{\PS^{-1/2}\Big[\PS - 2\mathbf{\Sigma}(\mathbf{\Theta})_{\epsilon|x}\Big]\PS^{-1/2}\bigg\}_{jj} = \Big[\mathbf{I}_p - 2\PS^{-1/2}\mathbf{\Sigma}(\mathbf{\Theta})_{\epsilon|x}\PS^{-1/2}\Big]_{jj}.
	\end{equation*}
	One thus desires entrywise convergence of $\mathbf{I}_p - 2\PS^{-1/2}\mathbf{\Sigma}(\mathbf{\Theta})_{\epsilon|x_p}\PS^{-1/2}$ to $\mathbf{I}_p$.
	Hence, we need to assess
	\begin{equation*}
		\lim\limits_{p\,\uparrow\,\infty} \Big\|\mathbf{I}_p - \Big[\mathbf{I}_p - 2\PS^{-1/2}\mathbf{\Sigma}(\mathbf{\Theta})_{\epsilon|x_p}\PS^{-1/2}\Big]
		\Big\|_{\infty,\infty} \,
		= \,
		\lim\limits_{p\,\uparrow\,\infty} \Big\| 2\PS^{-1/2}\mathbf{\Sigma}(\mathbf{\Theta})_{\epsilon|x_p}\PS^{-1/2}
		\Big\|_{\infty,\infty}.
	\end{equation*}
	Then, by application of Lemma 2 of the MT, the developments in the proof of Corollary \ref{CORROL:EpsilonLimits}.i, and Theorem 1 of the MT,
	\begin{align*}
		\lim\limits_{p\,\uparrow\,\infty} \Big\| 2\PS^{-1/2}\mathbf{\Sigma}(\mathbf{\Theta})_{\epsilon|x_p}\PS^{-1/2}
		\Big\|_{\infty,\infty} \, 
		&= 	\,\lim\limits_{p\,\uparrow\,\infty} \Big\| 	2\PS^{-1/2} \LA\mathbf{\Sigma}(\mathbf{\Theta})_{\xi|x_p}\LAt\PS^{-1/2}
		\Big\|_{\infty,\infty} \\
		&= \,2\lim\limits_{p\,\uparrow\,\infty} \max_{j,j'}\big|(\Delta_{\mathbf{I}})_{jj'}\big| \\
		&\leq \,2\,\Big(\max_{j} \big\|(\LA_{\PS^{-1/2}})_{j,\bullet}\big\|_2\Big)^2 \lim\limits_{p\,\uparrow\,\infty} \big\|\mathbf{\Sigma}(\mathbf{\Theta})_{\xi|x_p}\big\|_2 \\
		&= \,0.
	\end{align*}
	As a direct consequence, $\lim_{p\,\uparrow\,\infty} \upsilon_j = 1$ for all $j$.
	
	For Corollary \ref{CORROL:EpsilonLimits}.iii we can write, by application of Lemma 2 and Corollary 1.ii of the MT,
	\begin{align*}
		\lim\limits_{p\,\uparrow\,\infty} \tr\big[\mathbf{\Sigma}(\mathbf{\Theta})_{\epsilon|x_p}\big] &=  \lim\limits_{p\,\uparrow\,\infty} \tr \Big[ \LA\mathbf{\Sigma}(\mathbf{\Theta})_{\xi|x_p}\LAt\Big] \\
		&= \tr \bigg[\lim\limits_{p\,\uparrow\,\infty} \LA\big(\PHi + \Gram\big)^{-1}\LAt\bigg] \\
		&= 0.
	\end{align*}
	
	For Corollary \ref{CORROL:EpsilonLimits}.iv we have, utilizing mean-square convergence, that
	\begin{align*}
		\lim\limits_{p\,\uparrow\,\infty} \mathbb{E} \Big[\big\|\epsilon|x_p - \eph_p\big\|_2^2\Big] &= \lim\limits_{p\,\uparrow\,\infty} \mathbb{E} \Big[\big\|\epsilon|x_p - \mathbb{E}(\epsilon|x_p)\big\|_2^2\Big]\\
		&= \lim\limits_{p\,\uparrow\,\infty} \mathbb{E} \Big[\tr\mathbf{\Sigma}(\mathbf{\Theta})_{\epsilon|x_p}\Big]\\
		&= \lim\limits_{p\,\uparrow\,\infty} \mathbb{E}\bigg\{\tr \Big[ \LA\mathbf{\Sigma}(\mathbf{\Theta})_{\xi|x_p}\LAt\Big]\bigg\} \\
		&= \tr \bigg[\lim\limits_{p\,\uparrow\,\infty} \LA\big(\PHi + \Gram\big)^{-1}\LAt\bigg] \\
		&= 0.
	\end{align*}
	Hence, $\lim_{p\,\uparrow\,\infty} \epsilon|x_p \overset{\ell_2}{\longrightarrow} \delta(\epsilon|x_p - \eph_p)$ and thus $\lim_{p\,\uparrow\,\infty} \epsilon|x_p \overset{\Pr}{\longrightarrow} \delta(\epsilon|x_p - \eph_p)$.
	
	For Corollary \ref{CORROL:EpsilonLimits}.v we have, using the results above,
	\begin{align*}
		\lim\limits_{p\,\uparrow\,\infty} \mathbb{E} \Big[\big\|\eph_p - \epsilon\big\|_2^2\Big] &=
		\lim\limits_{p\,\uparrow\,\infty} \mathbb{E} \Big\{\tr\mathbb{E}\Big[\big(\eph_p - \epsilon\big)\big(\eph_p - \epsilon\big)^{\top}\Big]\Big\}\\
		&= \lim\limits_{p\,\uparrow\,\infty} \mathbb{E} \Big[\tr\mathbf{\Sigma}(\mathbf{\Theta})_{\epsilon|x_p}\Big] \\
		&= 0.
	\end{align*}
	Hence, $\lim_{p\,\uparrow\,\infty} \eph_p \overset{\ell_2}{\longrightarrow} \epsilon$ and thus $\lim_{p\,\uparrow\,\infty} \eph_p \overset{\Pr}{\longrightarrow} \epsilon$.
	
	Sufficiency and necessity of $p\uparrow\infty$ for Corollary \ref{CORROL:EpsilonLimits}.i--v is directly inherited from Theorem 1 of the MT.
	The result follows.
\end{proof}

The interpretation of these results in analogous to the interpretation of Corollaries 2 and 3 of the MT.
We can carry these results over to the correlations between constructions of $\xi$ and $\epsilon$ maximally different from $0$ (see Section \ref{SMSSSEC:MinCorXiEp}), as the following corollary indicates.

\begin{corollary}[Vanishing correlations between maximally different constructions of $\xi$ and $\epsilon$]
	\label{CORROL:EpsilonLimits2}
	Let the setting and assumptions stated in Theorem 1 and Corollary 1 of the MT hold. 
	Then $p\uparrow\infty$ is necessary and sufficient for the following limit to hold:
	\begin{equation*}
		\lim\limits_{p\,\uparrow\,\infty} \kappa_{kj} = 0, \,\,\,\,\forall k,j.
	\end{equation*}
\end{corollary}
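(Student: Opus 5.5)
The plan is to reduce the statement to the closed form of $\kappa_{kj}$ obtained in Section \ref{SMSSSEC:MinCorXiEp}, namely
\begin{equation*}
	\kappa_{kj} = \Big|\Big[2\mathbf{\Sigma}(\mathbf{\Theta})_{\xi|x}\LAt\PS^{-1/2}\Big]_{kj}\Big|,
\end{equation*}
so that it suffices to show $\lim_{p\,\uparrow\,\infty}\|2\mathbf{\Sigma}(\mathbf{\Theta})_{\xi|x_p}\LAt\PS^{-1/2}\|_{\infty,\infty} = 0$. The argument mirrors the bilinear-form bound used for (\ref{EQ:PostViolate3}) in the proof of Corollary \ref{CORROL:Equalities}, with the standardization by $\PS^{-1/2}$ handled as in the proof of Corollary \ref{CORROL:EpsilonLimits}.i.

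First, I write $\LAt\PS^{-1/2} = (\PS^{-1/2}\LA)^{\top} = \LA_{\PS^{-1/2}}^{\top}$, using the symmetry of $\PS^{-1/2}$. Then
\begin{equation*}
	\Big[\mathbf{\Sigma}(\mathbf{\Theta})_{\xi|x}\LAt\PS^{-1/2}\Big]_{kj} = (\mathbf{I}_m)_{k,\bullet}\,\mathbf{\Sigma}(\mathbf{\Theta})_{\xi|x}\,(\LA_{\PS^{-1/2}})_{j,\bullet}^{\top}.
\end{equation*}
Cauchy--Schwarz together with the induced spectral norm gives
\begin{equation*}
	\kappa_{kj} \leq 2\,\|\mathbf{\Sigma}(\mathbf{\Theta})_{\xi|x}\|_2\,\max_j\|(\LA_{\PS^{-1/2}})_{j,\bullet}\|_2 .
\end{equation*}
Second, I invoke the bound already established in the proof of Corollary \ref{CORROL:EpsilonLimits}.i: under the block-diagonal, finite-block assumption of Corollary \ref{CORROL:FacLimits}, the rows of $\LA_{\PS^{-1/2}}$ have uniformly bounded Euclidean norm in $p$. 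Third, by (\ref{EQ:CVARconstruct}) we have $\mathbf{\Sigma}(\mathbf{\Theta})_{\xi|x} = (\PHi+\Gram)^{-1}$, and Theorem \ref{PROP:GramLimit} yields the vanishing of its spectral norm. Combining these three steps gives $\kappa_{kj}\to 0$ uniformly in $(k,j)$, which establishes sufficiency.

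For necessity, the route is the same as in the earlier corollaries: inherit it from Theorem \ref{PROP:GramLimit}. The intended argument is that for finite $p$ the matrix $\mathbf{\Sigma}(\mathbf{\Theta})_{\xi|x}$ is positive definite. Since $\PS^{-1/2}\LA$ has rank $m$, the matrix $\mathbf{\Sigma}(\mathbf{\Theta})_{\xi|x}\LAt\PS^{-1/2}$ is then nonzero, so some $\kappa_{kj}>0$.

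The main obstacle is this necessity direction. A nonzero matrix forces only \emph{some} entry to be nonzero, not every $\kappa_{kj}$, so the ``$\forall k,j$'' limit must be read as the joint (max-norm) statement. I would make that reading explicit rather than attempt an entrywise converse. A second delicate point is the uniform row bound on $\PS^{-1/2}\LA$. Here I would rely on the finite-block structure of $\PS$ and on $\PS\succ 0$ holding uniformly, i.e.\ eigenvalues of the blocks bounded away from zero. This should be stated as part of the inherited assumptions from Corollary \ref{CORROL:FacLimits}.
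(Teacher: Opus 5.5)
Your proposal is correct and is essentially the paper's own proof: the paper likewise writes each entry of $2\mathbf{\Sigma}(\mathbf{\Theta})_{\xi|x}\LAt\PS^{-1/2}$ as a bilinear form in $(\mathbf{I}_m)_{k,\bullet}$ and the $j$th row of $\PS^{-1/2}\LA$, bounds it by $\|\mathbf{\Sigma}(\mathbf{\Theta})_{\xi|x}\|_2$ times the (asserted bounded) row norms, applies Theorem~\ref{PROP:GramLimit}, and inherits necessity from that theorem. Your necessity worry also resolves itself: exhibiting one nonzero entry at finite $p$ via your rank argument is exactly what negating ``for all $k,j$'' requires, so no special joint reading is needed.

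Your caveat that the row bound tacitly needs the block eigenvalues of $\PS$ bounded away from zero is fair, but you can bypass it entirely. Since $\Gram \preceq \PHi + \Gram$, one gets $\|\mathbf{\Sigma}(\mathbf{\Theta})_{\xi|x}\LAt\PS^{-1/2}\|_2^2 = \|\mathbf{\Sigma}(\mathbf{\Theta})_{\xi|x}\,\Gram\,\mathbf{\Sigma}(\mathbf{\Theta})_{\xi|x}\|_2 \leq \|\mathbf{\Sigma}(\mathbf{\Theta})_{\xi|x}\|_2$. Hence $\kappa_{kj} \leq 2\|\mathbf{\Sigma}(\mathbf{\Theta})_{\xi|x}\|_2^{1/2} \to 0$ by Theorem~\ref{PROP:GramLimit} alone, without the block-diagonal assumption on $\PS$.
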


\begin{proof}[Proof of Corollary \ref{CORROL:EpsilonLimits2}]
	By the results in Section \ref{SMSSSEC:MinCorXiEp} we recall
	\begin{equation*}
		\kappa_{kj} = \Big[2\mathbf{\Sigma}(\mathbf{\Theta})_{\xi|x}\LAt\PS^{-1/2}\Big]_{kj}.
	\end{equation*}
	We thus desire $2\mathbf{\Sigma}(\mathbf{\Theta})_{\xi|x_p}\LAt\PS^{-1/2}$ to converge entrywise to the null matrix.
	We note that $\mathbf{\Sigma}(\mathbf{\Theta})_{\xi|x}\LAt\PS^{-1/2}$ is equivalent to $\mathbf{I}_m\mathbf{\Sigma}(\mathbf{\Theta})_{\xi|x}\LAt\PS^{-1/2}$, allowing us to express the individual elements of $\mathbf{\Sigma}(\mathbf{\Theta})_{\xi|x}\LAt\PS^{-1/2}$ in bilinear form:
	\begin{equation*}
		\Big[\mathbf{\Sigma}(\mathbf{\Theta})_{\xi|x}\LAt\PS^{-1/2}\Big]_{kj} = (\mathbf{I}_m)_{k,\bullet}\mathbf{\Sigma}(\mathbf{\Theta})_{\xi|x}\big(\PS^{-1/2}\LA\big)_{j,\bullet}^{\top},
	\end{equation*}
	where $(\mathbf{I}_m)_{k,\bullet}$ denotes the $k$th row of $\mathbf{I}_m$ and $(\PS^{-1/2}\LA)_{j,\bullet} \equiv (\LA_{\PS^{-1/2}})_{j,\bullet} $ denotes the $j$th row of $\LA_{\PS^{-1/2}}$.
	Then, by the bilinear inequality,
	\begin{equation*}
		\Bigg|\Big[\mathbf{\Sigma}(\mathbf{\Theta})_{\xi|x}\LA_{\PS^{-1/2}}^{\top}\Big]_{kj}\Bigg|\leq \big\|(\mathbf{I}_m)_{k,\bullet}\big\|_2\,\big\|\mathbf{\Sigma}(\mathbf{\Theta})_{\xi|x}\big\|_2\,\big\|(\LA_{\PS^{-1/2}})_{j,\bullet}\big\|_2.
	\end{equation*}
	Now, $\|(\mathbf{I}_m)_{k,\bullet}\|_2 = 1$ for all $k$ and, by previous developments, $\|(\LA_{\PS^{-1/2}})_{j,\bullet}\|_2$ implies a bounded $m$-sum for all $j$.
	Hence,
	\begin{align*}
		\lim\limits_{p\,\uparrow\,\infty} \Big\|2\mathbf{\Sigma}(\mathbf{\Theta})_{\xi|x_p}\LAt\PS^{-1/2}\Big\|_{\infty,\infty}
		&= \,\lim\limits_{p\,\uparrow\,\infty} \max_{k,j}	\Bigg|2\Big[\mathbf{\Sigma}(\mathbf{\Theta})_{\xi|x_p}\LA_{\PS^{-1/2}}^{\top}\Big]_{kj}\Bigg| \\
		&\leq \, 2\,\bigg(\lim\limits_{p\,\uparrow\,\infty} \Big\|\big(\PHi + \Gram\big)^{-1}\Big\|_2\bigg) \max_{j}\big\|(\LA_{\PS^{-1/2}})_{j,\bullet}\big\|_2\\
		&= \,0,
	\end{align*}
	as required.
	As a direct consequence, $\lim_{p\,\uparrow\,\infty} \kappa_{kj} = 0$ for all $k,j$.
	Sufficiency and necessity of $p\uparrow\infty$ follows directly from Theorem 1 of the MT.
\end{proof}

\subsubsection{Factor model collapse}
\label{SMSSSEC:collapse}
An interesting point is to be made if we ignore the puncture at $\infty$ and assume $p = \infty$. 
This collapses the factor model in the spectrally strong delocalization regime.
For example, at perfect conical contraction the hypercone (Section 2.5 of the MT) collapses onto the origin.
The following corollary represents the main issue.

\begin{corollary}[Factor model collapse under spectrally strong delocalization]
	\label{CORROL:Collapse}
	Let Theorem 1 of the MT hold.
	If we ignore the puncture at $\infty$ and assume $p = \infty$ then $\xih_p$ is no longer a random variable.
\end{corollary}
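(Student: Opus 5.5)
The plan is to interpret ``$p = \infty$'' as evaluating all model quantities at the limit that Theorem 1 and its corollaries guarantee. Under that reading, the claim that $\xih_p$ is no longer a random variable amounts to showing that its generating mechanism becomes degenerate: no well-defined measurable map from an $x$ with finite second moments survives, or equivalently the probability space collapses.

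First, I would write $\xih_p = (\PHi + \Gram)^{-1}\LAt\PSi x = \mathbf{W}_p x$, with weight matrix $\mathbf{W}_p \equiv (\PHi + \Gram)^{-1}\LAt\PSi$. By Theorem 1, $\|(\PHi + \Gram)^{-1}\|_2 \to 0$. Extending the bilinear-form argument from the proof of Corollary 1.iii, each column of $\mathbf{W}_p$ is bounded entrywise by $\max_j\|(\LA_{\PSi})_{j,\bullet}\|_2\,\|(\PHi + \Gram)^{-1}\|_2$. Hence every individual weight converges to $0$, so at $p=\infty$ the weight matrix is the null operator on every finite coordinate.

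Second, I would show that the covariance structure still demands $\mathbb{E}(\xih\xih^{\top}) = \PH$ at $p=\infty$; this follows from Corollary 1.i and Corollary 3. That requirement can only be met by an infinite sum of infinitesimal contributions, which is a limit object and not a term-by-term evaluation. If one ignores the puncture and evaluates at $p=\infty$ term-by-term, $\mathbf{W}_\infty x = \boldsymbol{0}$ is a constant. Alternatively, the map is undefined, because $\sum_j w_{kj}x_j$ with $w_{kj}=0$ carries no dependence on $x$. Either way, $\xih_\infty$ is not a nondegenerate measurable function of $x$. Taken together with $\mathbf{\Sigma}(\mathbf{\Theta})_{\xi|x_\infty} = \boldsymbol{0}$, this leaves a contradiction: $\xi$ would be degenerate, which violates postulate (i) and $\PH \succ 0$ (assumption (x)). I would therefore conclude that the model collapses, and that $\xih_p$ cannot be retained as a random variable on the probability space of $x$.

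The main obstacle is making ``assume $p=\infty$'' precise, since limits and evaluation at infinity do not commute. I would handle this by stating explicitly that evaluation at $p=\infty$ means the entrywise limits of the finite-$p$ coefficient matrices, so that the contrast with the punctured limit in Corollary 3 is exactly the non-interchangeability of the limit and the infinite sum.
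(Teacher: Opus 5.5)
Your core step is the paper's own argument: once the puncture is ignored, the weighting in $\xih = (\PHi + \Gram)^{-1}\LAt\PSi x$ vanishes exactly rather than only in the limit, so $\xih \equiv \boldsymbol{0}$ and every scoring is $0$. The paper gets this directly from $(\PHi + \Gram)^{-1} = \boldsymbol{0}$, which spares you the uniform bound on the rows of $\PSi\LA$ that you import from the proof of Corollary~1.iii of the MT (that bound rests on the extra block-diagonal assumption on $\PS$, not on Theorem~1 alone); also, your \emph{undefined map} alternative is a misstatement, since a zero-weighted sum is simply the constant $0$, and the $\PH$-contradiction is not needed for the claim.
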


\begin{proof}[Proof of Corollary \ref{CORROL:Collapse}]
	At $p = \infty$ we have that $\xih = (\PHi + \Gram)^{-1}\LAt\PSi x$ no longer implies infinite sums with infinitesimally small elements as $(\PHi + \Gram)^{-1}$ is exactly $\boldsymbol{0}$. 
	Hence, all scorings are then necessarily always $0$.
\end{proof}

%%%%%%%%%%%%%%%%%%%%%%%%%%%%
%%%-Details Algorithm ------
%%%%%%%%%%%%%%%%%%%%%%%%%%%%
\subsection{Details for Section 3.3 of the Main Text}
\label{SMSSEC:AlgoExample}
Section \ref{SMSSSEC:Equivalence} gives additional detail on rotational equivalence in the formulation of the factor analytic correlation matrix.
Section \ref{SMSSSEC:Canonical} details how the canonical representation is  derived.
Section \ref{SMSSSEC:canonicality} discusses the various ways in which this representation can be understood to be `canonical'.
Section \ref{SMSSSEC:geneig} details how the canonical matrices are connected to a particular generalized eigenvalue problem.
The canonical representation gives a system or set of estimating equations.
Section \ref{SMSSSEC:equivariance} shows that these estimating equations are scale-equivariant. 
Section \ref{SMSSSEC:Rotation} explains the use of orthogonal and oblique simple structure rotations of the loadings matrix as obtained through canonical estimation in order to enhance the interpretation of the latent representation.
Lastly, Section \ref{SMSSSEC:AlgoExample} supports, with examples, the assertions made with respect to Algorithm 1 of the MT.

\subsubsection{Equivalent formulations of the model-implied correlation matrix}
\label{SMSSSEC:Equivalence}
\begin{sloppypar}
	Each oblique model representation has equivalent orthogonal representation and vice versa. 
	Here we explicate the equivalent representations for the model-implied correlation matrix.
	Using ready covariance algebra on the equivalent model formulation 	
	$x:=\LA\mathbf{K}^{-1}\HDot\HDot^{\top}\mathbf{K}\xi + \epsilon$ we obtain
	\begin{equation}
		\label{SMEQ:equivv}
		\mathbf{\Sigma}(\mathbf{\Theta})_{xx} = \LA \mathbf{K}^{-1} \HDot\HDot^{\top} \mathbf{K}\PH\mathbf{K}^{\top} \HDot\HDot^{\top} \mathbf{K}^{-\top} \LAt + \PS.
	\end{equation}
	We can then express (\ref{SMEQ:equivv}) in the original oblique formulation $\Com + \PS$ as $\mathbf{K}\PH\mathbf{K}^{\top} = \mathbf{K}\mathbf{K}^{-1}\mathbf{K}^{-\top}\mathbf{K}^{\top} = \mathbf{I}_m$, $\HDot\HDot^{\top} \mathbf{K}\PH\mathbf{K}^{\top} \HDot\HDot^{\top} = \mathbf{I}_m$ and $\mathbf{K}^{-1}\mathbf{K}^{-\top} = \PH$ or as $\mathbf{K}^{-1} \HDot\HDot^{\top} \mathbf{K} = \mathbf{K}^{\top} \HDot\HDot^{\top} \mathbf{K}^{-\top} = \mathbf{I}_m$.
	Equivalently, (\ref{SMEQ:equivv}) can then also be expressed as $(\LA \mathbf{K}^{-1})(\LA \mathbf{K}^{-1})^{\top} + \PS = \lambdaDot\lambdaDot^{\top} + \PS$, which is an orthogonal representation.
	The orthogonal representation may receive arbitrary orthogonal rotations such that (\ref{SMEQ:equivv}) can also be equivalently expressed as $\lambdaDot\lambdaDot^{\top} + \PS = (\lambdaDot\HDot)(\lambdaDot\HDot)^{\top} + \PS$, because $\HDot\HDot^{\top} = \mathbf{I}_m$.
	The latter equivalency can also be directly seen from (\ref{SMEQ:equivv}) als also $\HDot^{\top} \mathbf{K}\PH\mathbf{K}^{\top} \HDot = \mathbf{I}_m$.
\end{sloppypar}

\subsubsection{Canonical approach}
\label{SMSSSEC:Canonical}
We have noted that, if the factor model holds, 
\begin{equation*}
	\mathbf{\Sigma}_{xx} = 	\mathbf{\Sigma}(\mathbf{\Theta})_{xx} = \lambdaDot\lambdaDot^{\top} + \PS,
\end{equation*}
from which we obtain the implication 
\begin{equation*}
	\mathbf{\Sigma}_{xx} - \PS = \lambdaDot\lambdaDot^{\top}.
\end{equation*}
We may use this as the basis for finding an anchor representation and, also, for constructing an iterative estimation procedure.
The basis for doing so is noticing that the above implies, after performing whitening by pre- and postmultiplication with $\PS^{-1/2}$, that
\begin{equation*}
	\PS^{-1/2}\mathbf{\Sigma}_{xx}\PS^{-1/2} - \mathbf{I}_p = 
	\PS^{-1/2}\lambdaDot\lambdaDot^{\top}\PS^{-1/2},
\end{equation*}
which is symmetric, p.s.d., and of rank $m$.
Define $\mathbf{\Sigma}^{\PS}_{xx} \equiv \PS^{-1/2}\mathbf{\Sigma}_{xx}\PS^{-1/2}$ and let $\mathbf{V}\mathbf{E} \mathbf{V}^{\top}$ be its spectral decomposition, where $\mathbf{E} = \mathrm{diag}[\boldsymbol{e}(\mathbf{\Sigma}^{\PS}_{xx})]$ with $\boldsymbol{e}(\mathbf{\Sigma}^{\PS}_{xx})$ representing the vector of eigenvalues $e_1(\mathbf{\Sigma}^{\PS}_{xx}) \geq \ldots \geq e_p(\mathbf{\Sigma}^{\PS}_{xx})$.
We can then express
\begin{equation*}
	\mathbf{\Sigma}^{\PS}_{xx} - \mathbf{I}_p = 
	\mathbf{V}\mathbf{E} \mathbf{V}^{\top} - \mathbf{I}_p = \mathbf{V}\mathbf{E} \mathbf{V}^{\top} - \mathbf{V}\mathbf{V}^{\top}
	=
	\mathbf{V}\big(\mathbf{E} - \mathbf{I}_p\big)\mathbf{V}^{\top}.
\end{equation*}
Remember that $\lambdaDot$ is of rank $m$.
Hence, $e_1(\mathbf{\Sigma}^{\PS}_{xx}) \geq \ldots \geq e_m(\mathbf{\Sigma}^{\PS}_{xx}) > 1$ and $e_{j'}(\mathbf{\Sigma}^{\PS}_{xx}) = 1$ for $j' = m+1, \ldots, p$.
Let $\mathbf{E}_m = \mathrm{diag}[\boldsymbol{e}_m(\mathbf{\Sigma}^{\PS}_{xx})]$, with $\boldsymbol{e}_m(\mathbf{\Sigma}^{\PS}_{xx})$ the vector of eigenvalues $e_1(\mathbf{\Sigma}^{\PS}_{xx}) \geq \dots \geq e_m(\mathbf{\Sigma}^{\PS}_{xx})$ and let $\mathbf{V}_m \in \mathbb{R}^{p \times m}$ represent the corresponding matrix of eigenvectors.
The above then implies
\begin{equation*}
	\PS^{-1/2}\lambdaDot\lambdaDot^{\top}\PS^{-1/2} = 
	\mathbf{V}_m\big(\mathbf{E}_m - \mathbf{I}_m\big)\mathbf{V}_m^{\top}.
\end{equation*}
Using the square root factorization we then have
\begin{equation*}
	\PS^{-1/2}\lambdaDot\lambdaDot^{\top}\PS^{-1/2} = 
	\mathbf{V}_m\big(\mathbf{E}_m - \mathbf{I}_m\big)^{1/2}
	\big(\mathbf{E}_m - \mathbf{I}_m\big)^{1/2}
	\mathbf{V}_m^{\top}.
\end{equation*}
Solving for $\PS^{-1/2}\lambdaDot$ and undoing the whitening we get
\begin{equation}
	\label{SMEQ:LAcanon}
	\lambdaDot = \PS^{1/2}\mathbf{V}_m\big(\mathbf{E}_m - \mathbf{I}_m\big)^{1/2}.
\end{equation}
We then also immediately have a representation for $\PS$ by
\begin{equation}
	\label{SMEQ:PScanon}
	\PS = \big(\mathbf{\Sigma}_{xx} - \lambdaDot\lambdaDot^{\top}\big) \odot \mathbf{I}_p.
\end{equation}
Together, (\ref{SMEQ:LAcanon}) and (\ref{SMEQ:PScanon}) form the canonical representation system.
This system will give us a handle on high-dimensional asymptotics in the distribution-free setting as well as direct pointers for iterative estimation in the case of realized data.

\subsubsection{On the canonicality of the canonical approach}
\label{SMSSSEC:canonicality}
There are three reasons why we may call the representation system of Section 3.3 of the MT and Section \ref{SMSSSEC:Canonical} above, `canonical'.
First, it adheres to the canonical (rotational) identification criterion that $\lambdaDot^{\top}\PSi\lambdaDot$ be diagonal and ordered.
We may see so directly by using the canonical representations
\begin{equation*}
	\lambdaDot^{\top}\PSi\lambdaDot = 
	\big(\mathbf{E}_m - \mathbf{I}_m\big)^{1/2} 
	\mathbf{V}_m^{\top}\PS^{1/2}\PSi\PS^{1/2}\mathbf{V}_m
	\big(\mathbf{E}_m - \mathbf{I}_m\big)^{1/2},
\end{equation*}
as $\PS^{1/2}\PSi\PS^{1/2} = \mathbf{I}_p$, $\mathbf{V}_m^{\top}\mathbf{V}_m = \mathbf{I}_m$, and $(\mathbf{E}_m - \mathbf{I}_m)$ is diagonal and ordered.
Second, the representation is intricately related to the (generalized) canonical eigenvalue problem $\mathbf{\Sigma}_{xx}\bm{\mathrm{a}} = e\PS\bm{\mathrm{a}}$, which we will explicate in Section \ref{SMSSSEC:geneig}.
This also connects tightly to the next point.

Third, the representation has a direct connection to the canonical correlation approach.
This connection was explicated by Rao \cite{SMref_Rao1955} and Harris \cite{SMref_Harris62}.
The goal is to find the orientation in two sets of variables that maximizes the correlation between these sets.
For the orthogonal representation of the factor model we then consider the correlation matrix of $(x,\dot{\xi})$,
\begin{equation*}
	\begin{bmatrix}
		\lambdaDot\lambdaDot^{\top} + \PS  & \lambdaDot \\
		\lambdaDot^{\top}  & \mathbf{I}_m
	\end{bmatrix},
\end{equation*}
from which we may obtain the characteristic equation for canonical correlations $\rho$:
\begin{equation*}
	\big|\lambdaDot\lambdaDot^{\top} - \rho^2\big(\lambdaDot\lambdaDot^{\top} + \PS\big)\big| = 0.
\end{equation*}
By expansion we may write the foregoing as
\begin{equation*}
	\big|\big(1 - \rho^2\big)\lambdaDot\lambdaDot^{\top} - \rho^2\PS\big| = 0.
\end{equation*}
Multiplying the determinant by $1/(1 - \rho^2)$ does not affect roots and gives
\begin{equation*}
	\bigg|\lambdaDot\lambdaDot^{\top} - \frac{\rho^2}{1 - \rho^2}\PS\bigg| = 0.
\end{equation*}
Define $e \equiv 1/(1 - \rho^2)$ so that $e - 1 = \rho^2/(1 - \rho^2)$.
The preceding is then equivalent to
\begin{equation*}
	\big|\lambdaDot\lambdaDot^{\top} - (e - 1)\PS\big| = 0.
\end{equation*}
Using $\lambdaDot\lambdaDot^{\top} = \mathbf{\Sigma}_{xx} - \PS$ the foregoing can be equivalently stated as
\begin{equation*}
	\big|\mathbf{\Sigma}_{xx} - e\PS\big| = 0,
\end{equation*}
which is the characteristic equation corresponding to the generalized canonical eigenvalue problem (Section \ref{SMSSSEC:geneig}).
Hence, 
\begin{equation*}
	\big|\lambdaDot\lambdaDot^{\top} - \rho^2\mathbf{\Sigma}_{xx}\big| = 0
	\,\,\,\Longleftrightarrow\,\,\,
	\big|\mathbf{\Sigma}_{xx} - e\PS\big| = 0,
\end{equation*}
and the canonical correlations $\rho$ between $x$ and $\xi$ correspond to the canonical roots $e$ through $e = 1/(1 - \rho^2)$.
Naturally, $\rho^2 \in (0,1)$ and  $e \in (0,\infty)$.
Note that there are $m$ non-trivial roots and $m$ non-trivial canonical correlations.
Also note that the factor estimator stemming from canonical correlation and our best linear predictor in the canonical reflection are equivalent \citep[cf.][Section 7.6.3]{SMref_Kochbook}.

\subsubsection{A generalized canonical eigenvalue problem}
\label{SMSSSEC:geneig}
Let us use $(e,\bm{\mathrm{v}})$ as shorthand for an eigenpair of $\mathbf{\Sigma}^{\PS}_{xx}$.
Our canonical representation then hinges upon the non-trivial pairs $(e,\bm{\mathrm{v}})$ that are solutions to the standard (or canonical) eigenproblem
\begin{equation}
	\label{SMEQ:regeig}
	\mathbf{\Sigma}^{\PS}_{xx}\bm{\mathrm{v}} = e\bm{\mathrm{v}}.
\end{equation}
If we multiply by $\PS^{1/2}$ and set $\bm{\mathrm{a}} = \PS^{-1/2}\bm{\mathrm{v}}$ we have 
\begin{equation}
	\label{SMEQ:geneig}
	\mathbf{\Sigma}_{xx}\bm{\mathrm{a}} = e\PS\bm{\mathrm{a}},
\end{equation}
which is a generalized canonical eigenvalue problem.
Note that the generalized representation leaves the eigenvalues unchanged.
This can be seen by considering the following (well-known) fundamental implication:
\begin{equation*}
	\big|\mathbf{\Sigma}^{\PS}_{xx} - e\mathbf{I}_p\big| = 0 
	\,\,\,\Longleftrightarrow\,\,\,
	\exists\bm{\mathrm{v}}\neq \boldsymbol{0} ~:~ \mathbf{\Sigma}^{\PS}_{xx}\bm{\mathrm{v}} = e\bm{\mathrm{v}}.
\end{equation*}
Starting from the characteristic equation to (\ref{SMEQ:regeig}), we have
\begin{equation*}
	\big|\mathbf{\Sigma}^{\PS}_{xx} - e\mathbf{I}_p\big| = 
	\big|\PS^{-1/2}\big(\mathbf{\Sigma}_{xx} - e\PS\big)\PS^{-1/2}\big| =
	\big|\PSi\big|\big|\mathbf{\Sigma}_{xx} - e\PS\big| = 0.
\end{equation*}
As $\PS \succ 0$, $|\PSi| \neq 0$ and, hence, the roots hinge upon $|\mathbf{\Sigma}_{xx} - e\PS| = 0$, which is exactly the characteristic equation corresponding to the generalized eigenvalue problem (\ref{SMEQ:geneig}).
One could also say $|\mathbf{\Sigma}_{xx} - e\PS| = |\PS||\mathbf{\Sigma}^{\PS}_{xx} - e\mathbf{I}_p|$.
The eigenvalue problems then generate the same canonical representations for $\lambdaDot$ and $\PS$, resulting in the same canonical representation system.
The difference simply lies in a shift in metric. 
The standard canonical eigenproblem may be seen as the generalized canonical eigenproblem expressed in coordinates where $\PS$ has been whitened to the identity matrix. 
The generalized eigenproblem expresses the coordinates in the original variable space.
This has an interpretational advantage as (\ref{SMEQ:geneig}) has a solution defined by maximization of the \emph{generalized} Rayleigh-Ritz ratio \cite[][p.\ 176--180]{SMref_Horn1985}:
\begin{equation*}
	e = \max_{\bm{\mathrm{a}}\neq\boldsymbol{0}}\frac{\bm{\mathrm{a}}^{\top}\mathbf{\Sigma}_{xx}\bm{\mathrm{a}}}{\bm{\mathrm{a}}^{\top}\PS\bm{\mathrm{a}}}.
\end{equation*}
Hence, the canonical directions maximize the total variance relative to error variance.
The $m$ canonical directions thus find the bearings where the common variance (signal) is largest relative to the error variance (noise).
They are not `typical' directions and, as such, avoid the usual tendencies of projection geometry in high dimension (see MT).

\subsubsection{Scale-equivariance of the estimating equations}
\label{SMSSSEC:equivariance}
The estimating equations are scale-equivariant in the sense that initialization of Algorithm 1 of the MT with scaling $\mathbf{XC}$ of $\mathbf{X}$ will produce estimates $\mathbf{C}\hat{\lambdaDot}$ and $\mathbf{C}\hat{\PS}\mathbf{C}$.
This stems from the scale-invariance of the reduced second moment matrix from which the estimating equations are derived.
To see this note that, for any diagonal matrix $\mathbf{C} \in \mathbb{R}^{p\times p}$ such that $\mathbf{C} \succ 0$, we have (in the canonical representation) $\mathbf{C}x := \mathbf{C}\lambdaDot\dot{\xi} + \mathbf{C}\epsilon$ and $\mathbf{C}\mathbf{\Sigma}_{xx}\mathbf{C} = \mathbf{C}\lambdaDot\lambdaDot^{\top}\mathbf{C} + \mathbf{C}\PS\mathbf{C}$.
Using the developments in Section \ref{SMSSSEC:Canonical} we have that:
\begin{footnotesize}
	\begin{equation*}
		\PS^{-1/2}\mathbf{C}^{-1}\mathbf{C}\mathbf{\Sigma}_{xx}\mathbf{C}\mathbf{C}^{-1}	\PS^{-1/2} - \PS^{-1/2}\mathbf{C}^{-1}\mathbf{C}\PS
		\mathbf{C}\mathbf{C}^{-1}\PS^{-1/2} = 
		\PS^{-1/2}\mathbf{C}^{-1}\mathbf{C}\lambdaDot\lambdaDot^{\top}\mathbf{C}\mathbf{C}^{-1}	\PS^{-1/2}.
	\end{equation*}
\end{footnotesize}
Let $\lambdaDot^* \equiv \mathbf{C}\lambdaDot$.
Nullifying the scalings on the left-hand side it is immediate that
\begin{equation*}
	\PS^{-1/2}\mathbf{\Sigma}_{xx}\PS^{-1/2} - \mathbf{I}_p = 
	\PS^{-1/2}\mathbf{C}^{-1}\lambdaDot^*\lambdaDot^{*\top}\mathbf{C}^{-1}	\PS^{-1/2},
\end{equation*}
implying the scale-invariance of the (scaled) reduced second moment matrix.
Using the spectral decomposition we get
\begin{equation*}
	\mathbf{V}_m(\mathbf{E}_m - \mathbf{I}_m)\mathbf{V}_m^{\top} = 
	\PS^{-1/2}\mathbf{C}^{-1}\lambdaDot^*\lambdaDot^{*\top}\mathbf{C}^{-1}	\PS^{-1/2}.
\end{equation*}
Solving for $\PS^{-1/2}\mathbf{C}^{-1}\lambdaDot^*$ and undoing the whitening we obtain
\begin{equation*}
	\lambdaDot^* = \mathbf{C}\lambdaDot = \mathbf{C}\PS^{1/2}	\mathbf{V}_m(\mathbf{E}_m - \mathbf{I}_m)^{1/2}.
\end{equation*}
For the error variance matrix it is immediate that
\begin{equation*}
	\mathbf{C}\PS\mathbf{C} = \Big[\mathbf{C}\mathbf{\Sigma}_{xx}\mathbf{C} - \lambdaDot^*\lambdaDot^{*\top}\Big] \odot \mathbf{I}_p = \mathbf{C}\Big[\mathbf{\Sigma}_{xx} - \lambdaDot\lambdaDot^{\top}\Big]\mathbf{C} \odot \mathbf{I}_p,
\end{equation*}
resulting in the system of estimating equations 
\begin{align*}
	\mathbf{C}\lambdaDot &= \mathbf{C}\PS^{1/2}\mathbf{V}_m\big(\mathbf{E}_m - \mathbf{I}_m\big)^{1/2},\\
	\mathbf{C}\PS\mathbf{C} &= \mathbf{C}\Big[\mathbf{\Sigma}_{xx} - \lambdaDot\lambdaDot^{\top}\Big]\mathbf{C} \odot \mathbf{I}_p.
\end{align*}

\subsubsection{Rotational representation}
\label{SMSSSEC:Rotation}
We may use rotation to aid interpretation of the latent representation through the weight matrix $\LA$.
We recommend criteria for \emph{simple structure rotation}.
Such criteria seek rotations of $\LA$ that produce approximately sparse patterns in the sense of encouraging each feature to load strongly on one factor and near-zero on remaining factors. 
Criteria for both orthogonal and oblique rotation to simple structure exist.
We take interest in the Varimax criterion \citep{SM_Ref_Kaiser1958} for orthogonal and the Promax criterion \citep{SMref_Hendrickson1964} for oblique simple structure rotation as they are theoretically supported \citep{SMref_Rohe2023}, can be formulated scale-equivariantly, and are widely implemented. 

In our case we start with $\hat{\lambdaDot}$, the canonical estimate of the loadings matrix.
Define the communalities 
\begin{equation*}
	l^2_j \equiv \sum_{k = 1}^{m} \big(\hat{\lambdaDot}\big)_{jk}^2,
\end{equation*}
and let $\mathbf{L} = \mathrm{diag}[l_1, \ldots, l_p]$.
The Varimax criterion is then applied to the normalized loadings $\mathbf{L}^{-1}\hat{\lambdaDot}$ \citep{SM_Ref_Kaiser1958}:
\begin{equation}\label{SMEQ:Varimax}
	\HDot_V \,\equiv \,\argmax_{\HDot \in \mathrm{O}(m)}\,
	\frac{1}{p}\bigg\|\Big(\mathbf{L}^{-1}\hat{\lambdaDot}\HDot\Big)^{\odot2} - \boldsymbol{1}_p\bigg[\frac{1}{p}\boldsymbol{1}_p^{\top}\Big(\mathbf{L}^{-1}\hat{\lambdaDot}\HDot\Big)^{\odot2}\bigg]\bigg\|_{F}^2,
\end{equation}
where $\mathrm{O}(m) \equiv \{\HDot \in \mathbb{R}^{m\times m} \,|\, \HDot\HDot^{\top} = \HDot^{\top}\HDot = \mathbf{I}_m\}$ represents the orthogonal group.
The rotated loadings matrix is then obtained as $\hat{\lambdaDot}_V = \hat{\lambdaDot}\HDot_V$.
The rotation preserves distance and total variance while enhancing interpretability by encouraging an approximately sparse structure.
The normalization with $\mathbf{L}^{-1}$ prevents features with larger communalities from dominating the rotation.
It also provides scale-equivariance as $\HDot_V$ stemming from (\ref{SMEQ:Varimax}) equals, up to signed permutation, $\HDot_V$ stemming from (\ref{SMEQ:Varimax}) applied to $\mathbf{C}\hat{\lambdaDot}$ where $\mathbf{C} \succ 0$ is a diagonal scale matrix.
The rotated loadings would then only differ (up to signed permutation) by scaling with $\mathbf{C}$.
We say `up to signed permutation' as the Varimax criterion is not convex.
It is invariant to label-switching and polarity reversals in the columns of $\hat{\lambdaDot}$ and thus invariant over the Coxeter group of order $2^m(m!)$ \citep{SMref_Peeters2012}.
Optimization of (\ref{SMEQ:Varimax}) is usually based on a type of constrained approach that can be characterized as a form of block coordinate ascent.

The Varimax rotation can also be used to produce an oblique rather than orthogonal simple structure rotation. 
The basic idea of Promax is to find an oblique rotation to simple structure that is close to an orthogonal target structure. 
The Varimax loadings matrix is often used as the basis for target matrix $\mathbf{T}$:
\begin{equation*}
	(\mathbf{T})_{jk} = (\hat{\lambdaDot}_V)_{jk}\big|(\hat{\lambdaDot}_V)_{jk}\big|^{q - 1},
\end{equation*}
for all $j,k$, with $q > 1$.
A recommended choice is $q = 4$ \citep{SMref_Hendrickson1964}. 
It provides an elementwise power-transformation with preservation of sign.
We then have the following Procrustes criterion \citep{SMref_GDproc}:
\begin{equation}\label{SMEQ:Procrustes}
	\tilde{\mathbf{K}}_P \,\equiv \,\argmin_{\tilde{\mathbf{K}} \in \tilde{\mathcal{K}}}
	\big\|\mathbf{T} - \hat{\lambdaDot}_V\tilde{\mathbf{K}}\big\|_{F}^2,
\end{equation}
where 
$\tilde{\mathcal{K}} \equiv \{\tilde{\mathbf{K}} \in \mathbb{R}^{m\times m} \,|\, |\tilde{\mathbf{K}}|\neq 0 \wedge \tilde{\mathbf{K}}^{-1}\tilde{\mathbf{K}}^{-\top} = \hat{\PH}^{*}\}$, with $\hat{\PH}^{*}$ a proper factor covariance matrix.
We note that, for Promax, the set of admissable rotation matrices is not defined as a closed-form group, but is rather algorithmically characterized from the initial orthogonal rotation.
This is why we cannot search over $\mathcal{K}$ from Corollary 5 of the MT directly, but rather have a practical approach based on an initial Procrustes criterion on $\tilde{\mathcal{K}}$.
The Procrustes problem (\ref{SMEQ:Procrustes}) then has the explicit least-squares solution
\begin{equation*}
	\tilde{\mathbf{K}}_P = \hat{\lambdaDot}_V^{+}\mathbf{T} = 
	\big(\hat{\lambdaDot}_V^{\top}\hat{\lambdaDot}_V\big)^{-1}\hat{\lambdaDot}_V^{\top}\mathbf{T},
\end{equation*}
which is a least-squares map from $\hat{\lambdaDot}_V$ to its sharpened version $\mathbf{T}$.
Now, $|\tilde{\mathbf{K}}_P|\neq 0$ and $(\tilde{\mathbf{K}}_P^{\top}\tilde{\mathbf{K}}_P)^{\top} = \tilde{\mathbf{K}}_P^{\top}\tilde{\mathbf{K}}_P$.
Consequently, $(\tilde{\mathbf{K}}_P^{\top}\tilde{\mathbf{K}}_P)\succ 0$.
Hence, 
\begin{equation*}
	\hat{\PH}^{*} = (\tilde{\mathbf{K}}_P^{\top}\tilde{\mathbf{K}}_P)^{-1} = 
	\Big[\mathbf{T}^{\top}\hat{\lambdaDot}_V\big(\hat{\lambdaDot}_V^{\top}\hat{\lambdaDot}_V\big)^{-2}\hat{\lambdaDot}_V^{\top}\mathbf{T}\Big]^{-1},
\end{equation*}
is a valid covariance matrix stemming from self-regression after sharpening of $\hat{\lambdaDot}_V$.
Note that the sharpening through power-transformation is stimulating $\hat{\PH}^{*}$ to be non-diagonal.
Note also that the least-squares solution does not enforce unit variances.
Define $\tilde{\mathbf{K}}_s \equiv (\hat{\PH}^{*} \odot \mathbf{I}_m)^{1/2}$ and $\dot{\mathbf{K}}_P \equiv \tilde{\mathbf{K}}_P\tilde{\mathbf{K}}_s$.
Then 
\begin{equation*}
	\hat{\PH} = \big(\dot{\mathbf{K}}^{\top}_P\dot{\mathbf{K}}_P\big)^{-1},
\end{equation*}
and $\hat{\LA}_P = \hat{\lambdaDot}\dot{\mathbf{K}}_P$ is the Promax oblique simple structure loadings matrix.

\subsubsection{Illustrations}
\label{SMSSSEC:AlgoExample}
All data analyses were executed on a laptop equipped with a 12th generation Intel Core i7-1280P processor (1.80 GHz base clock, 14 cores), 32 GB of SO-DIMM RAM (3200 MT/s), and a 1TB NVMe SSD. 
No parallel processing was used. 
Computations were performed within the \textsf{R} 4.3.1 environment running on a 64-bit Windows 11 Enterprise (23H2) operating system.

\paragraph{Illustration 1: Omics data and computing time}
\label{SMpar:Time}
The computational complexity of Algorithm 1 from the MT is linear in $p$.
Moreover, we mentioned that the procedure is very suited for the situation $p \gg n - 1 > m$.
Here we illustrate that with omics data from Singh et al.\ \cite{SMref_Singh}. 
The data stem from oligonucleotide microarrays on $n = 52$ prostate tumor samples probing $p = 12,600$ genes.
We will consider the standardized data.
An estimate $\tilde{m}$ of the intrinsic dimension $m$ is based on thresholding using the Mar\v{c}enko-Pastur law.
The eigenvalues of an isotropic covariance matrix stemming from $n$ observations of $p$-dimensional centered variables follow the
Marchenko-Pastur probability density function. 
This distribution is strictly supported on the interval $\big[\sigma^2(1 - \sqrt{p/n})^2,\sigma^2(1 + \sqrt{p/n})^2\big]$ \cite{SMref_MP}. 
When $p > n$ this spectrum includes a point-mass at zero. 
This model is, in a sense, a null-model. 
Representing a factor model that carries no common variance (signal) and consists solely of (isotropic) error variance (noise). 
The decision rule is then to consider any eigenvalue that exceeds
$\sigma^2(1 + \sqrt{p/n})^2$ to represent signal and all eigenvalues
below this threshold to represent noise. 
As our data are scaled our estimate $\hat{\sigma}^2$ of $\sigma^2$ defers to $1$. 
The choice $\tilde{m}$ of $m$ is then made as:
\begin{equation*}
	\tilde{m} = \mathrm{card}\Big\{j ~|~ e_{j}\big(\hat{\mathbf{\Sigma}}_{xx}\big) >
	\big(1 + \sqrt{p/n}\big)^2\Big\}.
\end{equation*}
Note that we again can use a truncated SVD to swiftly determine $\tilde{m}$.
For these data $\tilde{m} = 8$.
The execution time of Algorithm 1 is then micro-benchmarked over $100$ runs using sub-millisecond accurate timing functions.
We obtain the following output:
\begin{small}
	\begin{verbatim}
		Unit: milliseconds
		min       lq     mean   median       uq      max neval
		444.3573 460.0827 478.2379 467.2038 476.8144 559.8694   100.
	\end{verbatim}
\end{small}
Hence, the median runtime for Algorithm 1 in this situation is approximately $467$ milliseconds.
Code for this illustration can be found in Appendix \ref{SMAPP:Code}.

\paragraph{Illustration 2: Image compression and determinacy metrics}
\label{SMpar:Compress}
For our second illustration we will consider image compression.
The compression of high-resolution digital images is of interest for reducing storage and transmission costs. 
We will consider RGB images: rectangular arrays of pixels that contain three color channels (Red, Green, and Blue).
Hence, an RGB image can be considered a tensor.
We will besee a simple model in which we consider the columns as representing our random variables and the rows as representing the observations.
Realizations of the random vector $\dot{x} \in \mathbb{R}^{p}$ then assign color to pixels on the basis of an overlay of the intensities of our three color channels:
\begin{equation*}
	\dot{x} = \dot{x}_{\mathrm{R}} + \dot{x}_{\mathrm{G}} + \dot{x}_{\mathrm{B}}.
\end{equation*}
We then consider each color channel $c \in \mathcal{C} \equiv \{\mathrm{R}, \mathrm{G}, \mathrm{B}\}$ to be subject to the canonical factor model:
\begin{equation*}
	\dot{x}_c := \boldsymbol{\mu}_c + \mathbf{C}_c\lambdaDot_c \dot{\xi}_c + \mathbf{C}_c\epsilon_c,
\end{equation*}
where $\boldsymbol{\mu}_c$ represents the mean color intensity for channel $c$ and $\mathbf{C}_c = (\mathbf{\Sigma}_{\dot{x}\dot{x}|c} \odot \mathbf{I}_p)^{1/2}$ is the diagonal matrix with the color scales for channel $c$.
Let each color-channel factor model be subject to the conditions and assumptions as laid out in Section 3.3 of the MT.
We make the following additional assumptions: (i) $\dot{\xi}_c \ci \dot{\xi}_{c'}, \forall c \neq c'$; (ii) $\epsilon_c \ci \epsilon_{c'}, \forall c \neq c'$; and (iii) $\dot{\xi}_c \ci \epsilon_{c'}, \forall c,c'$.
These additional assumptions imply $\dot{x}_c \ci \dot{x}_{c'}$ for all $c \neq c'$, such that
\begin{equation*}
	\mathbb{E}\Big[(\dot{x} - \boldsymbol{\mu})(\dot{x} - \boldsymbol{\mu})^{\top}\Big] = 
	\sum_{c \in \mathcal{C}} \mathbf{C}_c\big(\lambdaDot_c\lambdaDot_c^{\top} + \PS_c\big)\mathbf{C}_c,
\end{equation*}
where $\boldsymbol{\mu} = \boldsymbol{\mu}_{\mathrm{R}} + \boldsymbol{\mu}_{\mathrm{G}} + \boldsymbol{\mu}_{\mathrm{B}}$.
We can, if we let $\hat{\boldsymbol{\mu}}_c = \bar{\boldsymbol{\mathrm{x}}}_c$ and let $\mathbf{C}_c$ be represented by its sample counterpart $(\hat{\mathbf{\Sigma}}_{\dot{x}\dot{x}|c} \odot \mathbf{I}_p)^{1/2}$, then feed each mean-centered color-channel array to Algorithm 1 to produce estimates $(\hat{\mathbf{\Sigma}}_{\dot{x}\dot{x}|c} \odot \mathbf{I}_p)^{1/2}\hat{\lambdaDot}_c \equiv \hat{\lambdaDot}_c^{*}$ and $(\hat{\mathbf{\Sigma}}_{\dot{x}\dot{x}|c} \odot \mathbf{I}_p)^{1/2}\hat{\PS}_c(\hat{\mathbf{\Sigma}}_{\dot{x}\dot{x}|c} \odot \mathbf{I}_p)^{1/2} \equiv \hat{\PS}_c^{*}$ (see Section \ref{SMSSSEC:equivariance} on equivariance).
We may then find the channel-specific factor score matrices
$\doublehat{\mathbf{\Xi}}_c$ so that the image can be reconstructed as
\begin{equation*}
	\hat{\dot{\mathbf{X}}} = \sum_{c \in \mathcal{C}} 
	\boldsymbol{1}_n \bar{\boldsymbol{\mathrm{x}}}_c^{\top} + \doublehat{\mathbf{\Xi}}_c \big(\hat{\lambdaDot}_c^{*}\big)^{\top}.
\end{equation*}

Panel (a) of Figure \ref{SMFIG:Compress} contains a high-resolution image with $5,184 \times 3,456$ pixels and a digital size of $6.6$ MB.
Each color channel was subjected to the procedure above.
The estimated intrinsic dimension $\tilde{m}_c$ was based on retaining those factors for which $10e(\hat{\mathbf{\Sigma}}_{\dot{x}\dot{x}|c}) > 1$.
This expresses that, a priori, 10 percent of the total variance is considered noise.
It leads to retainment of higher numbers of factors and is appropriate for image reconstruction purposes.
Panel (b) of Figure \ref{SMFIG:Compress} then contains the compressed reconstructed image of size $.735$ MB.

Figures \ref{SMFIG:RedBand}--\ref{SMFIG:BlueBand} contain distributions for selections of features in the original color channels and factors in the projected space for the red, green, and blue channels respectively. 
The original features have strictly positive values.
The projected latent features are not constrained to be strictly positive.
The atypical canonical directions produce non-Gaussian distributions.
The latent scores have, even though the projections were carried out with only mean-centered observables, expectation $0$ and a standard deviation that approximates $1$, exemplifying the scale-invariance property of Proposition 1 of the MT.
The aforementioned approximation is a reflection of near-determinacy, as implied by the results of Section 3.3 of the MT.
We may also assess the degree of factor determinacy by other means, such as the Guttman bound. 
The minimum Guttman bound over all retained factors over all color bands is $.894$.
We can thus deem the projections to be determinate to a very acceptable degree.
Code for this illustration can be found in Appendix \ref{SMAPP:CodeImage}.

The idea is clear: we represent the high-dimensional data on a linear low-dimensional manifold and impose flexibility by considering random variables on that manifold that are, in some sense, maximally a-typical (non-Gaussian).
The example shows that negative values in the latent space and associated weights translate to positive values in the ambient space.

\begin{figure}[h!]
	\centering
	\subfloat[a][6.6 MB original]{\includegraphics[width=.95\textwidth]{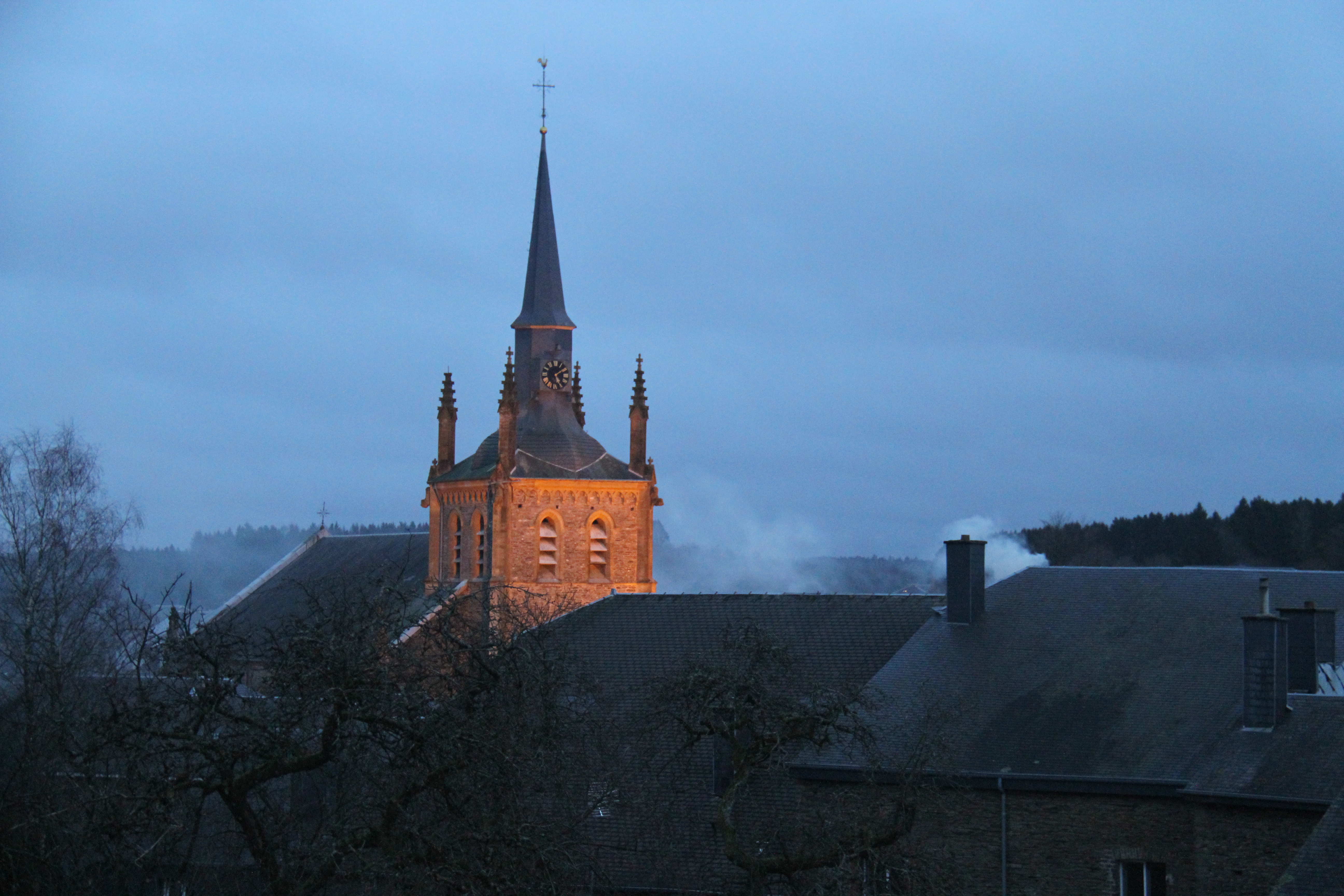} \label{fig:a}} \\
	\subfloat[b][0.735 MB compressed image]{\includegraphics[width=.95\textwidth]{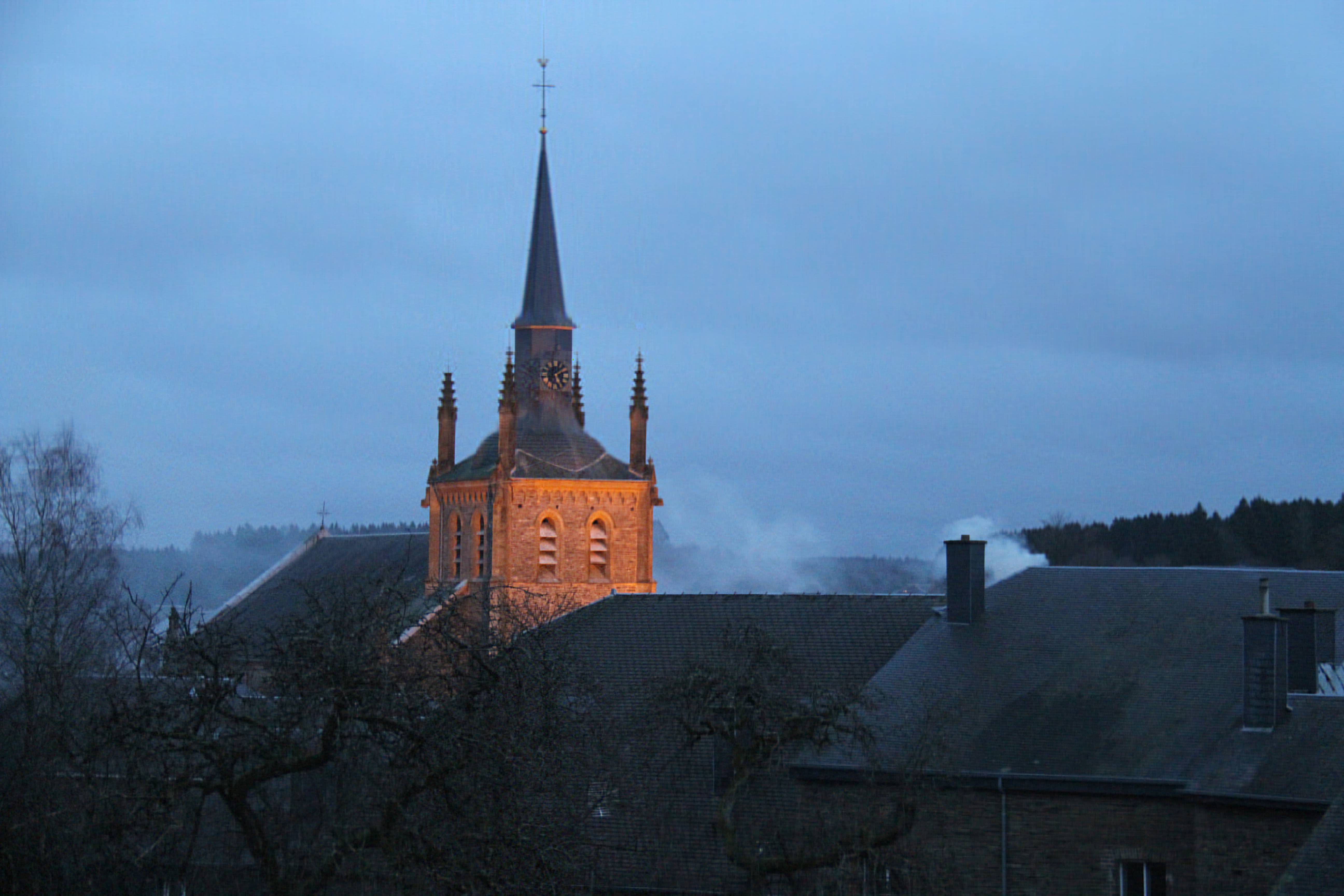} \label{fig:b}}
	\caption{
		Panel (a): Picture of the church of Sugny (Belgian Ardennes) at nightfall during winter. 
		Taken by the Author with a Canon EOS 7D. 
		Panel (b): Compressed version of the image obtained by compression of the individual color bands using Algorithm 1 of the MT.
	} 
	\label{SMFIG:Compress}
\end{figure}

\begin{figure}
	\centering
	\includegraphics[width=\textwidth]{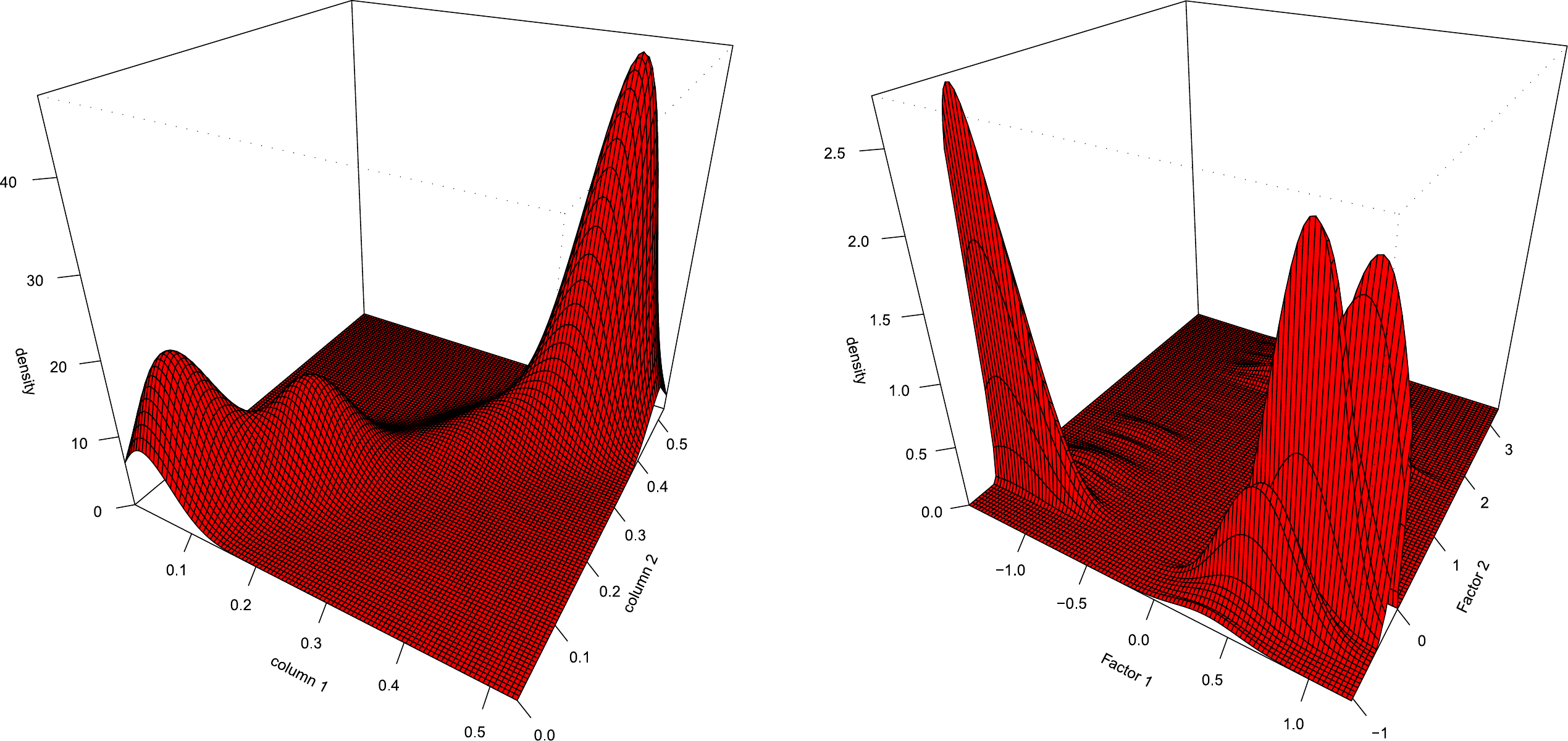}
	\caption{
		The left-hand panel contains the joint-distribution of the first two features in the red-channel column space of the uncompressed image.
		The right-hand panel contains the joint-distribution of the first two latent factors for the red channel.
	}
	\label{SMFIG:RedBand}
\end{figure}

\begin{figure}
	\centering
	\includegraphics[width=\textwidth]{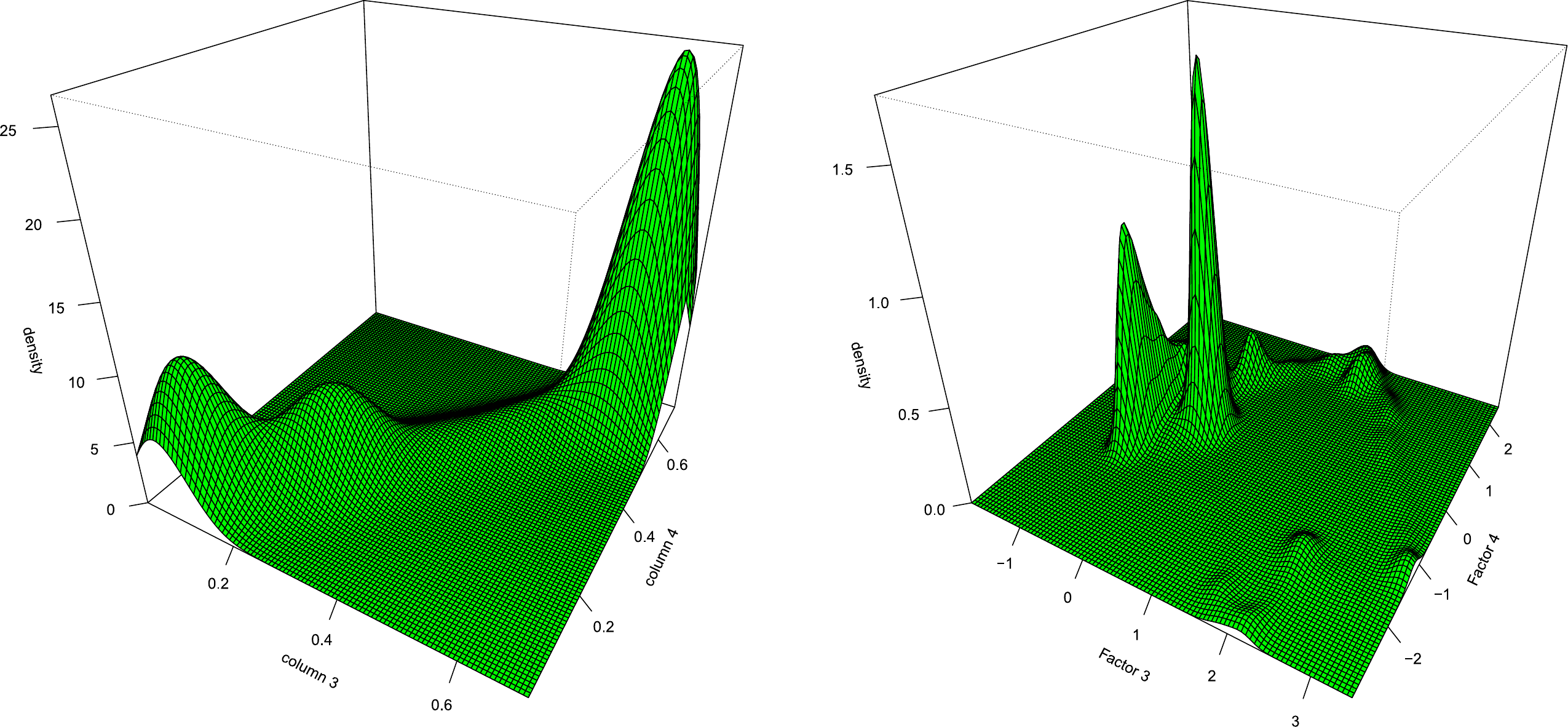}
	\caption{
		The left-hand panel contains the joint-distribution of the third and fourth features in the green-channel column space of the uncompressed image.
		The right-hand panel contains the joint-distribution of the third and fourth latent factors for the green channel.
	}
	\label{SMFIG:GreenBand}
\end{figure}

\begin{figure}
	\centering
	\includegraphics[width=\textwidth]{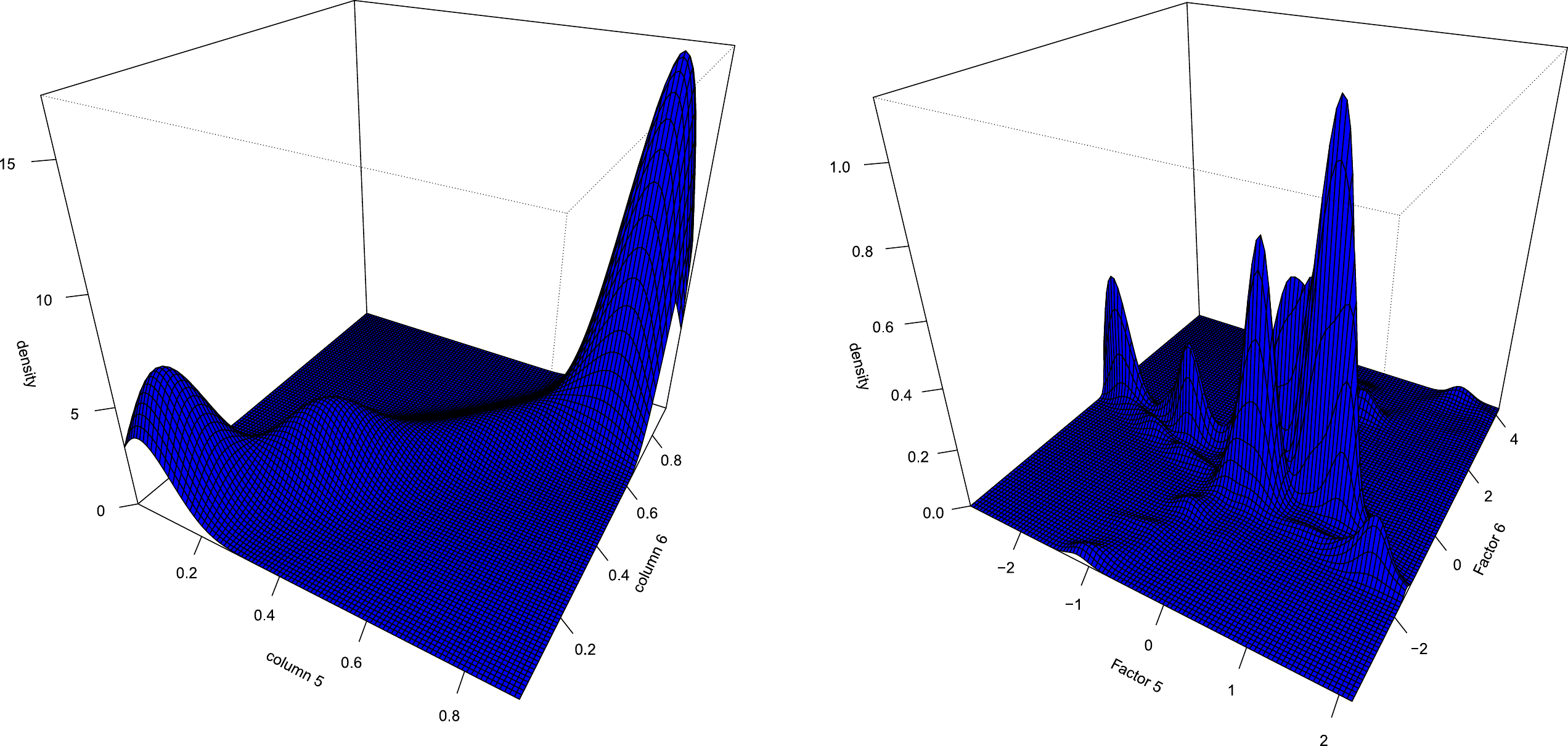}
	\caption{
		The left-hand panel contains the joint-distribution of the fifth and sixth features in the blue-channel column space of the uncompressed image.
		The right-hand panel contains the joint-distribution of the fifth and sixth latent factors for the blue channel.
	}
	\label{SMFIG:BlueBand}
\end{figure}
%--------------- Details 3 ----------------------------------------------
%------------------------------------------------------------------------

%------------------------------------------------------------------------
%--------------- Details 4 ----------------------------------------------
\section{Details for Section 4 of the Main Text}\label{SMSEC:Supp4}
%%%%%%%%%%%%%%%%%%%%%%%%%%%%
%%%-Details estimators -----
%%%%%%%%%%%%%%%%%%%%%%%%%%%%
\subsection{Details for Section 4.1 of the Main Text}
Here we show that other linear proxies of $\xi$ suffer from analogous indeterminacy problems and have higher MSE than $\xih$.
Perhaps the most popular estimator for $\xi$ is the Bartlett weighted least squares estimator.
Let $\xi_{\mathrm{B}} = \mathbf{B}^{\top} x$ with $ \mathbf{B} \in \mathbb{R}^{p \times m}$.
Bartlett \citep{Bartlett_SM} then proposed the following estimator:
\begin{align*}
	\xih_{\mathrm{B}} &\equiv \argmin_{\xi_{\mathrm{B}}} \,\tr
	\bigg\{\Big[\PS^{-1/2}\LA(\xi - \xi_{\mathrm{B}}) + \epsilon\Big]
	\Big[\PS^{-1/2}\LA(\xi - \xi_{\mathrm{B}}) + \epsilon\Big]^{\top}\bigg\}\\
	&= \big(\LAt\PSi\LA\big)^{-1}\LAt\PSi x.
\end{align*}
It can be shown that this is the best linear unbiased estimate.
We assume the same distributional behavior on $x$ as in the MT.
Then, $\mathbb{E}\big(\xih_{\mathrm{B}}\big) = \boldsymbol{0}$ and
\begin{align*}
	\mathbb{E}\big(\xih_{\mathrm{B}}\xih_{\mathrm{B}}^{\top}\big) &= 
	\big(\LAt\PSi\LA\big)^{-1}\LAt\PSi\mathbb{E}\big(xx^{\top}\big)\PSi\LA\big(\LAt\PSi\LA\big)^{-1} \\
	&= \PH + \big(\LAt\PSi\LA\big)^{-1},
\end{align*}
such that $\hat{\xi}_{\mathrm{B}} \sim\, _m\big[\boldsymbol{0},\PH + (\LAt\PSi\LA)^{-1}\big]$.
The MSE between $\xi$ and $\xih_{\mathrm{B}}$ can be found as
\begin{align*}
	\big\|\xi - \hat{\xi}_{\mathrm{B}}\big\|_{2}^{2} &= \tr\mathbb{E}\Big[\big(\xi - \hat{\xi}_{\mathrm{B}}\big)\big(\xi - \hat{\xi}_{\mathrm{B}}\big)^{\top}\Big] \\
	&= \tr\big(\Gram\big)^{-1},
\end{align*}
where we have used that $\mathbb{E}\big(\hat{\xi}_{\mathrm{B}}\xi^{\top}\big) = \mathbb{E}\big(\xi\hat{\xi}_{\mathrm{B}}^{\top}\big) = \PH$.
It is then immediate that the estimator and its associated MSE have the same limiting behavior as our preferred estimator $\xih$ due to
\begin{equation*}
	\lim\limits_{p\,\uparrow\,\infty} 
	\Big\|\big(\Gram\big)^{-1}\Big\|_2 = 0,
\end{equation*}
as a special case of Theorem 1 in the MT.
Irrespective of the limiting behaviors, the relative efficiency of $\xih$ is higher in the sense of (with the exception of the limit) 
\begin{equation*}
	\big\|\xi - \xih\big\|_{2}^{2} < \big\|\xi - \hat{\xi}_{\mathrm{B}}\big\|_{2}^{2},
\end{equation*}
as a direct consequence of the developments in the proof to Theorem 1 in the MT.
Similar results can be shown for other regression or least-squares type estimators, such as those by Horst \citep{ref_Horst_SM}, but will not be further pursued here. 

One could also understand the minimal MSE of $\xih$ in other ways.
Formally, $\mathbb{E}(\xi|x)$ is a measurable function $g_0(x)$ of $x$ that minimizes $\mathbb{E}\big[(\xi - g_0(x))(\xi - g_0(x))^{\top}\big]$ over all measurable functions $g$ \citep[][p.\ 155]{ref_Aad_SM}.
From another perspective one can also view $\mathbb{E}(\xi|x)$ as a Bayesian estimator with the general property of trading variance reduction for a little bias.

%%%%%%%%%%%%%%%%%%%%%%%%%%%%
%%%-Details Psychometrics --
%%%%%%%%%%%%%%%%%%%%%%%%%%%%
\subsection{Details for Section 4.2 of the Main Text}
Section \ref{SMSSSEC:additionalPsych} contains additional considerations on the implications of the findings of the MT for Psychometrics.
Section \ref{SMSSSEC:ConsiderHierar} details the limited representational power of the hierarchical linear model.

\subsubsection{Additional implications for Psychometrics}
\label{SMSSSEC:additionalPsych}

\paragraph{Appropriateness}
\label{SSSEC:Appropriate}
Factor analysis is often used as a panacea in the Social and Behavioral Sciences.
However, the preceding results suggest that, from the standpoint of indeterminacy, factor analysis is unlikely to be suited for the traditional psychometric setting, implying survey research with a very limited number of items (or indicators) per factor.
While especially in this context an estimate of $\{\LA,\PH,\PS\}$ is usually the endpoint of analysis, determinacy should at least be assessed to evaluate if usage of the factor model is warranted (see, for example, Indeterminacy Statement 5 of the MT. 
This work provides ample metrics for doing so.

\paragraph{Factor versus network models}
\label{SSSEC:FAvsNET}
There has been a recent call to move away from factor analysis in favor of (Gaussian) graphical modeling \citep[e.g.,][]{Nuijten2016, Borsboom2017}. 
Note, however, that Section 2.7 of the MT implies that the directed bipartite mixed graph corresponding to Model (1) of the MT is a graphical Markov object (with corresponding conditional independence graph) under $\mathfrak{m}$-separation. 
This contrasts with the habit of juxtaposing factor and (ordinary) graphical models.

Note that the focal quantity for graphical modeling $\mathbf{\Sigma}(\mathbf{\Theta})_{xx}^{-1}$, from the factor model standpoint, gives a 1-mode representation of a bipartite structure \citep[][Section 6.6]{Newman2010}.
It will converge, in a determinate situation, upon a (near-)diagonal structure.
Such structures signify (near) empty conditional independence graphs \citep[CIGs,][]{ref_ridgeP}.
Hence, an empty CIG indicates that the factor model may be a possible data-generating mechanism for the observable data. 
Conversely, not finding an empty CIG does not constitute a case against the factor model, especially in situations of relatively low or modest $p$, as this may rather imply that one has not reached a state of determinacy.

A more natural approach to graphical modeling, in case of factor analysis as a possible data generating mechanism, could be to obtain the correlated (oblique) factor projections and to proceed with (directed) graphical modeling on $\hat{\PH}^{-1}$.
This may serve as an approach to the structure after measurement \citep{Rosseel2024} take on structural equation modeling in (very) high dimension.
It may also be used in causal search (see Section \ref{SSSEC:Causal}).

\subsubsection{Representational power}
\label{SMSSSEC:ConsiderHierar}
Here we detail why the representational power of the hierarchical linear factor model is limited.
Say we have such a  structure with two generative layers:
\begin{align*}
	x     &:= \LA_1\xi_1 + \epsilon_1 \\
	\xi_1 &:= \LA_2\xi_2 + \epsilon_2.
\end{align*}
This means we have
\begin{equation*}
	x := \acute{\LA}\xi_2 + \acute{\epsilon},
\end{equation*}
with $\acute{\LA} = \LA_1\LA_2$ and $\acute{\epsilon} = \LA_1\epsilon_2 + \epsilon_1$.
The latter expression, from a neural network perspective, is simply an extended linear map.
This carries over to deeper hierarchies.
The usual way in which such models are fitted is to have correlated factors scores.
The correlations between these scores are the input for the next layer.
This process continues until the desired depth is achieved, the factor scores remain orthogonal even after oblique rotation, or when a single higher-order factor remains.
In a two-layer structure this would imply, if we use the best linear predictor as our factor proxy, that we are obtaining second-level factor scores through 
\begin{equation*}
	\mathbb{E}\Big(\xi_2 ~\big|~ \mathbb{E}(\xi_1 | x)\Big).
\end{equation*}
For a three-level structure we would be using 
\begin{equation*}
	\mathbb{E}\bigg[\xi_3 ~\bigg|~\mathbb{E}\Big(\xi_2 ~\big|~ \mathbb{E}(\xi_1 | x)\Big)\bigg],
\end{equation*}
and so on.
This implies (upward) propagation of indeterminacy (also see Sections 4.4.1 and 4.4.2 of the MT).

%%%%%%%%%%%%%%%%%%%%%%%%%%%%
%%%-Details Statistics -----
%%%%%%%%%%%%%%%%%%%%%%%%%%%%
\subsection{Details for Section 4.3 of the Main Text}
Section \ref{SMSSSEC:additionalSTATS} contains additional considerations on the implications of the findings of the MT for Statistics.
Section \ref{SMSSSEC:ConsiderNecess} further addresses the issue of suitability assessment.

\medskip
\subsubsection{Additional implications for Statistics}
\label{SMSSSEC:additionalSTATS}

\paragraph{Estimation}
We also mention two further implications in the realm of estimation that are not necessarily Bayesian in nature. 
First it is easy to take missingness into account as the projection hinges upon an estimate of $\mathbf{\Sigma}$.
One could thus circumvent missingness by simply basing this estimate on pairwise complete observations.
Second, in high dimension special amenities for estimation of the factor model under mixed or non-continuous metric \citep{Browne1984,Muthen1984} may be (next to computationally infeasible also) unnecessary due to the property of implicit regularization of the projection.  

\paragraph{Graphical modeling and causal search}
\label{SSSEC:Causal}
Directed and undirected graphical modeling have become mainstays in high-dimensional data analysis. 
The main challenge is reverse-engineering the most probable (causal) structure underlying the data.
This can be time-consuming.
For example, many (constraint or score based) algorithms for causal search have time-complexities exponential to the number of features (nodes) rendering them too intensive for large feature-spaces.
The time-complexity can often be reduced by assuming that the true underlying graph is (very) sparse.
However, there is accumulating evidence that many real-life systems are dense rather than sparse \citep{Clauset2008, Boyle2017, Bilgrau2020, Giannone2021}.
Moreover, even if one does obtain a sparse graphical representation on thousands of nodes it often implies `hairball' visualizations unconducive to spotting patterns or other aspects of interest \citep{Krzywinski2012}.
We propose to first project high-dimensional datasets onto its (obliquely rotated) latent meta-features. 
The resulting low dimension is more permitting of (integrative) causal search. 
Moreover, a meta-graph (i.e., a graph on the latent meta-features) will be more comprehensible than the graph obtained on the observables.

\subsubsection{Further details on suitability assessment}
\label{SMSSSEC:ConsiderNecess}
Here we sketch that $\hat{\mathbf{\Sigma}}^{+}$ being (near) diagonal is a necessity, but not sufficient to conclude the factor model as a data generating mechanism. First, consider that the conditions implying determinacy also imply that $\hat{\mathbf{\Sigma}}^{+} = \hat{\PS}^{-1}$. 
Hence, the condition is necessary.
To see that it is not sufficient we only have to consider a trivial matrix, such as $\varsigma\mathbf{I}_p$, where $\varsigma$ is any finite constant. 
Its inverse would also be diagonal.
However, we would not be able to extract a meaningful factor structure. 
We will explore these issues further elsewhere.

%%%%%%%%%%%%%%%%%%%%%%%%%%%%
%%%-Details AI -------------
%%%%%%%%%%%%%%%%%%%%%%%%%%%%
\subsection{Details for Section 4.4 of the Main Text}
Section \ref{SMSSSEC:additionalAI} contains an additional consideration on the implications of the findings of the MT for AI.
Section \ref{SMSSSEC:ConsiderArchitecture} visually clarifies the considerations in Section 4.4.2 of the MT.

\subsubsection{Relation to Smale's 18th problem and quantum connections}
\label{SMSSSEC:additionalAI}
If we assume that latent representation is an intrinsic characteristic of human-level intelligence \citep[cf.][]{Courville2006, Nayebi2023} and that representational determinacy should carry over to machine intelligence, then, from a connectionist perspective, latent machine representation should be based on expanding the observable stimulus and reinforcement space rather than a hierarchy of latent sources; ruling out deep generative architectures as a connectionist road to machine intelligence of the general type. 
This may be seen as another result on (the barriers of) deep learning in relation to Smale's 18th problem \citep{Smale1998, Colbrook2022}.
We also note, again without the intention to overstate, various concordances with quantum mechanics.
For example, $\xi'$ signifies a superposition between $\xih$ and $\tau'$ and Figure \ref{SMFIG:Cones} is congruent with the Bloch sphere \citep{Feynman1957}.
The overlapping distributional possibilities for factor scoring from the Bayesian perspective resemble quantum fidelity \citep{Nielsen2000}.
Moreover, factor determinacy is commensurable with a description of wave function collapse as a (Bayesian) measurement process \citep{Kiely2026}.
These concordances deserve further exploration to inform, from a connectionist perspective, the (im)possibilities of artificial general intelligence \citep[cf.][]{Penrose1994, King1996, Tegmark2000, Panitchayangkoon2010, Babcock2024}.

\subsubsection{Considerations regarding computational architectures}
\label{SMSSSEC:ConsiderArchitecture}
Here we visually clarify the considerations in Section 4.4.2 of the MT regarding computational architectures conducive to and avoiding propagation of latent factor indeterminacy.
Consider Figure \ref{SMFIG:Architecture}, visualizing two (unsupervised) deep neural architectures with $L = 3$ layers.
The left-hand side contains a deep generative architecture in terms of a hierarchy of factor models.  
It implies consecutive unsupervised learning of higher-order generative latents.
As each higher-order proxy is dependent on the preceding proxy and as each consecutive layer will be of lower dimension, there will be upward propagation of indeterminacy.  
The right-hand side contains a hybrid architecture in the sense of a combination of a generative (pre-training) layer followed by consecutive formative layers.
The formative latents are deterministic and do not necessarily have a lower dimension than their preceding layer.
As such, if we have a determinate proxy $\tilde{\xi}^{(1)}$ of $\xi^{(1)}$, there will be no upward propagation of indeterminacy.

\begin{figure}[h!]
	\centering
	% Styles
	\tikzset{
		latent/.style={circle, thick, minimum size=1.2cm, draw=black!100, fill=white!100},
		formative/.style={rectangle, rounded corners, thick, minimum size=1.2cm, draw=black!100, fill=white!100},
		observed/.style={circle, thick, minimum size=1.2cm, draw=black!100, fill=gray!30}}
	\tikzstyle{plate} = [draw, rectangle, rounded corners, minimum size = 2.5cm]
	\def\bottom#1#2{\hbox{\vbox to #1{\vfill\hbox{#2}}}}
	\scalebox{0.8}{
		\begin{tikzpicture}
			% Nodes
			\node (L3) at(4,12) [latent]{$\xi^{(3)}$};
			\node (E3) at(0,8) [latent]{$\epsilon^{(3)}$};
			\node (L2) at(4,8) [latent]{$\xi^{(2)}$};
			\node (E2) at(0,4) [latent]{$\epsilon^{(2)}$};
			\node (L) at(4,4) [latent]{$\xi^{(1)}$};
			\node (E) at(0,0) [latent]{$\epsilon^{(1)}$};
			\node (X) at(4,0) [observed]{$x$};
			
			% Edges
			\draw [->, line width=1.1pt] (L) -- (X) node[midway,above right]{$\mathbf{\Lambda}^{(1)}$};
			\draw [->, line width=1.1pt] (L2) -- (L) node[midway,above right]{$\mathbf{\Lambda}^{(2)}$};
			\draw [->, line width=1.1pt] (L3) -- (L2) node[midway,above right]{$\mathbf{\Lambda}^{(3)}$};
			\draw [dashed,->, line width=1.1pt] (X) to [bend right = 45] node[midway, right]{$\tilde{\xi}^{(1)}_{\mathbf{\Theta}^{(1)},x}$} (L);
			\draw [dashed,->, line width=1.1pt] (L) to [bend right = 45] node[midway, right]{$\tilde{\xi}^{(2)}_{\mathbf{\Theta}^{(2)},\tilde{\xi}^{(1)}}$} (L2);
			\draw [dashed,->, line width=1.1pt] (L2) to [bend right = 45] node[midway, right]{$\tilde{\xi}^{(3)}_{\mathbf{\Theta}^{(3)},\tilde{\xi}^{(2)}}$} (L3);
			\draw [-, line width=1.1pt]  (E) -- (X) node[midway,above]{$\mathbf{I}_p$};
			\draw [-, line width=1.1pt]  (E2) -- (L) node[midway,above]{$\mathbf{I}_{m^{(1)}}$};
			\draw [-, line width=1.1pt]  (E3) -- (L2) node[midway,above]{$\mathbf{I}_{m^{(2)}}$};
			
			% Nodes
			\node (L32) at(12,12) [formative]{$\bar{\xi}^{(3)}$};
			\node (L22) at(12,8) [formative]{$\bar{\xi}^{(2)}$};
			\node (L2) at(12,4) [latent]{$\xi^{(1)}$};
			\node (E2) at(8,0) [latent]{$\epsilon^{(1)}$};
			\node (X2) at(12,0) [observed]{$x$};
			
			% Edges
			\draw [->, line width=1.1pt] (L2) -- (X2) node[midway,above right]{$\mathbf{\Lambda}^{(1)}$};
			\draw [->, line width=1.1pt] (L2) -- (L22) node[midway,above right]{$\mathbf{\Lambda}^{(2)}$};
			\draw [->, line width=1.1pt] (L22) -- (L32) node[midway,above right]{$\mathbf{\Lambda}^{(3)}$};
			\draw [dashed,->, line width=1.1pt] (X2) to [bend right = 45] node[midway, right]{$\tilde{\xi}^{(1)}_{\mathbf{\Theta}^{(1)},x}$} (L2);
			\draw [-, line width=1.1pt]  (E2) -- (X2) node[midway,above]{$\mathbf{I}_p$};
		\end{tikzpicture}
	}
	\caption{Schematic of deep generative (left-hand) versus deep hybrid (right-hand) architectures.
		Circular nodes represent random variables, with random latents represented by white nodes and observables in grey.
		Boxed nodes represent formative (non-random) latent variables $\bar{\xi}^{(l)}$.
		In this figure $\mathbf{\Theta}^{(l)}$ represents the collection of parameter matrices for retrieving the $l$th latent vector.
		The shorthand $\tilde{\xi}^{(l)}$ is used to represent $\tilde{\xi}^{(l)}_{\mathbf{\Theta}^{(l)},\tilde{\xi}^{(l-1)}}$ (to avoid notational clutter).}
	\label{SMFIG:Architecture}
\end{figure}

%--------------- Details 4 ----------------------------------------------
%------------------------------------------------------------------------

%%--------------------------------------------------------------------
%%--------------- Appendix -------------------------------------------
\begin{appendix}
	\renewcommand{\thesection}{S.\Alph{section}}
	%%%%%%%%%%%%%%%%%%%%%%%%%%%%
	%%%--- Code ex1 ------------
	%%%%%%%%%%%%%%%%%%%%%%%%%%%%
	\section{Code omics example}
	\label{SMAPP:Code}
	An implementation of Algorithm 1 of the MT as well as certain (in)determinacy metrics can be found in the software package \textsf{HAMMER}, available from the Comprehensive \textsf{R} Archive Network (CRAN) at \href{https://CRAN.R-project.org/package=HAMMER}
	{DOI:10.32614/CRAN.package.HAMMER}.
	The code below is based on \textsf{HAMMER} and can be used to reproduce the results from Section \ref{SMpar:Time}.
	The code also uses the \href{https://CRAN.R-project.org/package=microbenchmark}{\textsf{microbenchmark}} package for microbenchmarking executing times of Algorithm 1.
	The data that are used are publicly available and can be found in the \href{https://github.com/ramhiser/datamicroarray}{\textsf{datamicroarray}} package.
	
\begin{lstlisting}[language=R, caption={Code for reproducing the results from Section \ref{SMpar:Time}}]
##------------------------------------------------------------------------
## Libraries
##------------------------------------------------------------------------
library(HAMMER)
library(datamicroarray)
library(microbenchmark)
		
##------------------------------------------------------------------------
## Invoke data
##------------------------------------------------------------------------
## Keep only tumor samples and scale data
data('singh', package = 'datamicroarray')
Singh <- singh$x[singh$y == "Tumor",]
DAT   <- scale(Singh)
dim(Singh)
		
##------------------------------------------------------------------------
## Assessing intrinsic dimension
##------------------------------------------------------------------------
dim <- HAMMER.dimension(DAT, maxdim = 50)
		
##------------------------------------------------------------------------
## Run and time benchmark algorithm
##------------------------------------------------------------------------
TimeTest <- microbenchmark(HAMMER.estimate(DAT, m = dim),
                           setup = {set.seed(123)}, 
                           times = 100)
TimeTest
\end{lstlisting}

	\section{Code image compression example}
	\label{SMAPP:CodeImage}
	The code below also uses \textsf{HAMMER}.
	It reproduces the results from Section \ref{SMpar:Compress}.
	The code additionally uses the \href{https://CRAN.R-project.org/package=jpeg}{\textsf{jpeg}} package for reading and writing images stored in JPEG format.
	
	\begin{lstlisting}[language=R, caption={Code for reproducing the results from Section \ref{SMpar:Compress}}]
##------------------------------------------------------------------------
## Working directory
##------------------------------------------------------------------------
## Set to convenience
setwd(" ")

##------------------------------------------------------------------------
## Libraries
##------------------------------------------------------------------------
library(HAMMER)
library(jpeg)
library(MASS)

##------------------------------------------------------------------------
## Read image
##------------------------------------------------------------------------
im <- readJPEG("Sugny.jpg")

##------------------------------------------------------------------------
## Structure
##------------------------------------------------------------------------
## Three 3456x5184 matrices, each corresponding to an RGB color
str(im)

##------------------------------------------------------------------------
## Extract RGB color matrices
##------------------------------------------------------------------------
rimage <- (im[,,1])
gimage <- (im[,,2])
bimage <- (im[,,3])

##------------------------------------------------------------------------
## Mean centering
##------------------------------------------------------------------------
p      <- dim(rimage)[2]
n      <- dim(rimage)[1]
onevec <- as.matrix(rep(1, n))
Mr     <- onevec %*% colMeans(im[,,1])
Mg     <- onevec %*% colMeans(im[,,2])
Mb     <- onevec %*% colMeans(im[,,3])
rimage <- rimage - Mr
gimage <- gimage - Mg
bimage <- bimage - Mb

##------------------------------------------------------------------------
## Signal dimension evaluation
##------------------------------------------------------------------------
rdim <- HAMMER.dimension(rimage, maxdim = 600, 
                         method = "meansignal", factor = 10)
gdim <- HAMMER.dimension(gimage, maxdim = 600, 
                         method = "meansignal", factor = 10)
bdim <- HAMMER.dimension(bimage, maxdim = 600, 
                         method = "meansignal", factor = 10)

##------------------------------------------------------------------------
## Parameter learning
##------------------------------------------------------------------------
far <- HAMMER.estimate(rimage, m = rdim)
fag <- HAMMER.estimate(gimage, m = gdim)
fab <- HAMMER.estimate(bimage, m = bdim)

##------------------------------------------------------------------------
## Projection onto factor space
##------------------------------------------------------------------------
Fr <- HAMMER.score(far)
Fg <- HAMMER.score(fag)
Fb <- HAMMER.score(fab)

##------------------------------------------------------------------------
## Reconstruct compressed image
##------------------------------------------------------------------------
RED    <- (as.matrix(Fr) %*% t(far$Lambda) + Mr)
GREEN  <- (as.matrix(Fg) %*% t(fag$Lambda) + Mg)
BLUE   <- (as.matrix(Fb) %*% t(fab$Lambda) + Mb)
image  <- list(RED,GREEN,BLUE)
writeJPEG(simplify2array(image), "Sugny_compressed.jpg")

##------------------------------------------------------------------------
## Example distributions in color band
##------------------------------------------------------------------------
## These are Kernel densities 
den3d <- kde2d(im[,,1][,1], im[,,1][,2], n = 100)
persp(den3d, theta = 30, phi = 30, col = "red", box = TRUE, 
      ticktype = "detailed", zlab = "density",
      ylab = "column 2", xlab = "column 1")

den3d <- kde2d(im[,,2][,3], im[,,2][,4], n = 100)
persp(den3d, theta = 30, phi = 30, col = "green", box = TRUE, 
      ticktype = "detailed", zlab = "density",
      ylab = "column 4", xlab = "column 3")

den3d <- kde2d(im[,,3][,5], im[,,3][,6], n = 100)
persp(den3d, theta = 30, phi = 30, col = "blue", box = TRUE, 
      ticktype = "detailed", zlab = "density",
      ylab = "column 6", xlab = "column 5")

##------------------------------------------------------------------------
## Example distribution factor scores on color band
##------------------------------------------------------------------------
den3d <- kde2d(Fr[,1], Fr[,2], n = 100)
persp(den3d, theta = 30, phi = 30, col = "red", box = TRUE, 
      ticktype = "detailed", zlab = "density",
      ylab = "Factor 2", xlab = "Factor 1")

den3d <- kde2d(Fg[,3], Fg[,4], n = 100)
persp(den3d, theta = 30, phi = 30, col = "green", box = TRUE, 
      ticktype = "detailed", zlab = "density",
      ylab = "Factor 4", xlab = "Factor 3")

den3d <- kde2d(Fb[,5], Fb[,6], n = 100)
persp(den3d, theta = 30, phi = 30, col = "blue", box = TRUE, 
      ticktype = "detailed", zlab = "density",
      ylab = "Factor 6", xlab = "Factor 5")

##------------------------------------------------------------------------
## Simple assessment determinacy
##------------------------------------------------------------------------
rd <- HAMMER.determinacy(far)
rg <- HAMMER.determinacy(fag)
rb <- HAMMER.determinacy(fab)
gb <- c(rd$Imetrics[2,], rg$Imetrics[2,], rb$Imetrics[2,])
min(gb)
\end{lstlisting}
	
\end{appendix}
%%--------------- Appendix -------------------------------------------
%%--------------------------------------------------------------------

%--------------------------------------------------------------------
%--------------- Endnotes -------------------------------------------
%\begin{footnotesize}
\section*{Notes}
~\\
\printnotes*
%\end{footnotesize}
%--------------- Endnotes -------------------------------------------
%--------------------------------------------------------------------

%%--------------------------------------------------------------------
%%--------------- References -----------------------------------------
%\bibliographystyle{imsart-number}
%\bibliography{FAIn_refs_SM}

\begin{thebibliography}{107}
% BibTex style file: imsart-number.bst, 2017-11-03
% Default style options (sort=1,type=number).
% Used options (sort=1,type=number).

\bibitem{ref_RBM}
\begin{barticle}[author]
\bauthor{\bsnm{Ackley},~\bfnm{D.~H.}\binits{D.~H.}},
  \bauthor{\bsnm{Hinton},~\bfnm{G.~E.}\binits{G.~E.}} \AND
  \bauthor{\bsnm{Sejnowksi},~\bfnm{T.~J.}\binits{T.~J.}}
(\byear{1985}).
\btitle{A learning algorithm for {Boltzmann} machines}.
\bjournal{Cognitive Science}
\bvolume{9}
\bpages{147--169}.
\end{barticle}
\endbibitem

\bibitem{ref_AA2005}
\begin{bincollection}[author]
\bauthor{\bsnm{Aitkin},~\bfnm{M.}\binits{M.}} \AND
  \bauthor{\bsnm{Aitkin},~\bfnm{I.}\binits{I.}}
(\byear{2005}).
\btitle{Bayesian inference for factor scores}.
In \bbooktitle{{Contemporary Psychometrics: A Festschrift for Roderick P.\
  McDonald}}
(\beditor{\bfnm{A.}\binits{A.}~\bsnm{{Maydeu-Olivares}}} \AND
  \beditor{\bfnm{J.~J.}\binits{J.~J.}~\bsnm{{McArdle}}}, eds.)
\bpages{207--221}.
\bpublisher{{Mahwah New Jersey: Lawrence Erlbaum Associates}}.
\end{bincollection}
\endbibitem

\bibitem{ref_Alemi}
\begin{binproceedings}[author]
\bauthor{\bsnm{Alemi},~\bfnm{A.~A.}\binits{A.~A.}},
  \bauthor{\bsnm{Poole},~\bfnm{B.}\binits{B.}},
  \bauthor{\bsnm{Fischer},~\bfnm{I.}\binits{I.}},
  \bauthor{\bsnm{Dillon},~\bfnm{J.~V.}\binits{J.~V.}},
  \bauthor{\bsnm{Saurous},~\bfnm{R.~A.}\binits{R.~A.}} \AND
  \bauthor{\bsnm{Murphy},~\bfnm{K.}\binits{K.}}
(\byear{2018}).
\btitle{Fixing a broken {ELBO}}
In \bbooktitle{{Proceedings of the 35th International Conference on Machine
  Learning}}.
\end{binproceedings}
\endbibitem

\bibitem{ref_AmariDeep}
\begin{barticle}[author]
\bauthor{\bsnm{Amari},~\bfnm{S.}\binits{S.}}
(\byear{1967}).
\btitle{A theory of adaptive patterns classifier}.
\bjournal{IEEE Transactions on Electronic Computers}
\bvolume{EC-16}
\bpages{299--307}.
\end{barticle}
\endbibitem

\bibitem{ref_AndersonBible}
\begin{bbook}[author]
\bauthor{\bsnm{Anderson},~\bfnm{T.~W.}\binits{T.~W.}}
(\byear{2003}).
\btitle{{An Introduction to Multivariate Statistical Analysis}}.
\bpublisher{Hoboken, {NJ}: {J}ohn {W}iley \& {S}ons, {I}nc. (3rd ed.)}.
\end{bbook}
\endbibitem

\bibitem{ref_AndersonRubinClassic}
\begin{binproceedings}[author]
\bauthor{\bsnm{Anderson},~\bfnm{T.~W.}\binits{T.~W.}} \AND
  \bauthor{\bsnm{Rubin},~\bfnm{H.}\binits{H.}}
(\byear{1956}).
\btitle{Statistical Inference in Factor Analysis}.
In \bbooktitle{Proceedings of the Third Berkeley Symposium on Mathematical
  Statistics and Probability}
\bvolume{{5: Contributions to Econometrics, Industrial Research, and
  Psychometry}}
\bpages{111--150}.
\bpublisher{{Berkeley, CA: University of California Press}}.
\end{binproceedings}
\endbibitem

\bibitem{ref_IRLBM}
\begin{barticle}[author]
\bauthor{\bsnm{Baglama},~\bfnm{J.}\binits{J.}} \AND
  \bauthor{\bsnm{Reichel},~\bfnm{L.}\binits{L.}}
(\byear{2005}).
\btitle{Augmented implicitly restarted {Lanczos} bidiagonalization methods}.
\bjournal{SIAM Journal on Scientific Computing}
\bvolume{27}
\bpages{19--42}.
\end{barticle}
\endbibitem

\bibitem{ref_Bartholomew1981}
\begin{barticle}[author]
\bauthor{\bsnm{Bartholomew},~\bfnm{D.~J.}\binits{D.~J.}}
(\byear{1981}).
\btitle{Posterior analysis of the factor model}.
\bjournal{British Journal of Mathematical and Statistical Psychology}
\bvolume{34}
\bpages{93-99}.
\end{barticle}
\endbibitem

\bibitem{ref_BMW94}
\begin{bbook}[author]
\bauthor{\bsnm{Bekker},~\bfnm{P.~A.}\binits{P.~A.}},
  \bauthor{\bsnm{Merckens},~\bfnm{A.}\binits{A.}} \AND
  \bauthor{\bsnm{Wansbeek},~\bfnm{T.~J.}\binits{T.~J.}}
(\byear{1994}).
\btitle{Identification, Equivalent Models, and Computer Algebra}.
\bpublisher{Boston [etc.]: {A}cademic {P}ress}.
\end{bbook}
\endbibitem

\bibitem{ref_Bishop&Bishop}
\begin{bbook}[author]
\bauthor{\bsnm{Bishop},~\bfnm{C.~M.}\binits{C.~M.}} \AND
  \bauthor{\bsnm{Bishop},~\bfnm{H.}\binits{H.}}
(\byear{2024}).
\btitle{{Deep Learning: Foundations and Concepts}}.
\bpublisher{Springer}.
\end{bbook}
\endbibitem

\bibitem{ref_Bollen1989}
\begin{bbook}[author]
\bauthor{\bsnm{Bollen},~\bfnm{K.~A.}\binits{K.~A.}}
(\byear{1989}).
\btitle{{Structural Equations with Latent Variables}}.
\bpublisher{{N}ew {Y}ork [etc.]: {J}ohn {W}iley \& {S}ons}.
\end{bbook}
\endbibitem

\bibitem{ref_BorsOnto}
\begin{barticle}[author]
\bauthor{\bsnm{Borsboom},~\bfnm{D.}\binits{D.}},
  \bauthor{\bsnm{Mellenbergh},~\bfnm{G.~J.}\binits{G.~J.}} \AND
  \bauthor{\bsnm{{van Heerden}},~\bfnm{J.}\binits{J.}}
(\byear{2003}).
\btitle{The theoretical status of latent variables}.
\bjournal{Psychological Review}
\bvolume{110}
\bpages{203--219}.
\end{barticle}
\endbibitem

\bibitem{ref_Bowman}
\begin{binproceedings}[author]
\bauthor{\bsnm{Bowman},~\bfnm{S.~R.}\binits{S.~R.}},
  \bauthor{\bsnm{Vilnis},~\bfnm{L.}\binits{L.}},
  \bauthor{\bsnm{Vinyals},~\bfnm{O.}\binits{O.}},
  \bauthor{\bsnm{Dai},~\bfnm{A.~M.}\binits{A.~M.}},
  \bauthor{\bsnm{Jozefowicz},~\bfnm{R.}\binits{R.}} \AND
  \bauthor{\bsnm{Bengio},~\bfnm{S.}\binits{S.}}
(\byear{2016}).
\btitle{Generating sentences from a continuous space}
In \bbooktitle{SIGNLL Conference on Computational Natural Language Learning
  (CONLL), 2016}.
\end{binproceedings}
\endbibitem

\bibitem{ref_Cattell78}
\begin{bbook}[author]
\bauthor{\bsnm{Cattell},~\bfnm{R.~B.}\binits{R.~B.}}
(\byear{1978}).
\btitle{{The Scientific Use of Factor Analysis in Behavioral and Life
  Sciences}}.
\bpublisher{{New York: Plenum Press}}.
\end{bbook}
\endbibitem

\bibitem{ref_Chen}
\begin{barticle}[author]
\bauthor{\bsnm{Chen},~\bfnm{B.}\binits{B.}},
  \bauthor{\bsnm{Polatkan},~\bfnm{G.}\binits{G.}},
  \bauthor{\bsnm{Shapiro},~\bfnm{G.}\binits{G.}},
  \bauthor{\bsnm{Blei},~\bfnm{D.}\binits{D.}},
  \bauthor{\bsnm{Dunson},~\bfnm{D.}\binits{D.}} \AND
  \bauthor{\bsnm{Carin},~\bfnm{L.}\binits{L.}}
(\byear{2013}).
\btitle{Deep learning with hierarchical convolutional factor analysis}.
\bjournal{{IEEE Transactions on Pattern Analysis and Machine Intelligence}}
\bvolume{35}
\bpages{1887--1901}.
\end{barticle}
\endbibitem

\bibitem{ref_Cramer46}
\begin{bbook}[author]
\bauthor{\bsnm{Cram\'{e}r},~\bfnm{H.}\binits{H.}}
(\byear{1946}).
\btitle{{Mathematical Methods of Statistics}}.
\bpublisher{{Princeton: Princeton University Press}}.
\end{bbook}
\endbibitem

\bibitem{ref_BoltzFA}
\begin{bincollection}[author]
\bauthor{\bsnm{Cueto},~\bfnm{M.~A.}\binits{M.~A.}},
  \bauthor{\bsnm{Morton},~\bfnm{J.}\binits{J.}} \AND
  \bauthor{\bsnm{Sturmfels},~\bfnm{B.}\binits{B.}}
(\byear{2010}).
\btitle{Geometry of the restricted {Boltzmann machine}}.
In \bbooktitle{{Algebraic Methods in Statistics and Probability II}},
(\beditor{\bfnm{M.}\binits{M.}~\bsnm{Viana}} \AND
  \beditor{\bfnm{H.}\binits{H.}~\bsnm{Wynn}}, eds.).
\bseries{{Contemporary Mathematics}}
\bvolume{516}
\bpages{135--153}.
\bpublisher{{Providence: American Mathematical Society}}.
\end{bincollection}
\endbibitem

\bibitem{ref_Dai20}
\begin{binproceedings}[author]
\bauthor{\bsnm{Dai},~\bfnm{B.}\binits{B.}},
  \bauthor{\bsnm{Wang},~\bfnm{Z.}\binits{Z.}} \AND
  \bauthor{\bsnm{Wipf},~\bfnm{D.}\binits{D.}}
(\byear{2020}).
\btitle{{The usual suspects? Reassessing blame for VAE posterior collapse}}
In \bbooktitle{{Proceedings of the 37th International Conference on Machine
  Learning}}.
\end{binproceedings}
\endbibitem

\bibitem{ref_DaiDutta}
\begin{barticle}[author]
\bauthor{\bsnm{Dai},~\bfnm{F.}\binits{F.}},
  \bauthor{\bsnm{Dutta},~\bfnm{S.}\binits{S.}} \AND
  \bauthor{\bsnm{Maitra},~\bfnm{R.}\binits{R.}}
(\byear{2020}).
\btitle{A matrix-free likelihood method for exploratory factor analysis of
  high-dimensional {G}aussian data}.
\bjournal{Journal of Computational and Graphical Statistics}
\bvolume{29}
\bpages{675--680}.
\end{barticle}
\endbibitem

\bibitem{ref_Helm96}
\begin{barticle}[author]
\bauthor{\bsnm{Dayan},~\bfnm{P.}\binits{P.}} \AND
  \bauthor{\bsnm{Hinton},~\bfnm{G.~E.}\binits{G.~E.}}
(\byear{1996}).
\btitle{Varieties of {Helmholtz} machine}.
\bjournal{Neural Networks}
\bvolume{8}
\bpages{1385--1403}.
\end{barticle}
\endbibitem

\bibitem{ref_GramPino}
\begin{barticle}[author]
\bauthor{\bsnm{{del Pino}},~\bfnm{G.~E.}\binits{G.~E.}} \AND
  \bauthor{\bsnm{Galaz},~\bfnm{H.}\binits{H.}}
(\byear{1995}).
\btitle{Statistical applications of the inverse {Gram} matrix: {A}
  revisitation}.
\bjournal{Brazilian Journal of Probability and Statistics}
\bvolume{9}
\bpages{177--196}.
\end{barticle}
\endbibitem

\bibitem{ref_DelBeng}
\begin{binproceedings}[author]
\bauthor{\bsnm{Delalleau},~\bfnm{O.}\binits{O.}} \AND
  \bauthor{\bsnm{Bengio},~\bfnm{Y.}\binits{Y.}}
(\byear{2011}).
\btitle{{Shallow vs. deep sum-product networks}}
In \bbooktitle{{Advances in Neural Information Processing Systems 24 (NIPS
  2011)}}.
\end{binproceedings}
\endbibitem

\bibitem{ref_DFeffect}
\begin{barticle}[author]
\bauthor{\bsnm{Diaconis},~\bfnm{P.}\binits{P.}} \AND
  \bauthor{\bsnm{Freedman},~\bfnm{D.}\binits{D.}}
(\byear{1984}).
\btitle{Asymptotics of graphical projection pursuit}.
\bjournal{Annals of Statistics}
\bvolume{12}
\bpages{793--815}.
\end{barticle}
\endbibitem

\bibitem{ref_GEOGROU}
\begin{bbook}[author]
\bauthor{\bsnm{Dieudonn\'{e}},~\bfnm{J.}\binits{J.}}
(\byear{1963}).
\btitle{{La G\'{e}om\'{e}trie des Groupes Classiques}}.
\bpublisher{{Berlin: Springer-Verlag} (2nd ed.)}.
\end{bbook}
\endbibitem

\bibitem{ref_Donoho2000}
\begin{bmisc}[author]
\bauthor{\bsnm{Donoho},~\bfnm{D.~L.}\binits{D.~L.}}
(\byear{2000}).
\btitle{High-dimensional data analysis: The curses and blessings of
  dimensionality. \emph{Invited lecture at Mathematical Challenges of the 21st
  Century, AMS National Meeting, Los Angeles, CA, USA}}.
\end{bmisc}
\endbibitem

\bibitem{ref_Duncan1944}
\begin{barticle}[author]
\bauthor{\bsnm{Duncan},~\bfnm{W.~J.}\binits{W.~J.}}
(\byear{1944}).
\btitle{Some devices for the solution of large sets of simultaneous linear
  equations (with an appendix on the reciprocation of partitioned matrices)}.
\bjournal{London, Edinburgh, and Dublin Philosophical Magazine and Journal of
  Science: Series 7}
\bvolume{35}
\bpages{660--670}.
\end{barticle}
\endbibitem

\bibitem{ref_EBG78}
\begin{barticle}[author]
\bauthor{\bsnm{Elffers},~\bfnm{H.}\binits{H.}},
  \bauthor{\bsnm{Bethlehem},~\bfnm{J.}\binits{J.}} \AND
  \bauthor{\bsnm{Gill},~\bfnm{R.}\binits{R.}}
(\byear{1978}).
\btitle{Indeterminacy problems and the interpretation of factor analysis
  results}.
\bjournal{Statistica Neerlandica}
\bvolume{32}
\bpages{181--199}.
\end{barticle}
\endbibitem

\bibitem{ref_PreTrain}
\begin{barticle}[author]
\bauthor{\bsnm{Erhan},~\bfnm{D.}\binits{D.}},
  \bauthor{\bsnm{Bengio},~\bfnm{Y.}\binits{Y.}},
  \bauthor{\bsnm{Courville},~\bfnm{A.}\binits{A.}},
  \bauthor{\bsnm{Manzagol},~\bfnm{P.~A.}\binits{P.~A.}},
  \bauthor{\bsnm{Vincent},~\bfnm{P.}\binits{P.}} \AND
  \bauthor{\bsnm{Bengio},~\bfnm{S.}\binits{S.}}
(\byear{2010}).
\btitle{Why does unsupervised pre-training help deep learning?}
\bjournal{Journal of Machine Learning Research}
\bvolume{11}
\bpages{625--660}.
\end{barticle}
\endbibitem

\bibitem{ref_Manifold}
\begin{barticle}[author]
\bauthor{\bsnm{Fefferman},~\bfnm{C.}\binits{C.}},
  \bauthor{\bsnm{Mitter},~\bfnm{S.}\binits{S.}} \AND
  \bauthor{\bsnm{Narayanan},~\bfnm{H.}\binits{H.}}
(\byear{2016}).
\btitle{Testing the manifold hypothesis}.
\bjournal{Journal of the American Mathematical Society}
\bvolume{29}
\bpages{983--1049}.
\end{barticle}
\endbibitem

\bibitem{ref_GZ1980}
\begin{barticle}[author]
\bauthor{\bsnm{Geweke},~\bfnm{J.~F.}\binits{J.~F.}} \AND
  \bauthor{\bsnm{Singleton},~\bfnm{K.~J.}\binits{K.~J.}}
(\byear{1980}).
\btitle{Interpreting the likelihood ratio statistic in factor models when
  sample size is small}.
\bjournal{Journal of the American Statistical Association}
\bvolume{75}
\bpages{133--137}.
\end{barticle}
\endbibitem

\bibitem{DL16}
\begin{bbook}[author]
\bauthor{\bsnm{Goodfellow},~\bfnm{I.}\binits{I.}},
  \bauthor{\bsnm{Bengio},~\bfnm{Y.}\binits{Y.}} \AND
  \bauthor{\bsnm{Courville},~\bfnm{A.}\binits{A.}}
(\byear{2016}).
\btitle{{Deep Learning}}.
\bpublisher{{Cambridge, MT: MIT Press}}.
\end{bbook}
\endbibitem

\bibitem{ref_Garbon18}
\begin{barticle}[author]
\bauthor{\bsnm{Gorban},~\bfnm{A.~N.}\binits{A.~N.}} \AND
  \bauthor{\bsnm{Tyukin},~\bfnm{I.~Y.}\binits{I.~Y.}}
(\byear{2018}).
\btitle{Blessing of dimensionality: {Mathematical} foundations of the
  statistical physics of data}.
\bjournal{Philosophical Transactions of the Royal Society A}
\bvolume{376}
\bpages{20170237}.
\end{barticle}
\endbibitem

\bibitem{ref_Gut2005Main}
\begin{bbook}[author]
\bauthor{\bsnm{Gut},~\bfnm{A.}\binits{A.}}
(\byear{2005}).
\btitle{{Probability: A Graduate Course}}.
\bpublisher{{New York: Springer}}.
\end{bbook}
\endbibitem

\bibitem{ref_Guttman1955}
\begin{barticle}[author]
\bauthor{\bsnm{Guttman},~\bfnm{L.}\binits{L.}}
(\byear{1955}).
\btitle{The determinacy of factor score matrices with implications for five
  other basic problems of common factor theory}.
\bjournal{British Journal of Mathematical and Statistical Psychology}
\bvolume{8}
\bpages{65--81}.
\end{barticle}
\endbibitem

\bibitem{ref_HaigAbductive}
\begin{barticle}[author]
\bauthor{\bsnm{Haig},~\bfnm{B.~D.}\binits{B.~D.}}
(\byear{2005}).
\btitle{Exploratory factor analysis, theory generation, and scientific method}.
\bjournal{{Multivariate Behavioral Research}}
\bvolume{40}
\bpages{303--329}.
\end{barticle}
\endbibitem

\bibitem{ref_HLeffect}
\begin{barticle}[author]
\bauthor{\bsnm{Hall},~\bfnm{P.}\binits{P.}} \AND
  \bauthor{\bsnm{Li},~\bfnm{K.~C.}\binits{K.~C.}}
(\byear{1993}).
\btitle{On almost linearity of low-dimensional projections from
  high-dimensional data}.
\bjournal{Annals of Statistics}
\bvolume{21}
\bpages{867--889}.
\end{barticle}
\endbibitem

\bibitem{ref_Push}
\begin{barticle}[author]
\bauthor{\bsnm{Henderson},~\bfnm{H.~V.}\binits{H.~V.}} \AND
  \bauthor{\bsnm{Searle},~\bfnm{S.~R.}\binits{S.~R.}}
(\byear{1981}).
\btitle{On deriving the inverse of a sum of matrices}.
\bjournal{{SIAM Review}}
\bvolume{23}
\bpages{53--60}.
\end{barticle}
\endbibitem

\bibitem{ref_PROMAX}
\begin{barticle}[author]
\bauthor{\bsnm{Hendrickson},~\bfnm{A.~E.}\binits{A.~E.}} \AND
  \bauthor{\bsnm{White},~\bfnm{P.~O.}\binits{P.~O.}}
(\byear{1964}).
\btitle{{PROMAX: A quick method for rotation to oblique simple structure}}.
\bjournal{British Journal of Mathematical \& Statistical Psychology}
\bvolume{17}
\bpages{65--70}.
\end{barticle}
\endbibitem

\bibitem{ref_HG97}
\begin{barticle}[author]
\bauthor{\bsnm{Hinton},~\bfnm{G.~E.}\binits{G.~E.}} \AND
  \bauthor{\bsnm{Ghahramani},~\bfnm{Z.}\binits{Z.}}
(\byear{1997}).
\btitle{Generative models for discovering sparse distributed representations}.
\bjournal{Philosophical Transactions of the Royal Society of London, B}
\bvolume{352}
\bpages{1177--1190}.
\end{barticle}
\endbibitem

\bibitem{ref_Hinton2006}
\begin{barticle}[author]
\bauthor{\bsnm{Hinton},~\bfnm{G.~E.}\binits{G.~E.}} \AND
  \bauthor{\bsnm{Salakhutdinov},~\bfnm{R.~R.}\binits{R.~R.}}
(\byear{2006}).
\btitle{Reducing the dimensionality of data with neural networks}.
\bjournal{Science}
\bvolume{313}
\bpages{504--507}.
\end{barticle}
\endbibitem

\bibitem{ref_Yama2015}
\begin{barticle}[author]
\bauthor{\bsnm{Hirose},~\bfnm{K.}\binits{K.}} \AND
  \bauthor{\bsnm{Yamamoto},~\bfnm{M.}\binits{M.}}
(\byear{2015}).
\btitle{Sparse estimation via nonconcave penalized likelihood in factor
  analysis model}.
\bjournal{Statistics \& Computing}
\bvolume{25}
\bpages{863--875}.
\end{barticle}
\endbibitem

\bibitem{ref_HJ85}
\begin{bbook}[author]
\bauthor{\bsnm{Horn},~\bfnm{R.~A.}\binits{R.~A.}} \AND
  \bauthor{\bsnm{Johnson},~\bfnm{C.~A.}\binits{C.~A.}}
(\byear{1985}).
\btitle{{Matrix Analysis}}.
\bpublisher{Cambridge: Cambdridge University Press}.
\end{bbook}
\endbibitem

\bibitem{ref_Universal}
\begin{barticle}[author]
\bauthor{\bsnm{Hornik},~\bfnm{K.}\binits{K.}},
  \bauthor{\bsnm{Stinchcombe},~\bfnm{M.}\binits{M.}} \AND
  \bauthor{\bsnm{White},~\bfnm{H.}\binits{H.}}
(\byear{1989}).
\btitle{Multilayer feedforward networks are universal approximators}.
\bjournal{Neural Networks}
\bvolume{2}
\bpages{359--366}.
\end{barticle}
\endbibitem

\bibitem{ref_Howe55}
\begin{btechreport}[author]
\bauthor{\bsnm{Howe},~\bfnm{W.~G.}\binits{W.~G.}}
(\byear{1955}).
\btitle{Some contributions to factor analysis.}
\btype{Technical Report} No. \bnumber{ORNL-1919},
\bpublisher{{Oak Ridge National Laboratory}}.
\end{btechreport}
\endbibitem

\bibitem{ref_IvaDeep}
\begin{bbook}[author]
\bauthor{\bsnm{Ivakhnenko},~\bfnm{A.~G.}\binits{A.~G.}} \AND
  \bauthor{\bsnm{Lapa},~\bfnm{V.~G.}\binits{V.~G.}}
(\byear{1967}).
\btitle{{Cybernetics and Forecasting Techniques}}.
\bpublisher{{New York: American Elsevier Publishing co.}}
\end{bbook}
\endbibitem

\bibitem{ref_Jenn69}
\begin{barticle}[author]
\bauthor{\bsnm{Jennrich},~\bfnm{R.~I.}\binits{R.~I.}} \AND
  \bauthor{\bsnm{Robinson},~\bfnm{S.~M.}\binits{S.~M.}}
(\byear{1969}).
\btitle{A {Newton-Rhapson} algorithm for maximum likelihood factor analysis}.
\bjournal{Psychometrika}
\bvolume{34}
\bpages{111--123}.
\end{barticle}
\endbibitem

\bibitem{ref_Joreskogg67}
\begin{barticle}[author]
\bauthor{\bsnm{J\"{o}reskog},~\bfnm{K.~G.}\binits{K.~G.}}
(\byear{1967}).
\btitle{Some contributions to maximum likelihood factor analysis}.
\bjournal{Psychometrika}
\bvolume{32}
\bpages{443--482}.
\end{barticle}
\endbibitem

\bibitem{ref_Kainen1997}
\begin{bincollection}[author]
\bauthor{\bsnm{Kainen},~\bfnm{P.~C.}\binits{P.~C.}}
(\byear{1997}).
\btitle{Utilizing Geometric Anomalies of High Dimension: {When} Complexity
  Makes Computation Easier}.
In \bbooktitle{Computer Intensive Methods in Control and Signal Processing}
(\beditor{\bfnm{M.}\binits{M.}~\bsnm{K\'{a}rn\'{y}}} \AND
  \beditor{\bfnm{K.}\binits{K.}~\bsnm{Warwick}}, eds.)
\bchapter{18},
\bpages{283--294}.
\bpublisher{Boston, MA: Birkh\"{a}user}.
\end{bincollection}
\endbibitem

\bibitem{ref_KainenKurk}
\begin{barticle}[author]
\bauthor{\bsnm{Kainen},~\bfnm{P.~C.}\binits{P.~C.}} \AND
  \bauthor{\bsnm{K\r{u}rkov\'{a}},~\bfnm{V.}\binits{V.}}
(\byear{1993}).
\btitle{Quasiorthogonal dimension of {Euclidian} spaces}.
\bjournal{Applied Mathematics Letters}
\bvolume{6}
\bpages{7--10}.
\end{barticle}
\endbibitem

\bibitem{ref_VARIMAX}
\begin{barticle}[author]
\bauthor{\bsnm{Kaiser},~\bfnm{H.~F.}\binits{H.~F.}}
(\byear{1958}).
\btitle{{The Varimax criterion for analytic rotation in factor analysis}}.
\bjournal{Psychometrika}
\bvolume{23}
\bpages{187--200}.
\end{barticle}
\endbibitem

\bibitem{ref_KMO}
\begin{barticle}[author]
\bauthor{\bsnm{Kaiser},~\bfnm{H.~F.}\binits{H.~F.}} \AND
  \bauthor{\bsnm{Rice},~\bfnm{J.}\binits{J.}}
(\byear{1974}).
\btitle{Little jiffy, mark {IV}}.
\bjournal{Educational \& Psychological Measurement}
\bvolume{34}
\bpages{111--117}.
\end{barticle}
\endbibitem

\bibitem{ref_Kato76}
\begin{bbook}[author]
\bauthor{\bsnm{Kato},~\bfnm{T.}\binits{T.}}
(\byear{1976}).
\btitle{{Perturbation Theory for Linear Operators}}.
\bpublisher{{New-York: Springer-Verlag}}.
\end{bbook}
\endbibitem

\bibitem{ref_Kestelman1952}
\begin{barticle}[author]
\bauthor{\bsnm{Kestelman},~\bfnm{H.}\binits{H.}}
(\byear{1952}).
\btitle{The fundamental equation of factor analysis}.
\bjournal{British Journal of Mathematical and Statistical Psychology}
\bvolume{5}
\bpages{1--6}.
\end{barticle}
\endbibitem

\bibitem{ref_KingmaNeu}
\begin{binproceedings}[author]
\bauthor{\bsnm{Kingma},~\bfnm{D.}\binits{D.}},
  \bauthor{\bsnm{Salimans},~\bfnm{T.}\binits{T.}},
  \bauthor{\bsnm{Josefowicz},~\bfnm{R.}\binits{R.}},
  \bauthor{\bsnm{Chen},~\bfnm{X.}\binits{X.}},
  \bauthor{\bsnm{Sutskever},~\bfnm{I.}\binits{I.}} \AND
  \bauthor{\bsnm{Welling},~\bfnm{M}\binits{M.}}
(\byear{2016}).
\btitle{Improving variational autoencoders with inverse autoregressive flow}.
In \bbooktitle{{30th Annual Conference on Neural Information Processing
  Systems}}
(\beditor{\bfnm{D.}\binits{D.}~\bsnm{Lee}},
  \beditor{\bfnm{U}\binits{U.}~\bsnm{{von Luxburg}}},
  \beditor{\bfnm{R.}\binits{R.}~\bsnm{Garnett}},
  \beditor{\bfnm{M.}\binits{M.}~\bsnm{Sugiyama}} \AND
  \beditor{\bfnm{I.}\binits{I.}~\bsnm{Guyon}}, eds.).
\bseries{Advances in Neural Information Processing Systems}
\bvolume{29}.
\end{binproceedings}
\endbibitem

\bibitem{ref_KWel}
\begin{binproceedings}[author]
\bauthor{\bsnm{Kingma},~\bfnm{D.~P.}\binits{D.~P.}} \AND
  \bauthor{\bsnm{Welling},~\bfnm{M.}\binits{M.}}
(\byear{2014}).
\btitle{Auto-encoding variational {Bayes}}
In \bbooktitle{International Conference on Learning Representations (ICLR)
  2014}.
\end{binproceedings}
\endbibitem

\bibitem{ref_DeepLVMs}
\begin{barticle}[author]
\bauthor{\bsnm{Kingma},~\bfnm{D.~P.}\binits{D.~P.}} \AND
  \bauthor{\bsnm{Welling},~\bfnm{M.}\binits{M.}}
(\byear{2019}).
\btitle{An introduction to variational autoencoders}.
\bjournal{Foundations and Trends in Machine Learning}
\bvolume{12}
\bpages{307--392}.
\end{barticle}
\endbibitem

\bibitem{ref_CM}
\begin{barticle}[author]
\bauthor{\bsnm{Kong},~\bfnm{X.}\binits{X.}},
  \bauthor{\bsnm{Jiang},~\bfnm{X.}\binits{X.}},
  \bauthor{\bsnm{Zhang},~\bfnm{B.}\binits{B.}},
  \bauthor{\bsnm{Yuan},~\bfnm{J.}\binits{J.}} \AND
  \bauthor{\bsnm{Ge},~\bfnm{Z.}\binits{Z.}}
(\byear{2022}).
\btitle{{Latent variable models in the era of industrial big data: Extension
  and beyond}}.
\bjournal{Annual Reviews in Control}
\bvolume{54}
\bpages{167--199}.
\end{barticle}
\endbibitem

\bibitem{ref_Koster1999}
\begin{barticle}[author]
\bauthor{\bsnm{Koster},~\bfnm{J.~T.~A.}\binits{J.~T.~A.}}
(\byear{1999}).
\btitle{{On the validity of the Markov interpretation of path diagrams of
  Gaussian structural equations systems with correlated errors}}.
\bjournal{Scandinavian Journal of Statistics}
\bvolume{26}
\bpages{413--431}.
\end{barticle}
\endbibitem

\bibitem{ref_KrijnenHeywood}
\begin{barticle}[author]
\bauthor{\bsnm{Krijnen},~\bfnm{W.~P.}\binits{W.~P.}},
  \bauthor{\bsnm{Dijkstra},~\bfnm{T.~K.}\binits{T.~K.}} \AND
  \bauthor{\bsnm{Gill},~\bfnm{R.~D.}\binits{R.~D.}}
(\byear{1998}).
\btitle{Conditions for factor (in)determinacy in factor analysis}.
\bjournal{Psychometrika}
\bvolume{63}
\bpages{359--367}.
\end{barticle}
\endbibitem

\bibitem{ref_KuaNominal}
\begin{barticle}[author]
\bauthor{\bsnm{Kuo},~\bfnm{Z.~Y.}\binits{Z.~Y.}}
(\byear{1921}).
\btitle{Giving up instincts in psychology}.
\bjournal{Journal of Philosophy}
\bvolume{18}
\bpages{645--664}.
\end{barticle}
\endbibitem

\bibitem{ref_Lawley40}
\begin{barticle}[author]
\bauthor{\bsnm{Lawley},~\bfnm{D.~N.}\binits{D.~N.}}
(\byear{1940}).
\btitle{The estimation of factor loadings by the method of {Maximum
  Likelihood}}.
\bjournal{Proceedings of the Royal Society of Edinburgh}
\bvolume{60}
\bpages{64--82}.
\end{barticle}
\endbibitem

\bibitem{ref_Lucas}
\begin{binproceedings}[author]
\bauthor{\bsnm{Lucas},~\bfnm{J.}\binits{J.}},
  \bauthor{\bsnm{Tucker},~\bfnm{G.}\binits{G.}},
  \bauthor{\bsnm{Grosse},~\bfnm{R.}\binits{R.}} \AND
  \bauthor{\bsnm{Narauzi},~\bfnm{M.}\binits{M.}}
(\byear{2019}).
\btitle{Understanding posterior collapse in generative latent variable models}
In \bbooktitle{International Conference on Learning Representations (ICLR)
  2019}.
\end{binproceedings}
\endbibitem

\bibitem{ref_Maraun1996}
\begin{barticle}[author]
\bauthor{\bsnm{Maraun},~\bfnm{M.~D.}\binits{M.~D.}}
(\byear{1996}).
\btitle{Metaphor taken as math: {I}ndeterminancy in the factor analysis model}.
\bjournal{Multivariate Behavioral Research}
\bvolume{31}
\bpages{517--538}.
\end{barticle}
\endbibitem

\bibitem{ref_HUAident}
\begin{barticle}[author]
\bauthor{\bsnm{{McCrimmon}},~\bfnm{K.}\binits{K.}}
(\byear{1978}).
\btitle{Jordan algebras and their applications}.
\bjournal{Bulletin of the American Mathematical Society}
\bvolume{84}
\bpages{612--627}.
\end{barticle}
\endbibitem

\bibitem{ref_LSscores}
\begin{barticle}[author]
\bauthor{\bsnm{McDonald},~\bfnm{R.~P.}\binits{R.~P.}} \AND
  \bauthor{\bsnm{Burr},~\bfnm{E.~J.}\binits{E.~J.}}
(\byear{1967}).
\btitle{A comparison of four methods of constructing factor scores}.
\bjournal{Psychometrika}
\bvolume{32}
\bpages{381--401}.
\end{barticle}
\endbibitem

\bibitem{ref_MM_Indet}
\begin{barticle}[author]
\bauthor{\bsnm{{McDonald}},~\bfnm{R.~P.}\binits{R.~P.}} \AND
  \bauthor{\bsnm{Mulaik},~\bfnm{S.~A.}\binits{S.~A.}}
(\byear{1979}).
\btitle{Determinacy of common factors: {A nontechnical review}}.
\bjournal{Psychological Bulletin}
\bvolume{86}
\bpages{297--306}.
\end{barticle}
\endbibitem

\bibitem{ref_MW2020}
\begin{barticle}[author]
\bauthor{\bsnm{{McNeish}},~\bfnm{D.}\binits{D.}} \AND
  \bauthor{\bsnm{Wolf},~\bfnm{M.~G.}\binits{M.~G.}}
(\byear{2020}).
\btitle{Thinking twice about sum scores}.
\bjournal{Behavior Research Methods}
\bvolume{52}
\bpages{2287--2305}.
\end{barticle}
\endbibitem

\bibitem{Milman}
\begin{bbook}[author]
\bauthor{\bsnm{Milman},~\bfnm{V.~D.}\binits{V.~D.}} \AND
  \bauthor{\bsnm{Schechtman},~\bfnm{G.}\binits{G.}}
(\byear{1986}).
\btitle{{Asymptotic Theory of Finite Dimensional Normed Spaces}}.
\bseries{Lecture Notes in Mathematics}
\bvolume{1200}.
\bpublisher{{Berlin: Springer-Verlag}}.
\end{bbook}
\endbibitem

\bibitem{ref_Mulaik1976}
\begin{barticle}[author]
\bauthor{\bsnm{Mulaik},~\bfnm{S.~A.}\binits{S.~A.}}
(\byear{1976}).
\btitle{Comments on ``The measurement of factorial indeterminacy"}.
\bjournal{Psychometrika}
\bvolume{41}
\bpages{249--262}.
\end{barticle}
\endbibitem

\bibitem{ref_Mulaik05}
\begin{bincollection}[author]
\bauthor{\bsnm{Mulaik},~\bfnm{S.~A.}\binits{S.~A.}}
(\byear{2005}).
\btitle{Looking back on the indeterminacy controversies in factor analysis}.
In \bbooktitle{{Contemporary Psychometrics: A Festschrift for Roderick P.\
  McDonald}}
(\beditor{\bfnm{A.}\binits{A.}~\bsnm{{Maydeu-Olivares}}} \AND
  \beditor{\bfnm{J.~J.}\binits{J.~J.}~\bsnm{{McArdle}}}, eds.)
\bchapter{7},
\bpages{174--206}.
\bpublisher{{Mahwah New Jersey: Lawrence Erlbaum Associates}}.
\end{bincollection}
\endbibitem

\bibitem{ref_Mulaik2010}
\begin{bbook}[author]
\bauthor{\bsnm{Mulaik},~\bfnm{S.~A.}\binits{S.~A.}}
(\byear{2010}).
\btitle{{Foundations of Factor Analysis}}.
\bpublisher{{Boca Raton, FL: Chapman Hall/CRC} (2nd ed.)}.
\end{bbook}
\endbibitem

\bibitem{ref_Munch_FR}
\begin{barticle}[author]
\bauthor{\bsnm{M\"{u}nch},~\bfnm{M.~M.}\binits{M.~M.}}, \bauthor{\bsnm{{van de
  Wiel}},~\bfnm{M.~A.}\binits{M.~A.}}, \bauthor{\bsnm{{van der
  Vaart}},~\bfnm{A.~W.}\binits{A.~W.}} \AND
  \bauthor{\bsnm{Peeters},~\bfnm{C.~F.~W.}\binits{C.~F.~W.}}
(\byear{2022}).
\btitle{Semi-supervised empirical {Bayes} group-regularized factor regression}.
\bjournal{Biometrical Journal}
\bvolume{64}
\bpages{1289--1306}.
\end{barticle}
\endbibitem

\bibitem{ref_Nakajma}
\begin{barticle}[author]
\bauthor{\bsnm{Nakajima},~\bfnm{S.}\binits{S.}},
  \bauthor{\bsnm{Tomioka},~\bfnm{R.}\binits{R.}},
  \bauthor{\bsnm{Sugiyama},~\bfnm{M.}\binits{M.}} \AND
  \bauthor{\bsnm{Babacan},~\bfnm{S.~D.}\binits{S.~D.}}
(\byear{2015}).
\btitle{Condition for perfect dimensionality recovery by variational {Bayesian
  PCA}}.
\bjournal{Journal of Machine Learning Research}
\bvolume{16}
\bpages{3757--3811}.
\end{barticle}
\endbibitem

\bibitem{ref_Pearl}
\begin{bbook}[author]
\bauthor{\bsnm{Pearl},~\bfnm{J.}\binits{J.}}
(\byear{2009}).
\btitle{Causality: {Models, Reasoning, and Inference}}.
\bpublisher{{Cambridge: Cambridge University Press} (2nd ed.)}.
\end{bbook}
\endbibitem

\bibitem{PeetersThesis}
\begin{bphdthesis}[author]
\bauthor{\bsnm{Peeters},~\bfnm{C.~F.~W.}\binits{C.~F.~W.}}
(\byear{2012}).
\btitle{{Bayesian Exploratory and Confirmatory Factor Analysis: Perspectives on
  Constrained-Model Selection}},
\btype{PhD thesis},
\bpublisher{{Dept.\ of Methodology \& Statistics, Utrecht University}}.
\end{bphdthesis}
\endbibitem

\bibitem{ref_Peeters12}
\begin{barticle}[author]
\bauthor{\bsnm{Peeters},~\bfnm{C.~F.~W.}\binits{C.~F.~W.}}
(\byear{2012}).
\btitle{Rotational uniqueness conditions under oblique factor correlation
  metric}.
\bjournal{Psychometrika}
\bvolume{77}
\bpages{288--292}.
\end{barticle}
\endbibitem

\bibitem{ref_Piaggo1931}
\begin{barticle}[author]
\bauthor{\bsnm{Piaggio},~\bfnm{H.~T.~H.}\binits{H.~T.~H.}}
(\byear{1931}).
\btitle{The general factor in {Spearman's} theory of intelligence}.
\bjournal{Nature}
\bvolume{127}
\bpages{56--57}.
\end{barticle}
\endbibitem

\bibitem{ref_Piaggo1933}
\begin{barticle}[author]
\bauthor{\bsnm{Piaggio},~\bfnm{H.~T.~H.}\binits{H.~T.~H.}}
(\byear{1933}).
\btitle{Three sets of conditions necessary for the existence of a $g$ that is
  real and unique except in sign}.
\bjournal{British Journal of Psychology}
\bvolume{24}
\bpages{88--105}.
\end{barticle}
\endbibitem

\bibitem{ref_Rao55}
\begin{barticle}[author]
\bauthor{\bsnm{Rao},~\bfnm{C.~R.}\binits{C.~R.}}
(\byear{1955}).
\btitle{Estimation and tests of significance in factor analysis}.
\bjournal{Psychometrika}
\bvolume{20}
\bpages{93--111}.
\end{barticle}
\endbibitem

\bibitem{ref_RaoPC99}
\begin{barticle}[author]
\bauthor{\bsnm{Rao},~\bfnm{R.~P.}\binits{R.~P.}} \AND
  \bauthor{\bsnm{Bollard},~\bfnm{D.~H.}\binits{D.~H.}}
(\byear{1999}).
\btitle{Predictive coding in the visual cortex: {A} functional interpretation
  of some extra-classical receptive field effects}.
\bjournal{Nature Neuroscience}
\bvolume{2}
\bpages{79--87}.
\end{barticle}
\endbibitem

\bibitem{ref_Ravazi}
\begin{binproceedings}[author]
\bauthor{\bsnm{Ravazi},~\bfnm{A.}\binits{A.}}, \bauthor{\bsnm{{van den
  Oord}},~\bfnm{A.}\binits{A.}}, \bauthor{\bsnm{Poole},~\bfnm{B.}\binits{B.}}
  \AND \bauthor{\bsnm{Vinyals},~\bfnm{O.}\binits{O.}}
(\byear{2019}).
\btitle{Preventing posterior collapse with {delta-VAEs}}
In \bbooktitle{International Conference on Learning Representations (ICLR)
  2019}.
\end{binproceedings}
\endbibitem

\bibitem{ref_Reiersol}
\begin{barticle}[author]
\bauthor{\bsnm{Reiers{\o}l},~\bfnm{O.}\binits{O.}}
(\byear{1950}).
\btitle{On the identifiability of parameters in {Thurstone's} multiple factor
  analysis}.
\bjournal{Psychometrika}
\bvolume{15}
\bpages{121--149}.
\end{barticle}
\endbibitem

\bibitem{ref_Rich2003}
\begin{barticle}[author]
\bauthor{\bsnm{Richardson},~\bfnm{T.}\binits{T.}}
(\byear{2003}).
\btitle{Markov properties for acyclic directed mixed graphs}.
\bjournal{Scandinavian Journal of Statistics}
\bvolume{30}
\bpages{145--157}.
\end{barticle}
\endbibitem

\bibitem{ref_Robertson07}
\begin{barticle}[author]
\bauthor{\bsnm{Robertson},~\bfnm{D.}\binits{D.}} \AND
  \bauthor{\bsnm{Symons},~\bfnm{J.}\binits{J.}}
(\byear{2007}).
\btitle{Maximum likelihood factor analysis with rank-deficient sample
  covariance matrices}.
\bjournal{Journal of Multivariate Analysis}
\bvolume{98}
\bpages{813--828}.
\end{barticle}
\endbibitem

\bibitem{ref_AImodern}
\begin{bbook}[author]
\bauthor{\bsnm{Russell},~\bfnm{S.}\binits{S.}} \AND
  \bauthor{\bsnm{Norvig},~\bfnm{P.}\binits{P.}}
(\byear{2022}).
\btitle{Artificial Intelligence: A Modern Approach},
\bedition{4th} ed.
\bpublisher{UK: Pearson Education Limited (Global Edition)}.
\end{bbook}
\endbibitem

\bibitem{ref_SchmidtDeep}
\begin{barticle}[author]
\bauthor{\bsnm{Schmid},~\bfnm{J.}\binits{J.}} \AND
  \bauthor{\bsnm{Leiman},~\bfnm{J.~M.}\binits{J.~M.}}
(\byear{1957}).
\btitle{The development of hierarchical factor solutions}.
\bjournal{Psychometrika}
\bvolume{22}
\bpages{53--61}.
\end{barticle}
\endbibitem

\bibitem{ref_Schonemann1971}
\begin{barticle}[author]
\bauthor{\bsnm{Sch\"{o}nemann},~\bfnm{P.}\binits{P.}}
(\byear{1971}).
\btitle{The minimum average correlation between equivalent sets of uncorrelated
  factors}.
\bjournal{Psychometrika}
\bvolume{36}
\bpages{21--30}.
\end{barticle}
\endbibitem

\bibitem{ref_SW72}
\begin{barticle}[author]
\bauthor{\bsnm{Sch\"{o}nemann},~\bfnm{P.}\binits{P.}} \AND
  \bauthor{\bsnm{Wang},~\bfnm{M.~M.}\binits{M.~M.}}
(\byear{1972}).
\btitle{Some new results on factor indeterminacy}.
\bjournal{Psychometrika}
\bvolume{37}
\bpages{61--91}.
\end{barticle}
\endbibitem

\bibitem{ref_Serre}
\begin{bbook}[author]
\bauthor{\bsnm{Serre},~\bfnm{D.}\binits{D.}}
(\byear{2002}).
\btitle{{Matrices: Theory and Applications}}.
\bpublisher{New York: Springer}.
\end{bbook}
\endbibitem

\bibitem{ref_Spearman1904}
\begin{barticle}[author]
\bauthor{\bsnm{Spearman},~\bfnm{C.}\binits{C.}}
(\byear{1904}).
\btitle{``{G}eneral intelligence," objectively determined and measured.}
\bjournal{American Journal of Psychology}
\bvolume{15}
\bpages{201--293}.
\end{barticle}
\endbibitem

\bibitem{ref_SpearmanAbilities}
\begin{bbook}[author]
\bauthor{\bsnm{Spearman},~\bfnm{C.}\binits{C.}}
(\byear{1927}).
\btitle{The Abilities of Man: Their Nature and Measurement}.
\bpublisher{{New York: The Macmillan Company}}.
\end{bbook}
\endbibitem

\bibitem{ref_Steiger79}
\begin{barticle}[author]
\bauthor{\bsnm{Steiger},~\bfnm{J.~H.}\binits{J.~H.}}
(\byear{1979}).
\btitle{{Factor indeterminacy in the 1930s and 1970s: Some interesting
  parallels}}.
\bjournal{Psychometrika}
\bvolume{44}
\bpages{157--167}.
\end{barticle}
\endbibitem

\bibitem{ref_SS78}
\begin{bincollection}[author]
\bauthor{\bsnm{Steiger},~\bfnm{J.~H.}\binits{J.~H.}} \AND
  \bauthor{\bsnm{Sch\"{o}nemann},~\bfnm{P.~H.}\binits{P.~H.}}
(\byear{1978}).
\btitle{A history of factor indeterminacy}.
In \bbooktitle{Theory Construction and Data Analysis in the Behavioral
  Sciences}
(\beditor{\bfnm{S.}\binits{S.}~\bsnm{Shye}}, ed.)
\bchapter{5},
\bpages{136--178}.
\bpublisher{{San Francisco: Jossey-Bass}}.
\end{bincollection}
\endbibitem

\bibitem{ref_Stevens2002}
\begin{bbook}[author]
\bauthor{\bsnm{Stevens},~\bfnm{J.~P.}\binits{J.~P.}}
(\byear{2002}).
\btitle{Applied Multivariate Statistics for the Social Sciences}.
\bpublisher{{Mahwah New Jersey: Lawrence Erlbaum Associates} (4th ed.)}.
\end{bbook}
\endbibitem

\bibitem{ref_Sundberg16}
\begin{barticle}[author]
\bauthor{\bsnm{Sundberg},~\bfnm{R.}\binits{R.}} \AND
  \bauthor{\bsnm{Feldmann},~\bfnm{U.}\binits{U.}}
(\byear{2016}).
\btitle{Exploratory factor analysis--{P}arameter estimation and scores
  prediction with high-dimensional data}.
\bjournal{Journal of Multivariate Analysis}
\bvolume{148}
\bpages{49--59}.
\end{barticle}
\endbibitem

\bibitem{Tala}
\begin{barticle}[author]
\bauthor{\bsnm{Talagrand},~\bfnm{M.}\binits{M.}}
(\byear{1996}).
\btitle{A new look at independence}.
\bjournal{Annals of Probability}
\bvolume{21}
\bpages{1--34}.
\end{barticle}
\endbibitem

\bibitem{ref_Thomson1939}
\begin{bbook}[author]
\bauthor{\bsnm{Thomson},~\bfnm{G.}\binits{G.}}
(\byear{1939}).
\btitle{{The Factorial Analysis of Human Ability}}.
\bpublisher{{London: University of Londen Press}}.
\end{bbook}
\endbibitem

\bibitem{ref_SparseEFA}
\begin{barticle}[author]
\bauthor{\bsnm{Trendafilov},~\bfnm{N.~T.}\binits{N.~T.}},
  \bauthor{\bsnm{Fontanella},~\bfnm{S.}\binits{S.}} \AND
  \bauthor{\bsnm{Adachi},~\bfnm{K.}\binits{K.}}
(\byear{2017}).
\btitle{Sparse exploratory factor analysis}.
\bjournal{Psychometrika}
\bvolume{82}
\bpages{778--794}.
\end{barticle}
\endbibitem

\bibitem{ref_Udell}
\begin{barticle}[author]
\bauthor{\bsnm{Udell},~\bfnm{M.}\binits{M.}} \AND
  \bauthor{\bsnm{Townsend},~\bfnm{A.}\binits{A.}}
(\byear{2019}).
\btitle{Why are big data matrices approximately low rank?}
\bjournal{{SIAM Journal on Mathematics of Data Science}}
\bvolume{1}
\bpages{144--160}.
\end{barticle}
\endbibitem

\bibitem{ref_Faithfulness}
\begin{barticle}[author]
\bauthor{\bsnm{Uhler},~\bfnm{C.}\binits{C.}},
  \bauthor{\bsnm{Raskutti},~\bfnm{G.}\binits{G.}},
  \bauthor{\bsnm{B{\"u}hlmann},~\bfnm{P.}\binits{P.}} \AND
  \bauthor{\bsnm{Yu},~\bfnm{B.}\binits{B.}}
(\byear{2013}).
\btitle{Geometry of the faithfulness assumption in causal inference}.
\bjournal{Annals of Statistics}
\bvolume{41}
\bpages{436--463}.
\end{barticle}
\endbibitem

\bibitem{ref_Vittadini1989}
\begin{barticle}[author]
\bauthor{\bsnm{Vittadini},~\bfnm{G.}\binits{G.}}
(\byear{1989}).
\btitle{Indeterminacy Problems in the {Lisrel} Model}.
\bjournal{Multivariate Behavioral Research}
\bvolume{24}
\bpages{397--414}.
\end{barticle}
\endbibitem

\bibitem{ref_HanPC18}
\begin{binproceedings}[author]
\bauthor{\bsnm{Wen},~\bfnm{H.}\binits{H.}},
  \bauthor{\bsnm{Han},~\bfnm{K.}\binits{K.}},
  \bauthor{\bsnm{S},~\bfnm{J.}\binits{J.}},
  \bauthor{\bsnm{Shi},~\bfnm{Y.}\binits{Y.} \bsuffix{Zhang}},
  \bauthor{\bsnm{Culurciello},~\bfnm{E.}\binits{E.}} \AND
  \bauthor{\bsnm{Liu},~\bfnm{Z.}\binits{Z.}}
(\byear{2018}).
\btitle{Deep predictive coding network for object recognition}
In \bbooktitle{{Proceedings of the 35th International Conference on Machine
  Learning}}.
\end{binproceedings}
\endbibitem

\bibitem{ref_Wilson1928Science}
\begin{barticle}[author]
\bauthor{\bsnm{Wilson},~\bfnm{E.~B.}\binits{E.~B.}}
(\byear{1928}).
\btitle{{Review of `The abilities of Man, their Nature and Measurement' by C.
  Spearman}}.
\bjournal{Science}
\bvolume{67}
\bpages{244--248}.
\end{barticle}
\endbibitem

\bibitem{ref_Wilson1928PNAS}
\begin{barticle}[author]
\bauthor{\bsnm{Wilson},~\bfnm{E.~B.}\binits{E.~B.}}
(\byear{1928}).
\btitle{On hierarchical correlation systems}.
\bjournal{Proceedings of the National Academy of Sciences}
\bvolume{14}
\bpages{283--291}.
\end{barticle}
\endbibitem

\bibitem{ref_Woodbury}
\begin{btechreport}[author]
\bauthor{\bsnm{Woodbury},~\bfnm{M.~A.}\binits{M.~A.}}
(\byear{1950}).
\btitle{Inverting modified matrices.}
\btype{{Statistical Research Group Memorandum Report}} No. \bnumber{42},
\bpublisher{{Princeton University}}.
\end{btechreport}
\endbibitem

\bibitem{ref_FArepresent}
\begin{barticle}[author]
\bauthor{\bsnm{Xie},~\bfnm{J.}\binits{J.}},
  \bauthor{\bsnm{Gao},~\bfnm{R.}\binits{R.}},
  \bauthor{\bsnm{Nijkamp},~\bfnm{E.}\binits{E.}},
  \bauthor{\bsnm{Zhu},~\bfnm{S.~C.}\binits{S.~C.}} \AND
  \bauthor{\bsnm{Wu},~\bfnm{Y.~N.}\binits{Y.~N.}}
(\byear{2020}).
\btitle{Representation learning: {A} statistical perspective}.
\bjournal{Annual Review of Statistics and Its Applications}
\bvolume{7}
\bpages{11.1--11.33}.
\end{barticle}
\endbibitem

\bibitem{ref_Zhang_bilinear}
\begin{barticle}[author]
\bauthor{\bsnm{Zhang},~\bfnm{K.}\binits{K.}}
(\byear{2002}).
\btitle{A bilinear inequality}.
\bjournal{Journal of Mathematical Analysis and Applications}
\bvolume{271}
\bpages{288--296}.
\end{barticle}
\endbibitem

\end{thebibliography}


\begin{thebibliography}{73}
% BibTex style file: imsart-number.bst, 2017-11-03
% Default style options (sort=1,type=number).
% Used options (sort=1,type=number).

\bibitem{SMref_AndersonBible}
\begin{bbook}[author]
\bauthor{\bsnm{Anderson},~\bfnm{T.~W.}\binits{T.~W.}}
(\byear{2003}).
\btitle{{An Introduction to Multivariate Statistical Analysis}}.
\bpublisher{Hoboken, {NJ}: {J}ohn {W}iley \& {S}ons, {I}nc. (3rd ed.)}.
\end{bbook}
\endbibitem

\bibitem{Babcock2024}
\begin{barticle}[author]
\bauthor{\bsnm{Babcock},~\bfnm{N.~S.}\binits{N.~S.}},
  \bauthor{\bsnm{{Montes-Cabrera}},~\bfnm{G.}\binits{G.}},
  \bauthor{\bsnm{Oberhofer},~\bfnm{K.~E.}\binits{K.~E.}},
  \bauthor{\bsnm{Chergui},~\bfnm{M.}\binits{M.}},
  \bauthor{\bsnm{Celardo},~\bfnm{G.~L.}\binits{G.~L.}} \AND
  \bauthor{\bsnm{Kurian},~\bfnm{P.}\binits{P.}}
(\byear{2024}).
\btitle{Ultraviolet superradiance from mega-networks of {Tryptophan} in
  biological architectures}.
\bjournal{The Journal of Physical Chemistry B}
\bvolume{128}
\bpages{4035--4046}.
\end{barticle}
\endbibitem

\bibitem{Bartlett_SM}
\begin{barticle}[author]
\bauthor{\bsnm{Bartlett},~\bfnm{M.~S.}\binits{M.~S.}}
(\byear{1937}).
\btitle{The statistical conception of mental factors}.
\bjournal{British Journal of Philosophy}
\bvolume{28}
\bpages{97--104}.
\end{barticle}
\endbibitem

\bibitem{SMref_Bhatta}
\begin{barticle}[author]
\bauthor{\bsnm{Bhattacharyya},~\bfnm{A.}\binits{A.}}
(\byear{1943}).
\btitle{On a measure of divergence between two statistical populations defined
  by their probability distributions}.
\bjournal{{Bulletin of the Calcutta Mathematical Society}}
\bvolume{35}
\bpages{99--109}.
\end{barticle}
\endbibitem

\bibitem{Bilgrau2020}
\begin{barticle}[author]
\bauthor{\bsnm{Bilgrau},~\bfnm{A.~E.}\binits{A.~E.}},
  \bauthor{\bsnm{Peeters},~\bfnm{C.~F.~W.}\binits{C.~F.~W.}},
  \bauthor{\bsnm{Eriksen},~\bfnm{P.~S.}\binits{P.~S.}},
  \bauthor{\bsnm{Boegsted},~\bfnm{M.}\binits{M.}} \AND \bauthor{\bsnm{{van
  Wieringen}},~\bfnm{W.~N.}\binits{W.~N.}}
(\byear{2020}).
\btitle{Targeted fused ridge estimation of inverse covariance matrices from
  multiple high-dimensional data classes}.
\bjournal{Journal of Machine Learning Research}
\bvolume{21}
\bpages{1--52}.
\end{barticle}
\endbibitem

\bibitem{SMref_Bollen1989}
\begin{bbook}[author]
\bauthor{\bsnm{Bollen},~\bfnm{K.~A.}\binits{K.~A.}}
(\byear{1989}).
\btitle{{Structural Equations with Latent Variables}}.
\bpublisher{{N}ew {Y}ork [etc.]: {J}ohn {W}iley \& {S}ons, {I}nc.}
\end{bbook}
\endbibitem

\bibitem{Borsboom2017}
\begin{barticle}[author]
\bauthor{\bsnm{Borsboom},~\bfnm{D.}\binits{D.}}
(\byear{2017}).
\btitle{A network theory of mental disorders}.
\bjournal{World Psychiatry}
\bvolume{16}
\bpages{5--13}.
\end{barticle}
\endbibitem

\bibitem{Boyle2017}
\begin{barticle}[author]
\bauthor{\bsnm{Boyle},~\bfnm{E.~A.}\binits{E.~A.}},
  \bauthor{\bsnm{Li},~\bfnm{Y.~I.}\binits{Y.~I.}} \AND
  \bauthor{\bsnm{Pritchard},~\bfnm{J.~K.}\binits{J.~K.}}
(\byear{2017}).
\btitle{An expanded view of complex traits: {From} polygenic to omnigenic}.
\bjournal{Cell}
\bvolume{169}
\bpages{1177--1186}.
\end{barticle}
\endbibitem

\bibitem{Browne1984}
\begin{barticle}[author]
\bauthor{\bsnm{Browne},~\bfnm{M.~W.}\binits{M.~W.}}
(\byear{1984}).
\btitle{Asymptotically distribution-free methods for the analysis of covariance
  structures}.
\bjournal{British Journal of Mathematical \& Statistical Psychology}
\bvolume{37}
\bpages{62--83}.
\end{barticle}
\endbibitem

\bibitem{Clauset2008}
\begin{barticle}[author]
\bauthor{\bsnm{Clauset},~\bfnm{A.}\binits{A.}},
  \bauthor{\bsnm{Moore},~\bfnm{C.}\binits{C.}} \AND
  \bauthor{\bsnm{Newman},~\bfnm{M.~E.~J.}\binits{M.~E.~J.}}
(\byear{2008}).
\btitle{Hierarchical structure and the prediction of missing links in
  networks}.
\bjournal{Nature}
\bvolume{453}
\bpages{98--101}.
\end{barticle}
\endbibitem

\bibitem{Cohen_SM}
\begin{bbook}[author]
\bauthor{\bsnm{Cohen},~\bfnm{J.}\binits{J.}}
(\byear{1988}).
\btitle{{Statistical Power Analysis for the Behavioral Sciences}}.
\bpublisher{{Hillsdale, NJ: Erlbaum (2nd ed.)}}.
\end{bbook}
\endbibitem

\bibitem{Colbrook2022}
\begin{barticle}[author]
\bauthor{\bsnm{Colbrook},~\bfnm{M.~J.}\binits{M.~J.}},
  \bauthor{\bsnm{Antun},~\bfnm{V.}\binits{V.}} \AND
  \bauthor{\bsnm{Hansen},~\bfnm{A.~C.}\binits{A.~C.}}
(\byear{2022}).
\btitle{The difficulty of computing stable and accurate neural networks: {O}n
  the barriers of deep learning and {Smale's} 18th problem}.
\bjournal{Proceedings of the National Academy of Sciences of America}
\bvolume{119}
\bpages{e2107151119}.
\end{barticle}
\endbibitem

\bibitem{Courville2006}
\begin{barticle}[author]
\bauthor{\bsnm{Courville},~\bfnm{A.~C.}\binits{A.~C.}},
  \bauthor{\bsnm{Daw},~\bfnm{N.~D.}\binits{N.~D.}} \AND
  \bauthor{\bsnm{Touretzky},~\bfnm{D.~S.}\binits{D.~S.}}
(\byear{2006}).
\btitle{Bayesian theories of conditioning in a changing world}.
\bjournal{Trends in Cognitive Science}
\bvolume{10}
\bpages{294--300}.
\end{barticle}
\endbibitem

\bibitem{ref_Hilbert_SM}
\begin{bbook}[author]
\bauthor{\bsnm{Dieudonn\'{e}},~\bfnm{J.}\binits{J.}}
(\byear{1960}).
\btitle{{Foundations of Modern Analysis}}.
\bpublisher{{New York: Academic Press}}.
\end{bbook}
\endbibitem

\bibitem{ref_GEOGROU_SM}
\begin{bbook}[author]
\bauthor{\bsnm{Dieudonn\'{e}},~\bfnm{J.}\binits{J.}}
(\byear{1963}).
\btitle{{La G\'{e}om\'{e}trie des Groupes Classiques}}.
\bpublisher{{Berlin: Springer-Verlag} (2nd ed.)}.
\end{bbook}
\endbibitem

\bibitem{ref_Duncan1944_SM}
\begin{barticle}[author]
\bauthor{\bsnm{Duncan},~\bfnm{W.~J.}\binits{W.~J.}}
(\byear{1944}).
\btitle{Some devices for the solution of large sets of simultaneous linear
  equations (with an appendix on the reciprocation of partitioned matrices)}.
\bjournal{London, Edinburgh, and Dublin Philosophical Magazine and Journal of
  Science: Series 7}
\bvolume{35}
\bpages{660--670}.
\end{barticle}
\endbibitem

\bibitem{Feynman1957}
\begin{barticle}[author]
\bauthor{\bsnm{Feynman},~\bfnm{R.~P.}\binits{R.~P.}},
  \bauthor{\bsnm{Vernon},~\bfnm{F.~L.}\binits{F.~L.}} \AND
  \bauthor{\bsnm{Hellwarth},~\bfnm{R.~W.}\binits{R.~W.}}
(\byear{1957}).
\btitle{Geometrical representation of the {S}chr\"{o}dinger equation for
  solving {Maser} problems}.
\bjournal{Journal of Applied Physics}
\bvolume{28}
\bpages{49--52}.
\end{barticle}
\endbibitem

\bibitem{ref_Gautschi_SM}
\begin{bincollection}[author]
\bauthor{\bsnm{Gautschi},~\bfnm{W.}\binits{W.}}
(\byear{1972}).
\btitle{Error function and {Fresnel} integrals}.
In \bbooktitle{{Handbook of Mathematical Functions with Formulas, Graphs, and
  Mathematical Tables}}
(\beditor{\bfnm{M.}\binits{M.}~\bsnm{Abramowitz}} \AND
  \beditor{\bfnm{I.~A.}\binits{I.~A.}~\bsnm{Stegun}}, eds.)
\bchapter{7},
\bpages{295--309}.
\bpublisher{{New York: Wiley}}.
\end{bincollection}
\endbibitem

\bibitem{Giannone2021}
\begin{barticle}[author]
\bauthor{\bsnm{Giannone},~\bfnm{D.}\binits{D.}},
  \bauthor{\bsnm{Lenza},~\bfnm{M.}\binits{M.}} \AND
  \bauthor{\bsnm{Primiceri},~\bfnm{G.~E.}\binits{G.~E.}}
(\byear{2021}).
\btitle{Economic predictions with big data: {T}he illusion of sparsity}.
\bjournal{Econometrica}
\bvolume{89}
\bpages{2409--2437}.
\end{barticle}
\endbibitem

\bibitem{SMref_GDproc}
\begin{bbook}[author]
\bauthor{\bsnm{Gower},~\bfnm{J.~C.}\binits{J.~C.}} \AND
  \bauthor{\bsnm{Dijksterhuis},~\bfnm{G.~B.}\binits{G.~B.}}
(\byear{2004}).
\btitle{Procrustes Problems}.
\bpublisher{Oxford: Oxford University Press}.
\end{bbook}
\endbibitem

\bibitem{Gut_SM}
\begin{bbook}[author]
\bauthor{\bsnm{Gut},~\bfnm{A.}\binits{A.}}
(\byear{2005}).
\btitle{{Probability: A Graduate Course}}.
\bpublisher{{New York: Springer}}.
\end{bbook}
\endbibitem

\bibitem{ref_Guttman1955_SM}
\begin{barticle}[author]
\bauthor{\bsnm{Guttman},~\bfnm{L.}\binits{L.}}
(\byear{1955}).
\btitle{The determinacy of factor score matrices with implications for five
  other basic problems of common factor theory}.
\bjournal{British Journal of Mathematical and Statistical Psychology}
\bvolume{8}
\bpages{65--81}.
\end{barticle}
\endbibitem

\bibitem{SMref_Harris62}
\begin{barticle}[author]
\bauthor{\bsnm{Harris},~\bfnm{C.}\binits{C.}}
(\byear{1962}).
\btitle{Some {Rao-Guttman} relationships}.
\bjournal{Psychometrika}
\bvolume{27}
\bpages{247--263}.
\end{barticle}
\endbibitem

\bibitem{Hartwig_SM}
\begin{barticle}[author]
\bauthor{\bsnm{Hartwig},~\bfnm{R.~E.}\binits{R.~E.}}
(\byear{1986}).
\btitle{The reverse order law revisited}.
\bjournal{{Linear Algebra and its Applications}}
\bvolume{76}
\bpages{241--246}.
\end{barticle}
\endbibitem

\bibitem{SMref_Hellinger1909}
\begin{barticle}[author]
\bauthor{\bsnm{Hellinger},~\bfnm{E.}\binits{E.}}
(\byear{1909}).
\btitle{Neue begr\"{u}ndung der theorie quadratischer formen von
  unendlichvielen Ver\"{a}nderlichen}.
\bjournal{Journal f\"{u}r die reine und angewandte Mathematik}
\bvolume{136}
\bpages{210--271}.
\end{barticle}
\endbibitem

\bibitem{SMref_Hendrickson1964}
\begin{barticle}[author]
\bauthor{\bsnm{Hendrickson},~\bfnm{A.~E.}\binits{A.~E.}} \AND
  \bauthor{\bsnm{White},~\bfnm{P.~O.}\binits{P.~O.}}
(\byear{1964}).
\btitle{{PROMAX: A quick method for rotation to oblique simple structure}}.
\bjournal{British Journal of Mathematical \& Statistical Psychology}
\bvolume{17}
\bpages{65--70}.
\end{barticle}
\endbibitem

\bibitem{SMref_Horn1985}
\begin{bbook}[author]
\bauthor{\bsnm{Horn},~\bfnm{R.~A.}\binits{R.~A.}} \AND
  \bauthor{\bsnm{Johnson},~\bfnm{C.~A.}\binits{C.~A.}}
(\byear{1985}).
\btitle{{Matrix Analysis}}.
\bpublisher{Cambridge: Cambdridge University Press}.
\end{bbook}
\endbibitem

\bibitem{ref_Horst_SM}
\begin{bbook}[author]
\bauthor{\bsnm{Horst},~\bfnm{P.}\binits{P.}}
(\byear{1965}).
\btitle{{Factor Analysis of Data Matrices}}.
\bpublisher{{Holt, Rinehart and Winston}}.
\end{bbook}
\endbibitem

\bibitem{SMref_OLC}
\begin{barticle}[author]
\bauthor{\bsnm{Inman},~\bfnm{H.~F.}\binits{H.~F.}} \AND
  \bauthor{\bsnm{Bradley},~\bfnm{E.~L.}\binits{E.~L.}}
(\byear{1989}).
\btitle{The overlapping coefficient as a measure of agreement between
  probability distributions and point estimation of the overlap of two normal
  densities}.
\bjournal{{Communications in Statistics -- Theory and Methods}}
\bvolume{18}
\bpages{3851--3874}.
\end{barticle}
\endbibitem

\bibitem{SMref_Kailath}
\begin{barticle}[author]
\bauthor{\bsnm{Kailath},~\bfnm{T.}\binits{T.}}
(\byear{1967}).
\btitle{{The divergence and Bhattacharyya distance measures in signal
  selection}}.
\bjournal{{IEEE Transactions on Communication Technology}}
\bvolume{15}
\bpages{52--60}.
\end{barticle}
\endbibitem

\bibitem{SM_Ref_Kaiser1958}
\begin{barticle}[author]
\bauthor{\bsnm{Kaiser},~\bfnm{H.~F.}\binits{H.~F.}}
(\byear{1958}).
\btitle{{The Varimax criterion for analytic rotation in factor analysis}}.
\bjournal{Psychometrika}
\bvolume{23}
\bpages{187--200}.
\end{barticle}
\endbibitem

\bibitem{Kiely2026}
\begin{barticle}[author]
\bauthor{\bsnm{Kiely},~\bfnm{A.}\binits{A.}},
  \bauthor{\bsnm{Chisholm},~\bfnm{D.~A.}\binits{D.~A.}},
  \bauthor{\bsnm{Touil},~\bfnm{A.}\binits{A.}},
  \bauthor{\bsnm{Deffner},~\bfnm{S.}\binits{S.}},
  \bauthor{\bsnm{Landi},~\bfnm{G.}\binits{G.}} \AND
  \bauthor{\bsnm{Campbell},~\bfnm{S.}\binits{S.}}
(\byear{2026}).
\btitle{Metrological approach to the emergence of classical objectivity}.
\bjournal{Physical Review A}
\bvolume{113}
\bpages{022403}.
\end{barticle}
\endbibitem

\bibitem{King1996}
\begin{barticle}[author]
\bauthor{\bsnm{King},~\bfnm{D.}\binits{D.}}
(\byear{1996}).
\btitle{Is the human mind a {Turing} machine?}
\bjournal{Synthese}
\bvolume{108}
\bpages{379--389}.
\end{barticle}
\endbibitem

\bibitem{SMref_Kochbook}
\begin{bbook}[author]
\bauthor{\bsnm{Koch},~\bfnm{I.}\binits{I.}}
(\byear{2014}).
\btitle{{Analysis of Multivariate and High-Dimensional Data}}.
\bpublisher{Cambridge: Cambridge University Press}.
\end{bbook}
\endbibitem

\bibitem{ref_Koster1996_SM}
\begin{barticle}[author]
\bauthor{\bsnm{Koster},~\bfnm{J.~T.~A.}\binits{J.~T.~A.}}
(\byear{1996}).
\btitle{Markov properties of nonrecursive causal models}.
\bjournal{Annals of Statistics}
\bvolume{24}
\bpages{2148--2177}.
\end{barticle}
\endbibitem

\bibitem{ref_Koster1999_SM}
\begin{barticle}[author]
\bauthor{\bsnm{Koster},~\bfnm{J.~T.~A.}\binits{J.~T.~A.}}
(\byear{1999}).
\btitle{{On the validity of the Markov interpretation of path diagrams of
  Gaussian structural equations systems with correlated errors}}.
\bjournal{Scandinavian Journal of Statistics}
\bvolume{26}
\bpages{413--431}.
\end{barticle}
\endbibitem

\bibitem{Krzywinski2012}
\begin{barticle}[author]
\bauthor{\bsnm{Krzywinski},~\bfnm{M.}\binits{M.}},
  \bauthor{\bsnm{Birol},~\bfnm{I.}\binits{I.}},
  \bauthor{\bsnm{Jones},~\bfnm{S.~J.~M.}\binits{S.~J.~M.}} \AND
  \bauthor{\bsnm{Marra},~\bfnm{M.~A.}\binits{M.~A.}}
(\byear{2012}).
\btitle{Hive plots -- rational approach to visualizing networks}.
\bjournal{Briefings in Bioinformatics}
\bvolume{13}
\bpages{627--644}.
\end{barticle}
\endbibitem

\bibitem{ref_Moral_SM}
\begin{barticle}[author]
\bauthor{\bsnm{Lauritzen},~\bfnm{S.~L.}\binits{S.~L.}},
  \bauthor{\bsnm{Dawid},~\bfnm{A.~P.}\binits{A.~P.}},
  \bauthor{\bsnm{Larsen},~\bfnm{B.~N.}\binits{B.~N.}} \AND
  \bauthor{\bsnm{Leimer},~\bfnm{{H. -G. }}\binits{H.}}
(\byear{1990}).
\btitle{Independence properties of directed {M}arkov fields}.
\bjournal{Networks}
\bvolume{20}
\bpages{491--505}.
\end{barticle}
\endbibitem

\bibitem{ref_Liouville_SM}
\begin{barticle}[author]
\bauthor{\bsnm{Liouville},~\bfnm{J.}\binits{J.}}
(\byear{1833}).
\btitle{Note sur la d\'{e}termination des int\'{e}grales dont la valeur est
  alg\'{e}brique}.
\bjournal{Journal f\"{u}r die reine und angewandte Mathematik}
\bvolume{10}
\bpages{347--359}.
\end{barticle}
\endbibitem

\bibitem{ref_Luenberg_SM}
\begin{bbook}[author]
\bauthor{\bsnm{Luenberger},~\bfnm{D.~G.}\binits{D.~G.}}
(\byear{1969}).
\btitle{{Optimization by Vector Space Methods}}.
\bpublisher{{New York: John Wiley \& Sons}}.
\end{bbook}
\endbibitem

\bibitem{SMref_Maha30}
\begin{barticle}[author]
\bauthor{\bsnm{Mahalanobis},~\bfnm{P.~C.}\binits{P.~C.}}
(\byear{1930}).
\btitle{On tests and measures of group divergence}.
\bjournal{{Journal and Proceedings of the Asiatic Society of Bengal}}
\bvolume{26}
\bpages{541--588}.
\end{barticle}
\endbibitem

\bibitem{SMref_Maha36}
\begin{barticle}[author]
\bauthor{\bsnm{Mahalanobis},~\bfnm{P.~C.}\binits{P.~C.}}
(\byear{1936}).
\btitle{On the generalized distance in statistics}.
\bjournal{{Proceedings of the National Institute of Sciences India}}
\bvolume{2}
\bpages{49--55}.
\end{barticle}
\endbibitem

\bibitem{SMref_Maraun1996}
\begin{barticle}[author]
\bauthor{\bsnm{Maraun},~\bfnm{M.~D.}\binits{M.~D.}}
(\byear{1996}).
\btitle{Metaphor taken as math: {I}ndeterminancy in the factor analysis model}.
\bjournal{Multivariate Behavioral Research}
\bvolume{31}
\bpages{517--538}.
\end{barticle}
\endbibitem

\bibitem{SMref_MP}
\begin{barticle}[author]
\bauthor{\bsnm{Mar\v{c}enko},~\bfnm{V.~A.}\binits{V.~A.}} \AND
  \bauthor{\bsnm{Pastur},~\bfnm{L.~A.}\binits{L.~A.}}
(\byear{1967}).
\btitle{The distribution of eigenvalues in certain sets of random matrices}.
\bjournal{Matematicheskii Sbornik}
\bvolume{72}
\bpages{507--536}.
\end{barticle}
\endbibitem

\bibitem{SMref_McDon}
\begin{barticle}[author]
\bauthor{\bsnm{{McDonald}},~\bfnm{R.~P.}\binits{R.~P.}}
(\byear{1974}).
\btitle{The measurement of factor indeterminacy}.
\bjournal{Psychometrika}
\bvolume{39}
\bpages{203--222}.
\end{barticle}
\endbibitem

\bibitem{Minko_SM}
\begin{bbook}[author]
\bauthor{\bsnm{Minkowski},~\bfnm{H.}\binits{H.}}
(\byear{1910}).
\btitle{{Geometrie der Zahlen}}.
\bpublisher{{Leipzig: Teubner}}.
\end{bbook}
\endbibitem

\bibitem{Moore}
\begin{barticle}[author]
\bauthor{\bsnm{Moore},~\bfnm{E.~H.}\binits{E.~H.}}
(\byear{1920}).
\btitle{On the reciprocal of the general algebraic matrix}.
\bjournal{Bulletin of the American Mathematical Society}
\bvolume{26}
\bpages{394--395}.
\end{barticle}
\endbibitem

\bibitem{SMref_Mulaik1976}
\begin{barticle}[author]
\bauthor{\bsnm{Mulaik},~\bfnm{S.~A.}\binits{S.~A.}}
(\byear{1976}).
\btitle{Comments on ``The measurement of factorial indeterminacy"}.
\bjournal{Psychometrika}
\bvolume{41}
\bpages{249--262}.
\end{barticle}
\endbibitem

\bibitem{Muthen1984}
\begin{barticle}[author]
\bauthor{\bsnm{Muth\'{e}n},~\bfnm{B.~O.}\binits{B.~O.}}
(\byear{1984}).
\btitle{A general structural equation model with dichotomous, ordered
  categorical, and continuous latent variable indicators}.
\bjournal{Psychometrika}
\bvolume{49}
\bpages{115--132}.
\end{barticle}
\endbibitem

\bibitem{Nayebi2023}
\begin{binproceedings}[author]
\bauthor{\bsnm{Nayebi},~\bfnm{A.}\binits{A.}},
  \bauthor{\bsnm{Rajalingham},~\bfnm{R.}\binits{R.}},
  \bauthor{\bsnm{Jazayeri},~\bfnm{M.}\binits{M.}} \AND
  \bauthor{\bsnm{Yang},~\bfnm{G.~R.}\binits{G.~R.}}
(\byear{2023}).
\btitle{Neural foundations of mental simulation: {F}uture prediction of latent
  representations on dynamic scenes}
In \bbooktitle{{Advances in Neural Information Processing Systems 36 (NeurIPS
  2023)}}.
\end{binproceedings}
\endbibitem

\bibitem{Newman2010}
\begin{bbook}[author]
\bauthor{\bsnm{Newman},~\bfnm{M.~E.~J.}\binits{M.~E.~J.}}
(\byear{2010}).
\btitle{{Networks: An Introduction}}.
\bpublisher{{Oxford: Oxford University Press}}.
\end{bbook}
\endbibitem

\bibitem{Nielsen2000}
\begin{bbook}[author]
\bauthor{\bsnm{Nielsen},~\bfnm{M.~A.}\binits{M.~A.}} \AND
  \bauthor{\bsnm{Chuang},~\bfnm{I.~L.}\binits{I.~L.}}
(\byear{2000}).
\btitle{{Quantum Computation and Quantum Information}}.
\bpublisher{{Cambridge: Cambridge University Press}}.
\end{bbook}
\endbibitem

\bibitem{Nuijten2016}
\begin{barticle}[author]
\bauthor{\bsnm{Nuijten},~\bfnm{M.~B.}\binits{M.~B.}},
  \bauthor{\bsnm{Deserno},~\bfnm{M.~K.}\binits{M.~K.}},
  \bauthor{\bsnm{Cramer},~\bfnm{A.~O.~J.}\binits{A.~O.~J.}} \AND
  \bauthor{\bsnm{Borsboom},~\bfnm{D.}\binits{D.}}
(\byear{2016}).
\btitle{Mental disorders as complex networks: {A}n introduction and overview of
  a network approach to psychopathology}.
\bjournal{Clinical Neuropsychiatry}
\bvolume{13}
\bpages{68--76}.
\end{barticle}
\endbibitem

\bibitem{Panitchayangkoon2010}
\begin{barticle}[author]
\bauthor{\bsnm{Panitchayangkoon},~\bfnm{G.}\binits{G.}},
  \bauthor{\bsnm{Hayes},~\bfnm{D.}\binits{D.}},
  \bauthor{\bsnm{Fransted},~\bfnm{K.~A.}\binits{K.~A.}},
  \bauthor{\bsnm{Caram},~\bfnm{J.~R.}\binits{J.~R.}},
  \bauthor{\bsnm{Harel},~\bfnm{E.}\binits{E.}},
  \bauthor{\bsnm{Wen},~\bfnm{J.}\binits{J.}},
  \bauthor{\bsnm{Blankenship},~\bfnm{R.~E.}\binits{R.~E.}} \AND
  \bauthor{\bsnm{Engel},~\bfnm{G.~S.}\binits{G.~S.}}
(\byear{2010}).
\btitle{Long-lived quantum coherence in photosynthetic complexes at
  physiological temperature}.
\bjournal{Proceedings of the National Academy of Sciences of the United States
  of America}
\bvolume{107}
\bpages{12766--12770}.
\end{barticle}
\endbibitem

\bibitem{SMref_Pearl2009}
\begin{bbook}[author]
\bauthor{\bsnm{Pearl},~\bfnm{J.}\binits{J.}}
(\byear{2009}).
\btitle{Causality: {Models, Reasoning, and Inference}}.
\bpublisher{{Cambridge: Cambridge University Press} (2nd ed.)}.
\end{bbook}
\endbibitem

\bibitem{Peeters2012_SM}
\begin{barticle}[author]
\bauthor{\bsnm{Peeters},~\bfnm{C.~F.~W.}\binits{C.~F.~W.}}
(\byear{2012}).
\btitle{Rotational Uniqueness Conditions Under Oblique Factor Correlation
  Metric}.
\bjournal{Psychometrika}
\bvolume{77}
\bpages{288--292}.
\end{barticle}
\endbibitem

\bibitem{SMref_Peeters2012}
\begin{bphdthesis}[author]
\bauthor{\bsnm{Peeters},~\bfnm{C.~F.~W.}\binits{C.~F.~W.}}
(\byear{2012}).
\btitle{{Bayesian Exploratory and Confirmatory Factor Analysis: Perspectives on
  Constrained-Model Selection}},
\btype{PhD thesis},
\bpublisher{{Dept.\ of Methodology \& Statistics, Utrecht University}}.
\end{bphdthesis}
\endbibitem

\bibitem{Penrose}
\begin{barticle}[author]
\bauthor{\bsnm{Penrose},~\bfnm{R.}\binits{R.}}
(\byear{1955}).
\btitle{A generalized inverse for matrices}.
\bjournal{Mathematical Proceedings of the Cambridge Philosophical Society}
\bvolume{51}
\bpages{406--413}.
\end{barticle}
\endbibitem

\bibitem{Penrose1994}
\begin{bbook}[author]
\bauthor{\bsnm{Penrose},~\bfnm{R.}\binits{R.}}
(\byear{1994}).
\btitle{{Shadows of the Mind: A Search for the Missing Science of
  Consciousness}}.
\bpublisher{{Oxford: Oxford University Press}}.
\end{bbook}
\endbibitem

\bibitem{SMref_MTricks}
\begin{bbook}[author]
\bauthor{\bsnm{Puntanen},~\bfnm{S.}\binits{S.}},
  \bauthor{\bsnm{Styan},~\bfnm{G.~P.~H.}\binits{G.~P.~H.}} \AND
  \bauthor{\bsnm{Isotalo},~\bfnm{J.}\binits{J.}}
(\byear{2011}).
\btitle{{Matrix Tricks for Linear Statistical Models}}.
\bpublisher{{Berlin: Springer-Verlag}}.
\end{bbook}
\endbibitem

\bibitem{SMref_Rao1955}
\begin{barticle}[author]
\bauthor{\bsnm{Rao},~\bfnm{C.~R.}\binits{C.~R.}}
(\byear{1955}).
\btitle{Estimation and tests of significance in factor analysis}.
\bjournal{Psychometrika}
\bvolume{20}
\bpages{93--111}.
\end{barticle}
\endbibitem

\bibitem{SMref_Rich2003}
\begin{barticle}[author]
\bauthor{\bsnm{Richardson},~\bfnm{T.}\binits{T.}}
(\byear{2003}).
\btitle{Markov properties for acyclic directed mixed graphs}.
\bjournal{Scandinavian Journal of Statistics}
\bvolume{30}
\bpages{145--157}.
\end{barticle}
\endbibitem

\bibitem{SMref_Rohe2023}
\begin{barticle}[author]
\bauthor{\bsnm{Rohe},~\bfnm{K.}\binits{K.}} \AND
  \bauthor{\bsnm{Zheng},~\bfnm{M.}\binits{M.}}
(\byear{2023}).
\btitle{{Vintage factor analysis with Varimax performs statistical inference}}.
\bjournal{Journal of the Royal Statistical Society, Series B}
\bvolume{85}
\bpages{1037--1060}.
\end{barticle}
\endbibitem

\bibitem{Rosseel2024}
\begin{barticle}[author]
\bauthor{\bsnm{Rosseel},~\bfnm{Y.}\binits{Y.}} \AND
  \bauthor{\bsnm{Loh},~\bfnm{W.~W.}\binits{W.~W.}}
(\byear{2024}).
\btitle{A structural after measurement approach to structural equation
  modeling}.
\bjournal{Psychological Methods}
\bvolume{29}
\bpages{561--588}.
\end{barticle}
\endbibitem

\bibitem{SMref_Singh}
\begin{barticle}[author]
\bauthor{\bsnm{Singh},~\bfnm{D.}\binits{D.}},
  \bauthor{\bsnm{Febbo},~\bfnm{P.~G.}\binits{P.~G.}},
  \bauthor{\bsnm{Ross},~\bfnm{K.}\binits{K.}},
  \bauthor{\bsnm{Jackson},~\bfnm{D.~G.}\binits{D.~G.}},
  \bauthor{\bsnm{Manola},~\bfnm{J}\binits{J.}},
  \bauthor{\bsnm{Ladd},~\bfnm{C}\binits{C.}},
  \bauthor{\bsnm{Tamayo},~\bfnm{P.}\binits{P.}},
  \bauthor{\bsnm{Renshaw},~\bfnm{A.~A.}\binits{A.~A.}},
  \bauthor{\bsnm{{D'Amico}},~\bfnm{A.~V.}\binits{A.~V.}},
  \bauthor{\bsnm{Richie},~\bfnm{J.~P.}\binits{J.~P.}},
  \bauthor{\bsnm{Lander},~\bfnm{E.~S.}\binits{E.~S.}},
  \bauthor{\bsnm{Loda},~\bfnm{M.}\binits{M.}},
  \bauthor{\bsnm{Kantoff},~\bfnm{P.~W.}\binits{P.~W.}},
  \bauthor{\bsnm{Golub},~\bfnm{T.~R.}\binits{T.~R.}} \AND
  \bauthor{\bsnm{Sellers},~\bfnm{W.~R.}\binits{W.~R.}}
(\byear{2002}).
\btitle{Gene expression correlates of clinical prostate cancer behavior}.
\bjournal{Cancer Cell}
\bvolume{1}
\bpages{P203--209}.
\end{barticle}
\endbibitem

\bibitem{Smale1998}
\begin{barticle}[author]
\bauthor{\bsnm{Smale},~\bfnm{S.}\binits{S.}}
(\byear{1998}).
\btitle{Mathematical problems for the next century}.
\bjournal{Mathematical Intelligencer}
\bvolume{20}
\bpages{7--15}.
\end{barticle}
\endbibitem

\bibitem{SMref_SpirtesSMR98}
\begin{barticle}[author]
\bauthor{\bsnm{Spirtes},~\bfnm{P.}\binits{P.}},
  \bauthor{\bsnm{Richardson},~\bfnm{T.}\binits{T.}},
  \bauthor{\bsnm{Meek},~\bfnm{C.}\binits{C.}},
  \bauthor{\bsnm{Scheines},~\bfnm{R.}\binits{R.}} \AND
  \bauthor{\bsnm{Glymour},~\bfnm{C.}\binits{C.}}
(\byear{1998}).
\btitle{Using path diagrams as a structural equation modeling tool}.
\bjournal{{Sociological Methods \& Research}}
\bvolume{27}
\bpages{182--225}.
\end{barticle}
\endbibitem

\bibitem{Tegmark2000}
\begin{barticle}[author]
\bauthor{\bsnm{Tegmark},~\bfnm{E.}\binits{E.}}
(\byear{2000}).
\btitle{Importance of quantum decoherence in brain processes}.
\bjournal{{Physical Review E}}
\bvolume{61}
\bpages{4194--4206}.
\end{barticle}
\endbibitem

\bibitem{TBprobPCA}
\begin{barticle}[author]
\bauthor{\bsnm{Tipping},~\bfnm{M.~E.}\binits{M.~E.}} \AND
  \bauthor{\bsnm{Bishop},~\bfnm{C.~M.}\binits{C.~M.}}
(\byear{1999}).
\btitle{Probabilistic principal component analysis}.
\bjournal{Journal of the Royal Statistical Society, Series B}
\bvolume{61}
\bpages{611--622}.
\end{barticle}
\endbibitem

\bibitem{ref_Aad_SM}
\begin{bbook}[author]
\bauthor{\bsnm{{Van Der Vaart}},~\bfnm{A.~W.}\binits{A.~W.}}
(\byear{1998}).
\btitle{Asymptotic Statistics}.
\bpublisher{Cambridge: {Cambridge University Press}}.
\end{bbook}
\endbibitem

\bibitem{ref_ridgeP}
\begin{barticle}[author]
\bauthor{\bsnm{{van Wieringen}},~\bfnm{W.~N.}\binits{W.~N.}} \AND
  \bauthor{\bsnm{Peeters},~\bfnm{C.~F.~W.}\binits{C.~F.~W.}}
(\byear{2016}).
\btitle{Ridge estimation of inverse covariance matrices from high-dimensional
  data}.
\bjournal{Computational Statistics \& Data Analysis}
\bvolume{103}
\bpages{284--303}.
\end{barticle}
\endbibitem

\bibitem{ref_Woodbury_SM}
\begin{btechreport}[author]
\bauthor{\bsnm{Woodbury},~\bfnm{M.~A.}\binits{M.~A.}}
(\byear{1950}).
\btitle{Inverting modified matrices.}
\btype{{Statistical Research Group Memorandum Report}} No. \bnumber{42},
\bpublisher{{Princeton University}}.
\end{btechreport}
\endbibitem

\bibitem{Handbook_SM}
\begin{bincollection}[author]
\bauthor{\bsnm{Zucker},~\bfnm{R.}\binits{R.}}
(\byear{1972}).
\btitle{Elementary Trancendental Functions: Logarithmic, Exponential, Circular,
  and Hyperbolic Functions}.
In \bbooktitle{{Handbook of Mathematical Functions with Formulas, Graphs, and
  Mathematical Tables}}
(\beditor{\bfnm{M.}\binits{M.}~\bsnm{Abramowitz}} \AND
  \beditor{\bfnm{I.~A.}\binits{I.~A.}~\bsnm{Stegun}}, eds.)
\bchapter{4},
\bpages{65--94}.
\bpublisher{{New York: Wiley}}.
\end{bincollection}
\endbibitem

\end{thebibliography}
\putbib[FAInSM]
\end{bibunit}
%%--------------- References -----------------------------------------
%%--------------------------------------------------------------------

\end{document}